\newtheorem{corollary}{Corollary}
\newtheorem{lemma}{Lemma}
\newtheorem{theorem}{Theorem}
\newtheorem{assumption}{Assumption}
\newtheorem{proposition}{Proposition}
\newtheorem{remark}{Remark}
\title{Meta-Reinforcement Learning with Universal Policy Adaptation: Provable Near-Optimality under All-task Optimum Comparator}
\author{%
  Siyuan Xu \&
    Minghui Zhu \\
  School of Electrical Engineering and Computer Science\\ The Pennsylvania State University \\ University Park, PA 16801\\
  \{spx5032, muz16\}@psu.edu 
}
\begin{document}

\maketitle

\begin{abstract}

Meta-reinforcement learning (Meta-RL) has attracted attention due to its capability to enhance reinforcement learning (RL) algorithms, in terms of data efficiency and generalizability. 
In this paper, we develop a bilevel optimization framework for meta-RL (BO-MRL) to learn the meta-prior for task-specific policy adaptation, which implements multiple-step policy optimization on one-time data collection.
Beyond existing meta-RL analyses, we provide upper bounds of the expected optimality gap over the task distribution. 
This metric measures the distance of the policy adaptation from the learned meta-prior to the task-specific optimum, and quantifies the model's generalizability to the task distribution. 
We empirically validate the correctness of the derived upper bounds and demonstrate the superior effectiveness of the proposed algorithm over benchmarks. 

\end{abstract}

\section{Introduction}

Meta-learning \cite{thrun2012learning,finn2017model,hospedales2021meta} aims to extract the shared prior knowledge, known as meta-prior, from the similarities and interdependencies of multiple existing learning tasks, in order to accelerate the learning process, increase the efficiency of data usage, and improve the overall learning performance in new tasks. 
Meta-learning has been extended to solve RL problems, known as meta-RL \cite{finn2017model,beck2023survey}, and shows its promise to overcome the challenges of traditional RL algorithms, including scarce real-world data \cite{arndt2020meta,nagabandi2018learning,xu2023online}, limited computing resources, and slow learning speed \cite{song2020rapidly,rss2022}. 

Meta-learning methods can be generally categorized into optimization-based, model-based (black box methods), and metric-based methods \cite{huisman2021survey,beck2023survey}.
The optimization-based meta-learning approach \cite{hospedales2021meta} is compatible with any model trained by an optimization algorithm, such as gradient descent, and thus is applicable to a vast range of learning problems, including RL problems. 
Specifically, it formulates meta-learning as a bilevel optimization problem. At the lower-level optimization, the task-specific model is adapted from a shared meta-parameter by an optimization algorithm. At the upper-level optimization, the meta-parameter is to maximize the meta-objective, i.e., the performance of the model adapted from the meta-parameter over training tasks. The existing methods, including MAML and its variants \cite{finn2017model,liu2019taming,fallah2021convergence}, 
take a one-step gradient ascent as the lower-level policy optimization algorithm, which limits its data inefficiency and leads to sub-optimality. 

During the meta-test, MAML conducts one-time data collection, i.e., collecting data using one policy (the meta-policy), and adapts the policy by one step of policy gradient to the new task.
However, the collected data is only used in one policy gradient step, which may not sufficiently leverage the data and potentially fail to achieve a good performance. 
To mitigate the issue, a typical practice is to implement the data collection and the policy gradient alternately multiple times \cite{finn2017model}.
However, the environment exploration is usually costly and time-consuming during the meta-test in applications of meta-RL \cite{nagabandi2018learning,belkhale2021model,lew2022safe}. As a result, the low data efficiency limits the optimality of task-specific policies. 
In contrast, in this paper, we collect data by meta-policy for one time and utilize multiple policy optimization steps to improve the data efficiency. 

%Theoretical analyses of MAML cover several aspects, including the gradient approximation error \cite{liu2019taming,fallah2021convergence,liu2022theoretical}, convergence \cite{fallah2021convergence, pmlr-v162-tang22a,wang2022convergence}, and global optimality \cite{wang2020global}. 
The optimality analysis of MAML is studied in \cite{fallah2021convergence,wang2020global} with a metric of \textbf{optimality on the meta-objective}, where the error of the meta-objective is defined by the expectation of the optimality gap between the task-specific policy adapted from the learned meta-parameter and the policy adapted from the best meta-parameter \cite{wang2020global,fallah2021generalization,huang2022provable}. 
However, the best meta-parameter is shared for all tasks.
Even if the meta-objective error is close to zero, i.e., the learned meta-parameter is close to the best one, the model adapted from the learned meta-parameter might be far from task-specific optimum for some tasks. 
In contrast, we aim to design a meta-RL algorithm that can fit a stronger optimality metric, called \textbf{near-optimality under all-task optimum}, where the comparator, i.e., the policy adapted from the best meta-parameter, is replaced by the task-specific optimal policy for each task. 
This metric offers a more strict comparator for the model adapted from the learned meta-parameter, i.e., when the metric achieves zero, the policy adaptation produces the optimal policy for every task.
%Moreover, the expectation in the optimality metric reflects the model's generalizability to the task distribution.
A similar metric is studied by \cite{mendonca2019guided}. It assumes that the task-specific optimal expert policy for each task is accessible and serves the supervision for policy adaptation during meta-training, which alleviates the analysis difficulty caused by the optimal policy comparator. However, the expert policy supervision is not accessible in a standard meta-RL problem.
The metric under all-task optimum is also studied by \cite{denevi2019learning,denevi2019online,xu2023online} in the context of supervised meta-learning. 

%In the context of RL, several iterative policy optimization algorithms, such as policy gradient, trust region policy optimization (TRPO) \cite{schulman2015trust}, PPO \cite{schulman2017proximal}, and NPG \cite{kakade2001natural}, have demonstrated their effectiveness in solving RL problems. In MAML \cite{finn2017model}, policy gradient is applied as the lower-level policy optimization algorithm for meta-RL. Although TRPO, PPO, and NPG show superior performance than policy gradient, their application to meta-RL for the lower-level policy optimization is often overlooked. 

\begin{table}[t]
\centering 
\begin{small}
\caption{ Solved theoretical challenges of meta-RL}\label{table_opt}
\begin{tabular}{lcccc}
\toprule 
& \multicolumn{1}{c}{\makecell[c]{ Convergence \\ of meta-objective }}   & \multicolumn{1}{c}{\makecell[c]{ Optimality of \\ meta-objective }}  & \multicolumn{1}{c}{\makecell[c]{ Near-optimality \\ under all-task optimum }}   \\
\midrule
\cite{fallah2021convergence,pmlr-v162-tang22a} & $\checkmark$ & $\times$ &  $\times$ \\
\cite{wang2020global} & $\times$   
& $\checkmark$ When assuming convergence 
& $\times$  \\ 
\cite{mendonca2019guided} & $\times$ & $\times$   &\multicolumn{1}{c}{\makecell[c]{ $\checkmark$ Under optimal \\ expert policy supervision }}   \\    
This paper & $\checkmark$ & Immediate result from \cite{wang2020global}   &  $\checkmark$  \\
\bottomrule
\end{tabular}
\end{small}
\vspace{-5.5mm} 
\end{table}

\textbf{Main contribution. } 
We develop a bilevel optimization framework for meta-RL, which implements multiple-step policy optimization on one-time data collection during task-specific policy adaptation. 
The overall contributions are summarized as follows. (i) We develop a universal policy optimization algorithm, which performs multiple optimization steps to maximize a surrogate of the accumulated reward function. The surrogate is developed only using one-time data collection. 
It includes various widely used policy optimization algorithms, including the policy gradient, the natural policy gradient (NPG) \cite{kakade2001natural}, and the proximal policy optimization (PPO) \cite{schulman2017proximal} as the special cases. 
Then, to learn the mete-prior, we formulate the meta-RL problem as a bilevel optimization problem, where the lower-level optimization is the universal policy optimization algorithm from the meta-policy and the upper-level optimization is to maximize the meta-objective function, i.e., the total reward of the models adapted from the meta-policy. 
(ii) We derive the implicit differentiation for both unconstrained and constrained lower-level optimization problems to compute the hypergradient, i.e., the gradient of the meta-objective, and propose the meta-training algorithm. In contrast to \cite{wang2020global}, we do not require to know the closed-form solution of the lower-level optimization.
(iii) We derive upper bounds that quantify (a) the optimality gap between the adapted policy and the optimal task-specific policy for any task, and
(b) the expected optimality gap over the task distribution. 
Since the proposed framework incorporates several existing meta-RL methods, such as MAML, as a special case, the analysis also provides the theoretical motivation for them.
(iv) We conduct experiments to validate the theoretical bounds and verify the efficacy of the proposed algorithm on meta-RL benchmarks. 

Table \ref{table_opt} compares the solved theoretical challenges of meta-RL between this paper and previous works \cite{fallah2021convergence, pmlr-v162-tang22a, wang2020global,mendonca2019guided}. Specifically, 
paper \cite{wang2020global} derives the optimality on the meta-objective under the assumption of bounded hypergradient.
Papers \cite{fallah2021convergence, pmlr-v162-tang22a} consider the convergence of the meta-objective. The near-optimality under all-task optimum is considered in \cite{mendonca2019guided}. However, it assumes the optimal expert policies of the training tasks are available in meta-training, such that it can learn to approach the expert policies, while the other methods do not require the expert policies and learn from the explorations of the environments. 
In this paper, we show the convergence and optimality guarantee on the meta-objective, and, more importantly, the optimality guarantee under the all-task optimum comparator. It is noted that the optimality on the meta-objective is an immediate result from \cite{wang2020global}. 

\section{Related works. } 
\textbf{Categorization of meta-RL. }
Meta-RL methods can be generally categorized into 
(i) optimization-based meta-RL, (ii) black-box (also called context-based) meta-RL. 
Optimization-based meta-RL approaches, such as MAML \cite{finn2017model} and its variants \cite{stadie2018some,liu2019taming}, usually include a policy adaptation algorithm and a meta-algorithm. During the meta-training, the meta-algorithm aims to learn a meta-policy, such that the policy adaptation algorithm can achieve good performance starting from the meta-policy. The learned meta-policy parameter is adapted to the new task using the policy adaptation algorithm during the meta-test.
%At the lower-level optimization, the task-specific model is adapted from a shared meta-parameter by an optimization algorithm (one-step gradient ascent in MAML). At the upper-level optimization, the meta-parameter is to maximize the performance of the model adapted from the meta-parameter over training tasks. 
Black-box meta-RL \cite{duan2016rl,wang2016learning,rakelly2019efficient,raileanu2020fast,zintgraf2019varibad} aims to learn an end-to-end neural network model. The model has fixed parameters for the policy adaptation during the meta-test, and generates the task-specific policy using the trajectories of the new task takes.
In optimization-based meta-RL, the task-specific policy is adapted from a shared meta-policy over the task distribution. The learned meta-knowledge is not specialized for each task, and its meta-test performance on a task depends on a general policy optimization algorithm applied to new data from that task. 
In contrast, the end-to-end model in black-box meta-RL typically includes specialized knowledge for any task within the task distribution, and uses the new data merely as an indicator to identify the task within the distribution.
As a result, the optimality of optimization-based methods is usually worse than black-box methods, especially when the task distribution is heterogeneous and the data scale for adaptation is extremely small. 
On the other hand, the policy adaptation algorithms in the meta-test of optimization-based methods can generally improve the policy starting from any initial policy, not only the learned meta-policy. Therefore, it is robust to sub-optimal meta-policy and can deal with tasks that are out of the training task distribution \cite{finn2018meta,xiongpractical}. In contrast, due to the specialization of the learned model, black-box methods cannot be generalized outside of the training task distribution.
In this paper, we focus on the category of optimization-based meta-RL and compare the proposed algorithm with the existing optimization-based meta-RL approaches in terms of both experimental results and theory.

\textbf{Bilevel optimization in meta-RL. }
Bilevel optimization has been widely studied empirically \cite{pedregosa2016hyperparameter,gould2016differentiating,franceschi2017forward,franceschi2018bilevel,shaban2019truncated,ji2021bilevel} and theoretically \cite{ghadimi2018approximation,grazzi2020iteration,ji2021bilevel}. It has been applied to many machine learning problems, including meta-learning \cite{lee2019meta,rajeswaran2019meta}, hyperparameter optimization \cite{pedregosa2016hyperparameter,franceschi2017forward, franceschi2018bilevel}, RL \cite{hong2020two,konda2000actor}, and inverse RL \cite{liu2022distributed,liu2023,liu2023meta}. Since the overall objective function in bilevel optimization is generally non-convex, theoretical analyses of bilevel optimization mainly focus on the algorithm convergence \cite{ghadimi2018approximation,ji2021bilevel,SX-MZ:AAAI23}, rarely on the optimality. 
This paper formulates meta-RL as a bilevel optimization problem. The key theoretical contribution of this paper is to derive upper bounds on the near-optimality under all-task optimum, i.e., the expected optimality of the solutions of the lower-level optimization compared with that of the task-specific optimal policies. The near-optimality under all-task optimum is unique to meta-learning and has not been studied in the literature on bilevel optimization. 

%\textbf{Notations. } Denote the $l_2$ norm of vectors and the spectral norm ($2$-norm) of matrices by $\| \cdot \|$. Denote the KL-divergence of probability distributions $p$ and $q$ defined on the same space by $D_{\text{KL}}(p \| q)$. 

%Denote the $l_2$ norm of vectors and the spectral norm ($2$-norm) of matrices by $\| \cdot \|$. Denote the Kullback–Leibler divergence (KL-divergence) of probability distributions $p$ and $q$ defined on the same sample space $\mathcal{X}$ by $D_{\text{KL}}(p \| q)$. For discrete probability distributions,  $D_{\mathrm{KL}}(p \| q) \triangleq \sum_{x \in \mathcal{X}} p(x) \ln{\frac{p(x)}{q(x)}}$. For continuous distributions, $D_{\mathrm{KL}}(p \| q)\triangleq\int_{\mathcal{X}} \ln \left(\frac{p(d x)}{q(d x)}\right) p(d x)$. Denote $\boldsymbol{1}$ as a vector with $1$ as each element. 

\section{Problem statement}
\label{task_variance}

\textbf{MDP.} 
A Markov decision process (MDP) 
$\mathcal{M} \triangleq \{\mathcal{S},\mathcal{A},\gamma,\rho,P,r\}$ is defined by the bounded state space $\mathcal{S}$, the discrete or bounded continuous action space $\mathcal{A}$, the discount factor $\gamma$, the initial state distribution $\rho$ over $\mathcal{S}$, the transition probability $P(s^{\prime} | s,a)$ $ :  \mathcal{S}\times \mathcal{A} \times \mathcal{S} \rightarrow [0,1]$, and the reward function $r$ $: \mathcal{S}\times \mathcal{A} \times \mathcal{S}  \rightarrow [0, r_{max}]$. 

\textbf{Policy and value function.} 
A stochastic policy $\pi: \mathcal{S} \rightarrow \mathbb{P}(\mathcal{A})$ is a map from states to probability distributions over actions, and $\pi(a | s)$ denotes the probability of selecting action $a$ in state $s$.
For a policy $\pi$, the value function is defined as $V^{\pi}(s)\triangleq$ $\mathbb{E}\left[\sum_{t=0}^{\infty} \gamma^t r\left(s_t, a_t, s_{t+1}\right) | s_0=s, \pi\right]$. 
The action-value function is defined as $Q^{\pi}(s,a)\triangleq$ $\mathbb{E}\left[\sum_{t=0}^{\infty} \gamma^t r\left(s_t, a_t, s_{t+1}\right) | s_0=s, a_0=a, \pi\right]$.
The advantage function is defined as $A^{\pi}(s,a)\triangleq Q^{\pi}(s,a)-V^{\pi}(s)$.
The accumulated reward function is $J(\pi)\triangleq\mathbb{E}_{s \sim \rho}\left[V^{\pi}(s)\right]$.
Define the discounted state visitation distribution of a policy $\pi$ as 
$\nu^\pi(s)\triangleq {\mathbb{E}}_{s_0 \sim \rho} [ (1-\gamma) \sum_{t=0}^{\infty} \gamma^t \mathbb{P}\left(s_t=s | \pi \right) ]$.
In this paper, we consider parametric policy $\pi_{\theta}$, parameterized by $\theta$.
The optimal parameter ${\theta}^{*} $ can maximize the accumulated reward function, i.e., 
$
{\theta}^{*} \triangleq {\operatorname{argmax}}_{\theta} \ J(\pi_\theta)
$.
If $\theta^*$ is not unique, denote the set of the optimal solutions by $\Theta^*$.

\textbf{Meta-reinforcement learning. }
Meta-RL aims to solve multiple RL tasks. Consider a space of RL tasks $\Gamma$, where each task $\tau \in \Gamma$ is modeled by a MDP $\mathcal{M}_{\tau} \triangleq \{\mathcal{S},\mathcal{A},\gamma,\rho_{\tau},P_{\tau},r_{\tau}\}$. Correspondingly, the notations $V_{\tau}^{\pi}$, $Q_{\tau}^{\pi}$, $A_{\tau}^{\pi}$, $\nu^\pi_{\tau}$, $\theta^*_{\tau}$, $\Theta^*_\tau$ and $J_{\tau}$ are defined for task $\tau$. 
The RL tasks follow a probability distribution $\mathbb{P}(\Gamma)$.
Meta-RL aims to learn a meta-policy $\pi_\phi$ parameterized by a meta parameter $\phi$, such that it can adapt to an unseen task ${\tau}_{new} \sim \mathbb{P}(\Gamma)$ with a few iterations and a small number of new environment explorations.
In specific, during the meta-training, several tasks can be i.i.d. sampled from $\mathbb{P}(\Gamma)$, i.e., $\{{\tau}_j\}_{j=1}^T \sim \mathbb{P}(\Gamma)$, and the tasks' MDPs $\{\mathcal{M}_{\tau_j}\}_{j=1}^T$ can be explored. The meta-learner applies a meta-algorithm to update the meta parameter $\phi$ by using the data collected from the sampled tasks. During the meta-test, a new task ${\tau}_{new}$ is given, one time of a within-task algorithm $\mathcal{A} l g$ with data collected from ${\tau}_{new}$ is applied, the meta-parameter $\phi$ is adapted to the task-specific parameter $\theta_{{\tau}_{new}}^{\prime}$ and the task-specific policy $\pi_{\theta^{\prime}_{\tau_{new}}}$ is tested on the task ${\tau}_{new}$. 

\textbf{Optimality Metric. }
Consider a meta-RL algorithm that produces a meta-parameter $\phi$, and the take-specific parameter $\pi_{\theta^{\prime}_{\tau}}$ is adapted from the meta-parameter $\phi$ on a task $\tau$, denoted as $\pi_{\theta^{\prime}_{\tau}}=\mathcal{A} l g(\pi_\phi,\tau)$. We define the task-expected optimality gap (TEOG) as the metric to evaluate the algorithm, i.e.,
$ \mathbb{E}_{\tau \sim \mathbb{P}(\Gamma)}[J_{\tau}(\pi_{\theta^*_\tau}) - J_{\tau}(\mathcal{A} l g(\pi_{\phi}, \tau)) ]$,
where $\theta^*_{\tau}$ is the optimal parameter for task $\tau$. First, the TEOG considers the expected error over the task distribution $\mathbb{P}(\Gamma)$, reflecting the generalizability of the produced meta-parameter. 
Second, the TEOG adopts the comparator of the optimal task-specific policy $\pi_{\theta^*_\tau}$ for any task $\tau$ (all-task optimum comparator), and evaluates the optimality gap $J_{\tau}(\pi_{\theta^*_\tau}) - J_{\tau}(\mathcal{A} l g(\pi_{\phi}, \tau))$.
In contrast, \cite{wang2020global,fallah2021generalization,huang2022provable} adopts the comparator of the policy adapted from the optimal meta-parameter $\pi_{\phi^*}$, and evaluates the optimality gap $J_{\tau}(\mathcal{A} l g(\pi_{\phi^*}, \tau)) - J_{\tau}(\mathcal{A} l g(\pi_{\phi}, \tau))$. 
The latter only considers the optimality on the meta-objective, i.e., how well the trained meta-objective can approach the optimal meta-objective. 
However, even if the error of the meta-objective is approaching zero, i.e., the learned meta-policy is close to the best candidate, the performance of the model adapted from the optimal meta-policy might still be lacking. 
This is because policy optimization usually requires thousands of value/policy iterations to converge; when tasks are heterogeneous, even if it starts from the best meta-policy, one time of $\mathcal{A} l g$ with one time of value estimate may not be sufficient. 
In contrast, if our metric is zero, the policy adapted from the meta-parameter to any task is optimal for the task. 

\textbf{Policy distance and task variance.}
To find the solution for a new task within a few iterations of policy optimization, it is crucial that the meta-policy $\pi_\phi$ can benefit from learning on correlated tasks.
Similar to \cite{balcan2019provable,denevi2019learning,khattarprovable}, we measure the correlation of tasks in the task distribution $\mathbb{P}(\Gamma)$ by its variance, defined by the minimal mean square of the distances among the optimal task-specific policies, i.e., 
$\mathcal{V}ar( \mathbb{P}(\Gamma)) \triangleq {\min}_{\theta} \ {\min}_{{\theta^{*}_{\tau}} \in \Theta^{*}_{\tau}} \ \mathbb{E}_{\tau \sim \mathbb{P}(\Gamma)} [D_{\tau}^2(\pi_\theta, \pi_{\theta^{*}_{\tau}})]$. Here, $D_{\tau}(\pi_\theta, \pi_{\theta^{*}_{\tau}})$ is the distance metric between $\pi_\theta$ and $\pi_{\theta^{*}_{\tau}}$ on the task ${\tau}$ and is defined by $D_{\tau}(\pi_\theta, \pi_{\theta^{\prime}}) \triangleq \sqrt{\mathbb{E}_{s \sim \nu_{\tau}^{\pi_\theta}}[d^2(\pi_\theta(\cdot|s), \pi_{\theta^{\prime}}(\cdot|s))]}$, where $d(\pi_{\theta}(\cdot|s),\pi_{\theta^{\prime}}(\cdot|s))$ is the distance of the policies $\pi_{\theta}$ and $\pi_{\theta^{\prime}}$ on the state $s$.

%When the optimal policy $\theta^{*}_{\tau}$ is not unique, we take the optimistic case for $D_{\tau}$, i.e., selecting the policy for which $D_{\tau}$ is minimized: $\mathcal{V}ar( \mathbb{P}(\Gamma)) \triangleq {\min}_{\theta} \ {\min}_{{\theta^{*}_{\tau}} \in \Theta^{*}_{\tau}} \ \mathbb{E}_{\tau \sim \mathbb{P}(\Gamma)} [D_{\tau}^2(\pi_\theta, \pi_{\theta^{*}_{\tau}})]$.

Note that the distance metrics $D_{\tau}(\cdot,\cdot)$ and $d(\cdot,\cdot,s)$ can be custom-defined, leading to multiple policy update algorithms, as shown in Section \ref{meta_rl_sdfsdfs}. Here, we introduce several examples of $d(\cdot,\cdot,s)$ and $D_{\tau}(\cdot,\cdot)$, which are commonly used as the distance metrics in RL literature \cite{schulman2015trust,kakade2001natural,liu2019neural}.
For policies $\pi_\theta$ and $\pi_{\theta^{\prime}}$, we apply (i) the KL-divergence of the action probability distribution, i.e., $d_1^2(\pi_\theta, \pi_{\theta^{\prime}},s)\triangleq D_{\text{KL}}(\pi_{\theta}(\cdot|s) \| \pi_{\theta^{\prime}}(\cdot|s))$, which is similar to the definition in \cite{khattarprovable}; (ii) The KL-divergence with the other order, i.e., $d_2^2(\pi_\theta, \pi_{\theta^{\prime}},s)\triangleq D_{\text{KL}}(\pi_{\theta^{\prime}}(\cdot|s) \| \pi_{\theta}(\cdot|s))$; (iii) the Euclidean distance of the parameters, i.e., $d_{3}^2(\pi_\theta, \pi_{\theta^{\prime}},s)\triangleq \|\theta-\theta^\prime\|^2$. 
Correspondingly, for $i =1,2$, and $3$, we define $D_{\tau,i}(\pi_\theta, \pi_{\theta^{\prime}}) \triangleq \sqrt{\mathbb{E}_{s \sim \nu_{\tau}^{\pi_\theta}}[d_i^2(\pi_\theta, \pi_{\theta^{\prime}},s)]}$.
Note that the distance metrics (i)(ii) are not symmetric, i.e., $D_{\tau}(\pi_{\theta^{\prime}},\pi_{\theta^{\prime\prime}})\neq D_{\tau}(\pi_{\theta^{\prime\prime}},\pi_{\theta^{\prime}})$, and (iii) is symmetric. 

In the subsequent sections, we present algorithms based on the generalized distance definitions of $D_{\tau}(\cdot,\cdot)$ and $d(\cdot,\cdot,s)$. Moreover, we conduct analyses for the introduced distance metrics, from $D_{\tau,1}$ to $D_{\tau,3}$, to provide comprehensive insights into their respective performances. 

\section{Meta-Reinforcement Learning Framework}
\label{meta_rl_sdfsdfs}
In this section, we develop a meta-RL algorithm by bilevel optimization, where the lower-level optimization is the within-task algorithm that adapts the parameter from the meta-parameter and the upper-level optimization is the meta-algorithm that obtains the meta-parameter. 
The proposed algorithm has two distinctions compared with existing algorithms. First, it uses one time of a universal policy optimization algorithm as the lower-level within-task algorithm. Second, we derive the hypergradient by the implicit differentiation, where the closed-form solution of the lower-level optimization is not required. 

\textbf{Within-task algorithm.} Consider the policy optimization from the meta policy as the within-task algorithm $\mathcal{A} l g$. Specifically, given the meta-parameter $\phi$ and a task $\tau$, the task-specific policy $\pi_{\theta^{\prime}_{\tau}}=\mathcal{A} l g(\pi_\phi, \lambda, \tau)$ is defined by 
$\theta^{\prime}_{\tau}={\operatorname{argmax}_\theta} {\mathbb{E}}_{s \sim \nu^{\pi_\phi}_{\tau},
a \sim \pi_{\theta}(\cdot|s)}\left[ Q^{\pi_\phi}_{\tau}(s,a) \right]-\lambda D_\tau^2(\pi_\phi,\pi_\theta).
$
When the action space $\mathcal{A}$ is discretized and the policy is tabular, i.e., the probabilities of actions are independent between different states, the above problem can be solved by 
$\pi_{\theta^{\prime}_{\tau}}(\cdot|s)= $
\begin{equation}
\label{dis_withintask}
\mathcal{A} l g(\pi_\phi, \lambda, \tau)(\cdot|s)=\underset{\pi_\theta(\cdot|s)}{\operatorname{argmax}} \ {\sum}_{a \in \mathcal{A}} \pi_\theta(a|s) Q^{\pi_\phi}_{\tau}(s,a) -\lambda d^2(\pi_\phi(\cdot|s),\pi_\theta(\cdot|s)),
\end{equation}
for all states $s \in \mathcal{S}$.
When the policy is parameterized by an approximation function, in both continuous and discrete action space $\mathcal{A}$, $\pi_{\theta^{\prime}_{\tau}}=\mathcal{A} l g(\pi_\phi, \lambda, \tau)$ is computed by $\theta^{\prime}_{\tau}=$
\begin{equation} 
\label{withintask}
{\operatorname{argmax}_\theta} \ {\mathbb{E}}_{s \sim \nu^{\pi_\phi}_{\tau}, a \sim \pi_\phi(\cdot | s)}\left[\frac{\pi_\theta(a | s)}{\pi_\phi(a | s)} Q_\tau^{\pi_\phi}(s, a)\right]-\lambda D_\tau^2\left(\pi_\phi, \pi_\theta\right).
\end{equation}

In (\ref{dis_withintask}) and (\ref{withintask}), $\lambda>0$ is a tuning hyperparameter and the distance metric $D_{\tau}$ can be arbitrarily chosen.
Considering the explorations for the task $\tau$ are limited, $\mathcal{A} l g$ only needs to evaluate the $Q_\tau^{\pi_\phi}$ by Monte-Carlo sampling on a single policy $\pi_\phi$, where the data sampling complexity is exactly the same as the one-step gradient descent in MAML \cite{finn2017model}.
Therefore, we denote $\mathcal{A} l g$, i.e., collecting data on the meta-policy and solving the optimal solution of (\ref{dis_withintask}) and (\ref{withintask}) as the one-time policy adaptation. More details about the data sample complexity and the computational complexity of (\ref{dis_withintask}) and (\ref{withintask}) are clarified in Appendix \ref{jsbefjsdnffwe-mets-rl}. 
On the other hand, one gradient step is usually not sufficient to identify a good policy. Therefore, $\mathcal{A} l g$ is to solve the optimal solution of (\ref{dis_withintask}) or (\ref{withintask}). As shown in Section \ref{qfzzzaasdqefvgo_meta_rl}, the objective function of (\ref{dis_withintask}) or (\ref{withintask}) is an approximation of the true objective function $J_\tau(\pi)$. 

Note that the objective function in (\ref{dis_withintask}) and (\ref{withintask}) can reduce to that of multiple widely used policy optimization approaches: (i) PPO in \cite{schulman2015trust,schulman2017proximal} when $D_{\tau}=D_{\tau,2}$; (ii) a variant of the PPO \cite{wang2020global,liu2019neural}, when $D_{\tau}=D_{\tau,1}$; 
(iii) the proximally regularized policy update, i.e., the policy optimization regularized by Euclidean distance of the policy parameter \cite{schulman2015trust}, when $D_{\tau}=D_{\tau,3}$. Moreover, (iv) if we approximate the expectation in (\ref{withintask}) by its first-order approximation and also select $D_{\tau}=D_{\tau,3}$, the within-task algorithm (\ref{withintask}) also can be reduced to one-step policy gradient, as shown in Appendix \ref{discussion_with_maml_meta_rl}; (v) if we use the first-order approximation of the expectation in (\ref{withintask}), the second-order approximation of the term $D_\tau^2(\pi_\phi,\pi_\theta)$, and select $D_{\tau}=D_{\tau,2}$, the within-task algorithm (\ref{withintask}) is reduced to the natural policy gradient (NPG). 

\textbf{Meta-algorithm. } The performance of the meta-parameter $\phi$ is evaluated by the meta-objective function, which is defined as the expected accumulated reward after the parameter is adapted by the within-task algorithm, i.e.,
$\mathbb{E}_{\tau \sim \mathbb{P}(\Gamma)}[ J_{\tau}(\mathcal{A} l g(\pi_\phi, \lambda, \tau)) ]$. In the meta-algorithm, we maximize the meta-objective to obtain the optimal meta-parameter $\phi^*$, i.e.,
\begin{equation} 
\label{meta-algorithm}
\phi^*=\underset{\phi}{\operatorname{argmax}}  \ \mathbb{E}_{\tau \sim \mathbb{P}(\Gamma)}[ J_{\tau}(\mathcal{A} l g(\pi_\phi, \lambda, \tau)) ]. 
\end{equation}
As (\ref{dis_withintask}) and (\ref{withintask}) provide multiple choices of the within-task algorithms when selecting different $D_{\tau}$, the meta-algorithm (\ref{meta-algorithm}) provides the algorithms to learn the corresponding meta-priors. 
For example, (\ref{meta-algorithm}) takes on the role of the meta-PPO algorithm when $D_{\tau}=D_{\tau,1}$ or $D_{\tau,2}$, i.e., (\ref{meta-algorithm}) learns the meta-initialization for PPO. It is a meta-NPG algorithm with the corresponding approximation and $D_{\tau}$. 
Moreover, when $\mathcal{A} l g(\pi_\phi, \lambda, \tau)$ in (\ref{withintask}) reduces to the one-step policy gradient shown in (iv) of the last paragraph, (\ref{meta-algorithm}) represents a precise formulation of MAML in \cite{finn2017model}. 
More details about the formulation and its relations with MAML are shown in Appendix \ref{awgfbwdkjfbsdfm_meta_rl} and \ref{discussion_with_maml_meta_rl}. 

\textbf{Hypergradient computation.} Simlar to \cite{ji2021bilevel,SX-MZ:AAAI23}, the meta-algorithm in (\ref{meta-algorithm}) aims to solve a bilevel optimization problem. 
In previous works \cite{wang2020global}, they apply the policy optimizations that have known closed-form solutions as the lower-level within-task algorithms. As a result, the bilevel optimization problem is reduced to a single-level problem. 
In contrast, in this paper, as we consider a universal policy optimization, its closed-form solution cannot be obtained. To address the challenge, we compute $\nabla_\phi \mathcal{A} l g(\pi_\phi, \lambda, \tau)$ and the hypergradient by deriving the implicit differentiation on $\mathcal{A} l g(\pi_\phi, \lambda, \tau)$.
As shown in Section \ref{meta_rl_sdfsdfs}, the optimization problem $\mathcal{A} l g(\pi_\phi, \lambda, \tau)$ is unconstrained in (\ref{withintask}), but is constrained in (\ref{dis_withintask}) due to $\sum_{a \in \mathcal{A}} \pi(a|s) = 1$.
Therefore, we derive the implicit differentiation for both unconstrained and constrained optimization problems.
The following proposition shows the hypergradient computation for the tabular policy. Its proof is shown in Appendix \ref{Proofs_of_Propositions_meta_rl}. 

\begin{proposition}[Hypergradient for the \textbf{tabular policy}] 
\label{propsition1_meta_rl} 
For the tabular policy in the discrete state-action space, consider any meta-parameter $\phi$ and the within-task algorithm (\ref{dis_withintask}). Let  $\pi_{\theta^{\prime}_{\tau}}=\mathcal{A} l g(\pi_\phi, \lambda, \tau)$. If $M(s)\triangleq\lambda\nabla_{\pi(\cdot | s)}^{2} d^2(\pi_\phi(\cdot|s),\pi(\cdot|s))$ is non-singular for each $s \in \mathcal{S}$, we have $\nabla_{\phi} J_{\tau}(\pi_{\theta^{\prime}_{\tau}})=
\frac{1}{1-\gamma} {\mathbb{E}}_{s \sim \nu^{\pi_{\theta^{\prime}_{\tau}}}_{\tau}} \left[ \sum_{a \in \mathcal{A} } {\nabla_\phi \pi_{\theta^{\prime}_{\tau}}(a|s) } Q_{\tau}^{\pi_{\theta^{\prime}_{\tau}}}(s,a) \right]$,
where $\nabla^{\top}_\phi \pi_{\theta^{\prime}_{\tau}}(\cdot|s)=$
$
\left(M(s)^{-1}-\frac{M(s)^{-1} \boldsymbol{1} \ \boldsymbol{1}^{\top} M(s)^{-1}}{\boldsymbol{1}^{\top} M(s)^{-1} \boldsymbol{1}}\right) \left(\nabla^{\top}_{\phi} Q^{\pi_\phi}_{\tau}(s,\cdot)- \lambda \nabla^{\top}_{\phi}\nabla_{\pi(\cdot|s)} d^2(\pi_\phi(\cdot|s),\pi(\cdot|s))\right)|_{\pi=\pi_{\theta^{\prime}_{\tau}}}.
$
\end{proposition}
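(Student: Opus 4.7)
The plan is to decompose the hypergradient computation into two pieces: first apply the policy gradient theorem to express $\nabla_\phi J_\tau(\pi_{\theta'_\tau})$ in terms of the Jacobian $\nabla_\phi \pi_{\theta'_\tau}(\cdot|s)$, and then compute that Jacobian by implicit differentiation on the KKT conditions of the constrained lower-level problem (\ref{dis_withintask}). The first piece is direct: viewing $\pi_{\theta'_\tau}$ as a differentiable function of $\phi$ through the argmax, the standard policy gradient theorem (applied to the parameter $\phi$ via the chain rule) immediately yields $\nabla_\phi J_\tau(\pi_{\theta'_\tau}) = \frac{1}{1-\gamma}\mathbb{E}_{s \sim \nu^{\pi_{\theta'_\tau}}_\tau}[\sum_{a}\nabla_\phi \pi_{\theta'_\tau}(a|s) Q^{\pi_{\theta'_\tau}}_\tau(s,a)]$, matching the outer formula in the statement. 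All the real work is in deriving the stated closed form for $\nabla_\phi \pi_{\theta'_\tau}(\cdot|s)$.

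For that Jacobian, I would fix a state $s$ and write (\ref{dis_withintask}) as a maximization of $\sum_a \pi(a|s) Q^{\pi_\phi}_\tau(s,a) - \lambda d^2(\pi_\phi(\cdot|s),\pi(\cdot|s))$ over the simplex, assuming the optimum is interior so only the equality constraint $\mathbf{1}^\top \pi(\cdot|s)=1$ is active (the positivity constraints are inactive and drop out). The KKT stationarity condition then reads $Q^{\pi_\phi}_\tau(s,\cdot) - \lambda \nabla_{\pi(\cdot|s)} d^2(\pi_\phi(\cdot|s),\pi(\cdot|s)) = \mu(s)\mathbf{1}$ for a scalar multiplier $\mu(s)$. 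Differentiating this identity totally with respect to $\phi$ at $\pi=\pi_{\theta'_\tau}$ produces $M(s)\nabla_\phi \pi_{\theta'_\tau}(\cdot|s) = \nabla_\phi Q^{\pi_\phi}_\tau(s,\cdot) - \lambda \nabla_\phi \nabla_{\pi(\cdot|s)} d^2(\pi_\phi(\cdot|s),\pi(\cdot|s))|_{\pi=\pi_{\theta'_\tau}} - \mathbf{1}\nabla_\phi \mu(s)$, where $M(s)$ is the Hessian defined in the statement. Since $M(s)$ is assumed non-singular, I can invert it, and then use the fact that differentiating the constraint $\mathbf{1}^\top \pi_{\theta'_\tau}(\cdot|s)=1$ yields $\mathbf{1}^\top \nabla_\phi \pi_{\theta'_\tau}(\cdot|s)=0$. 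Left-multiplying the inverted equation by $\mathbf{1}^\top$ and solving for the unknown $\nabla_\phi \mu(s)$ gives $\nabla_\phi \mu(s) = (\mathbf{1}^\top M(s)^{-1}\mathbf{1})^{-1}\mathbf{1}^\top M(s)^{-1}[\nabla_\phi Q^{\pi_\phi}_\tau(s,\cdot) - \lambda \nabla_\phi \nabla_{\pi(\cdot|s)} d^2]$. Substituting back and collecting terms yields precisely the projector $M(s)^{-1} - M(s)^{-1}\mathbf{1}\mathbf{1}^\top M(s)^{-1}/(\mathbf{1}^\top M(s)^{-1}\mathbf{1})$ acting on $\nabla_\phi Q^{\pi_\phi}_\tau(s,\cdot) - \lambda \nabla_\phi \nabla_{\pi(\cdot|s)} d^2$, which is the formula in the proposition.

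The main technical obstacle is bookkeeping of the constrained implicit function theorem: one must argue that the argmax is indeed a smooth function of $\phi$ near the point of interest (which follows from non-singularity of $M(s)$, playing the role of a strong concavity certificate for the objective along directions tangent to the simplex) and that the interior assumption is legitimate so that the positivity constraints can be ignored. Once this is set up, the derivation is a careful but routine block-matrix inversion of the KKT system, where the projector onto $\{v : \mathbf{1}^\top v = 0\}$ in the $M(s)^{-1}$ inner product emerges naturally from eliminating the scalar Lagrange multiplier. A minor care point is matching transpose conventions, since the statement writes $\nabla^\top_\phi \pi_{\theta'_\tau}(\cdot|s)$ and $\nabla^\top_\phi Q^{\pi_\phi}_\tau(s,\cdot)$; taking transposes of my derivation above produces the displayed form exactly.
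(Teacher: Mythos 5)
Your proposal is correct and follows essentially the same route as the paper: the policy gradient theorem gives the outer expectation, and the Jacobian $\nabla_\phi \pi_{\theta'_\tau}(\cdot|s)$ is obtained by implicitly differentiating the per-state KKT stationarity condition together with the simplex equality constraint, with your hand-elimination of the scalar multiplier $\mu(s)$ being algebraically identical to the paper's inversion of the $2\times 2$ block KKT matrix. The only cosmetic difference is that the paper states its final expression with the advantage $A^{\pi_{\theta'_\tau}}_\tau$ in place of $Q^{\pi_{\theta'_\tau}}_\tau$, which is equivalent since $\sum_a \nabla_\phi \pi_{\theta'_\tau}(a|s)=0$.
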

The computation of $\nabla_{\phi} Q^{\pi_\phi}_{\tau}(s,\cdot)$ is shown in Appendix \ref{computationofgradientq_meta_rl}.
A sufficient condition of $M(s)$ being non-singular is that $d$ is locally strongly-convex at $\pi=\pi_{\theta^{\prime}_{\tau}}$, shown in Appendix \ref{Proofs_of_Propositions_meta_rl}. Moreover, when $d=d_1$ or $d=d_2$ (correspondingly, $D_{\tau}=D_{\tau,1}$ or $D_{\tau}=D_{\tau,2}$ in (\ref{dis_withintask})), the matrix 
$M(s)=\lambda\nabla_{\pi(\cdot | s)}^{2} d^2(\pi_\phi(\cdot|s),\pi(\cdot|s))$ is always non-singular for any $\phi$ and $M(s)$ is always diagonal, and thus it is easy to compute $M^{-1}(s)$. The hypergradient computation $\nabla_{\phi} J_{\tau}(\pi_{\theta^{\prime}_{\tau}})$ for $D_{\tau}=D_{\tau,1}$ and $D_{\tau,2}$ is shown in Appendix \ref{gradient_d1} and \ref{gradient_d2}. 

The following proposition shows the hypergradient computation for the policy with function approximation. Its proof is shown in Appendix \ref{Proofs_of_Propositions2_meta_rl}.

\begin{proposition}[Hypergradient for the \textbf{policy with function approximation}]
\label{propsition2_meta_rl}
When a policy is represented by a function approximation, in both the discrete and continuous action spaces, for any meta-parameter $\phi$ and the within-task algorithm in (\ref{withintask}). Let $\pi_{\theta^{\prime}_{\tau}}=\mathcal{A} l g(\pi_\phi, \lambda, \tau)$. If $\nabla_{\phi} J_{\tau}(\pi_{\theta^{\prime}_{\tau}})$ exists,
$\nabla_{\phi} J_{\tau}(\pi_{\theta^{\prime}_{\tau}})=
\frac{1}{1-\gamma} {\nabla_{\phi} \theta^{\prime}_{\tau}} {\mathbb{E}_{s \sim \nu^{\pi_{\theta^{\prime}_{\tau}}}_{\tau} , a \sim \pi_{\theta^{\prime}_{\tau}}(\cdot|s)}}\left[ \frac{ \nabla_{{\theta^{\prime}_{\tau}}} \pi_{\theta^{\prime}_{\tau}}(a|s)}{\pi_{\theta^{\prime}_{\tau}}(a|s)}  Q_{\tau}^{\pi_{\theta^{\prime}_{\tau}}}(s,a) \right]$,
and ${\nabla_{\phi}^{\top} \theta^{\prime}_{\tau}}=-\mathbb{E}_{s \sim \nu^{\pi_\phi}_{\tau}, 
a \sim \pi_{\phi}(\cdot|s)} [ \nabla^2_\theta d^2(\pi_\phi(\cdot|s),\pi_\theta(\cdot|s)) -\frac{\nabla^2_\theta \pi_\theta(a|s)}{\lambda\pi_\phi(a|s)} Q^{\pi_\phi}_{\tau}(s,a) ]^{-1}$ 
$\mathbb{E}_{s \sim \nu^{\pi_\phi}_{\tau},
a \sim \pi_{\phi}(\cdot|s)}$ 
$[\nabla^{\top}_\phi \nabla_\theta 
d^2(\pi_\phi(\cdot|s),$ $\pi_\theta(\cdot|s)) -\frac{\nabla_\theta \pi_\theta(a|s)} {\lambda \pi_\phi(a|s)} \nabla^{\top}_\phi  Q^{\pi_\phi}_{\tau}(s,a) ] |_{\theta=\theta_\tau^{\prime}}$.

%\begin{aligned}
%&-\underset{\substack{s \sim \nu^{\pi_\phi}_{\tau} \\ a \sim \pi_{\phi}(\cdot|s)}}{\mathbb{E}}\left[ \nabla^2_\theta d^2(\pi_\phi(\cdot|s),\pi_\theta(\cdot|s)) -\frac{\nabla^2_\theta \pi_\theta(a|s)}{\lambda\pi_\phi(a|s)} Q^{\pi_\phi}_{\tau}(s,a)  \right]^{-1} \\
%&\underset{\substack{s \sim \nu^{\pi_\phi}_{\tau} \\ a \sim \pi_{\phi}(\cdot|s)}}{\mathbb{E}}\left[\nabla^{\top}_\phi \nabla_\theta  d^2(\pi_\phi(\cdot|s),\pi_\theta(\cdot|s)) -\frac{\nabla_\theta \pi_\theta(a|s)} {\lambda \pi_\phi(a|s)} \nabla^{\top}_\phi  Q^{\pi_\phi}_{\tau}(s,a)  \right]
%\end{aligned}
%$$
%at $\theta=\theta_\tau^{\prime}$. 
\end{proposition}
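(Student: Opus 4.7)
The plan is to combine the chain rule applied to $J_\tau(\pi_{\theta^{\prime}_\tau})$ with the implicit function theorem applied to the first-order stationarity condition of the lower-level problem (\ref{withintask}). Since $\theta^{\prime}_\tau$ depends on $\phi$ only through $\mathcal{A}lg$, the chain rule immediately gives
$$\nabla_\phi J_\tau(\pi_{\theta^{\prime}_\tau}) = \nabla_\phi \theta^{\prime}_\tau \cdot \nabla_{\theta^{\prime}_\tau} J_\tau(\pi_{\theta^{\prime}_\tau}),$$
and the standard policy gradient theorem expresses the rightmost factor as the $\frac{1}{1-\gamma}\mathbb{E}_{s \sim \nu^{\pi_{\theta^{\prime}_\tau}}_\tau, a \sim \pi_{\theta^{\prime}_\tau}(\cdot|s)}[\nabla_{\theta^{\prime}_\tau} \pi_{\theta^{\prime}_\tau}(a|s)\, Q_\tau^{\pi_{\theta^{\prime}_\tau}}(s,a)/\pi_{\theta^{\prime}_\tau}(a|s)]$ expression in the statement. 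This step is routine and uses only smoothness of $\pi_{\theta^{\prime}_\tau}$.

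Next, I would derive $\nabla_\phi \theta^{\prime}_\tau$ by implicit differentiation of the unconstrained first-order optimality condition. Writing
$$F(\phi, \theta) := \mathbb{E}_{s \sim \nu^{\pi_\phi}_\tau,\, a \sim \pi_\phi(\cdot|s)}\!\left[\frac{\nabla_\theta \pi_\theta(a|s)}{\pi_\phi(a|s)} Q_\tau^{\pi_\phi}(s,a) - \lambda \nabla_\theta d^2(\pi_\phi(\cdot|s), \pi_\theta(\cdot|s))\right],$$
the stationarity of the lower-level objective at $\theta^{\prime}_\tau$ gives $F(\phi, \theta^{\prime}_\tau) \equiv 0$ as an identity in $\phi$. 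Differentiating in $\phi$ yields $\nabla_\phi^\top F + \nabla_\phi^\top \theta^{\prime}_\tau \cdot \nabla_\theta F = 0$, and the assumed existence of $\nabla_\phi J_\tau(\pi_{\theta^{\prime}_\tau})$ is effectively the implicit-function-theorem hypothesis: it forces $\nabla_\theta F$ at $\theta = \theta^{\prime}_\tau$ to be invertible, so that $\nabla_\phi^\top \theta^{\prime}_\tau = -\nabla_\phi^\top F \cdot (\nabla_\theta F)^{-1}$. Computing $\nabla_\theta F$ produces the Hessian inside the inverse bracket in the statement, and computing $\nabla_\phi F$ produces the mixed derivatives $\nabla_\phi \nabla_\theta d^2(\pi_\phi, \pi_\theta)$ and $\nabla_\phi Q_\tau^{\pi_\phi}$, the latter supplied by the computation referenced in Appendix \ref{computationofgradientq_meta_rl}. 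Factoring the overall $\lambda$ out of both factors reproduces the $1/\lambda$ weights in the claimed formula.

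The main obstacle is bookkeeping the multiple $\phi$-dependencies inside $F$, namely the visitation distribution $\nu^{\pi_\phi}_\tau$, the importance-sampling denominator $\pi_\phi(a|s)$, the action-value $Q_\tau^{\pi_\phi}$, and the distance $d^2(\pi_\phi, \pi_\theta)$. The final expression retains only the $Q_\tau^{\pi_\phi}$ and $d^2$ contributions: the importance weight $\nabla_\theta\pi_\theta(a|s)/\pi_\phi(a|s)$ cancels against the sampling density, reducing the $a$-expectation to a sum in which only $Q_\tau^{\pi_\phi}$ carries a $\phi$-derivative, while the $\nu^{\pi_\phi}_\tau$ contribution is absorbed by exploiting the pointwise form of the stationarity condition and treating $\nu^{\pi_\phi}_\tau$ as the fixed sampling measure—this is the standard implicit-differentiation convention for hypergradient computation in bilevel optimization.
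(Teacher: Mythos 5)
Your proposal follows the same route as the paper's proof: the chain rule $\nabla_\phi J_\tau(\pi_{\theta'_\tau})=\nabla_\phi\theta'_\tau\,\nabla_{\theta'_\tau}J_\tau(\pi_{\theta'_\tau})$ combined with the policy gradient theorem for the outer factor, and implicit differentiation of the lower-level first-order condition for $\nabla_\phi\theta'_\tau$, with the $1/\pi_\phi(a|s)$ importance weight cancelling against the sampling density so that only $Q^{\pi_\phi}_\tau$ and $d^2$ carry $\phi$-derivatives. One caveat: the paper does not discard the $\nabla_\phi\nu^{\pi_\phi}_\tau$ contribution ``by convention''---it argues (using Assumption 2, mirroring the tabular derivation) that the inner objective is stationary pointwise in $s$ at $\theta=\theta'_\tau$, so that $\sum_{s}\nabla_\phi\nu^{\pi_\phi}_\tau(s)\,\nabla_\theta(\cdots)=0$ holds exactly; the ``pointwise stationarity'' mechanism you mention is the correct and only justification needed, and the formula would otherwise be merely an approximation rather than the claimed identity.
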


A sufficient condition of $\nabla_{\phi} J_{\tau}(\pi_{\theta^{\prime}_{\tau}})$ being existent is the objective function of \eqref{withintask} is locally strongly concave at $\theta=\theta^{\prime}_{\tau}$, as proven in Appendix \ref{Proofs_of_Propositions2_meta_rl}. The computation of $\nabla_{\phi} Q^{\pi_\phi}_{\tau}(s,\cdot)$ is shown in Appendix \ref{computationofgradientq_meta_rl}. 
Note that we need to compute the inverse of the Hessian when computing the hypergradient in Proposition \ref{propsition2_meta_rl}. 
Similar to several widely used RL algorithms, such as TRPO \cite{schulman2015trust} and CPO \cite{achiam2017constrained}, we apply the conjugate gradient algorithm \cite{hestenes1952methods} to compute the inverse of the Hessian, which has demonstrated high efficiency across a wide range of applications of RL and meta-learning \cite{schulman2015trust,ji2021bilevel,finn2017model}. 
More clarifications about the computation efficiency of the Hessian inverse are shown in Appendix \ref{sjegfjsdfbnsdjfnj-meta-rl}.
%The details are shown in Appendix C of \cite{schulman2015trust}.

\begin{algorithm}[htb]
\begin{footnotesize}
\caption{\small Meta-Training for BO-MRL} 
\label{alg:framework0_meta_rl}
\begin{algorithmic}[1] 
\REQUIRE Regularization weight $\lambda>0$; Initial meta-parameter $\phi_{0}$; learning rate $\alpha$
\FOR{$t = 0, \cdots, T$}
\STATE \label{a1s2_meta_rl} Sample a task $\tau \sim \mathbb{P}(\Gamma)$ with the MDP $\mathcal{M}_{\tau}$ i.i.d.
\STATE Evaluate $Q^{\pi_{\phi_t}}_{\tau}(\cdot,\cdot)$ for current meta-policy $\pi_{\phi_t}$ by Monte-Carlo sampling
\STATE Adapt the task-specific policy $\pi_{{\theta}^{\prime}_{\tau}}$ from the meta-policy $\pi_{\phi_t}$ by solving $ \pi_{{\theta}^{\prime}_{\tau}}= \mathcal{A}lg(\lambda,\phi_t, \tau)$ defined in (\ref{dis_withintask}) or (\ref{withintask}).
\STATE Evaluate $Q^{\pi_{{\theta}^{\prime}_{\tau}}}_{\tau}(\cdot,\cdot)$ for adapted policy $\pi_{{\theta}^{\prime}_{\tau}}$ Monte-Carlo sampling
\STATE \label{a1s4_metarl}  Compute the hypergradient $\nabla_{\phi} J_{\tau}(\pi_{\theta^{\prime}_{\tau}})$ in Proposition \ref{propsition1_meta_rl} or \ref{propsition2_meta_rl} by conjugate gradient method
\STATE Update meta-parameter $\phi_{t+1}=\phi_{t}+\alpha \nabla_{\phi} J_{\tau}(\pi_{\theta^{\prime}_{\tau}})$
\ENDFOR 
\STATE Return $\phi_{T}$
\end{algorithmic}
\end{footnotesize}
\end{algorithm} 

With the hypergradient computations in Proposition \ref{propsition1_meta_rl} and Proposition \ref{propsition2_meta_rl}, we apply the stochastic gradient ascent (SGD) to solve the optimization problem in (\ref{meta-algorithm}). The meta-training of the bilevel optimization framework for meta-RL (BO-MRL) is formally stated in Algorithm \ref{alg:framework0_meta_rl}. 
The state-action value function in lines 3 and 5 can be estimated by many approaches, including Monte-Carlo sampling used in MAML \cite{finn2017model} and vine in \cite{schulman2015trust}. 
We also propose a practical algorithm of Algorithm \ref{alg:framework0_meta_rl}, as shown in Algorithm \ref{alg:framework1_meta_rl} in Appendix \ref{awkegfhjdbf_meta_rl}, which includes more implementation details of the algorithm and several mechanisms to improve Algorithm \ref{alg:framework0_meta_rl}.

\section{Theoretical Results}
\label{Theoretical}

In this section, we quantify the performance of Algorithm \ref{alg:framework0_meta_rl}, where the softmax policies and several distance metrics introduced in Section \ref{task_variance} are adopted. 
For convenience, we denote $\mathcal{A} l g^{(1)}$ as $\mathcal{A} l g$ in (\ref{dis_withintask}) and (\ref{withintask}) when $D_\tau=D_{\tau,1}$, and denote $\mathcal{A} l g^{(2)}$ and $\mathcal{A} l g^{(3)}$ in an analogous way.
In Section \ref{rl_task_meta_rl}, we introduce the softmax policy and necessary assumptions.
In the following three sections, we consider two cases of Algorithm \ref{alg:framework0_meta_rl}, including
(i) Algorithm \ref{alg:framework0_meta_rl} with the within-task algorithm $\mathcal{A} l g^{(1)}$ and $\mathcal{A} l g^{(2)}$ for the tabular softmax policy; and (ii) Algorithm \ref{alg:framework0_meta_rl} with the within-task algorithm $\mathcal{A} l g^{(3)}$ for the softmax policy with function approximation. 
For the algorithms in (i) and (ii), we study the existence of hypergradient in Section \ref{qhjwerwgsdf_meta_rl}, derive the convergence guarantees in Section \ref{qhjwefvhjadv_meta_rl}, and derive the near-optimality under the all-task optimum, i.e., derive the upper bounds of TEOG, in Section \ref{qfzzzaasdqefvgo_meta_rl}.

\subsection{Softmax policy and assumptions} 
\label{rl_task_meta_rl}
We apply the softmax policies, which are commonly applied in \cite{xu2021crpo,liu2019neural,wang2020global}, and use the following assumptions on the task $\tau$. 

\textbf{Softmax policies.} Consider the softmax policies $\hat{\pi}_\theta$ parameterized by $\theta$ for (i) the tabular policy and (ii) the policy with function approximation.
In particular, the tabular policy in a discrete state-action space is defined by $\hat{\pi}_\theta(\cdot | s) \propto \exp (\theta(s, \cdot)) $, where $\theta \in \mathbb{R}^{|\mathcal{S}| \times |\mathcal{A}|}$ is a tabular map.
The policy with function approximation is defined by 
$\hat{\pi}_{\theta}(\cdot|s) \propto \exp( f_{\theta}(s, \cdot))$, where $f_{\theta}$ is a function approximation model $\mathcal{S}\times \mathcal{A}\rightarrow \mathbb{R}$ with the parameter $\theta \in  \mathbb{R}^n$.

%Consider the softmax policies $\hat{\pi}_\theta$ parameterized by $\theta$ for (i) the tabular policy and (ii) the policy with function approximation. In particular, the tabular policy in a discrete state-action space is defined by $\hat{\pi}_\theta(a | s)\triangleq\frac{\exp (\theta(s, a))}{\sum_{a^{\prime} \in \mathcal{A}} \exp \left( \theta\left(s, a^{\prime}\right)\right)}$, $\forall(s, a) \in \mathcal{S} \times \mathcal{A}$, where $\theta \in \mathbb{R}^{|\mathcal{S}| \times |\mathcal{A}|}$ is a tabular map. The policy with function approximation is defined by $\hat{\pi}_{\theta}(a|s) \triangleq \frac{\exp( f_{\theta}(s, a))}{\int_{\mathcal{A}}\exp( f_{\theta}(s, a^{\prime}))da^{\prime}}$, $\forall(s, a) \in \mathcal{S} \times \mathcal{A}$, for continuous action space; it is defined by $\hat{\pi}_\theta(a | s)\triangleq\frac{\exp (f_\theta(s, a))}{\sum_{a^{\prime} \in \mathcal{A}} \exp \left( f_\theta\left(s, a^{\prime}\right)\right)}$, $\forall(s, a) \in \mathcal{S} \times \mathcal{A}$, for discrete action space. Here $f_{\theta}$ is a function approximation model $\mathcal{S}\times \mathcal{A}\rightarrow \mathbb{R}$ with the parameter $\theta \in  \mathbb{R}^n$.

\begin{assumption}[Upper bound of advantage function]
\label{assdsssdsss_meta_rl}
For any task $\tau \in \Gamma$ and any softmax policy $\hat{\pi}_\theta$, $|A^{\hat{\pi}_\theta}_{\tau}(s,a)| \leq A_{max}$ for any $a \in \mathcal{A}$ and any $s \in \mathcal{S}$.
\end{assumption}
Since the reward $r_{\tau} \leq r_{max}$ is bounded, it is easy to show that $|A^{\hat{\pi}_\theta}_{\tau}(s,a)|\leq\frac{r_{max}}{1-\gamma}$ and Assumption \ref{assdsssdsss_meta_rl} always holds. But we still keep Assumption \ref{assdsssdsss_meta_rl} here, since there usually exist $A_{max}$ such that $ A_{max} \ll \frac{r_{max}}{1-\gamma}$. 
We also have the following assumption and show its remark. 
\begin{assumption}[Sufficient state visit]
\label{awhevdbsfvb_meta_rl}
For any task $\tau \in \Gamma$, there exists a constant $\epsilon >0$, such that for all bounded parameters $\phi$, 
$\nu^{\hat{\pi}_{\phi}}_{\tau}(s) \geq \epsilon $ for all $s \in \mathcal{S}$.
\end{assumption}
\begin{remark}
\label{asdaekfjnnxzx_meta-rl}
Here are two sufficient conditions for Assumption \ref{awhevdbsfvb_meta_rl}: (i) For any task $\tau \in \Gamma$, the MDP $\mathcal{M}_{\tau}$ is ergodic \cite{moldovan2012safe,sutton2018reinforcement}; or (ii) the initial state distribution $\rho_{\tau}$ has $\rho_{\tau}(s) >0$ for any $s \in \mathcal{S}$. 
\end{remark}

The proof of Remark \ref{asdaekfjnnxzx_meta-rl} is shown in Appendix \ref{wahefvwhdfbhbh_meta-rl}. Note that (i) of Remark \ref{asdaekfjnnxzx_meta-rl} is a mild condition and is assumed in recent studies on RL algorithm analysis \cite{wu2020finite,qiu2021finite}. 

For the policy with function approximation, we require the following additional assumptions on the approximate function $f_\theta$, which are standard or weaker than those in the analysis of meta-learning and meta-RL problems \cite{denevi2019learning,fallah2021convergence,fallah2020convergence,fallah2021generalization}. 

\begin{assumption}[Property of the approximate function]
\label{wdfdwdf_meta_rl}
For any state-action pair $(s,a) \in \mathcal{S} \times \mathcal{A} $,
(i) the approximate function $f_\theta(s,a)$ are cubic differentiable. (ii) $f_{\theta}(s,a)$ is $L_1$-Lipschitz, i.e., $\left\|f_{\theta_1}(s,a)-f_{\theta_2}(s,a)\right\| \leq L_1\|\theta_1-\theta_2\|$ for any $\theta_1, \theta_2 \in \mathbb{R}^n$. (iii) $\nabla_\theta f_{\theta}(s,a)$ is $L_2$-Lipschitz, i.e., $\left\|\nabla_\theta f_{\theta_1}(s,a)-\nabla_\theta f_{\theta_2}(s,a)\right\| \leq L_2\|\theta_1-\theta_2\|$ for any $\theta_1, \theta_2 \in \mathbb{R}^n$, (iv) $\nabla^2_\theta f_{\theta}(s,a)$ is $L_3$-Lipschitz, i.e., $\left\|\nabla^2_\theta f_{\theta_1}(s,a)-\nabla^2_\theta f_{\theta_2}(s,a)\right\| \leq L_3\|\theta_1-\theta_2\|$ for any $\theta_1, \theta_2 \in \mathbb{R}^n$.
\end{assumption}

\subsection{Existence of hypergradient.}
\label{qhjwerwgsdf_meta_rl}
An essential prerequisite for using Algorithm \ref{alg:framework0_meta_rl} is that the hypergradients in Propositions \ref{propsition1_meta_rl} and \ref{propsition2_meta_rl} exist.
As shown in Section \ref{meta_rl_sdfsdfs}, for the tabular policy, when $i=1$ or $2$, the hypergradient $\nabla_{\phi}  J_{\tau}(\mathcal{A} l g^{(i)} (\hat{\pi}_{\phi}, \lambda, \tau))$ exists for any $\phi$. 
For the policy with function approximation, we derive the following sufficient condition of the hypergradient being existent. Its proof is shown in Appendix \ref{qzxhgchzfgtcxccxxcrfirafsd_meta_rl}.

\begin{proposition}[Existence of hypergradient for the policy with function approximation]
In both discrete and continuous action space, consider the softmax policy with function approximation shown in Section \ref{rl_task_meta_rl}. 
Suppose that Assumptions \ref{assdsssdsss_meta_rl} and \ref{wdfdwdf_meta_rl} hold.
If $\lambda > (6 L_1^2 + 2L_2)A_{max} $, $\nabla_{\phi}  J_{\tau}(\mathcal{A} l g^{(3)} (\hat{\pi}_{\phi}, \lambda, \tau))$ always for any $\phi$. 
\end{proposition}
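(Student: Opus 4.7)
The plan is to establish existence of the hypergradient by verifying the sufficient condition stated right after Proposition 2, namely that the inner objective of \eqref{withintask} with $D_\tau = D_{\tau,3}$ is locally strongly concave at its maximizer $\theta = \theta_\tau'$. Because the threshold on $\lambda$ is independent of $\phi$, the natural route is to prove \emph{global} strong concavity of the inner objective in $\theta$ for every $\phi$; this automatically yields a unique $\theta_\tau'$ and verifies the local hypothesis of Proposition 2.

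Write the inner objective as $h(\theta) = \mathbb{E}_{s \sim \nu^{\hat{\pi}_\phi}_\tau}\!\left[\sum_{a} \hat{\pi}_\theta(a|s)\, Q_\tau^{\hat{\pi}_\phi}(s,a)\right] - \lambda\|\theta-\phi\|^2$, where the importance-sampling identity absorbs the $1/\hat{\pi}_\phi(a|s)$ factor and $D_{\tau,3}^2(\hat{\pi}_\phi,\hat{\pi}_\theta)$ collapses to $\|\theta-\phi\|^2$ since its integrand is constant in $s$. The quadratic penalty contributes $-2\lambda I$ to $\nabla^2_\theta h(\theta)$. For the reward term, I exploit the cancellation $\sum_a \nabla^2_\theta \hat{\pi}_\theta(a|s) = \nabla^2_\theta \bigl(\sum_a \hat{\pi}_\theta(a|s)\bigr) = 0$ to replace $Q_\tau^{\hat{\pi}_\phi}(s,a)$ by the advantage $A_\tau^{\hat{\pi}_\phi}(s,a) = Q_\tau^{\hat{\pi}_\phi}(s,a) - V_\tau^{\hat{\pi}_\phi}(s)$, which is uniformly bounded by $A_{max}$ under Assumption \ref{assdsssdsss_meta_rl}. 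It therefore suffices to control $\|\nabla^2_\theta \hat{\pi}_\theta(a|s)\|$.

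The main technical step is to bound this softmax Hessian via Assumption \ref{wdfdwdf_meta_rl}. Writing $\nabla_\theta \log \hat{\pi}_\theta(a|s) = \nabla_\theta f_\theta(s,a) - \mathbb{E}_{a'\sim\hat{\pi}_\theta(\cdot|s)}\nabla_\theta f_\theta(s,a')$ and using $\nabla^2_\theta \hat{\pi}_\theta(a|s) = \hat{\pi}_\theta(a|s)\bigl[\nabla_\theta\log \hat{\pi}_\theta\,\nabla_\theta\log\hat{\pi}_\theta^{\!\top} + \nabla^2_\theta \log \hat{\pi}_\theta\bigr]$, I expand $\nabla^2_\theta \log \hat{\pi}_\theta(a|s) = \nabla^2_\theta f_\theta(s,a) - \mathbb{E}_{a'}[\nabla^2_\theta f_\theta(s,a')] - \mathrm{Cov}_{a'\sim\hat{\pi}_\theta(\cdot|s)}(\nabla_\theta f_\theta(s,a'))$. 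Assumption \ref{wdfdwdf_meta_rl} supplies $\|\nabla_\theta f_\theta\|\le L_1$ and $\|\nabla^2_\theta f_\theta\|\le L_2$, so the outer-product term is bounded by $4L_1^2$, the covariance piece by $2L_1^2$ (triangle inequality on second moment and mean-outer-product), and the expected Hessian terms jointly by $2L_2$. Collecting gives $\|\nabla^2_\theta \hat{\pi}_\theta(a|s)\| \le \hat{\pi}_\theta(a|s)\,(6L_1^2 + 2L_2)$, after which summing over $a$ collapses the probability factor and yields $\bigl\|\sum_a \nabla^2_\theta\hat{\pi}_\theta(a|s)\,A_\tau^{\hat{\pi}_\phi}(s,a)\bigr\|\le (6L_1^2+2L_2)A_{max}$. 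Combining pieces produces $\nabla^2_\theta h(\theta) \preceq \bigl[(6L_1^2+2L_2)A_{max} - 2\lambda\bigr] I$, which is negative definite whenever $\lambda > (6L_1^2+2L_2)A_{max}$ (indeed with slack). Strong concavity of $h$ on $\mathbb{R}^n$ then guarantees a unique $\theta_\tau'$ and satisfies the sufficient condition of Proposition \ref{propsition2_meta_rl}, so $\nabla_\phi J_\tau(\mathcal{A}lg^{(3)}(\hat{\pi}_\phi,\lambda,\tau))$ exists for every $\phi$. The main obstacle is constant-tracking in the softmax Hessian bound: three separate contributions (squared score, log-partition Hessian, direct Hessian of $f_\theta$) must each be bounded with the correct polynomial in $L_1,L_2$ so that the resulting constant $6L_1^2+2L_2$ matches the threshold in the proposition; the cancellation $\sum_a\nabla^2_\theta\hat{\pi}_\theta(a|s)=0$ is crucial to replace $Q_\tau^{\hat{\pi}_\phi}$ by $A_\tau^{\hat{\pi}_\phi}$ and avoid a $V_\tau^{\hat{\pi}_\phi}$-dependent blow-up.
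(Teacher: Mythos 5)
Your proposal is correct and follows essentially the same route as the paper's proof (Lemma in Appendix J): both reduce existence to strong concavity of the inner objective, use the cancellation $\int_{\mathcal{A}}\nabla^2_\theta\hat{\pi}_\theta(a|s)\,da=0$ to replace $Q_\tau^{\hat{\pi}_\phi}$ by $A_\tau^{\hat{\pi}_\phi}$, and bound the softmax Hessian by $(6L_1^2+2L_2)\hat{\pi}_\theta(a|s)$ so that the Hessian of the objective is negative definite when $\lambda>(6L_1^2+2L_2)A_{max}$. Your score-function decomposition of $\nabla^2_\theta\hat{\pi}_\theta$ is only a cosmetic variant of the paper's direct quotient-rule expansion and yields the identical constant.
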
 

\subsection{Convergence guarantee} 
\label{qhjwefvhjadv_meta_rl}
We begin with the convergence guarantee of Algorithm \ref{alg:framework0_meta_rl} for the tabular policy.
The following notations are used in the theorem: $B_i$, $C_i$, $G_i$, $K_i$, $M_i$ ($i=1$ and $2$), where $K_i \triangleq \frac{2(B_i+2C_i^2)r^2_{max}}{(1-\gamma)^4}$, $M_i \triangleq \frac{(B_i+2C_i^2)G_i r_{max}}{(1-\gamma)^4}$ for $i=1$ and $2$.
$B_1 \triangleq \frac{16 r_{max}}{\lambda(1-\gamma)^3}+\frac{24 }{1-\gamma}+\frac{12}{\lambda}$, $C_1\triangleq\frac{6}{1-\gamma}$, and $G_1 \triangleq\frac{4A_{max}}{(1-\gamma)^2}$.
${B_2} \triangleq  \frac{16r_{max} }{\lambda(1-\gamma)^3}+\frac{18}{(1-\gamma)^2} $, $C_2\triangleq\frac{4}{1-\gamma}$, and $G_2\triangleq\frac{2A_{max}}{(1-\gamma)^2}$.

\begin{theorem}[Convergence guarantee for \textbf{tabular softmax policy}]
\label{lemma_convergence_meta_rl}
Consider the tabular softmax policy in the discrete action space.
Suppose that Assumptions \ref{assdsssdsss_meta_rl} and \ref{awhevdbsfvb_meta_rl} hold. 
Let $\{\phi_t\}_{t=1}^{T}$ be the sequence of meta-parameters generated
by Algorithm \ref{alg:framework0_meta_rl} with $\lambda \geq 2A_{max}$ and the step size $\alpha= \min \left\{{\left(\frac{r_{max}B_{i}}{(1-\gamma)^2}+\frac{2 \gamma r_{max} C_i^2}{(1-\gamma)^3}\right)}^{-1}, \frac{1}{G_i\sqrt{T}}\right\}.$
Then, the bound: 
$
\frac{1}{T} \sum_{t=1}^{T} \mathbb{E} [ \|\nabla_{\phi} \mathbb{E}_{\tau \sim \mathbb{P}(\Gamma)}[ J_{\tau}(\mathcal{A} l g^{(i)}(\hat{\pi}_{\phi_t}, \lambda, \tau)) ]\|^2 ] 
\leq \frac{K_i}{T} + \frac{M_i}{\sqrt{T}}.
$
holds for $i=1$ or $2$.
%\end{small}
\end{theorem}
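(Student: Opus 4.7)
The plan is to recognize Theorem 1 as an instance of the standard convergence result for stochastic gradient ascent on a smooth non-convex objective whose stochastic gradient has bounded second moment, and then to derive the smoothness constant and gradient bound from the closed-form hypergradient in Proposition 1 specialized to the KL distances $d_1,d_2$.

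First I would show that, for the tabular softmax policy and $D_\tau=D_{\tau,i}$ with $i\in\{1,2\}$, the per-task hypergradient $\nabla_\phi J_\tau(\mathcal{A}lg^{(i)}(\hat{\pi}_\phi,\lambda,\tau))$ is uniformly bounded in norm by $G_i$. For $d_1,d_2$ the matrix $M(s)$ in Proposition 1 is diagonal with entries $\lambda/\hat{\pi}(a|s)$ (resp.\ $\lambda\hat{\pi}_\phi(a|s)/\hat{\pi}(a|s)^2$), so the projection factor $M^{-1}-M^{-1}\mathbf{1}\mathbf{1}^\top M^{-1}/(\mathbf{1}^\top M^{-1}\mathbf{1})$ has a clean bound in $1/\lambda$; combining this with Assumption~\ref{assdsssdsss_meta_rl} and the closed-form expression for $\mathcal{A}lg^{(i)}$ yields $\|\nabla_\phi J_\tau\|\le G_i$ with $G_i\propto A_{\max}/(1-\gamma)^2$, matching the stated $G_1,G_2$.

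Second I would prove smoothness of the meta-objective $F(\phi):=\mathbb{E}_\tau[J_\tau(\mathcal{A}lg^{(i)}(\hat{\pi}_\phi,\lambda,\tau))]$. The natural decomposition is $\nabla_\phi J_\tau = (\nabla_\phi\theta'_\tau)\cdot \nabla_{\theta'_\tau} J_\tau$: Lipschitzness of $\phi\mapsto\hat{\pi}_{\theta'_\tau(\phi)}$ (with constant $C_i$, derived from the implicit differentiation formula and the $1/\lambda$ contraction from the regularizer) combined with the performance-difference lemma gives Lipschitzness of $\phi\mapsto\nabla J_\tau$ through the $Q^{\hat\pi_{\theta'_\tau}}_\tau$ and $\nu^{\hat\pi_{\theta'_\tau}}_\tau$ factors (whence the $\gamma r_{\max}/(1-\gamma)^3$ term proportional to $C_i^2$), while smoothness of the implicit map itself contributes the $r_{\max}/(1-\gamma)^2$ term proportional to $B_i$. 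Averaging over $\tau$ preserves the same constant $L_i := r_{\max}B_i/(1-\gamma)^2 + 2\gamma r_{\max} C_i^2/(1-\gamma)^3$, which is exactly the reciprocal of the first branch in the step size $\alpha$.

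Third, with $L_i$-smoothness and the stochastic gradient $g_t=\nabla_\phi J_{\tau_t}(\mathcal{A}lg^{(i)}(\hat{\pi}_{\phi_t},\lambda,\tau_t))$ being unbiased for $\nabla F(\phi_t)$ (task is i.i.d.\ sampled in line~2 of Algorithm~\ref{alg:framework0_meta_rl}) and satisfying $\mathbb{E}\|g_t\|^2\le G_i^2$, I would apply the standard descent inequality
\begin{equation*}
\mathbb{E}[F(\phi_{t+1})] \ge \mathbb{E}[F(\phi_t)] + \alpha\,\mathbb{E}\|\nabla F(\phi_t)\|^2 - \tfrac{L_i\alpha^2}{2}G_i^2,
\end{equation*}
telescope over $t=1,\dots,T$, use $F(\phi_{T+1})-F(\phi_1)\le 2r_{\max}/(1-\gamma)$, and substitute $\alpha=\min\{1/L_i,1/(G_i\sqrt{T})\}$. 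The two branches of $\alpha$ yield the $K_i/T$ and $M_i/\sqrt T$ terms respectively, with the prefactors $2(B_i+2C_i^2)r_{\max}^2/(1-\gamma)^4$ and $(B_i+2C_i^2)G_i r_{\max}/(1-\gamma)^4$ stated in the theorem.

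The main obstacle will be Step~2: carefully propagating the implicit-differentiation formula of Proposition~\ref{propsition1_meta_rl} through two derivatives to get explicit Lipschitz constants, since the map involves $M(s)^{-1}$, the projection onto the simplex, and the state-visitation distribution $\nu^{\hat{\pi}_{\theta'_\tau}}_\tau$ which itself depends on $\phi$ through $\theta'_\tau$. The regularity condition $\lambda\ge 2A_{\max}$ is what prevents $M(s)^{-1}$ from blowing up and keeps the Lipschitz constants of $\phi\mapsto\hat{\pi}_{\theta'_\tau(\phi)}$ and its derivative finite; verifying this quantitatively for both $d_1$ and $d_2$ (using the diagonal structure and the closed-form softmax update) is the computational heart of the proof, and is what produces the particular numerical values of $B_i$ and $C_i$. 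Everything else is a direct instantiation of standard non-convex SGD analysis.
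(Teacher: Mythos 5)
Your proposal follows essentially the same route as the paper's proof: bound the per-task hypergradient by $G_i$ using the diagonal structure of $M(s)$ and the ratio bounds on $\hat{\pi}_{\theta'_\tau}/\hat{\pi}_\phi$ enforced by $\lambda \ge 2A_{\max}$, establish the smoothness constant $r_{\max}B_i/(1-\gamma)^2 + 2\gamma r_{\max}C_i^2/(1-\gamma)^3$ from bounds on $\sum_a\|\nabla_\phi\hat{\pi}_{\theta'_\tau}(a|s)\|$ and $\sum_a\|\nabla^2_\phi\hat{\pi}_{\theta'_\tau}(a|s)\|$ (the paper routes this through Lemma D.2 of Agarwal et al., which is exactly the decomposition you describe), and then invoke the standard Ghadimi--Lan nonconvex SGD bound with the two-branch step size. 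The proposal is correct and matches the paper's argument, including the identification of Step 2 as the computational heart of the proof.
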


The first expectation comes from the random sampling in line 2 of Algorithm \ref{alg:framework0_meta_rl}.
The proofs of Theorem \ref{lemma_convergence_meta_rl} are shown in Appendices \ref{qhshfjwehfj_meta_rl} and \ref{wfghxsghfcxzvhxzchvdh_meta_rl}. 

The following theorem shows the convergence guarantee for the policy with function approximation. 
The notations are used in the theorem: $B_3$, $C_3$, $G_3$, $K_3$, $M_3$, where $K_3 \triangleq \frac{2(B_3+2C_3^2)r^2_{max}}{(1-\gamma)^4}$, $M_3 \triangleq \frac{(B_3+2C_3^2)G_3 r_{max}}{(1-\gamma)^4}$, $G_3\triangleq \frac{L_1 A_{max} (\lambda + \frac{2\gamma}{1-\gamma} L_1^2 A_{max})}{(1-\gamma)(\lambda - (6 L_1^2 + 2L_2) A_{max})}$, 
$C_3\triangleq\frac{ 2L_1 (\lambda + \frac{2\gamma}{1-\gamma} L_1^2 A_{max})}{(1-\gamma)(\lambda - (6 L_1^2 + 2L_2) A_{max})}$, and $B_3 \triangleq  \frac{(160 L_1^3 + 56 L_1 L_2 +4 L_3) (\lambda + \frac{2\gamma}{1-\gamma} L_1^2 A_{max})^2 }{(1-\gamma)^3(\lambda - (6 L_1^2 + 2L_2) A_{max})^2}$.

\begin{theorem}[Convergence guarantee for \textbf{softmax policy with function approximation}]
\label{2_convergence_meta_rl}
In both discrete and continuous action space, consider the softmax policy with function approximation.
Suppose that Assumptions \ref{assdsssdsss_meta_rl}, \ref{awhevdbsfvb_meta_rl}, and \ref{wdfdwdf_meta_rl} hold. 
Let $\{\phi_t\}_{t=1}^{T}$ be the sequence of meta-parameters generated by Algorithm \ref{alg:framework0_meta_rl} with $\lambda > (6 L_1^2 + 2L_2)A_{max} $ and the step size $\alpha= \min \left\{{\left(\frac{r_{max}B_3}{(1-\gamma)^2}+\frac{2 \gamma r_{max} C_3^2}{(1-\gamma)^3}\right)}^{-1}, \frac{1}{G_3\sqrt{T}}\right\}$.
Then, the bound
$
\frac{1}{T} \sum_{t=1}^{T} \mathbb{E} \left[ \|\nabla_{\phi} \mathbb{E}_{\tau \sim \mathbb{P}(\Gamma)}[ J_{\tau}(\mathcal{A} l g^{(3)} (\hat{\pi}_{\phi_t}, \lambda, \tau)) ]\|^2 \right]
\leq \frac{ K_3}{T} + \frac{M_3}{\sqrt{T}}.
$ holds.

\end{theorem}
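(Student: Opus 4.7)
The plan is to reduce the statement to the standard convergence analysis of stochastic gradient ascent on an $L$-smooth nonconvex objective with bounded-variance stochastic updates. Let $F(\phi) \triangleq \mathbb{E}_{\tau \sim \mathbb{P}(\Gamma)}[J_\tau(\mathcal{A}lg^{(3)}(\hat{\pi}_\phi, \lambda, \tau))]$ and $g_t \triangleq \nabla_\phi J_{\tau_t}(\mathcal{A}lg^{(3)}(\hat{\pi}_{\phi_t}, \lambda, \tau_t))$. Because tasks are drawn i.i.d.\ in line~2 of Algorithm~\ref{alg:framework0_meta_rl}, $\mathbb{E}[g_t\mid\phi_t] = \nabla F(\phi_t)$, so the update $\phi_{t+1} = \phi_t + \alpha g_t$ is plain SGD on $F$. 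Once I establish (a) $\|g_t\| \le G_3$ almost surely and (b) $\nabla F$ is $L_F$-Lipschitz with $L_F = \tfrac{r_{max}B_3}{(1-\gamma)^2} + \tfrac{2\gamma r_{max} C_3^2}{(1-\gamma)^3}$, the standard descent lemma combined with $\sup F \le r_{max}/(1-\gamma)$ yields
\[ \tfrac{1}{T}\sum_{t=1}^{T}\mathbb{E}\|\nabla F(\phi_t)\|^2 \le \tfrac{2r_{max}}{(1-\gamma)T\alpha} + L_F \alpha G_3^2, \]
and substituting $\alpha = \min\{1/L_F, 1/(G_3\sqrt{T})\}$ produces the claimed $K_3/T + M_3/\sqrt{T}$.

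To obtain (a) and (b) I exploit Proposition~\ref{propsition2_meta_rl}. The condition $\lambda > (6L_1^2 + 2L_2)A_{max}$, combined with Assumptions~\ref{assdsssdsss_meta_rl} and \ref{wdfdwdf_meta_rl}(ii)--(iii), makes the lower-level objective in (\ref{withintask}) strongly concave in $\theta$ at its optimum, so the Hessian inside the bracket of Proposition~\ref{propsition2_meta_rl} has inverse of operator norm at most $(\lambda - (6L_1^2+2L_2)A_{max})^{-1}$, and the implicit map $\phi \mapsto \theta'_\tau(\phi)$ has Jacobian bounded by $C_3/2$ (the precise form of $C_3$ arises by multiplying this Hessian-inverse bound by the numerator $\lambda + 2\gamma L_1^2 A_{max}/(1-\gamma)$ coming from the off-diagonal term in Proposition~\ref{propsition2_meta_rl}). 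Plugging this into the outer factor of Proposition~\ref{propsition2_meta_rl}, and rewriting $Q$ as $V+A$ so that only the $A$-term contributes to the policy-gradient sum (which is then bounded by $L_1 A_{max}$ via Assumption~\ref{wdfdwdf_meta_rl}(ii) and Assumption~\ref{assdsssdsss_meta_rl}), gives $\|g_t\| \le G_3$.

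The main obstacle is (b): differentiating the hypergradient formula in Proposition~\ref{propsition2_meta_rl} once more in $\phi$. The chain rule produces three contributions whose Lipschitz constants I must control: (i) derivatives of the Hessian inverse, handled via $\nabla H^{-1} = -H^{-1}(\nabla H)H^{-1}$ together with the $L_3$-Lipschitz bound on $\nabla^2 f_\theta$ from Assumption~\ref{wdfdwdf_meta_rl}(iv); (ii) Lipschitzness of $Q_\tau^{\hat\pi_\phi}$ and $\nabla_\phi Q_\tau^{\hat\pi_\phi}$ in $\phi$ via the standard performance-difference identity, where the factor $2\gamma C_3^2/(1-\gamma)$ originates from variation of the discounted visitation $\nu^{\hat\pi_\phi}_\tau$ through the simulation-lemma bound $\|\nu^{\hat\pi_{\phi_1}}_\tau - \nu^{\hat\pi_{\phi_2}}_\tau\|_1 \le \tfrac{\gamma}{1-\gamma}\mathbb{E}[\|\hat\pi_{\phi_1}(\cdot|s) - \hat\pi_{\phi_2}(\cdot|s)\|_1]$; and (iii) Lipschitzness of the outer $\nu^{\hat\pi_{\theta'_\tau}}_\tau$ in $\phi$ via the chain rule through the bounded Jacobian $\nabla_\phi \theta'_\tau$ from the previous paragraph. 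Assumption~\ref{awhevdbsfvb_meta_rl} is used here to ensure $\nu^{\hat\pi_\phi}_\tau$-weighted expectations can be compared across different $\phi$ without blowup. Collecting all three contributions and noting that each picks up a factor of $(\lambda - (6L_1^2+2L_2)A_{max})^{-1}$ from the Hessian inverse produces precisely the constants $B_3$ and $C_3$ in the theorem. The tabular case of Theorem~\ref{lemma_convergence_meta_rl} gives the blueprint; the present setting replaces tabular derivatives by the Lipschitz structure of $f_\theta$ in Assumption~\ref{wdfdwdf_meta_rl}, so $L_1, L_2, L_3$ appear wherever tabular gradients contributed a bare constant.
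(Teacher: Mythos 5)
Your proposal is correct and follows essentially the same route as the paper: bound the hypergradient norm by $G_3$ via Proposition~\ref{propsition2_meta_rl} and the strong concavity of the lower-level objective (giving the $(\lambda-(6L_1^2+2L_2)A_{max})^{-1}$ Hessian-inverse factor), establish $L_F$-smoothness of the meta-objective by differentiating the implicit map a second time (which is where $L_3$ and the $B_3$, $C_3$ constants enter), and then invoke the standard nonconvex-SGD bound with the prescribed step size. The only cosmetic difference is that the paper packages the smoothness step as bounds on $\int_{\mathcal{A}}\|\nabla_\phi\hat{\pi}_{\theta'_\tau}(a|s)\|\,da$ and $\int_{\mathcal{A}}\|\nabla^2_\phi\hat{\pi}_{\theta'_\tau}(a|s)\|\,da$ for the composite policy and then cites a black-box smoothness lemma, rather than differentiating the hypergradient formula directly as you describe, but the underlying computations coincide.
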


The first expectation arises from the random sampling in line 2 of Algorithm \ref{alg:framework0_meta_rl}.
The proof of Theorem \ref{2_convergence_meta_rl} is shown in Appendix \ref{qzxhgchzfgtcxccxxcrfirafsd_meta_rl}. 
Theorems \ref{lemma_convergence_meta_rl} and \ref{2_convergence_meta_rl} show that the convergence rate of Algorithm \ref{alg:framework0_meta_rl} is
$\mathcal{O}(\frac{1}{\sqrt{T}})$ and the constants in the notation $\mathcal{O}$ are only related to the discount factor $\gamma$, the reward bound $r_{max}$, the bound of the advantage function $A_{max}$, and the Lipschitz constants of $f_\theta$.

\subsection{Near-optimality under all-task optimum} 
\label{qfzzzaasdqefvgo_meta_rl}
Before the derivation of the optimality analysis, we first introduce two intermediate Lemmas.
\begin{lemma}
\label{important_lemma_3_meta_rl}
Suppose that Assumptions \ref{assdsssdsss_meta_rl}, \ref{awhevdbsfvb_meta_rl} hold.  For any task $\tau$, any bounded parameters $\theta$ and $\theta^{\prime}$, and $i=1$ or $2$, we have $J_{\tau}(\hat{\pi}_{\theta^{\prime}})-J_{\tau}(\hat{\pi}_\theta) \geq 
{\mathbb{E}_{s \sim \nu^{\hat{\pi}}_{\tau},
a \sim \hat{\pi}^\prime(\cdot|s)}}\left[ \frac{A^{\hat{\pi}_\theta}_{\tau}(s,a)}{1-\gamma}\right] - \frac{2\gamma A_{max}}{(1-\gamma)^2 \epsilon} D^2_{\tau,i} (\hat{\pi}_\theta,\hat{\pi}_{\theta^{\prime}}).$
\end{lemma}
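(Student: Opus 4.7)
The plan is to combine the Kakade--Langford performance difference lemma with a change-of-measure argument that converts an expectation over $\nu^{\hat{\pi}_{\theta'}}_\tau$ into one over $\nu^{\hat{\pi}_\theta}_\tau$, and to control the resulting error by the squared divergence $D^2_{\tau,i}$ using Pinsker's inequality together with the state-visitation lower bound of Assumption~\ref{awhevdbsfvb_meta_rl}. Writing $\pi=\hat\pi_\theta$ and $\pi'=\hat\pi_{\theta'}$, I would begin with the identity $J_\tau(\pi') - J_\tau(\pi) = \frac{1}{1-\gamma}\mathbb{E}_{s \sim \nu^{\pi'}_\tau,\, a \sim \pi'(\cdot|s)}[A^\pi_\tau(s,a)]$ and add/subtract the analogous expression with state distribution replaced by $\nu^\pi_\tau$. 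The problem then reduces to bounding the residual $|\Delta|=|\sum_s(\nu^{\pi'}_\tau(s)-\nu^\pi_\tau(s))\,g(s)|$, where $g(s):=\mathbb{E}_{a \sim \pi'(\cdot|s)}[A^\pi_\tau(s,a)]$, by $\frac{2\gamma A_{\max}}{(1-\gamma)\epsilon}\,D^2_{\tau,i}(\pi,\pi')$.

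To handle $|\Delta|$, I would estimate its two ingredients separately. For the advantage factor, the identity $\mathbb{E}_{a \sim \pi(\cdot|s)}[A^\pi_\tau(s,a)]=0$ gives $g(s)=\sum_a(\pi'(a|s)-\pi(a|s))A^\pi_\tau(s,a)$, so $|g(s)|\le 2 A_{\max}D_{\mathrm{TV}}(\pi(\cdot|s),\pi'(\cdot|s))$ by Assumption~\ref{assdsssdsss_meta_rl}. For the measure-shift factor, the standard identity $\nu^{\pi'}_\tau-\nu^\pi_\tau=\gamma(I-\gamma(P^\pi)^\top)^{-1}(P^{\pi'}-P^\pi)^\top\nu^{\pi'}_\tau$ yields $\|\nu^{\pi'}_\tau-\nu^\pi_\tau\|_1\le \frac{2\gamma}{1-\gamma}\mathbb{E}_{s \sim \nu^\pi_\tau}[D_{\mathrm{TV}}(\pi,\pi')[s]]$. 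A weighted Cauchy--Schwarz with weights $\nu^\pi_\tau(s)$ splits $|\Delta|$ into a policy factor $\sqrt{\mathbb{E}_{\nu^\pi_\tau}[D_{\mathrm{TV}}^2]}$ and a measure-shift factor $\sqrt{\chi^2(\nu^{\pi'}_\tau\|\nu^\pi_\tau)}$. Pinsker's inequality (valid in either KL direction, covering both $i=1,2$) bounds the policy factor by $D_{\tau,i}/\sqrt 2$. For the measure-shift factor, Assumption~\ref{awhevdbsfvb_meta_rl} gives $\chi^2(\nu^{\pi'}_\tau\|\nu^\pi_\tau)\le \|\nu^{\pi'}_\tau-\nu^\pi_\tau\|_1^2/\epsilon$, and combining with the measure-shift bound, with Jensen, and with a second Pinsker produces $\chi^2\le \frac{2\gamma^2}{(1-\gamma)^2\epsilon}D^2_{\tau,i}$. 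Multiplying the two factors and rescaling by $1/(1-\gamma)$ delivers the stated error term.

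The main obstacle is getting the precise power of $\epsilon$ in the denominator: a naive pairing of Cauchy--Schwarz with Pinsker produces $1/\sqrt\epsilon$ rather than the claimed $1/\epsilon$, so the decomposition must be arranged so that $\epsilon$ enters linearly via the $\chi^2$ term while Pinsker is applied on \emph{each} factor, and the state-visitation lower bound is used only in the measure piece. A secondary subtlety concerns the case $i=2$: the KL divergence is in the reverse direction, but $D_{\mathrm{TV}}^2\le\tfrac12\mathrm{KL}$ holds in either direction, so one only needs to match the orientation of $D_{\tau,i}$ to the correct application of Pinsker. Careful tracking of the numerical constants through these steps is needed to recover the exact coefficient $\frac{2\gamma A_{\max}}{(1-\gamma)^2\epsilon}$.
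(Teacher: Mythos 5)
Your proposal is correct, and it reaches the stated bound by a route that genuinely differs from the paper's in how the residual is controlled. The paper's proof (Lemmas \ref{important_lemma_1_meta_rl} and \ref{important_lemma_2_meta_rl}) follows TRPO/CPO: it expands $\tilde G=G+\gamma G\Delta G+\gamma^2 G\Delta\tilde G\Delta G$, which produces exactly the residual you call $\Delta$, and bounds it by H\"older as $\|g\|_\infty\cdot\|\nu^{\pi'}_\tau-\nu^{\pi}_\tau\|_1\le 2A_{max}\,D_{TV}^{max}(\pi\,\|\,\pi')\cdot\tfrac{2\gamma}{1-\gamma}\mathbb{E}_{s\sim\nu^{\pi}_\tau}[D_{TV}]$; Assumption \ref{awhevdbsfvb_meta_rl} is then used to convert the max into a mean via $\epsilon D_{TV}^{max}\le\mathbb{E}_{s\sim\nu^{\pi}_\tau}[D_{TV}]$, after which Jensen and Pinsker give the coefficient $\tfrac{2\gamma A_{max}}{(1-\gamma)^2\epsilon}$. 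You instead apply a weighted Cauchy--Schwarz, splitting $|\Delta|$ into $\sqrt{\mathbb{E}_{\nu^{\pi}_\tau}[g^2]}\cdot\sqrt{\chi^2(\nu^{\pi'}_\tau\|\nu^{\pi}_\tau)}$ and pushing the state-visitation lower bound into the $\chi^2$ factor. Both routes share the same starting decomposition (your change-of-measure on top of the performance-difference lemma is algebraically identical to the paper's second-order term), the same $\ell_1$ bound on the visitation shift, and the same Pinsker step; the only real difference is $\ell_\infty\times\ell_1$ versus weighted $\ell_2\times\ell_2$ in the middle. Each step of your chain checks out, including the bound $|g(s)|\le 2A_{max}D_{TV}(\pi(\cdot|s),\pi'(\cdot|s))$ and $\chi^2\le\|\nu^{\pi'}_\tau-\nu^{\pi}_\tau\|_1^2/\epsilon$.

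One correction to your closing paragraph: the worry about $1/\sqrt\epsilon$ versus $1/\epsilon$ is unfounded, and no rearrangement of the decomposition is needed. Carrying your own constants through gives $\tfrac{1}{1-\gamma}|\Delta|\le\tfrac{2\gamma A_{max}}{(1-\gamma)^2\sqrt\epsilon}D^2_{\tau,i}(\hat\pi_\theta,\hat\pi_{\theta'})$, and since $\nu^{\pi}_\tau$ is a probability distribution, $\epsilon\le 1$, so $1/\sqrt\epsilon\le 1/\epsilon$ and your bound is in fact slightly \emph{stronger} than the one claimed in the lemma; the stated inequality follows a fortiori. The arrangement you describe (applying the $\epsilon$ lower bound only inside $\chi^2$, and Pinsker on each factor) still yields $1/\sqrt\epsilon$, not $1/\epsilon$ --- which is fine.
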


\begin{lemma}
\label{important_lemma_5_meta_rl}
Consider the softmax policy with function approximation shown in Section \ref{rl_task_meta_rl}. Suppose that Assumptions \ref{assdsssdsss_meta_rl}, \ref{awhevdbsfvb_meta_rl}, and \ref{wdfdwdf_meta_rl} hold. For any task $\tau$, and any softmax policies parameterized by bounded $\theta$ and $\theta^\prime$, we have $J_{\tau}(\hat{\pi}_{\theta^\prime})-J_{\tau}(\hat{\pi}_{\theta}) \geq
{\mathbb{E}_{s \sim \nu^{\hat{\pi}_{\theta}}_{\tau},
a \sim \hat{\pi}_{\theta^\prime}(\cdot|s)}}\left[ \frac{A^{\hat{\pi}_{\theta}}_{\tau}(s,a)}{1-\gamma}\right] 
- \frac{4\gamma A_{max} L^2_1}{(1-\gamma)^2 \epsilon} D^2_{\tau.3}(\hat{\pi}_\theta,\hat{\pi}_{\theta^\prime})$.
\end{lemma}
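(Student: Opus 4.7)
The plan is to use Lemma~\ref{important_lemma_3_meta_rl} as the main scaffold and then translate its KL-based distance $D_{\tau,1}^2$ into the parameter distance $D_{\tau,3}^2=\|\theta-\theta'\|^2$. I would apply Lemma~\ref{important_lemma_3_meta_rl} with $i=1$ to the pair of softmax policies $\hat\pi_\theta,\hat\pi_{\theta'}$, obtaining
\[
J_\tau(\hat\pi_{\theta'})-J_\tau(\hat\pi_\theta) \;\geq\; \mathbb{E}_{s\sim\nu^{\hat\pi_\theta}_\tau,\, a\sim\hat\pi_{\theta'}(\cdot|s)}\!\left[\tfrac{A^{\hat\pi_\theta}_\tau(s,a)}{1-\gamma}\right] - \tfrac{2\gamma A_{\max}}{(1-\gamma)^2\epsilon}\, D_{\tau,1}^2(\hat\pi_\theta,\hat\pi_{\theta'}).
\]
The problem therefore reduces to the state-wise bound $D_{\mathrm{KL}}(\hat\pi_\theta(\cdot|s)\|\hat\pi_{\theta'}(\cdot|s))\le 2L_1^2\|\theta-\theta'\|^2$ for every $s\in\mathcal S$; averaging over $s\sim\nu^{\hat\pi_\theta}_\tau$ then yields $D_{\tau,1}^2\le 2L_1^2\, D_{\tau,3}^2$, and substitution produces the claimed constant $\tfrac{4\gamma A_{\max}L_1^2}{(1-\gamma)^2\epsilon}$.

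For the state-wise KL estimate I would interpolate in \emph{logit space} rather than in $\theta$-space. Let $\Delta(a)\triangleq f_{\theta'}(s,a)-f_\theta(s,a)$, $h_t\triangleq(1-t)f_\theta(s,\cdot)+tf_{\theta'}(s,\cdot)$, $\pi_t\triangleq\mathrm{softmax}(h_t)$, and $\phi(t)\triangleq D_{\mathrm{KL}}(\pi_0\|\pi_t)$. Using the elementary identity $\tfrac{\partial}{\partial h(b)}\log\mathrm{softmax}(h)(a)=\delta_{ab}-\mathrm{softmax}(h)(b)$, direct computation yields
\[
\phi'(t)=\mathbb{E}_{a\sim\pi_t}[\Delta(a)]-\mathbb{E}_{a\sim\pi_0}[\Delta(a)],\qquad \phi''(t)=\mathrm{Var}_{a\sim\pi_t}[\Delta(a)].
\]
Assumption~\ref{wdfdwdf_meta_rl}(ii) gives $|\Delta(a)|\le L_1\|\theta-\theta'\|$ for every $a$, hence $\phi''(t)\le L_1^2\|\theta-\theta'\|^2$ uniformly in $t$. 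Since $\phi(0)=\phi'(0)=0$, Taylor's theorem with integral remainder gives $\phi(1)=\int_0^1(1-t)\phi''(t)\,dt\le \tfrac{L_1^2}{2}\|\theta-\theta'\|^2$, which is a fortiori $\le 2L_1^2\|\theta-\theta'\|^2$, as required.

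The main subtlety is arranging the differentiation so that $\phi''(t)$ collapses cleanly to a variance of $\Delta$. A naive interpolation $\theta(t)=(1-t)\theta+t\theta'$ directly in parameter space would introduce cross-terms in $\nabla^2_\theta f_{\theta(t)}$ that force invocation of Assumption~\ref{wdfdwdf_meta_rl}(iii)--(iv) and would also bring $L_2,L_3$ into the final constant; interpolating in logit space sidesteps this and isolates the dependence on $L_1$ alone. After the state-wise inequality is in hand, the remainder of the argument is a mechanical substitution into Lemma~\ref{important_lemma_3_meta_rl} followed by taking the expectation over $s\sim\nu^{\hat\pi_\theta}_\tau$, producing the statement of Lemma~\ref{important_lemma_5_meta_rl}.
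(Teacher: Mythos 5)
Your proposal is correct, and it reaches the stated constant by a route that differs from the paper's in the one step that actually matters. The paper does not pass through Lemma~\ref{important_lemma_3_meta_rl} with $i=1$ at all: it works directly with the intermediate total-variation bound (the inequality with $\mathbb{E}_{s}\bigl[D^2_{TV}(\hat{\pi}_{\theta}(\cdot|s)\,\|\,\hat{\pi}_{\theta^\prime}(\cdot|s))\bigr]$ and prefactor $\tfrac{4\gamma A_{max}}{(1-\gamma)^2\epsilon}$ that precedes the Pinsker step), and then bounds $D_{TV}(\hat{\pi}_{\theta}(\cdot|s)\,\|\,\hat{\pi}_{\theta^\prime}(\cdot|s))\le L_1\|\theta-\theta^\prime\|$ by applying the mean value theorem to each probability $\hat{\pi}_{\theta}(a|s)$ together with $\|\nabla_\theta\hat{\pi}_\theta(a|s)\|\le 2\hat{\pi}_\theta(a|s)L_1$. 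You instead bound the state-wise KL divergence by $\tfrac{L_1^2}{2}\|\theta-\theta^\prime\|^2$ via a logit-space interpolation, using the exponential-family identity $\phi''(t)=\mathrm{Var}_{a\sim\pi_t}[\Delta(a)]\le L_1^2\|\theta-\theta^\prime\|^2$, and then feed this into the KL-penalty version of the performance-difference bound. Your computation of $\phi'$ and $\phi''$ is correct, the order of the KL matches $d_1^2$, and the argument extends verbatim to the continuous action space. Two observations: first, your bound is actually a factor of four tighter than needed before you relax $\tfrac{L_1^2}{2}$ to $2L_1^2$ to match the stated constant, which is harmless; second, your route sidesteps a delicate point in the paper's mean-value-theorem step, where the intermediate parameter $\phi(a)$ depends on $a$ and the sum $\sum_a\hat{\pi}_{\phi(a)}(a|s)$ is treated as if it were at most one. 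In that sense the interpolation argument is the more robust of the two, at the cost of invoking Lemma~\ref{important_lemma_3_meta_rl} for function-approximation policies, which is legitimate since its proof only uses Assumptions~\ref{assdsssdsss_meta_rl} and~\ref{awhevdbsfvb_meta_rl} and Pinsker's inequality, none of which is specific to the tabular case.
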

The proofs of Lemmas \ref{important_lemma_3_meta_rl} and \ref{important_lemma_5_meta_rl} are shown in Appendix \ref{wvfhwvzzgzbzxbcb_meta_rl}.
Given Lemma \ref{important_lemma_3_meta_rl}, when $\lambda=\frac{2\gamma A_{max}}{(1-\gamma) \epsilon}$, the within-task algorithm $\mathcal{A} l g^{(1,2)} (\hat{\pi},\lambda,\tau)$ in (\ref{dis_withintask}) is actually designed to maximize the right-hand side of the inequality, where $\hat{\pi}^\prime$ is the decision variable. 
Similarly, Given Lemma \ref{important_lemma_5_meta_rl}, when $\lambda=\frac{4\gamma A_{max} L^2_1}{(1-\gamma) \epsilon}$, $\mathcal{A} l g^{(3)} (\hat{\pi}_\theta,\lambda,\tau)$ in (\ref{withintask}) maximizes the right-hand side of the inequality, where $\hat{\pi}_{\theta^\prime}$ is the decision variable. 
In other words, for each $i=1, 2, \text{ and } 3$, the within-task algorithm $\mathcal{A} l g^{(i)}$ is to maximize a lower bound of $J_{\tau}(\hat{\pi}_{\theta})$, denoted as $\Bar{J}_{\tau}(\hat{\pi}_{\theta})$. 
This idea, referred to as the minorization-maximization (MM) \cite{hunter2004tutorial}, is widely used in \cite{schulman2015trust,kingma2013auto}.
The design of $\mathcal{A} l g^{(i)}$ enables us to connect the accumulated reward of the policy after the policy adaptation with that of the optimal policy $\hat{\pi}_{\theta^*_\tau}$ for task $\tau$, i.e., 
$\Bar{J}_{\tau}(\mathcal{A} l g^{(i)} (\hat{\pi}_\phi,\lambda,\tau)) \geq \Bar{J}_{\tau}( \hat{\pi}_{\theta^*_\tau})$,
which is a key intermediate result for the optimality analysis.

The final preparatory step is that we borrow the analysis of the meta-training error from \cite{wang2020global}. In particular, its theoretical result is encapsulated in the following assumption. 

\begin{assumption}(Bounding error of meta-objective using gradient)
\label{aqshzhv_meta_rl}
Let $F^{(i)}(\phi)\triangleq\mathbb{E}_{\tau \sim \mathbb{P}(\Gamma)}[ J_{\tau}(\mathcal{A} l g^{(i)}(\hat{\pi}_{\phi}, \lambda, \tau))]$. For both the tabular policy and the policy with functional approximation, there exists a concave positive non-decreasing function $h_i: [0,+\infty) \rightarrow [0,+\infty)$, such that $\max_{\phi^\prime} F^{(i)}(\phi^\prime)- F^{(i)}(\phi) \leq h_i (\|\nabla_\phi F^{(i)}(\phi)\|^2)$. 
\end{assumption} 

Assumption \ref{aqshzhv_meta_rl} assumes the optimality gap of $\hat{\pi}_\phi$ on the meta-objective is upper bounded by an increasing function of its gradient. 
A sufficient condition of Assumption \ref{aqshzhv_meta_rl} is provided by \cite{wang2020global}.
Combine the Assumption \ref{aqshzhv_meta_rl} and the convergence analysis in Theorems \ref{lemma_convergence_meta_rl} and \ref{2_convergence_meta_rl}, we can bound the error of the meta-objective, i.e., $\max_\phi F^{(i)}(\phi)- F^{(i)}(\phi_{t})$. This result is referred to as the optimality of the meta-objective shown in Table \ref{table_opt}.
Finally, we derive the upper bounds of the TEOG for both the tabular policy and the policy with function approximation.

\begin{theorem}[Optimality guarantee for \textbf{softmax tabular policy}]
\label{lemkgjhbsdkjgbsdjgbce_meta_rl}
Consider the tabular softmax policy for the discrete state-action space.
Suppose that Assumptions \ref{assdsssdsss_meta_rl},\ref{awhevdbsfvb_meta_rl} and \ref{aqshzhv_meta_rl} hold. 
Let $\{\phi_t\}_{t=1}^{T}$ be the sequence of meta-parameters generated
by Algorithm \ref{alg:framework0_meta_rl} with $\lambda = \frac{2 A_{max}}{(1-\gamma) \epsilon}$ and the step size $\alpha$ shown in Theorem \ref{lemma_convergence_meta_rl}. Then, the following holds for $i=1$ or $2$:
$\frac{1}{T} \sum_{t=1}^{T} \mathbb{E}_{t} \left[  \mathbb{E}_{\tau \sim \mathbb{P}(\Gamma)}[J_{\tau}(\hat{\pi}_{\theta^*_\tau})- J_{\tau}(\mathcal{A} l g^{(i)}(\hat{\pi}_{\phi_t}, \lambda, \tau))  ]\right]
\leq h_i\left(\frac{K_i}{T} + \frac{M_i}{\sqrt{T}}\right)+  \frac{2 (1+\gamma) A_{max}}{(1-\gamma)^2 \epsilon} \mathcal{V}ar_i ( \mathbb{P}(\Gamma))
$,
where $\hat{\pi}_{\theta^*_\tau}$ is the optimal softmax policy for task $\tau$ and the constants $K_i$ and $M_i$ are shown in Theorem \ref{lemma_convergence_meta_rl}.
\end{theorem}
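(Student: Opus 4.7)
The plan is to bound TEOG by splitting it into (a) the within-task adaptation gap at a carefully chosen meta-parameter $\phi_0$ and (b) the meta-optimization gap between $\phi_t$ and $\phi_0$. Pick $\phi_0$ and a selection $\{\theta^*_\tau\in\Theta^*_\tau\}_{\tau}$ attaining the minimum in $\mathcal{V}ar_i(\mathbb{P}(\Gamma))$, i.e.\ $\mathbb{E}_\tau[D^2_{\tau,i}(\hat{\pi}_{\phi_0},\hat{\pi}_{\theta^*_\tau})]=\mathcal{V}ar_i(\mathbb{P}(\Gamma))$. For each $t$, write
\begin{equation*}
\mathbb{E}_\tau\!\left[J_\tau(\hat{\pi}_{\theta^*_\tau})-J_\tau(\mathcal{A}lg^{(i)}(\hat{\pi}_{\phi_t},\lambda,\tau))\right] = \underbrace{\mathbb{E}_\tau\!\left[J_\tau(\hat{\pi}_{\theta^*_\tau})-J_\tau(\mathcal{A}lg^{(i)}(\hat{\pi}_{\phi_0},\lambda,\tau))\right]}_{(\mathrm{a})} + \underbrace{F^{(i)}(\phi_0)-F^{(i)}(\phi_t)}_{(\mathrm{b})}.
\end{equation*}

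For (b), since $F^{(i)}(\phi_0)\leq \max_{\phi}F^{(i)}(\phi)$, Assumption \ref{aqshzhv_meta_rl} yields $F^{(i)}(\phi_0)-F^{(i)}(\phi_t)\leq h_i(\|\nabla_\phi F^{(i)}(\phi_t)\|^2)$. Averaging over $t$ and taking expectation, I apply Jensen's inequality twice (once for $\mathbb{E}$ and once for $\frac{1}{T}\sum$, both using concavity of $h_i$) together with the monotonicity of $h_i$, and then invoke Theorem \ref{lemma_convergence_meta_rl} to bound the averaged squared gradient by $K_i/T+M_i/\sqrt{T}$, giving the $h_i(\cdot)$ term of the claim.

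For (a), the idea is to exploit the minorization-maximization structure of $\mathcal{A}lg^{(i)}$. With $\lambda=\frac{2A_{max}}{(1-\gamma)\epsilon}$, define the anchored surrogate $\bar{J}_\tau(\hat{\pi};\hat{\pi}_{\phi_0})\triangleq J_\tau(\hat{\pi}_{\phi_0})+\mathbb{E}_{s\sim\nu^{\hat{\pi}_{\phi_0}}_\tau,\,a\sim\hat{\pi}(\cdot|s)}\!\left[\tfrac{A^{\hat{\pi}_{\phi_0}}_\tau(s,a)}{1-\gamma}\right]-\tfrac{2\gamma A_{max}}{(1-\gamma)^2\epsilon}D^2_{\tau,i}(\hat{\pi}_{\phi_0},\hat{\pi})$. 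Lemma \ref{important_lemma_3_meta_rl} gives $J_\tau(\hat{\pi})\geq\bar{J}_\tau(\hat{\pi};\hat{\pi}_{\phi_0})$ for every $\hat{\pi}$, while (\ref{dis_withintask})/(\ref{withintask}) and the choice of $\lambda$ imply $\bar{J}_\tau(\mathcal{A}lg^{(i)}(\hat{\pi}_{\phi_0},\lambda,\tau);\hat{\pi}_{\phi_0})\geq\bar{J}_\tau(\hat{\pi}_{\theta^*_\tau};\hat{\pi}_{\phi_0})$. Combining these with the Performance Difference Lemma $J_\tau(\hat{\pi}_{\theta^*_\tau})-J_\tau(\hat{\pi}_{\phi_0})=\mathbb{E}_{s\sim\nu^{\hat{\pi}_{\theta^*_\tau}}_\tau,\,a\sim\hat{\pi}_{\theta^*_\tau}(\cdot|s)}[A^{\hat{\pi}_{\phi_0}}_\tau(s,a)/(1-\gamma)]$ produces
\begin{equation*}
J_\tau(\hat{\pi}_{\theta^*_\tau})-J_\tau(\mathcal{A}lg^{(i)}(\hat{\pi}_{\phi_0},\lambda,\tau)) \leq \tfrac{1}{1-\gamma}\!\left(\mathbb{E}_{s\sim\nu^{\hat{\pi}_{\theta^*_\tau}}_\tau}[g_\tau(s)]-\mathbb{E}_{s\sim\nu^{\hat{\pi}_{\phi_0}}_\tau}[g_\tau(s)]\right)+\tfrac{2\gamma A_{max}}{(1-\gamma)^2\epsilon}D^2_{\tau,i}(\hat{\pi}_{\phi_0},\hat{\pi}_{\theta^*_\tau}),
\end{equation*}
where $g_\tau(s)\triangleq\mathbb{E}_{a\sim\hat{\pi}_{\theta^*_\tau}(\cdot|s)}[A^{\hat{\pi}_{\phi_0}}_\tau(s,a)]$ satisfies $|g_\tau(s)|\leq A_{max}$. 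The state-visitation-difference term is then bounded by $\tfrac{2A_{max}}{(1-\gamma)^2\epsilon}D^2_{\tau,i}(\hat{\pi}_{\phi_0},\hat{\pi}_{\theta^*_\tau})$ using exactly the change-of-measure device that underlies Lemma \ref{important_lemma_3_meta_rl} (replacing $\nu^{\hat{\pi}_{\theta^*_\tau}}_\tau$ by $\nu^{\hat{\pi}_{\phi_0}}_\tau$ and paying $1/\epsilon$ via Assumption \ref{awhevdbsfvb_meta_rl}). Summing the two contributions to (a) gives the coefficient $\tfrac{2(1+\gamma)A_{max}}{(1-\gamma)^2\epsilon}$, and taking expectation over $\tau$ turns $D^2_{\tau,i}$ into $\mathcal{V}ar_i(\mathbb{P}(\Gamma))$.

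The main obstacle I anticipate is the state-visitation bound in (a): obtaining it as a clean multiple of $D^2_{\tau,i}$ (rather than $D_{\tau,i}$ via Pinsker) is what makes the final constant come out to exactly $(1+\gamma)$ times the Lemma constant, and it is what forces the appearance of $1/\epsilon$ through Assumption \ref{awhevdbsfvb_meta_rl}. Everything else is a combination of the MM inequality, the Performance Difference Lemma, and the double Jensen/monotonicity application on the convergence bound of Theorem \ref{lemma_convergence_meta_rl}.
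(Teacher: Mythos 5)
Your overall architecture is the same as the paper's: decompose TEOG into the meta-optimization gap (handled by Assumption~\ref{aqshzhv_meta_rl}, Jensen on the concave $h_i$, and Theorem~\ref{lemma_convergence_meta_rl}) plus the one-shot adaptation gap at the variance-minimizing center $\phi_0$, then bound the latter by a multiple of $D^2_{\tau,i}(\hat{\pi}_{\phi_0},\hat{\pi}_{\theta^*_\tau})$ via the MM structure of $\mathcal{A}lg^{(i)}$; the paper writes part (b) as $\max_\phi F^{(i)}-F^{(i)}(\phi_t)$ and uses $\max_\phi F^{(i)}\ge F^{(i)}(\phi_0)$, which is the same bookkeeping as yours. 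The one step that needs repair is your MM inequality in part (a): with $\lambda=\frac{2A_{max}}{(1-\gamma)\epsilon}$, the within-task algorithm maximizes $\frac{1}{1-\gamma}\mathbb{E}[A^{\hat{\pi}_{\phi_0}}]-\frac{\lambda}{1-\gamma}D^2_{\tau,i}$, i.e.\ the surrogate with penalty coefficient $\frac{2A_{max}}{(1-\gamma)^2\epsilon}$, \emph{not} your $\bar{J}_\tau$ with coefficient $\frac{2\gamma A_{max}}{(1-\gamma)^2\epsilon}$; since the two surrogates can rank $\mathcal{A}lg^{(i)}(\hat{\pi}_{\phi_0},\lambda,\tau)$ and $\hat{\pi}_{\theta^*_\tau}$ differently, $\bar{J}_\tau(\mathcal{A}lg^{(i)}(\hat{\pi}_{\phi_0},\lambda,\tau))\ge\bar{J}_\tau(\hat{\pi}_{\theta^*_\tau})$ does not follow as stated. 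The fix is to define the anchored surrogate with penalty $\frac{\lambda}{1-\gamma}$ (still a valid minorant of $J_\tau$ because $\frac{\lambda}{1-\gamma}\ge\frac{2\gamma A_{max}}{(1-\gamma)^2\epsilon}$), which contributes $\frac{2A_{max}}{(1-\gamma)^2\epsilon}D^2_{\tau,i}$, while the state-visitation-shift term you isolate via the Performance Difference Lemma is controlled by the change-of-measure bound of Lemma~\ref{important_lemma_1_meta_rl} with the factor $\gamma$ retained, contributing $\frac{2\gamma A_{max}}{(1-\gamma)^2\epsilon}D^2_{\tau,i}$ --- you have these two coefficients interchanged, but their sum, and hence the final constant $\frac{2(1+\gamma)A_{max}}{(1-\gamma)^2\epsilon}$, is unchanged. (The paper reaches the same sum slightly more directly by pairing the upper bound of Lemma~\ref{important_lemma_2_meta_rl} at $\hat{\pi}_{\theta^*_\tau}$ with the lower bound at the adapted policy, which avoids introducing $g_\tau$ explicitly.)
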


\begin{theorem}[Optimality guarantee for \textbf{softmax policy with function approximation}]
\label{lemkgjhbsdewrgekjgbsdjgbce_meta_rl}
In both discrete and continuous action space, consider the softmax policy with function approximation.
Suppose that Assumptions \ref{assdsssdsss_meta_rl},\ref{awhevdbsfvb_meta_rl}, \ref{wdfdwdf_meta_rl} and \ref{aqshzhv_meta_rl} hold. 
Let $\{\phi_t\}_{t=1}^{T}$ be the sequence of meta-parameters generated
by Algorithm \ref{alg:framework0_meta_rl} with $\lambda = \frac{(6 L^2_1+2L_2) A_{max}}{(1-\gamma) \epsilon}$ and the step size $\alpha$ shown in Theorem \ref{2_convergence_meta_rl}.
The following holds: $\frac{1}{T} \sum_{t=1}^{T} \mathbb{E}_{t} \left[  \mathbb{E}_{\tau \sim \mathbb{P}(\Gamma)}[J_{\tau}(\hat{\pi}_{\theta^*_\tau}) - J_{\tau}(\mathcal{A} l g^{(3)}(\hat{\pi}_{\phi_t}, \lambda, \tau))   ]\right]
\leq h_3\left(\frac{K_3}{T} + \frac{M_3}{\sqrt{T}}\right)+  \frac{((6+4\gamma) L^2_1 + 2 L_2) A_{max}}{(1-\gamma)^2 \epsilon}  \mathcal{V}ar_3 ( \mathbb{P}(\Gamma))
$,
where $\hat{\pi}_{\theta^*_{\tau}}$ is the optimal softmax policy for task $\tau$ and the constants $K_3$ and $M_3$ are the same as Theorem \ref{2_convergence_meta_rl}.
\end{theorem}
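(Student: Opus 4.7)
The plan is to follow the same two-stage decomposition used in the proof of Theorem~\ref{lemkgjhbsdkjgbsdjgbce_meta_rl} (tabular case), substituting Lemma~\ref{important_lemma_5_meta_rl} for Lemma~\ref{important_lemma_3_meta_rl} and invoking Theorem~\ref{2_convergence_meta_rl} for the convergence step. Let $\phi^{\dagger}$ together with a selection $\{\theta^*_{\tau}\}_{\tau}$ attain the outer infima defining $\mathcal{V}ar_3(\mathbb{P}(\Gamma))$. For each task $\tau$ and iterate $\phi_t$, I would split
\[
J_{\tau}(\hat{\pi}_{\theta^*_\tau}) - J_{\tau}(\mathcal{A} l g^{(3)}(\hat{\pi}_{\phi_t},\lambda,\tau)) = \underbrace{\bigl[J_{\tau}(\hat{\pi}_{\theta^*_\tau}) - J_{\tau}(\mathcal{A} l g^{(3)}(\hat{\pi}_{\phi^{\dagger}},\lambda,\tau))\bigr]}_{\text{MM bias}} + \underbrace{\bigl[J_{\tau}(\mathcal{A} l g^{(3)}(\hat{\pi}_{\phi^{\dagger}},\lambda,\tau)) - J_{\tau}(\mathcal{A} l g^{(3)}(\hat{\pi}_{\phi_t},\lambda,\tau))\bigr]}_{\text{meta-training error}},
\]
so that, after taking $\mathbb{E}_{\tau\sim\mathbb{P}(\Gamma)}$, the first piece feeds the $\mathcal{V}ar_3$ term and the second feeds the $h_3$ term.

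For the MM bias, set $\theta^{o}_{\tau} \triangleq \mathcal{A} l g^{(3)}(\hat{\pi}_{\phi^{\dagger}},\lambda,\tau)$. Using $Q^{\hat{\pi}_{\phi^{\dagger}}}_{\tau} = A^{\hat{\pi}_{\phi^{\dagger}}}_{\tau} + V^{\hat{\pi}_{\phi^{\dagger}}}_{\tau}$ inside \eqref{withintask} removes the $\theta$-independent $V$-term, so the optimality of $\theta^{o}_{\tau}$ becomes $\mathbb{E}_{s\sim \nu^{\phi^{\dagger}}_{\tau},\,a\sim\hat{\pi}_{\theta^{o}_{\tau}}}[A^{\hat{\pi}_{\phi^{\dagger}}}_{\tau}] - \lambda D^2_{\tau,3}(\hat{\pi}_{\phi^{\dagger}},\hat{\pi}_{\theta^{o}_{\tau}}) \geq \mathbb{E}_{s\sim \nu^{\phi^{\dagger}}_{\tau},\,a\sim\hat{\pi}_{\theta^{*}_{\tau}}}[A^{\hat{\pi}_{\phi^{\dagger}}}_{\tau}] - \lambda D^2_{\tau,3}(\hat{\pi}_{\phi^{\dagger}},\hat{\pi}_{\theta^{*}_{\tau}})$. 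I would verify that the prescribed $\lambda$ dominates $\frac{4\gamma A_{max} L_1^{2}}{(1-\gamma)\epsilon}$ (because $6>4\gamma$ and $\epsilon\le 1$), which allows chaining this variational inequality with Lemma~\ref{important_lemma_5_meta_rl} to obtain $(1-\gamma)[J_{\tau}(\hat{\pi}_{\theta^{o}_{\tau}})-J_{\tau}(\hat{\pi}_{\phi^{\dagger}})] \geq \mathbb{E}_{a\sim\hat{\pi}_{\theta^{*}_{\tau}}}[A^{\hat{\pi}_{\phi^{\dagger}}}_{\tau}] - \lambda D^2_{\tau,3}(\hat{\pi}_{\phi^{\dagger}},\hat{\pi}_{\theta^{*}_{\tau}})$. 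A companion upper bound $(1-\gamma)[J_{\tau}(\hat{\pi}_{\theta^{*}_{\tau}})-J_{\tau}(\hat{\pi}_{\phi^{\dagger}})]\leq \mathbb{E}_{a\sim\hat{\pi}_{\theta^{*}_{\tau}}}[A^{\hat{\pi}_{\phi^{\dagger}}}_{\tau}] + \frac{4\gamma A_{max} L_1^{2}}{(1-\gamma)\epsilon} D^2_{\tau,3}(\hat{\pi}_{\phi^{\dagger}},\hat{\pi}_{\theta^{*}_{\tau}})$ is produced by rerunning the proof of Lemma~\ref{important_lemma_5_meta_rl} but keeping the two-sided absolute-value deviation before signing. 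Subtracting the two inequalities cancels the advantage expectation and yields
\[
J_{\tau}(\hat{\pi}_{\theta^{*}_{\tau}}) - J_{\tau}(\hat{\pi}_{\theta^{o}_{\tau}}) \leq \frac{((6+4\gamma) L_1^{2}+2L_2) A_{max}}{(1-\gamma)^2\epsilon}\, D^2_{\tau,3}(\hat{\pi}_{\phi^{\dagger}},\hat{\pi}_{\theta^{*}_{\tau}}),
\]
after which taking $\mathbb{E}_{\tau\sim\mathbb{P}(\Gamma)}$ and invoking the definition of $\phi^{\dagger}$ gives exactly the variance contribution in the theorem.

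For the meta-training error, note that $\mathbb{E}_{\tau}[\cdot]$ of the second summand equals $F^{(3)}(\phi^{\dagger})-F^{(3)}(\phi_t) \leq \max_{\phi'} F^{(3)}(\phi')-F^{(3)}(\phi_t) \leq h_3(\|\nabla_{\phi} F^{(3)}(\phi_t)\|^2)$ by Assumption~\ref{aqshzhv_meta_rl}. Taking the algorithmic expectation $\mathbb{E}_{t}$, averaging over $t=1,\dots,T$, and invoking two applications of Jensen's inequality (using concavity of $h_3$) reduces this to $h_3\bigl(\tfrac{1}{T}\sum_{t}\mathbb{E}[\|\nabla_{\phi}F^{(3)}(\phi_t)\|^{2}]\bigr)$; monotonicity of $h_3$ and Theorem~\ref{2_convergence_meta_rl} then bound it by $h_3(K_3/T+M_3/\sqrt{T})$. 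Summing the two parts gives the claimed inequality. The main obstacle is establishing the companion upper-bound counterpart to Lemma~\ref{important_lemma_5_meta_rl}, since only its lower-bound form is stated; however, its proof naturally controls a two-sided deviation $|J_{\tau}(\hat{\pi}_{\theta'})-J_{\tau}(\hat{\pi}_{\theta})-\tfrac{1}{1-\gamma}\mathbb{E}_{s\sim \nu^{\theta}_{\tau},\,a\sim\hat{\pi}_{\theta'}}[A^{\hat{\pi}_{\theta}}_{\tau}]|$ before any sign is extracted, so the reverse inequality is available with matching coefficients.
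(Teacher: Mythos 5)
Your proposal is correct and follows essentially the same argument as the paper's: the paper's proof likewise combines the variational optimality of $\mathcal{A} l g^{(3)}$ with the two-sided surrogate bounds (already stated in both directions as Lemma \ref{important_lemma_4_meta_rl} in the appendix, so the ``companion upper bound'' you flag as the main obstacle is available verbatim and need not be re-derived), cancels the advantage expectation to get the per-task bound with the identical constant $\frac{((6+4\gamma)L_1^2+2L_2)A_{max}}{(1-\gamma)^2\epsilon}$, and feeds Assumption \ref{aqshzhv_meta_rl}, Jensen's inequality, and Theorem \ref{2_convergence_meta_rl} into the $h_3$ term. The only cosmetic difference is your choice of pivot $\phi^{\dagger}$ (the variance minimizer) where the paper pivots through $\arg\max_\phi F^{(3)}(\phi)$ and then takes a minimum over $\phi$ of the per-task bound; both decompositions produce the same two terms.
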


The proofs of Theorems \ref{lemkgjhbsdkjgbsdjgbce_meta_rl} and \ref{lemkgjhbsdewrgekjgbsdjgbce_meta_rl}, as well as the selection of the hyperparameter $\lambda$ in these two theorems, are shown in Appendix \ref{qjfgasdjbfjasebfjhgb_meta_rl}.
The theorems derive the upper bounds of the TEOGs between the parameter adapted by one-time policy adaptation from the produced meta-parameter $\phi_t$ and the task-specific optimal parameter $\theta^*_\tau$. It is shown that, with at most $T$ iterations, we can achieve the upper bounds in the order of $\mathcal{O}(h_i(\frac{1}{\sqrt{T}})+\mathcal{V}ar ( \mathbb{P}(\Gamma)))$. 
In other words, there exists a $t \leq T$ with $\mathbb{E}_{\tau \sim \mathbb{P}(\Gamma)}[J_{\tau}(\mathcal{A} l g(\hat{\pi}_{\phi_t}, \lambda, \tau))] \geq \mathbb{E}_{\tau \sim \mathbb{P}(\Gamma)}[J_{\tau}(\hat{\pi}_{\theta^*_\tau})]-\mathcal{O}(h_i(\frac{1}{\sqrt{T}})+\mathcal{V}ar ( \mathbb{P}(\Gamma)))$.
As the number of iterations $T$ increases, or the variance of the task distribution $\mathcal{V}ar ( \mathbb{P}(\Gamma))$ reduces, the optimality of the meat-parameter $\phi_t$ improves. 
The second term $\mathcal{V}ar ( \mathbb{P}(\Gamma))$ in the upper bounds of Theorems \ref{lemkgjhbsdkjgbsdjgbce_meta_rl} and \ref{lemkgjhbsdewrgekjgbsdjgbce_meta_rl} corresponds the intuition of meta-learning, which is that, if the variance of a task distribution is smaller, the meta-policy learned from the task distribution is more helpful for new tasks in the task distribution, then the performance is better. 
Moreover, this term shows that the learned meta-policy achieves a better performance than the meta-policy $\phi^{center}$ defined by $\arg\min_\phi \mathbb{E}_{\tau \sim \mathbb{P}(\Gamma)}[D^2_{\tau,i} ({\pi}_{\phi},{\pi}_{\theta_\tau^*})]$, which is the center of all the task-specific optimal policies ${\pi}_{\theta_\tau^*}$.
The order of our upper bounds are comparable to $\mathcal{O}({{T}^{-\frac{1}{4}}}+\mathcal{V}ar ( \mathbb{P}(\Gamma))$ that is shown in \cite{khattarprovable}. On the other hand, compared with \cite{khattarprovable}, in this paper, the constants in the notation $\mathcal{O}$ only consist of $\gamma$, $r_{max}$, $A_{max}$, and the Lipschitz constants of $f$, and do not rely on $|\mathcal{A}|$ and $|\mathcal{S}|$. As a result, our upper bounds are tighter when handling high-dimensional problems or continuous spaces.

\textbf{Monotonic improvement of the within-task algorithm.}  Another benefit from Lemmas \ref{important_lemma_3_meta_rl} and \ref{important_lemma_5_meta_rl} and the idea of MM used by the within-task algorithm is that, the policy update by the within-task algorithm monotonically improves, i.e., $J_{\tau}(\mathcal{A} l g^{(i)}(\hat{\pi}_{\theta}, \lambda, \tau))\geq J_{\tau}(\hat{\pi}_{\theta})$ for $i=1,2$ and $3$ and any $\theta$ and any task $\tau$. Therefore, multiple times of $\mathcal{A} l g$ always perform better than one-time $\mathcal{A} l g$.

\section{Experiments}
\label{e1_meta_rl}

\subsection{Verification of theoretical results}
We conduct an experiment to verify the optimality bounds of Algorithm \ref{alg:framework0_meta_rl} shown in Theorems \ref{lemkgjhbsdkjgbsdjgbce_meta_rl} and \ref{lemkgjhbsdewrgekjgbsdjgbce_meta_rl}. We consider two scenarios of the Frozen Lake environment in Gym: two task distributions with a high task variance and a low task variance. More details of the setting and the hyperparameter selection are shown in Appendix \ref{whefhjwdbfn_meta_rl}.
We consider the within-task algorithm $\mathcal{A} l g^{(i)}$ for all $i=1, 2$ and $3$, where the results of $i=2$ and $3$ are shown in Appendix \ref{whefhjwdbfn_meta_rl}. 

\begin{figure*}[tb] 
\begin{center} 
\begin{tabular}{cccc} 
\hspace{-4mm} \includegraphics[height=0.186\textwidth]{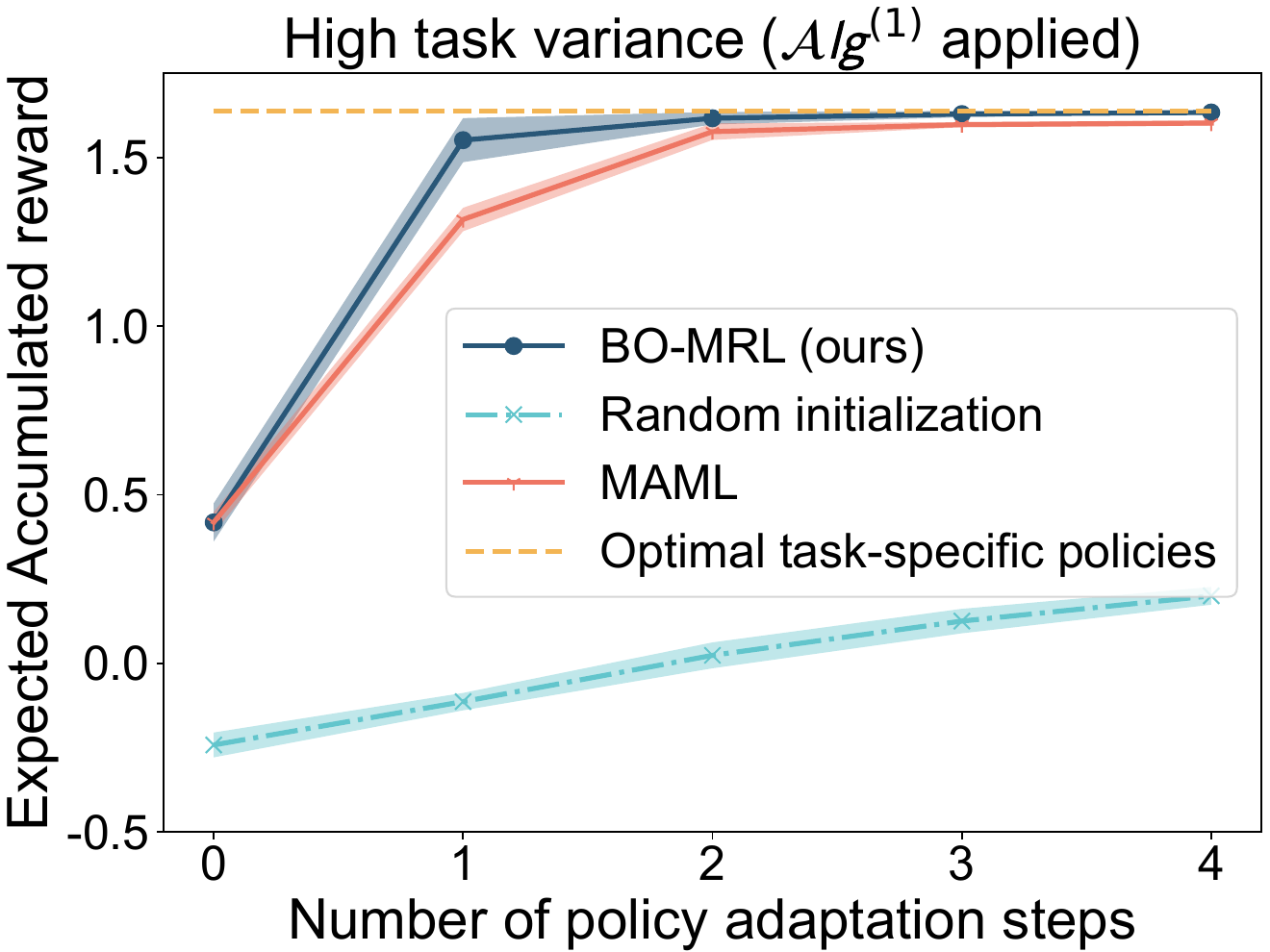} & \hspace{-4mm} 
\includegraphics[height=0.186\textwidth]{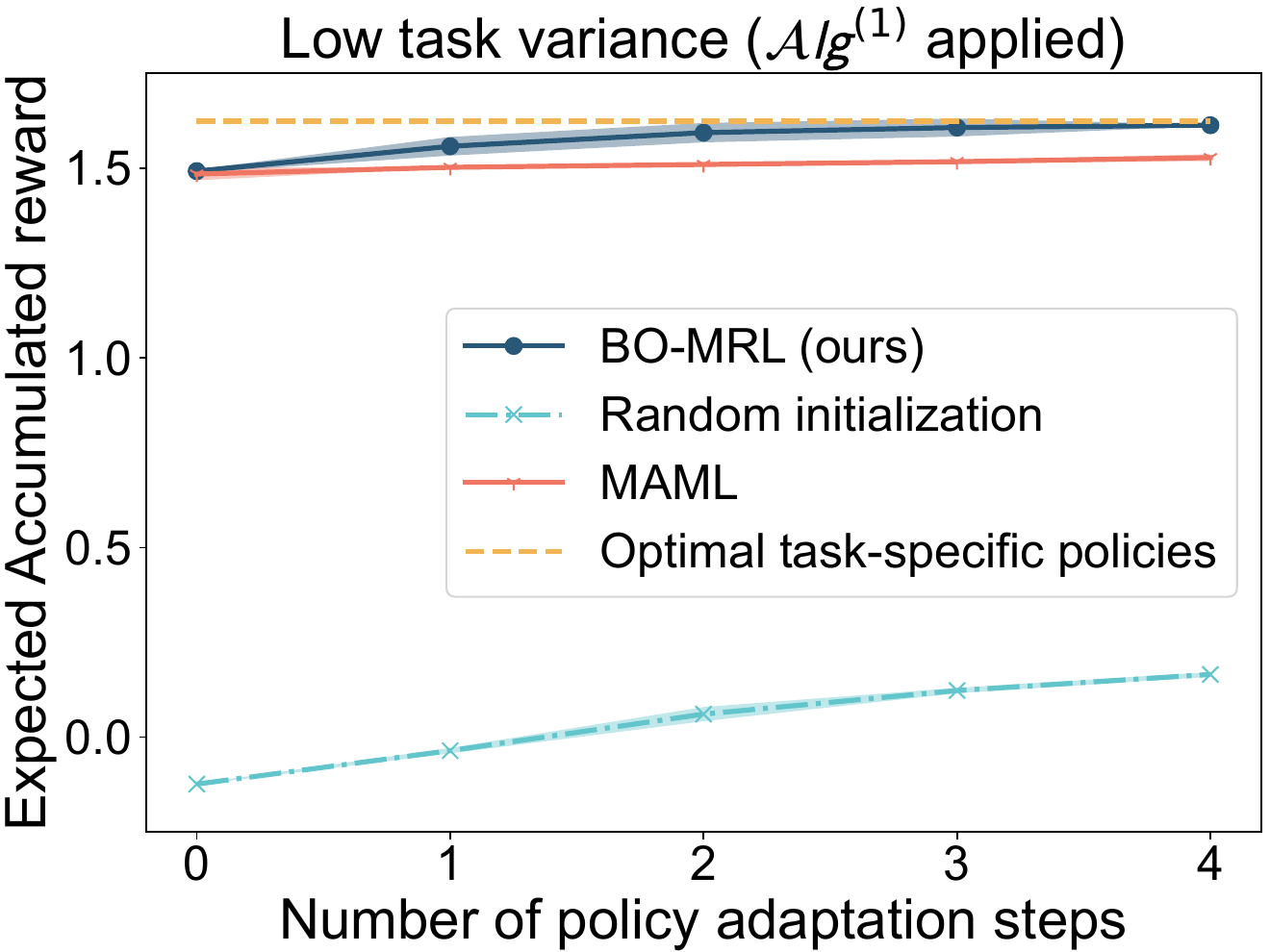} 
\hspace{-0mm} \includegraphics[height=0.191\textwidth]{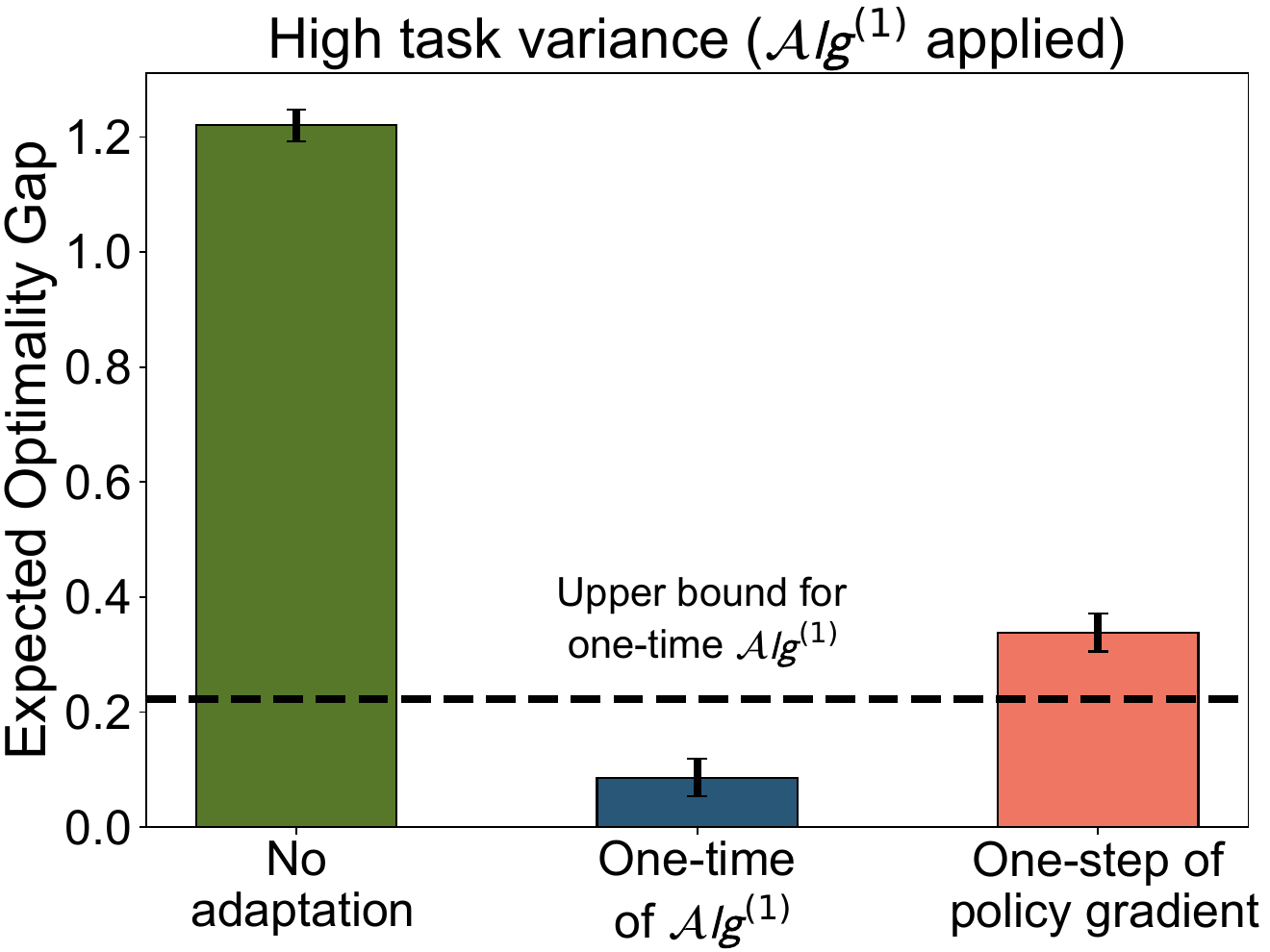} & \hspace{-4mm} 
\includegraphics[height=0.191\textwidth]{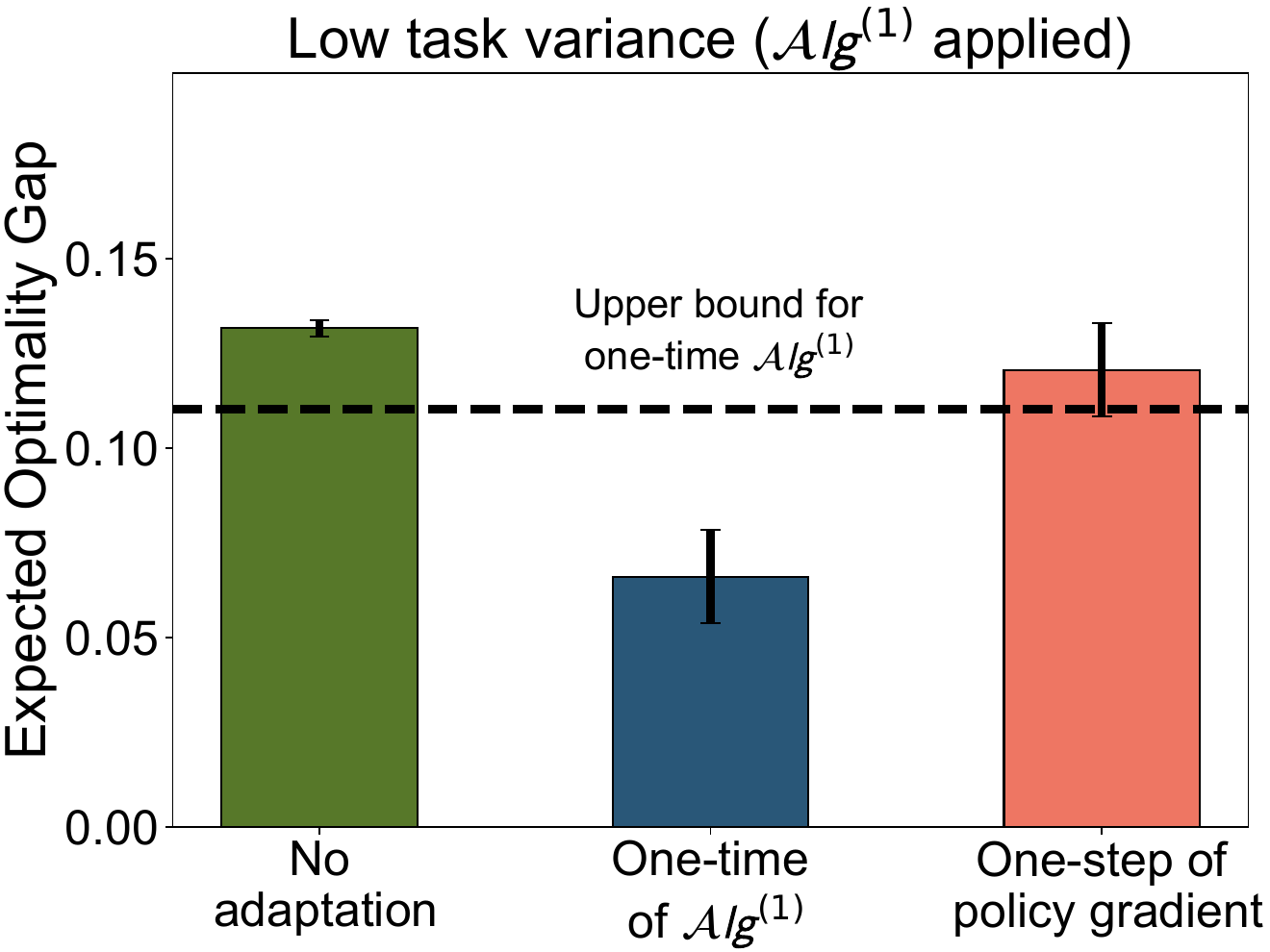} 
\end{tabular}
\caption{\small Results of the meta-test on Frozen Lake, where $\mathcal{A} l g^{(1)}$ is applied. \textbf{Left}: Average accumulated reward across all test tasks v.s. number of policy adaptation steps; \textbf{Right}: Comparing the expected optimality gap by the BO-MRL and baselines with the upper bound of the accumulated reward of one-time $\mathcal{A} l g^{(1)}$. } 
\label{fig:4-meta-rl} 
\end{center}
\end{figure*}

\begin{figure}[tb] 
\begin{center} 
\vspace{-1mm} 
\begin{tabular}{cccc} 
\hspace{-5mm} \includegraphics[height=0.191\textwidth]{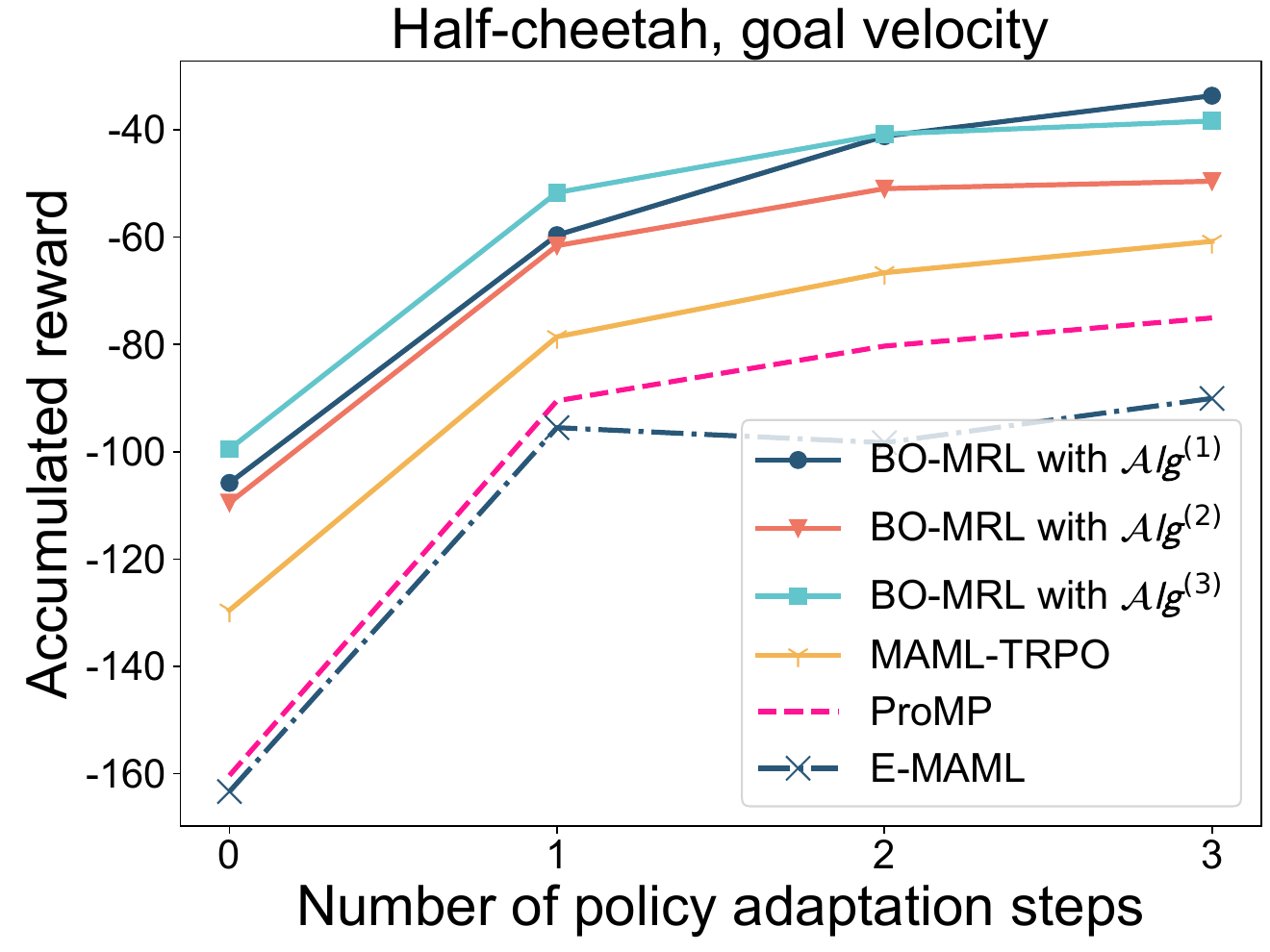} & \hspace{-5mm} 
\includegraphics[height=0.191\textwidth]{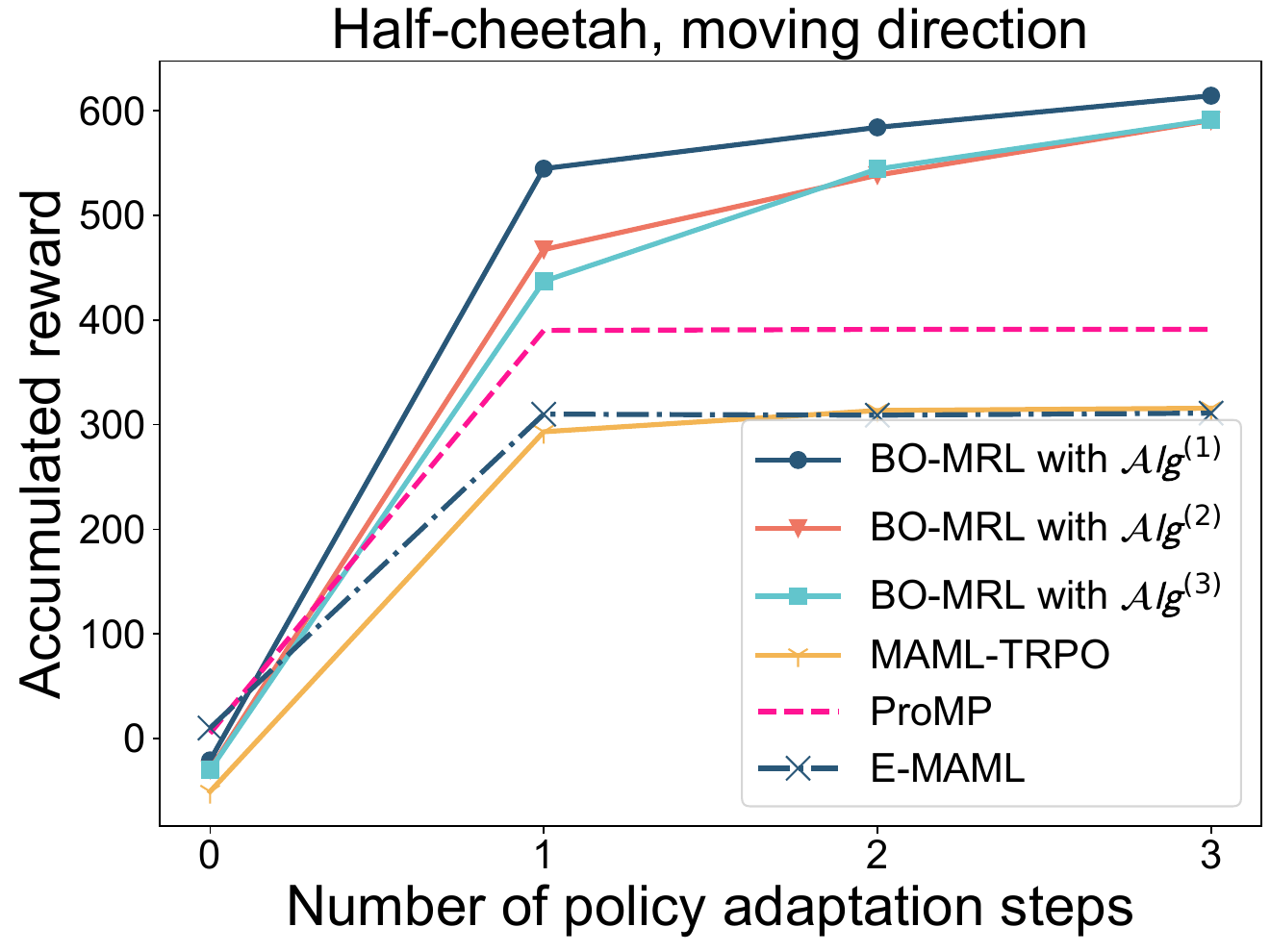}
\hspace{-1.5mm} 
\includegraphics[height=0.191\textwidth]{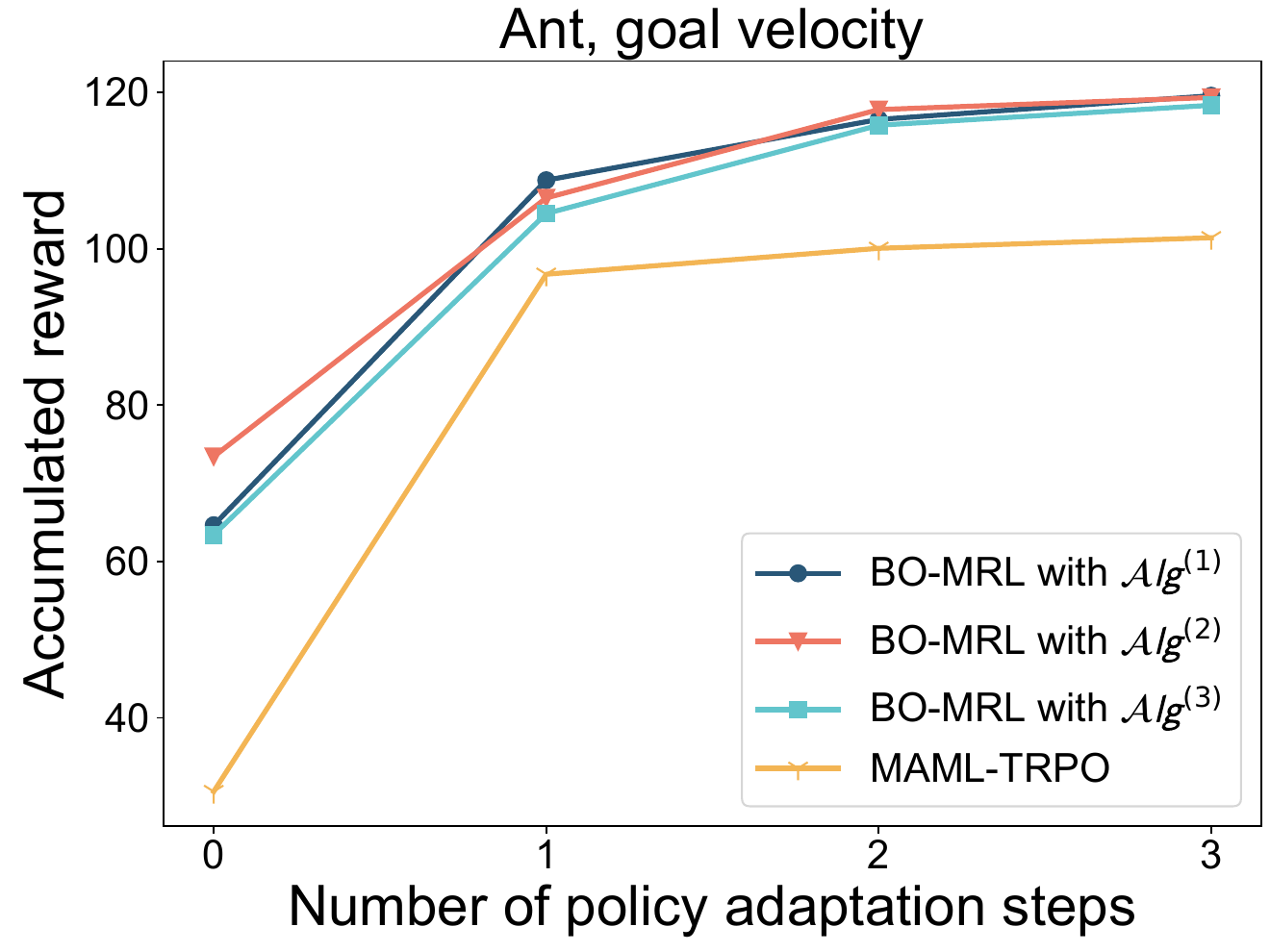} & \hspace{-5mm} 
\includegraphics[height=0.191\textwidth]{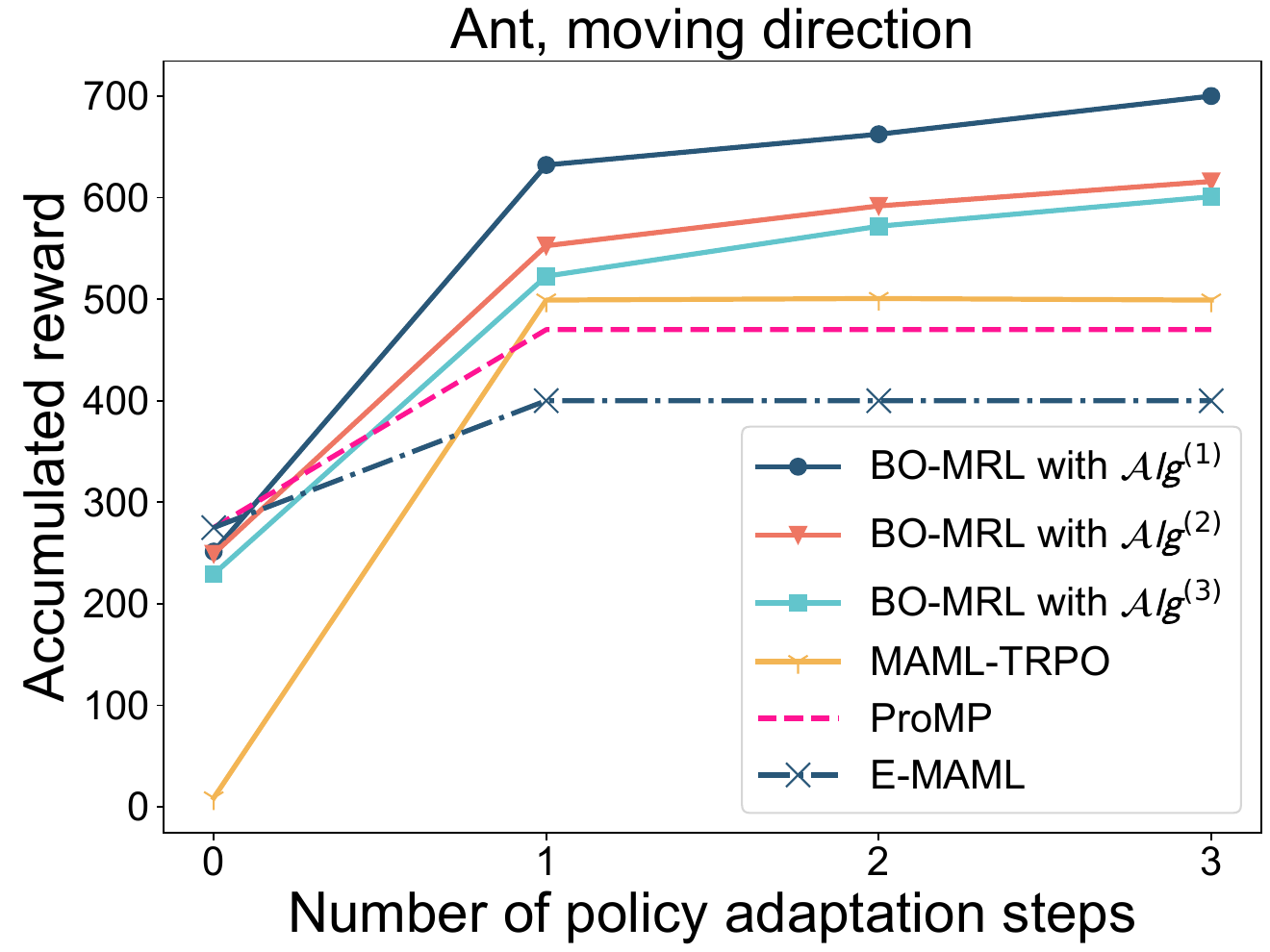} 
\end{tabular}
\caption{\small Average accumulated reward across all test tasks during the meta-test under the practical algorithm of BO-MRL on the locomotion tasks.  }  
\label{fig:5-meta-rl}
\end{center}
\vspace{-3mm} 
\end{figure}

We compare our algorithm with MAML \cite{finn2017model} and the random initialization.
Figure \ref{fig:4-meta-rl} shows that, for Algorithm \ref{alg:framework0_meta_rl} with the within-task algorithm $\mathcal{A} l g^{(1)}$, it outperforms the baseline methods. For all scenarios, the expected optimality gap of the one-time policy adaptation is smaller than the upper bounds shown in Theorems \ref{lemkgjhbsdkjgbsdjgbce_meta_rl} and \ref{lemkgjhbsdewrgekjgbsdjgbce_meta_rl}, which verify our theoretical analysis. Moreover, in Figure \ref{fig:4-meta-rl}, the expected optimality gap of the policy adaptation is better (smaller) but close to the upper bound, while that of the other policy adaptation approach, the policy gradient, is worse (larger) than the upper bound. 
It shows that the derived upper bound is tight.

\subsection{High-dimensional Experiment} 
To evaluate the proposed practical algorithm, Algorithm \ref{alg:framework1_meta_rl} in Appendix \ref{awkegfhjdbf_meta_rl}, we conduct experiments on high-dimensional locomotion settings in the MuJoCo simulator, including Half-Cheetah with goal directions and goal velocities, Ant with goal directions and goal velocities.
We compare the proposed algorithm with several optimization-based meta-RL algorithms, including MAML, E-MAML \cite{stadie2018some}, and ProMP \cite{rothfuss2019promp}. For the fairness of the comparison, all the methods share the same data requirement and task setting.
More details of the task setting, the hyperparameter selection, and the supplemental results are shown in Appendix \ref{sejbgtjksbjckvbxhf_meta_rl}. 

Figure \ref{fig:5-meta-rl} shows that the proposed algorithm with the within-task algorithms $\mathcal{A} l g^{(i)}$ outperforms the baseline methods in all four experimental settings. 
For example, we achieve about $25 \%$ of performance improvement in Half-cheetah direction and Ant direction experiments.
Moreover, compared with the baseline methods, the proposed algorithm achieves more policy improvement when more policy optimization steps are given.
For example, our approach achieves about $10 \%$ of performance improvement in the second policy optimization step, while those of baseline methods are almost $0 \%$. 

\section{Conclusion}
This paper develops a bilevel optimization framework for meta-RL, which implements multiple-step policy optimization on one-time data collection during task-specific policy adaptation. 
Beyond existing meta-RL analyses, we provide upper bounds of the expected optimality gap over the task distribution. 
Our experiments validate the bounds derived from our theoretical analysis and show the superior effectiveness of the proposed framework. 

\begin{ack}
This work is partially supported by the National Science Foundation through grants ECCS 1846706 and ECCS 2140175. We would like to thank the reviewers for their constructive and insightful suggestions. 
\end{ack}

%\bibliographystyle{plain}
%\bibliography{example_paper}

%%%%%%%%%%%%%%%%%%%%%%%%%%%%%%%%%%%%%%%%%%%%%%%%%%%%%%%%%%%%%%%%%%%%%%%%%%%%%%%
%%%%%%%%%%%%%%%%%%%%%%%%%%%%%%%%%%%%%%%%%%%%%%%%%%%%%%%%%%%%%%%%%%%%%%%%%%%%%%%
% APPENDIX
%%%%%%%%%%%%%%%%%%%%%%%%%%%%%%%%%%%%%%%%%%%%%%%%%%%%%%%%%%%%%%%%%%%%%%%%%%%%%%%
%%%%%%%%%%%%%%%%%%%%%%%%%%%%%%%%%%%%%%%%%%%%%%%%%%%%%%%%%%%%%%%%%%%%%%%%%%%%%%%
\newpage
\appendix

\begin{center}
\noindent {\Large \centering \textbf{Appendix for "Meta-Reinforcement Learning with Universal Policy Adaptation: Provable Near-Optimality under All-task Optimum Comparator"}}
\end{center}

\noindent {\Large \textbf{Experimental Supplements}} 

All experiments are executed on a computer with a 5.20 GHz Intel Core i12 CPU.

\section{Experimental Supplements of Verification of Theoretical Results.} 

\label{whefhjwdbfn_meta_rl}

\textbf{Experimental settings.} In Section \ref{e1_meta_rl}, we use the Frozen Lake environment in Gym \cite{brockman2016openai} and consider a task distribution $\mathbb{P}(\Gamma)$ with high task variance and a task distribution $\mathbb{P}(\Gamma)$ with low task variance. In each distribution, there are $20$ tasks. The tasks are characterized by the different settings of holes in the lake, where the holes are generated by random sampling. In the task distribution with high variance, the probability of the appearing hole in each grid is $0.3$; in the task distribution with low variance, its probability is $0.1$. We set $\gamma=0.8$, the reward is $1$ when reaching the goal, and the reward is $-1$ when reaching the holes. When deriving the upper bound in Theorems \ref{lemkgjhbsdkjgbsdjgbce_meta_rl} and \ref{lemkgjhbsdewrgekjgbsdjgbce_meta_rl}, we approximately regard $T$ be sufficiently large, and $\mathcal{O}(h_i(\frac{1}{\sqrt{T}}))$ be close to $0$. The Lipschitz of the tabular policy is $1$, i.e., $L_1=1$; the Lipschitz of the derivative and the second-order derivative of the tabular policy are both $0$, i.e., $L_2=0$ and $L_3=0$.

\textbf{Selection of hyper-parameters.} We consider the tabular softmax policy and use Monte Carlo sampling to evaluate the Q-value. For the task distribution with high task variance, we set $\lambda=0.5$ for $\mathcal{A} l g^{(1)}$, $\lambda=0.5$ for $\mathcal{A} l g^{(2)}$, and $\lambda=0.04$ for $\mathcal{A} l g^{(3)}$.
For the task distribution with low task variance, we set $\lambda=0.25$ for $\mathcal{A} l g^{(1)}$, $\lambda=0.25$ for $\mathcal{A} l g^{(2)}$, and $\lambda=0.02$ for $\mathcal{A} l g^{(3)}$.
There is a clarification about the hyper-parameter selection and the verified bound shown in Appendix \ref{ajssdfjwesbfj-meta-rl}. 

\textbf{Supplemental results.} 
Figures \ref{fig:3-meta-rl} and \ref{fig:9-meta_rl} show the results of the proposed algorithm with $\mathcal{A} l g^{(2)}$ and $\mathcal{A} l g^{(3)}$. It shows that, for all scenarios, the expected optimality gap of the policy adaptation $\mathcal{A} l g^{(2)}$ or $\mathcal{A} l g^{(3)}$ is smaller than the upper bound shown in Theorems \ref{lemkgjhbsdkjgbsdjgbce_meta_rl} and \ref{lemkgjhbsdewrgekjgbsdjgbce_meta_rl}, which verify our theoretical analysis. 

\begin{figure*}[hbt] 
\begin{center} 
\begin{tabular}{cccc}  \includegraphics[height=0.179\textwidth]{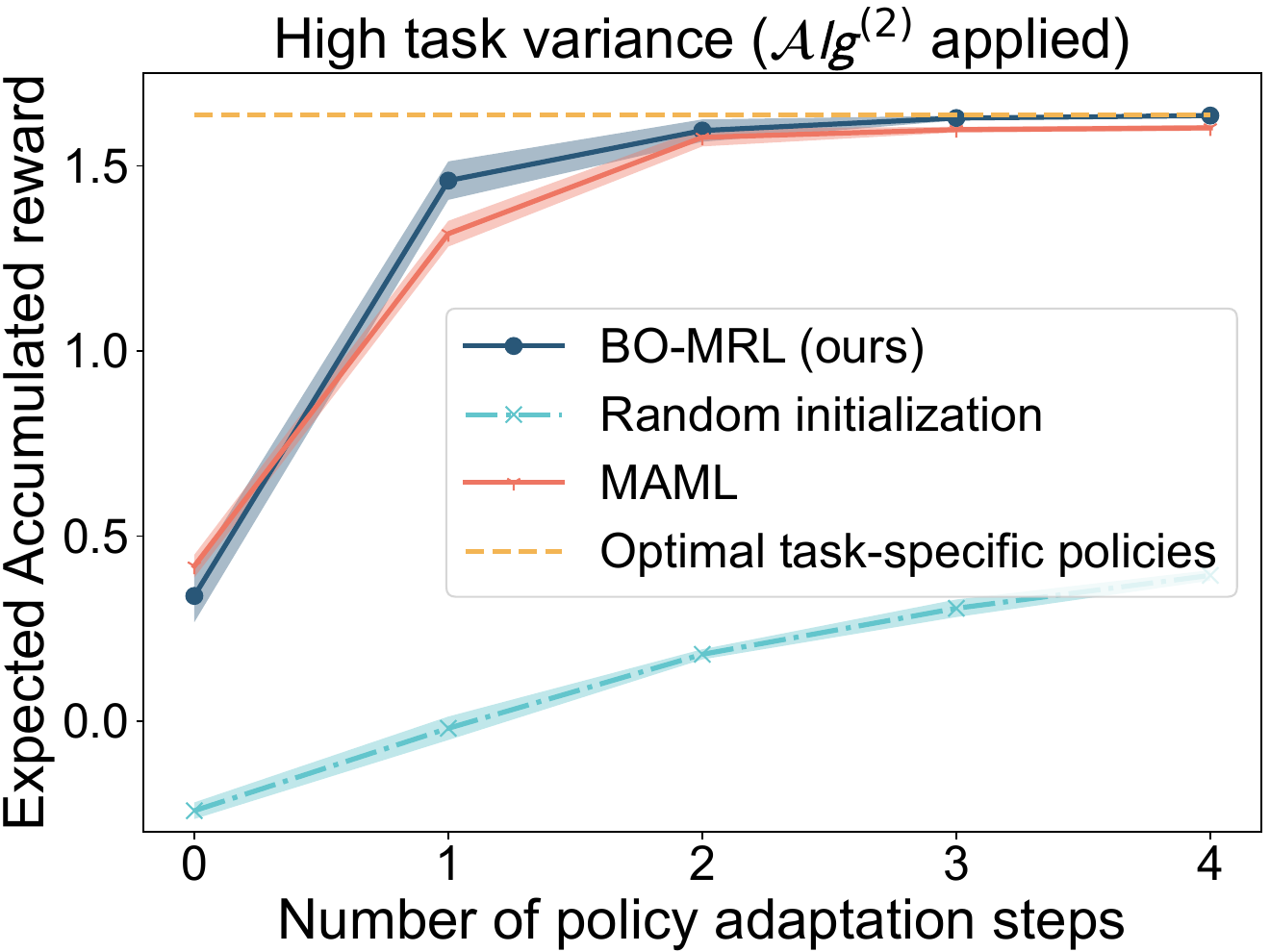} & \hspace{-3mm} 
\includegraphics[height=0.179\textwidth]{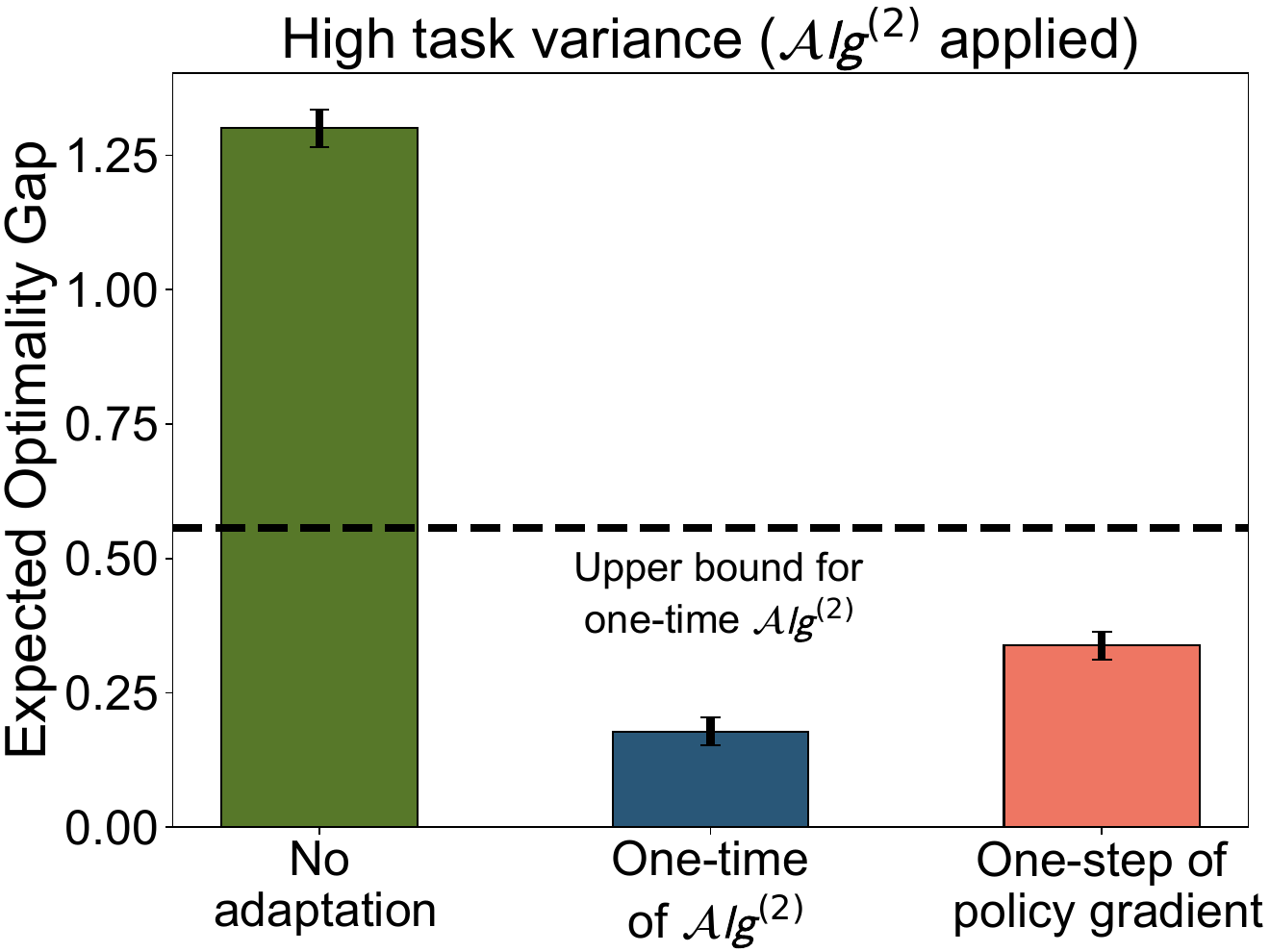} 
\includegraphics[height=0.179\textwidth]{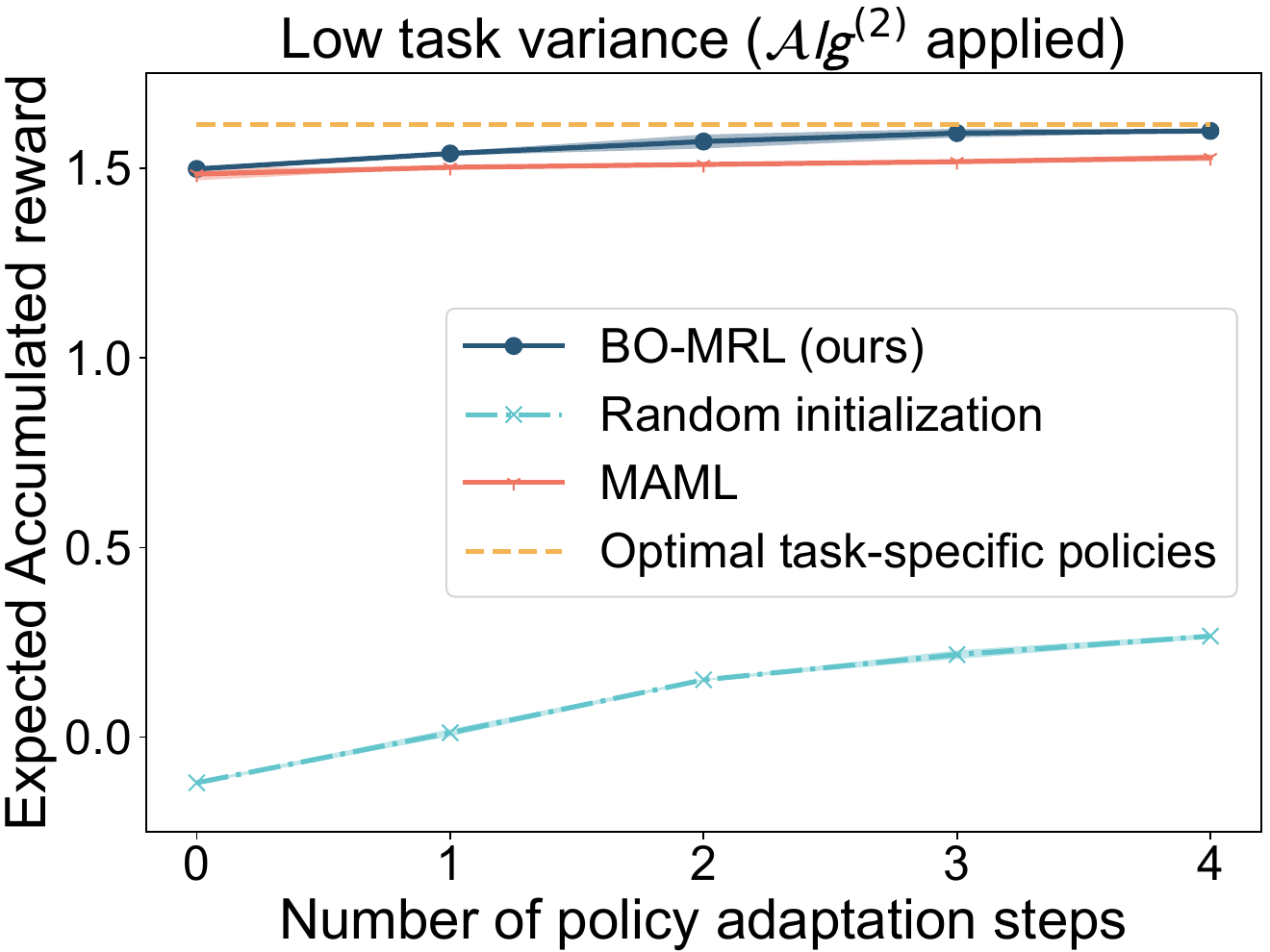} & \hspace{-3mm} 
\includegraphics[height=0.179\textwidth]{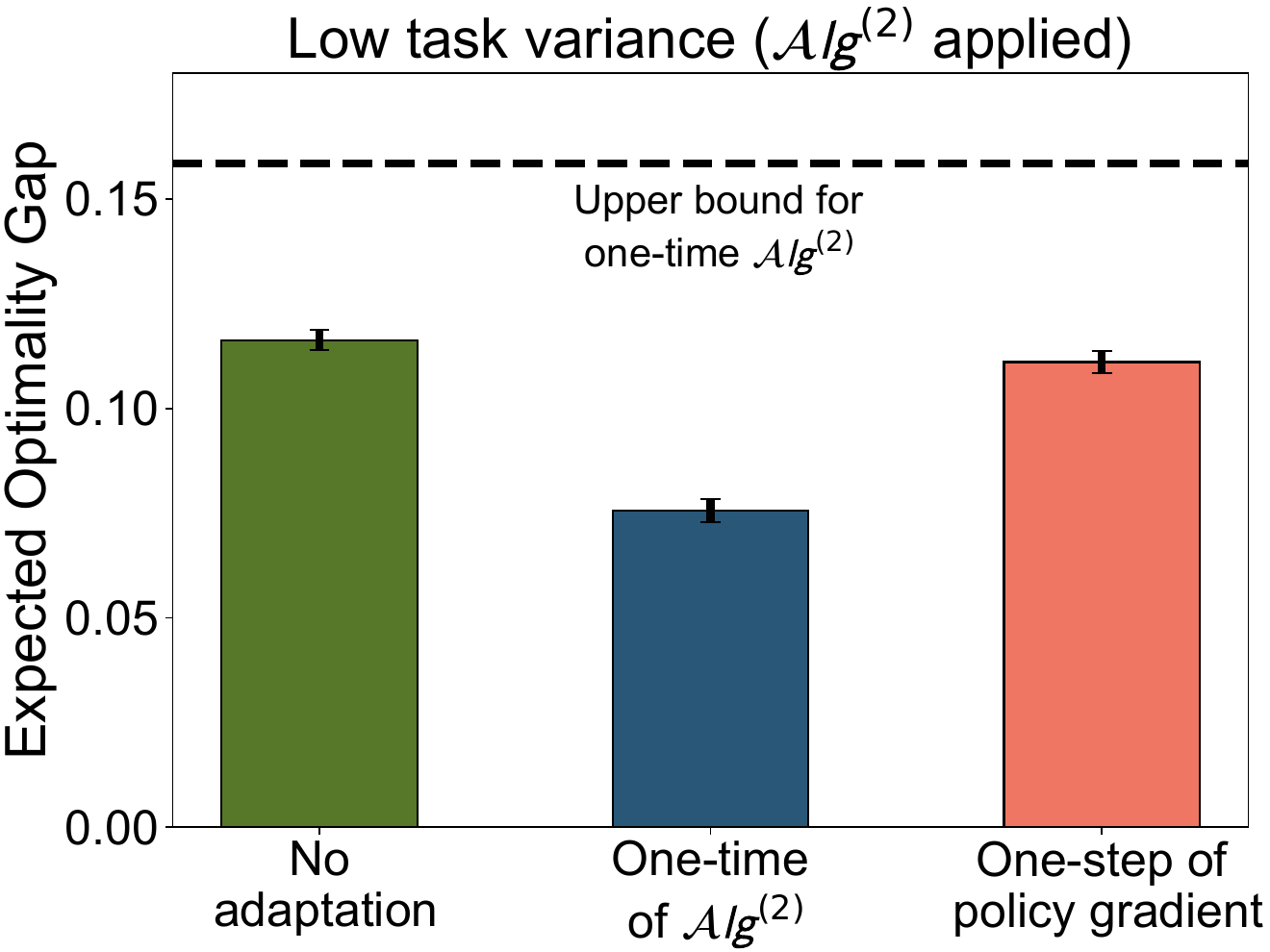} 
\end{tabular}
\vspace{-4mm} 
\caption{\small Results of the meta-test of BO-MRL on Frozen Lake, where $\mathcal{A} l g^{(2)}$ is applied. \textbf{Left}: Average accumulated reward across all test tasks v.s. number of policy adaptation steps; \textbf{Right}: Comparing the expected optimality gap by the BO-MRL and baselines with the upper bound of the accumulated reward of one-time $\mathcal{A} l g^{(2)}$. }   
\label{fig:3-meta-rl}
\end{center}
\end{figure*}

\begin{figure}[hbt] 
\begin{center} 
\begin{tabular}{cccc} 
\includegraphics[height=0.2\textwidth]{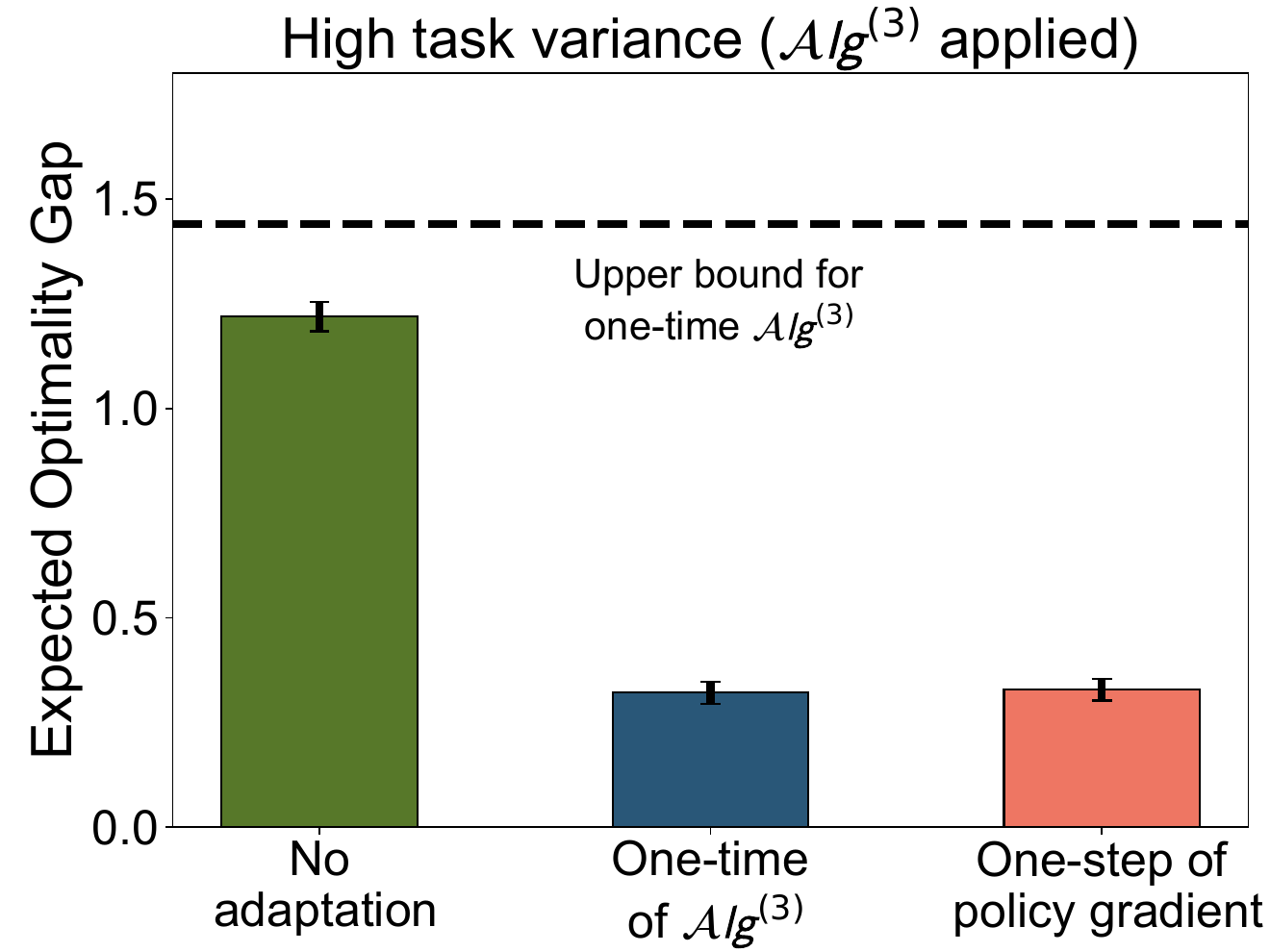} &
\includegraphics[height=0.2\textwidth]{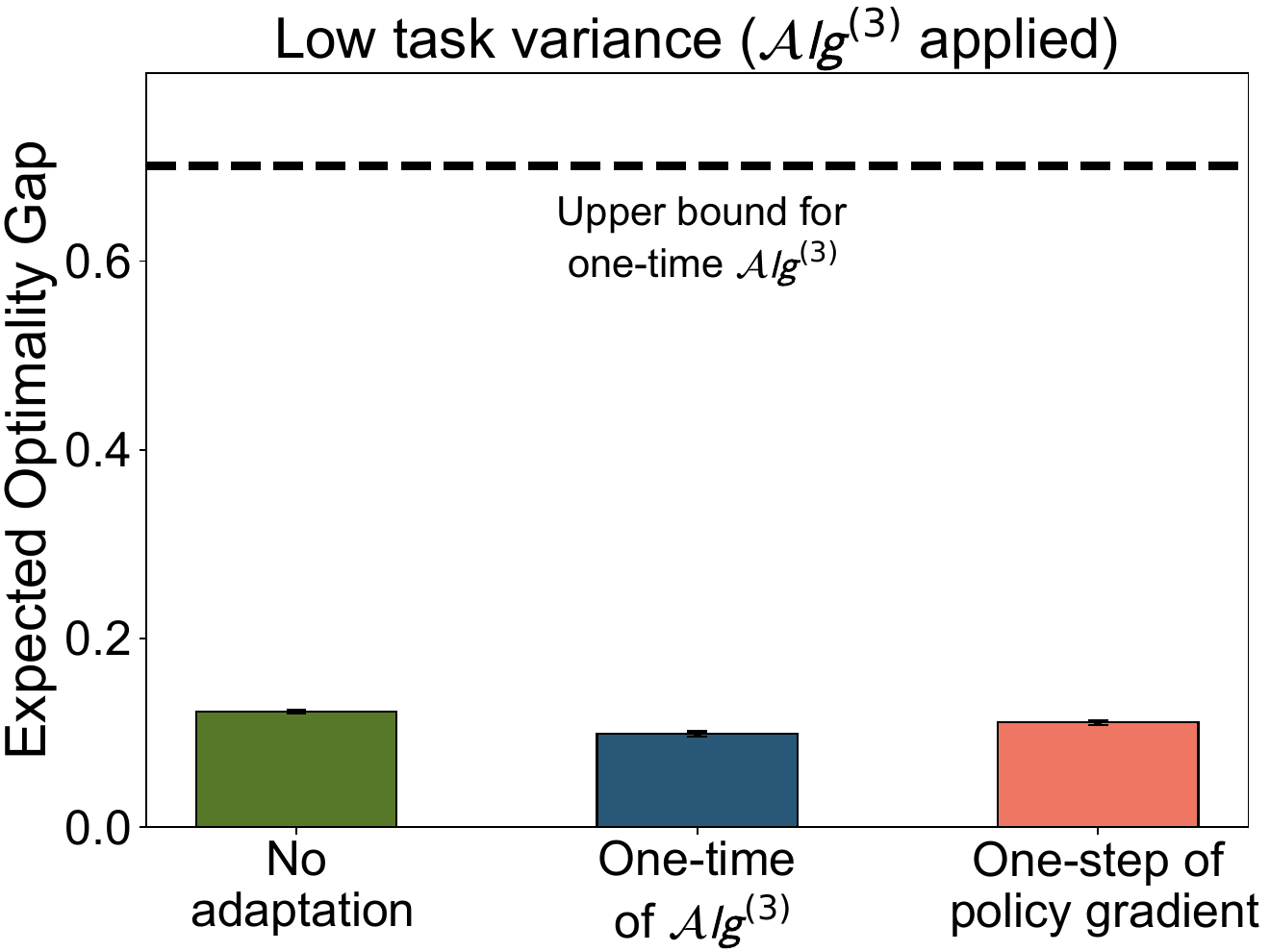} 
\end{tabular}
\caption{\small  Results of BO-MRL on Frozen Lake, where $\mathcal{A} l g^{(3)}$ is applied.  Comparing the expected optimality gap by the BO-MRL and baselines with the upper bound of the accumulated reward of one-time $\mathcal{A} l g^{(3)}$. }  
\label{fig:9-meta_rl}
\end{center}
\end{figure}

\section{Experimental Supplements of Locomotion.} 
\label{sejbgtjksbjckvbxhf_meta_rl}
\textbf{Experimental settings.} We consider locomotion tasks HalfCheetah with goal directions and goal velocities, Ant with goal directions and goal velocities. %and 2D navigation. 
We follow the problem setups of \cite{zintgraf2020varibad,finn2017model}. In the goal velocity experiments, the moving reward is the negative absolute value between the agent's current velocity and a goal velocity, which is chosen uniformly at random
between $0.0$ and $2.0$ for the cheetah and between $0.0$ and $3.0$ for the ant. In the goal direction experiments, the moving reward is the magnitude of the velocity in either the forward
or backward direction, chosen at random for each task $\tau$ in $\mathbb{P}$. 
For the Half-cheetah, the total reward = moving reward  - ctrl cost. 
For the ant, the total reward = healthy reward + moving reward - ctrl cost - contact cost.
%For the 2D navigation, the reward is the negative squared distance to the goal.
The horizon is $H = 200$, with $20$ rollouts per policy adaption step for all problems except the ant direction
task, which used $40$ rollouts per step.

\textbf{Selection of hyper-parameters.} We apply the proposed practical algorithm of Algorithm \ref{alg:framework0_meta_rl}, Algorithm \ref{alg:framework1_meta_rl} in Appendix \ref{awkegfhjdbf_meta_rl}. We consider the policy as a Gaussian distribution, where the neural network produces the means and variances of the actions. The neural network policy has two hidden layers of size 64, with tanh nonlinearities. We use Monte Carlo sampling to evaluate the Q-value. 
At the lower-level task-specific policy adaptation, the optimization number by Adam is 50.
The models are trained for up to 500 meta-iterations.
For the TRPO in meta-parameter optimization, we use the KL-divergence constraint as $\delta=1e-3$.

For the experiment of Half-Cheetah with goal velocities, we set $\lambda=0.5$ for $\mathcal{A} l g^{(1)}$, $\lambda=0.4$ for $\mathcal{A} l g^{(2)}$. 
For the experiment of Half-Cheetah with goal directions, we set $\lambda=0.5$ for $\mathcal{A} l g^{(1)}$, $\lambda=0.5$ for $\mathcal{A} l g^{(2)}$. 
For the experiment of Ant with goal velocities, we set $\lambda=0.5$ for $\mathcal{A} l g^{(1)}$, $\lambda=0.5$ for $\mathcal{A} l g^{(2)}$. 
For the experiment of Ant with goal directions, we set $\lambda=0.5$ for $\mathcal{A} l g^{(1)}$, $\lambda=0.5$ for $\mathcal{A} l g^{(2)}$. 
%For the experiment of the 2D navigation, we set $\lambda=0.1$ for $\mathcal{A} l g^{(1)}$, $\lambda=0.1$ for $\mathcal{A} l g^{(2)}$. 

\textbf{Comparison setting.} 
We compare the proposed algorithm with several optimization-based meta-RL algorithms, including MAML, E-MAML \cite{stadie2018some}, and ProMP \cite{rothfuss2019promp}.
The experiment results of E-MAML, ProMP, and MAML-TRPO come from \cite{zintgraf2020varibad,finn2017model}.
We do not compare the proposed algorithm with black-box meta-RL algorithms, as they are based on the task context and even can achieve good performance without adaptation.

\textbf{Supplemental results.} 
Figure \ref{fig:9999-meta-rl} shows that the proposed algorithm with both within-task algorithms $\mathcal{A} l g^{(i)}$ outperform the baseline methods in four experimental settings. The accumulated rewards of proposed algorithms  increase fast and stop at points with better performance than the baseline methods.

\begin{figure*}[hbt] 
\begin{center} 
\begin{tabular}{cccc}  \hspace{-3mm}\includegraphics[height=0.179\textwidth]{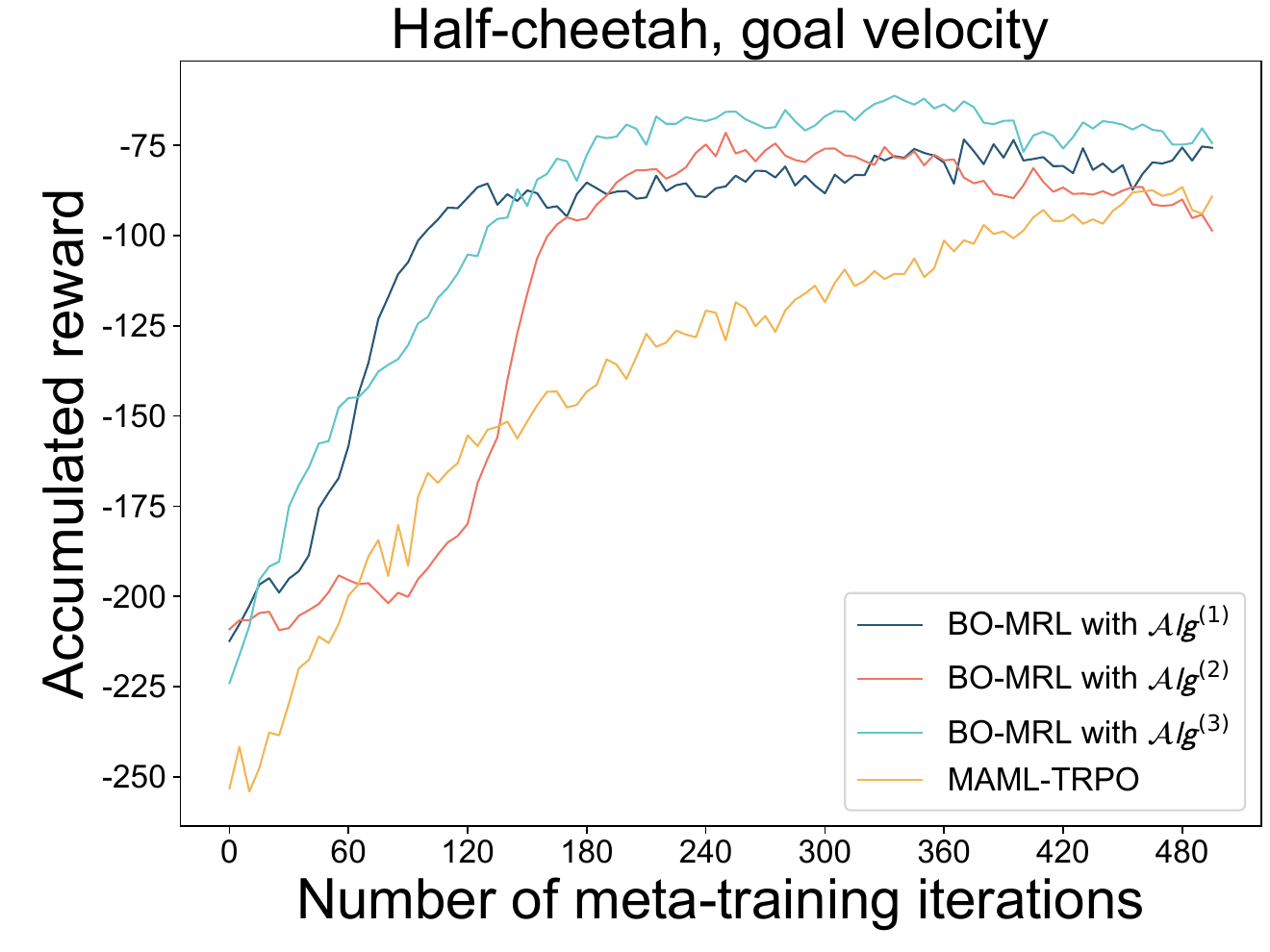} & \hspace{-3mm} 
\includegraphics[height=0.179\textwidth]{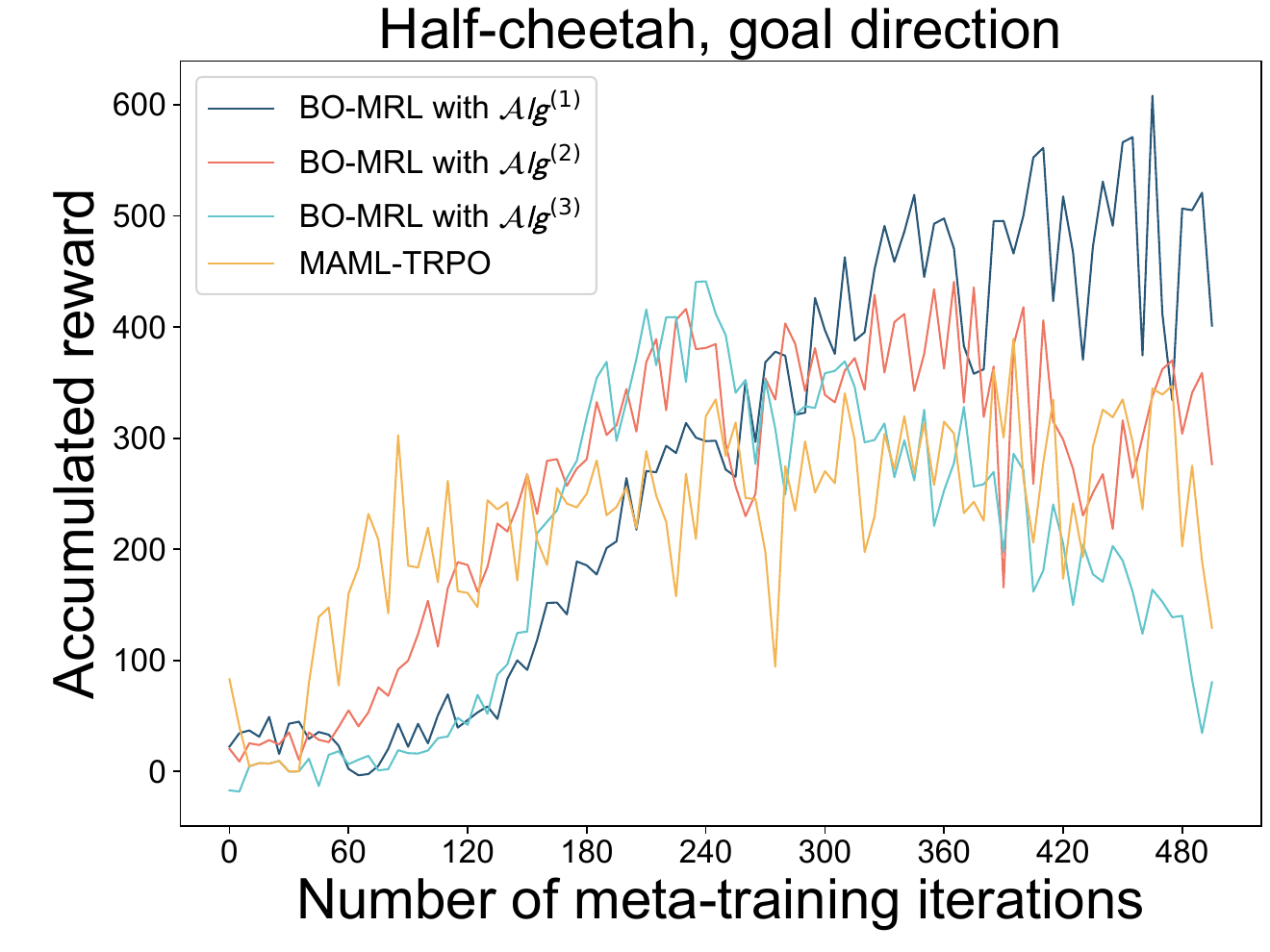} 
\includegraphics[height=0.179\textwidth]{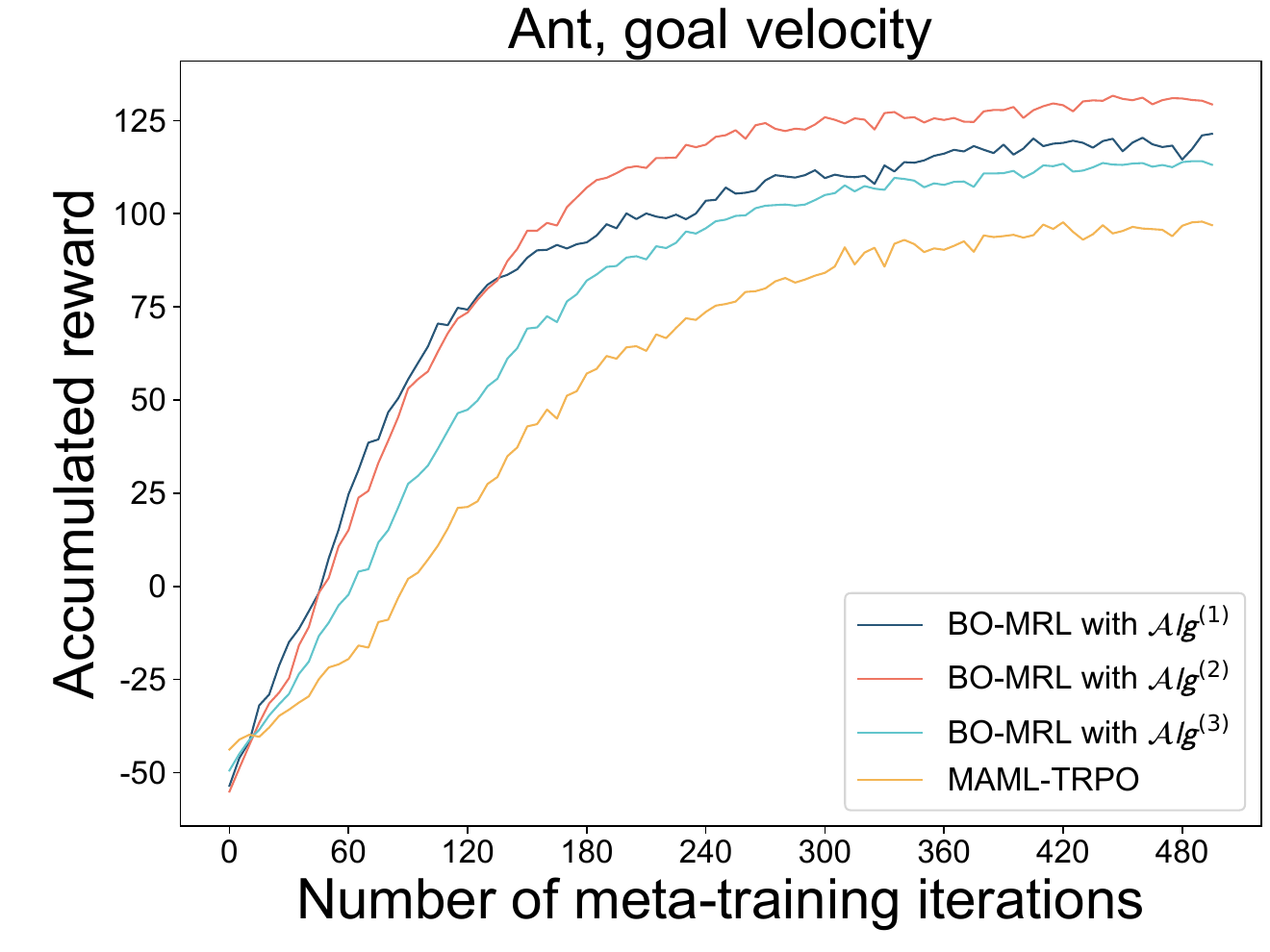} & \hspace{-3mm} 
\includegraphics[height=0.179\textwidth]{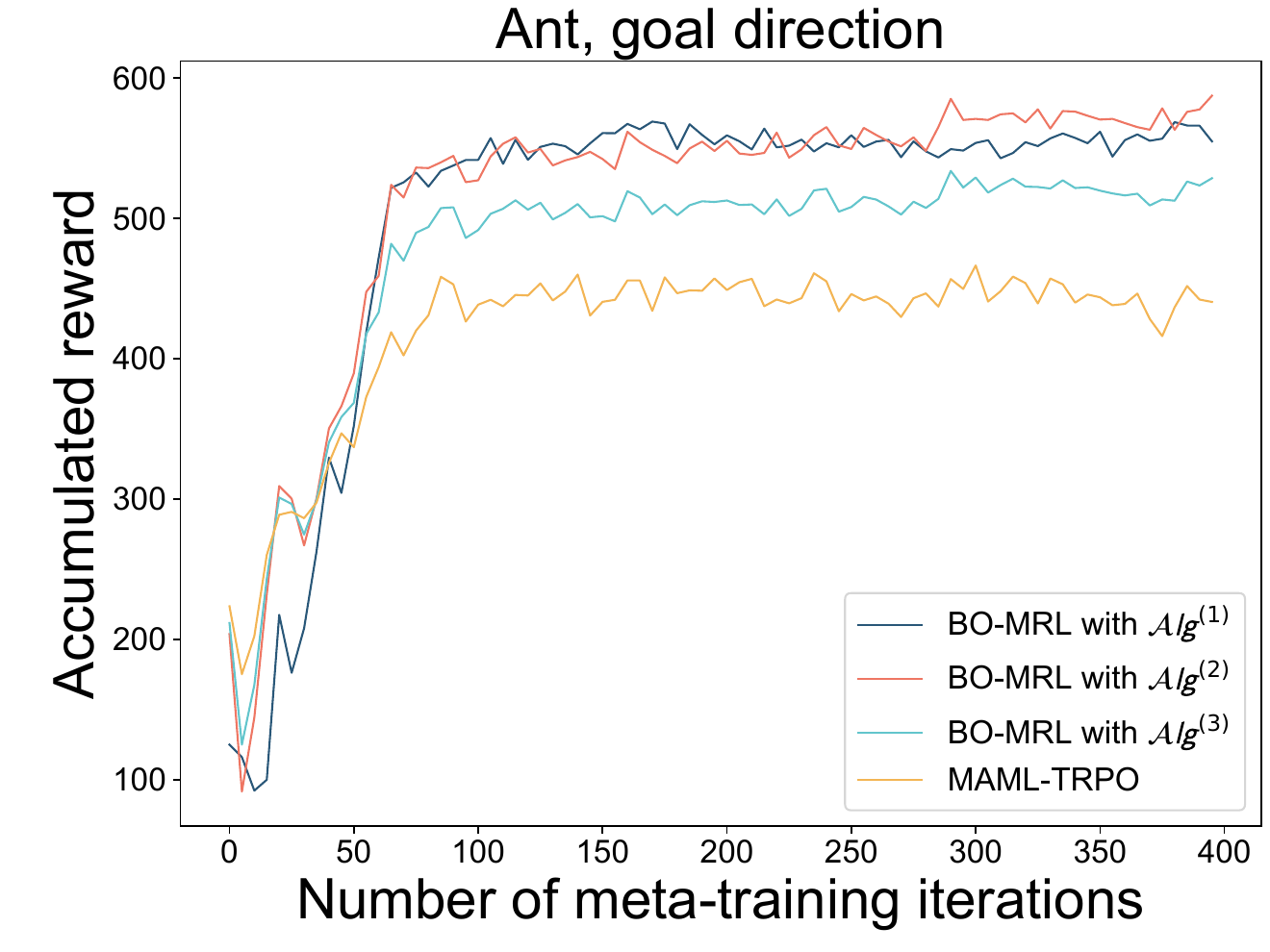} 
\end{tabular}
\vspace{-4mm} 
\caption{\small Accumulated rewards during the meta-training under the practical algorithm of BO-MRL on the locomotion tasks. }   
\label{fig:9999-meta-rl}
\end{center}
\end{figure*}

\begin{comment}
Figure \ref{fig:111-meta_rl} shows that the proposed algorithm with both within-task algorithms $\mathcal{A} l g^{(1)}$ and $\mathcal{A} l g^{(2)}$ outperform the baseline method in the 2D navigation experiment.

\begin{figure}[hbt] 
\begin{center} 
\begin{tabular}{cccc} 
\includegraphics[height=0.25\textwidth]{navigation.pdf}
\end{tabular}
\caption{\small Results of the practical algorithm of BO-MRL on the 2D navigation task. }  
\label{fig:111-meta_rl}
\end{center}
\end{figure}
\end{comment}

\noindent {\Large \textbf{Algorithm supplement}} 

\section{Computation of $\nabla_{\phi} Q^{\pi_\phi}_{\tau}(s,a)$}
\label{computationofgradientq_meta_rl}
In the computation of meta-objective shown in Propositions \ref{propsition1_meta_rl} and \ref{propsition2_meta_rl}, we need to compute $\nabla_{\phi} Q^{\pi_\phi}_{\tau}(s,a)$
 $$
\begin{aligned}
\nabla_{\phi} Q^{\pi_\phi}_{\tau}(s,a) = \frac{\gamma}{1-\gamma} \cdot \mathbb{E}_{\left(s^{\prime}, a^{\prime}\right) \sim \sigma_{\tau, \pi_\phi}^{(s, a)}}\left[\nabla_\phi \ln \pi_\phi\left(a^{\prime} | s^{\prime}\right)  Q_\tau^{\pi_\phi}\left(s^{\prime}, a^{\prime}\right)\right].
\end{aligned}
$$
where the state-action visitation probability $\sigma_{\tau, \pi_\theta}^{(s, a)}$ initialized at $(s, a) \in \mathcal{S} \times \mathcal{A}$ is defined by
$$
\sigma_{\tau, \pi_\phi}^{(s, a)}\left(s^{\prime}, a^{\prime}\right)=\left(1-\gamma\right) \sum_{t=0}^{\infty} \gamma^t  \mathbb{P}\left(s_t=s^{\prime}, a_t=a^{\prime} | \pi_{\phi},
s_0 \sim P_\tau(\cdot | s, a) \right).
$$
For the tabular softmax policy in discrete state-action space shown in Section \ref{rl_task_meta_rl},
\begin{equation}
\label{graadient_of_q_meta_rl}
\begin{aligned}
\nabla_{\phi(s^{\prime},\cdot)} Q^{\hat{\pi}_\phi}_{\tau}(s,a) =  \frac{\gamma}{1-\gamma} \cdot \sigma_{\tau, \hat{\pi}_\phi}^{(s, a)}(s^{\prime}) \cdot \hat{\pi}_{\phi}(\cdot|s^\prime) \odot A_\tau^{\hat{\pi}_\phi}\left(s^{\prime}, \cdot \right),
\end{aligned}
\end{equation}
where $\odot$ is the element-wise product, $\phi(s^{\prime},\cdot)$ is the vector which includes $\phi(s^{\prime},a^{\prime})$ for all $a^{\prime} \in \mathcal{A}$ as the elements, and $A_{\tau}^{\hat{\pi}_{\phi}}(s,\cdot)$ is the vector which includes $A_{\tau}^{\hat{\pi}_{\phi}}(s,a)$ for all $a \in \mathcal{A}$ as the elements. Equivalently,
\begin{equation}
\label{graadient_of_q_simple_meta_rl}
\begin{aligned}
\nabla_{\phi(s^{\prime},a^{\prime})} Q^{\hat{\pi}_\phi}_{\tau}(s,a) =  \frac{\gamma}{1-\gamma} \cdot \sigma_{\tau, \hat{\pi}_\phi}^{(s, a)}(s^{\prime}) \hat{\pi}_{\phi}(a^\prime|s^\prime) A_\tau^{\hat{\pi}_\phi}\left(s^{\prime}, a^\prime \right).
\end{aligned} 
\end{equation} 

For the softmax policy with the function approximation, 
\begin{equation}
\label{graadient1_of_q_meta_rl}
\begin{aligned}
\nabla_{\phi} Q^{\hat{\pi}_\phi}_{\tau}(s,a) = & \frac{\gamma}{1-\gamma} \cdot \mathbb{E}_{\left(s^{\prime}, a^{\prime}\right) \sim \sigma_{\tau, \hat{\pi}_\phi}^{(s, a)}}\left[ \frac{\nabla_\phi \hat{\pi}_\phi\left(a^{\prime} | s^{\prime}\right)}{ \hat{\pi}_\phi\left(a^{\prime} | s^{\prime}\right)} Q_\tau^{\hat{\pi}_\phi}\left(s^{\prime}, a^{\prime}\right)\right] \\
=&\frac{\gamma}{1-\gamma} \cdot \mathbb{E}_{\left(s^{\prime}, a^{\prime}\right) \sim \sigma_{\tau, \hat{\pi}_\phi}^{(s, a)}}\left[ { \nabla_\phi f_\phi} \left(s^{\prime},a^{\prime} \right)  Q_\tau^{\hat{\pi}_\phi}\left(s^{\prime}, a^{\prime}\right)\right] \\
=&\frac{\gamma}{1-\gamma} \cdot \mathbb{E}_{\left(s^{\prime}, a^{\prime}\right) \sim \sigma_{\tau, \hat{\pi}_\phi}^{(s, a)}}\left[ { \nabla_\phi f_\phi} \left(s^{\prime},a^{\prime} \right) A_\tau^{\hat{\pi}_\phi}\left(s^{\prime}, a^{\prime}\right)\right]
\end{aligned}
\end{equation}

\begin{proof}
 As shown in \cite{wang2020global}, 
 $$
\begin{aligned}
\nabla_{\phi} Q^{\pi_\phi}_{\tau}(s,a) & =\nabla_\phi\left(\left(1-\gamma\right) \cdot r_{\tau}(s, a)+\gamma \cdot \mathbb{E}_{s^{\prime} \sim P_{\tau}(\cdot | s, a)}\left[V_\tau^{\pi_\phi}\left(s^{\prime}\right)\right]\right) \\
& =\frac{\gamma}{1-\gamma} \cdot \mathbb{E}_{\left(s^{\prime}, a^{\prime}\right) \sim \sigma_{\tau, \pi_\phi}^{(s, a)}}\left[\nabla_\phi \ln \pi_\phi\left(a^{\prime} | s^{\prime}\right) \cdot Q_\tau^{\pi_\phi}\left(s^{\prime}, a^{\prime}\right)\right]
\\
& =\frac{\gamma}{1-\gamma} \cdot \mathbb{E}_{\left(s^{\prime}, a^{\prime}\right) \sim \sigma_{\tau, \pi_\phi}^{(s, a)}}\left[\nabla_\phi \ln \pi_\phi\left(a^{\prime} | s^{\prime}\right) \cdot A_\tau^{\pi_\phi}\left(s^{\prime}, a^{\prime}\right)\right].
\\
& =\frac{\gamma}{1-\gamma} \cdot \mathbb{E}_{\left(s^{\prime}, a^{\prime}\right) \sim \sigma_{\tau, \pi_\phi}^{(s, a)}}\left[\frac{\nabla_\phi \pi_\phi\left(a^{\prime} | s^{\prime}\right)}{ \pi_\phi\left(a^{\prime} | s^{\prime}\right)} \cdot A_\tau^{\pi_\phi}\left(s^{\prime}, a^{\prime}\right)\right].
\end{aligned}
$$
By Lemma \ref{Auxiliary_lemma2_meta_rl}, from (\ref{1353153453345_meta_rl}), we can obtain (\ref{graadient_of_q_meta_rl}); from (\ref{1353asd5_meta_rl}), we can obtain (\ref{graadient1_of_q_meta_rl}). 
\end{proof}

\section{ Practical algorithm} 
\label{awkegfhjdbf_meta_rl}

In Sections \ref{meta_rl_sdfsdfs} and \ref{Theoretical}, we develop a theoretically guaranteed algorithm with Assumptions \ref{assdsssdsss_meta_rl}, \ref{awhevdbsfvb_meta_rl}, and \ref{wdfdwdf_meta_rl}. 
In this section, we develop a practical instantiation of Algorithm \ref{alg:framework0_meta_rl} and evaluate its performance in high-dimensional experiments in Section \ref{e1_meta_rl}. 

Algorithm \ref{alg:framework1_meta_rl} states the practical algorithm of Algorithm \ref{alg:framework0_meta_rl}. Compared with Algorithm \ref{alg:framework0_meta_rl}, Algorithm \ref{alg:framework1_meta_rl} considers and overcomes the following limitations of Algorithm \ref{alg:framework0_meta_rl}: (a) evaluating the exact expectation in (\ref{dis_withintask}) and (\ref{withintask}) is costly and the approximation error could influence the task-specific policy adaptation if using sampling, especially in the meta-RL problem where the sampling data is limited; (b) the optimization problems
in (\ref{dis_withintask}) and (\ref{withintask}) have no closed-form solution; (c) the computation of the gradients of the meta-objectives shown in Propositions \ref{propsition1_meta_rl} and \ref{propsition2_meta_rl} is time-consuming;
(d) the gradient-based approach to optimize the meta-objective is not stable in RL problems. 

\begin{algorithm}[htb]
\caption{Practical Algorithm of BO-MRL} 
\label{alg:framework1_meta_rl}
\begin{algorithmic}[1] 
\REQUIRE Regularization weight $\lambda>0$; initial meta-parameter $\phi_{0}$; learning rate $\alpha$.
\FOR{$t = 1, \cdots, T$}
\STATE  Sample a batch of tasks $\{\tau_i\}_{i=1}^N \sim \mathbb{P}(\Gamma)$ with the MDP $\mathcal{M}_{\tau_i}$ i.i.d.
\STATE On each task $\tau_i$, sample the trajectories of the meta-policy $\pi_{\phi_t}$ as $B_{\tau_i}$. 
\STATE Evaluate the state-action value function $Q^{\pi_{\phi_t}}_{\tau_i}(\cdot,\cdot)$ for each $\tau_i$.
\STATE For each task $\tau_i$, compute the task-specific policy $\pi_{{\theta}^{\prime}_{\tau_i}}$ by solving $ \mathcal{A}lg(\lambda,\phi_t, \tau_i)$ defined in (\ref{withintask_appro}) by Adam. 
\STATE  Compute $\nabla_{\phi} J_{\tau_i}(\pi_{\theta^{\prime}_{\tau_i}})$ in (\ref{gradient_continuous_meta-rl_approximation}) by conjugate gradient method
\STATE Update meta-parameter by the TRPO with the gradient $ \frac{1}{N} \sum_{i} \nabla_{\phi} J_{\tau_i}(\pi_{\theta^{\prime}_{\tau_i}})$ and the sampling trajectories $\{B_{\tau_i}\}_{i=1}^N$.
\ENDFOR 
\STATE Return $\phi_{T}$
\end{algorithmic}
\end{algorithm} 

In the beginning of Algorithm \ref{alg:framework1_meta_rl}, we first sample a batch of tasks $\{\tau_i\}_{i=1}^N \sim \mathbb{P}(\Gamma)$.  
On each task $\tau_i$, we sample the trajectories of the meta-policy $\pi_{\phi_t}$ as $B_{\tau_i}$, and evaluate the state-action value function $Q^{\pi_{\phi_t}}_{\tau_i}(\cdot,\cdot)$ for each $\tau_i$. 
Next, since the number of the sampling state-action pairs in $B_{\tau_i}$ is limited, if we directly use the sampling average to approximate the expectation in \eqref{withintask}, the approximation error will be very large when $\pi_\phi(a|s)$ is small. Therefore, we solve the following optimization problem as the within-task algorithm instead of (\ref{withintask}):
\begin{equation} 
\label{withintask_appro}
\pi_{{\theta}^{\prime}_{\tau}}= \mathcal{A}lg(\lambda,\phi_t, \tau)=\arg\min_{\theta}
\frac{1}{|B_{\tau}|}\sum_{(a,s) \in B_{\tau}}
h\left(\frac{\pi_\theta(a|s)}{\pi_\phi(a|s)}\right) Q^{\pi_\phi}_{\tau}(s,a) -\lambda D_\tau^2(\pi_\phi,\pi_\theta),
\end{equation}
where $h(x)=\frac{2}{1+e^{-2(x-1)}}$. The function $h$ avoids the term $\frac{\pi_\theta(a|s)}{\pi_\phi(a|s)}$ is optimized to very large. We use Adam \cite{kingma2014adam} to solve the problem in \eqref{withintask_appro}. 
Next, the computation of the gradients of the meta-objectives shown in Proposition \ref{propsition2_meta_rl} is time-consuming, since the computation complexity of the term $-\frac{\nabla_\theta \pi_\theta(a|s)} {\lambda \pi_\phi(a|s)} \nabla^{\top}_\phi  Q^{\pi_\phi}_{\tau}(s,a)$ is very high. So, we omit the term,
and compute $\nabla_{\phi} J_{\tau}(\pi_{\theta^{\prime}_{\tau}})$ as
\begin{equation}
\label{gradient_continuous_meta-rl_approximation}
\begin{aligned}
\frac{1}{1-\gamma} {\nabla_{\phi} \theta^{\prime}_{\tau}} \cdot \underset{\substack{s \sim \nu^{\pi_{\theta^{\prime}_{\tau}}}_{\tau} \\ a \sim \pi_{\theta^{\prime}_{\tau}}(\cdot|s)}}{\mathbb{E}}\left[ \frac{ \nabla_{{\theta^{\prime}_{\tau}}} \pi_{\theta^{\prime}_{\tau}}(a|s)}{\pi_{\theta^{\prime}_{\tau}}(a|s)}  Q_{\tau}^{\pi_{\theta^{\prime}_{\tau}}}(s,a) \right],
\end{aligned}
\end{equation}
where
$$
\begin{aligned}
{\nabla_{\phi}^{\top} \theta^{\prime}_{\tau}} \approx
-\underset{\substack{s \sim \nu^{\pi_\phi}_{\tau} \\
a \sim \pi_{\phi}(\cdot|s)}}{\mathbb{E}}\left[ \nabla^2_\theta d^2(\pi_\phi(\cdot|s),\pi_\theta(\cdot|s)) -\frac{\nabla^2_\theta \pi_\theta(a|s)}{\lambda\pi_\phi(a|s)} Q^{\pi_\phi}_{\tau}(s,a)  \right]^{-1}  \\
\underset{\substack{s \sim \nu^{\pi_\phi}_{\tau} \\
a \sim \pi_{\phi}(\cdot|s)}}{\mathbb{E}}\left[\nabla^{\top}_\phi \nabla_\theta 
d^2(\pi_\phi(\cdot|s),\pi_\theta(\cdot|s)) \right] |_{\theta=\theta_\tau^{\prime}}.
\end{aligned}
$$
Finally, since the gradient-based approach is not stable in RL problems, we optimize meta-parameter by the TRPO with the gradient $ \frac{1}{N} \sum_{i} \nabla_{\phi} J_{\tau_i}(\pi_{\theta^{\prime}_{\tau_i}})$ and the sampling trajectories $\{B_{\tau_i}\}_{i=1}^N$, similar to \cite{finn2017model}.

\section{Discussion about computational complexity of hyper-gradient}
\label{sjegfjsdfbnsdjfnj-meta-rl}

In Algorithms \ref{alg:framework0_meta_rl} and \ref{alg:framework1_meta_rl}, we compute the inverse of the Hessian matrix when computing the hyper-gradient by Proposition \ref{propsition2_meta_rl} and \eqref{gradient_continuous_meta-rl_approximation}.
The computation of the inverse of the Hessian matrix is not time-consuming and does not increase the processing time much. Here are the two reasons.

First, we apply the conjugate gradient algorithm to compute the inverse of the Hessian and its computation complexity is not high.
According to our experiment of Half-cheetah, the computation time of the hyper-gradient with the inverse of Hessian for a three-layer neural network is about $0.3$ second in each meta-parameter update, where we use only the CPU to compute the hyper-gradient.
This approach has demonstrated high efficiency across a wide range of applications, including several widely used RL algorithms, such as TRPO \cite{schulman2015trust} and CPO \cite{achiam2017constrained}, which compute the inverse of the Hessian in each policy update iteration. The detail is shown in Appendix C of \cite{schulman2015trust}. They usually compute thousands times of the Hessian inverse for a single RL task. 
In the simplest meta-RL method, MAML \cite{finn2017model}, the authors use the TRPO to update the meta-parameter, as shown in Section 5.3 of \cite{finn2017model}, the inverse of the Hessian is also computed.
Therefore, the computational complexity of the hyper-gradient in our proposed method is comparable to many existing RL and meta-RL approaches, which are shown efficient. 

Second, the biggest computational bottleneck in the meta-RL framework is not the hyper-gradient computation. 
According to our experiment, the percentage of the computation time in the meta-parameter update, including the computation time of the hyper-gradient computation, is less than $5 \%$, where we use only the CPU to compute the hyper-gradient.
The percentage of computation time in the data collection and the Q value computation by Monte-Carlo sampling is more than $70 \%$, although the state-action data points are collected in the MDP simulator Gym and the data collection is very fast. 
In real-world applications, the state-action data points are even harder to collect and data collection consumes a longer time. Therefore, the computational time of the hyper-gradient computation has a relatively small impact on the mete-RL framework.

\section{Data sampling complexity and computational complexity of one-time policy adaptation}
\label{jsbefjsdnffwe-mets-rl}
The one-time policy adaptation in our algorithm is defined as solving the optimal solution of the optimization problem in \eqref{dis_withintask} or \eqref{withintask} by multiple optimization iterations.
The definition of the one-time policy adaptation follows many widely used RL algorithms, such as TRPO \cite{schulman2015trust} and CPO \cite{achiam2017constrained}, which evaluate the Q-values for the current policy and solve the optimal solution for an optimization problem to obtain the next policy in each policy optimization iteration. For example, TRPO solves the optimization problem in (14) of \cite{schulman2015trust} in each iteration. 

In the one-time policy adaptation, we only need to evaluate the Q-function for one policy $\pi_\phi$ by Monte-Carlo sampling, which requires the agent to explore the MDP using one policy $\pi_\phi$, then solve the optimization problem in \eqref{dis_withintask} or \eqref{withintask} by multiple optimization iterations with the fixed Q-function.
The data sampling complexity is exactly the same as the one-step gradient descent in MAML, which uses Monte-Carlo sampling to evaluate the Q-function and compute the policy gradient based on the Q-function.

The multiple optimization steps in the one-time policy adaptation are different from the multi-step policy gradient update in MAML. 
In our algorithm, the multiple optimization steps in a one-time policy adaptation only need to evaluate the Q-function for one policy $\pi_\phi$, which requires the agent to explore the MDP using only $\pi_\phi$.
In MAML, the Q-function for a new policy needs to be evaluated in each policy gradient update, and then multiple Q-functions are evaluated for multiple policies, which requires the agent to explore the MDP using multiple policies. 
Instead, the one-time policy adaptation in our algorithm corresponds to a one-step policy gradient update in MAML, as they use the same number of data points.

Moreover, we would like to claim that the computation complexity for the one-time policy adaptation in our algorithm and that of the one-step policy gradient update in MAML is comparable, although our algorithm requires multiple optimization iterations.
As mentioned in Appendix \ref{sjegfjsdfbnsdjfnj-meta-rl}, the computation time in the data collection and the Q value computation takes more than $70 \%$ of total computation time, which is much longer than other parts of the algorithm, including the multiple optimization iterations in police adaptation ($15 \%$ of total computation time).
This happens although the state-action data points are collected in the MDP simulator Gym and the data collection is very fast. 
In real-world applications, the state-action data points are even harder to collect and the consuming time of data collection is much longer. Therefore, the computational time of the multiple optimization iterations has a relatively small impact on the mete-RL framework. Therefore, the computation time of our algorithm and that of MAML is comparable. 

From the statement in the above paragraphs, both the data sampling complexity and computational complexity of the one-time policy adaptation in our algorithm and the one-step policy gradient update in MAML are similar. Thus, we define solving the optimal solution of the optimization problem in \eqref{dis_withintask} or \eqref{withintask} as a single policy adaptation step.

\section{Algorithm details with the first-order approximation}
\label{awgfbwdkjfbsdfm_meta_rl}

As we mentioned in Section \ref{meta_rl_sdfsdfs}, we can approximate the first term ${\mathbb{E}}_{s \sim \nu^{\pi_\phi}_{\tau},
a \sim \pi_{\phi}(\cdot|s)}[\frac{\pi_\theta(a|s)}{\pi_\phi(a|s)} Q^{\pi_\phi}_{\tau}(s,a) ]$ in (\ref{withintask}) by its first-order approximation as the within-task algorithm, similar to 
the implementations in TRPO \cite{schulman2015trust} and PPO \cite{schulman2017proximal}. In particular, the within-task algorithm is reduced to the following formulation,
\begin{equation} 
\label{simplified_for_fa}
\pi_{\theta^{\prime}_{\tau}}=\mathcal{A}lg(\pi_\phi, \lambda, \tau)\triangleq\underset{\pi_\theta}{\operatorname{argmin}} -\frac{1}{\lambda} G(\phi)^{\top} \theta + D_{\tau}^2(\pi_\phi,\pi_\theta).
\end{equation}
Here, we use the first-order approximation to replace the first term of (\ref{withintask}).
In particular, $G(\phi)^{\top}(\theta-\phi)$ is
the first order approximation of ${\mathbb{E}}_{s \sim \nu^{\pi_\phi}_{\tau},
a \sim \pi_{\phi}(\cdot|s)}[\frac{\pi_\theta(a|s)}{\pi_\phi(a|s)} Q^{\pi_\phi}_{\tau}(s,a) ]$, where $G(\phi)=\nabla_\theta {\mathbb{E}}_{s \sim \nu^{\pi_\phi}_{\tau},
a \sim \pi_{\phi}(\cdot|s)}[\frac{\nabla_\theta \pi_\theta(a|s)}{\pi_\phi(a|s)} Q^{\pi_\phi}_{\tau}(s,a) ]|_{\theta=\phi}= {\mathbb{E}}_{s \sim \nu^{\pi_\phi}_{\tau},
a \sim \pi_{\phi}(\cdot|s)}[\frac{\nabla_\phi \pi_\phi(a|s)}{\pi_\phi(a|s)} Q^{\pi_\phi}_{\tau}(s,a) ]$.
Under the simplified within-task algorithm $\mathcal{A}lg$, the hypergradient of the meta-objective function $\nabla_{\phi} J_{\tau}(\pi_{\theta^{\prime}_{\tau}})$ can be computed by 
$$
\begin{aligned}
\nabla_{\phi} J_{\tau}(\pi_{\theta^{\prime}_{\tau}})=\frac{1}{1-\gamma} {\nabla_{\phi} \theta^{\prime}_{\tau}} \cdot \underset{\substack{s \sim \nu^{\pi_{\theta^{\prime}_{\tau}}}_{\tau} \\ a \sim \pi_{\theta^{\prime}_{\tau}}(\cdot|s)}}{\mathbb{E}}\left[ \frac{ \nabla_{{\theta^{\prime}_{\tau}}} \pi_{\theta^{\prime}_{\tau}}(a|s)}{\pi_{\theta^{\prime}_{\tau}}(a|s)}  Q_{\tau}^{\pi_{\theta^{\prime}_{\tau}}}(s,a) \right],
\end{aligned}
$$
where 
\begin{equation}
\label{seasgsdgsd_meta_rl}
\begin{aligned}
\nabla_{\phi}^{\top} \theta^{\prime}_{\tau}=\nabla^2_{\theta_\tau^{\prime}} D_{\tau}^2(\pi_\phi,\pi_{\theta_\tau^{\prime}})^{-1}
(\underset{\substack{s \sim \nu^{\pi_\phi}_{\tau} 
a \sim \pi_{\phi}(\cdot|s)}}{\mathbb{E}}[\frac{1}{\lambda}\frac{\nabla_\phi \pi_\phi(a|s)} {\pi_\phi(a|s)} \nabla^{\top}_\phi  Q^{\pi_\phi}_{\tau}(s,a)+\\ \frac{1}{\lambda}\frac{\nabla_\phi^2 \pi_\phi(a|s)} {\pi_\phi(a|s)}   Q^{\pi_\phi}_{\tau}(s,a)] 
-  \nabla^{\top}_\phi \nabla_{\theta_\tau^{\prime}} 
D_{\tau}^2(\pi_\phi,\pi_{\theta_\tau^{\prime}})).  
\end{aligned}
\end{equation}
The computation of $\nabla_{\phi}^{\top} \theta^{\prime}_{\tau}$ is derived in Section \ref{aerfdsdfsg_meta_rl}.

\section{Connection between the proposed algorithm and MAML}
\label{discussion_with_maml_meta_rl}

As we claim in Section \ref{meta_rl_sdfsdfs}, when we approximate the first term ${\mathbb{E}}_{s \sim \nu^{\pi_\phi}_{\tau},
a \sim \pi_{\phi}(\cdot|s)}[\frac{\pi_\theta(a|s)}{\pi_\phi(a|s)} Q^{\pi_\phi}_{\tau}(s,a) ]$ in (\ref{withintask}) by its first-order approximation and also select $D_{\tau}=D_{\tau,3}$, the within-task algorithm (\ref{withintask}) is reduced to the policy gradient ascent.
In particular, the term ${\mathbb{E}}_{s \sim \nu^{\pi_\phi}_{\tau},
a \sim \pi_{\phi}(\cdot|s)}[\frac{\pi_\theta(a|s)}{\pi_\phi(a|s)} Q^{\pi_\phi}_{\tau}(s,a) ]$ is approximated by $(\theta-\phi)^{\top} {\mathbb{E}}_{s \sim \nu^{\pi_\phi}_{\tau},
a \sim \pi_{\phi}(\cdot|s)}[\frac{\nabla\pi_\phi(a|s)}{\pi_\phi(a|s)} Q^{\pi_\phi}_{\tau}(s,a) ]$, then 
the within-task algorithm $\mathcal{A} l g(\pi_\phi, \lambda, \tau)$ becomes to 
\begin{equation}
\label{eq12314314358_meta_rl}
\theta^{\prime}_{\tau}=\mathcal{A} l g(\pi_\phi, \lambda, \tau)\triangleq\underset{\theta}{\operatorname{argmax}} \ -\lambda \|\theta-\phi\|^2 + \theta^{\top} \cdot \underset{\substack{s \sim \nu^{\pi_\phi}_{\tau} \\
a \sim \pi_{\phi}(\cdot|s)}}{\mathbb{E}}\left[\frac{\nabla_\phi\pi_\phi(a|s)}{\pi_\phi(a|s)} Q^{\pi_\phi}_{\tau}(s,a) \right].
\end{equation}
Solve the optimization problem, we have 
$$\theta^{\prime}_{\tau}=\phi + \frac{1}{\lambda} \underset{\substack{s \sim \nu^{\pi_\phi}_{\tau} \\
a \sim \pi_{\phi}(\cdot|s)}}{\mathbb{E}}\left[\frac{\nabla_\phi\pi_\phi(a|s)}{\pi_\phi(a|s)} Q^{\pi_\phi}_{\tau}(s,a) \right]=\phi + \frac{1-\gamma}{\lambda}\nabla_\phi J_{\tau}(\phi),$$
which is policy gradient ascent.
Thus,  when we select (\ref{eq12314314358_meta_rl}) as the within-task algorithm, the meta-algorithm (\ref{meta-algorithm}) is reduced to the algorithm that can learn the initialization parameter for the policy gradient ascent.

As shown in \cite{finn2017model}, MAML also learns the initialization parameter $\phi$ for the policy gradient ascent. However, MAML ignores that the sampled trajectories with policy $\pi_\phi$ also depend on $\phi$. 
Specifically, MAML first uses the sampled trajectories to approximate $Q^{\pi_\phi}_{\tau}(s,a)$ by (Monte Carlo sampling on the REINFORCE algorithm), then computes the policy gradient and does one step of gradient ascent for the task-specific adaptation. Next, it computes $\nabla_\phi J_{\tau}(\theta^{\prime}_{\tau})$ to update the meta-parameter $\phi$. When it computes $\nabla_\phi \theta^{\prime}_{\tau}$, it treats $Q^{\pi_\phi}_{\tau}(s,a)$ as a given data point that is independent with $\phi$, and then ignore the $\nabla_\phi Q^{\pi_\phi}_{\tau}(s,a)$. In contrast, our reduced meta-algorithm takes it into account and provides a precise formulation to learn the meta-initialization for the policy gradient algorithm.

Since the proposed meta-RL framework can include MAML as a special case, our analysis in Section \ref{Theoretical} also provides the theoretical motivation for MAML. 

\noindent {\Large \textbf{Analysis and Proof}}

\section{ Auxiliary Results} 

\begin{lemma}[Policy gradient \cite{sutton2018reinforcement,agarwal2021theory}]
\label{Auxiliary_lemma1_meta_rl}
Let $\pi_\theta$ be the parameterized policy with the parameter $\theta$. It holds that
$$
\begin{aligned}
\nabla_{\theta} J_{\tau}(\pi_{\theta})=&\frac{1}{1-\gamma} {\mathbb{E}}_{s \sim \nu^{\pi_{\theta}}_{\tau},
a \sim \pi_{\theta}(\cdot|s)}\left[{\nabla_{{\theta}} \ln{\pi_{\theta}(a|s) }} Q_{\tau}^{\pi_{\theta}}(s,a) \right] \\
=&\frac{1}{1-\gamma} {\mathbb{E}}_{s \sim \nu^{\pi_{\theta}}_{\tau},
a \sim \pi_{\theta}(\cdot|s)}\left[{\nabla_{{\theta}} \ln{\pi_{\theta}(a|s) }} A_{\tau}^{\pi_{\theta}}(s,a) \right].
\end{aligned}
$$
\end{lemma}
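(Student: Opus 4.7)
The plan is to prove this by the standard policy gradient argument, starting from the definition $J_\tau(\pi_\theta) = \mathbb{E}_{s\sim\rho_\tau}[V^{\pi_\theta}_\tau(s)]$ and unrolling the recursion via the Bellman equation. First I would fix any state $s$ and differentiate $V^{\pi_\theta}_\tau(s) = \sum_{a} \pi_\theta(a|s) Q^{\pi_\theta}_\tau(s,a)$, applying the product rule to obtain
\[
\nabla_\theta V^{\pi_\theta}_\tau(s) = \sum_a \nabla_\theta \pi_\theta(a|s)\, Q^{\pi_\theta}_\tau(s,a) + \sum_a \pi_\theta(a|s) \nabla_\theta Q^{\pi_\theta}_\tau(s,a).
\]
Because $Q^{\pi_\theta}_\tau(s,a) = \mathbb{E}_{s'\sim P_\tau(\cdot|s,a)}[r_\tau(s,a,s') + \gamma V^{\pi_\theta}_\tau(s')]$, the second sum becomes $\gamma\, \mathbb{E}_{a\sim\pi_\theta, s'\sim P_\tau}[\nabla_\theta V^{\pi_\theta}_\tau(s')]$, giving a self-similar recursion in $\nabla_\theta V^{\pi_\theta}_\tau$.

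Next I would unroll this recursion indefinitely. Writing $\mathbb{P}(s_t = s'\mid s_0 = s, \pi_\theta)$ for the $t$-step transition probability under $\pi_\theta$, the recursion yields
\[
\nabla_\theta V^{\pi_\theta}_\tau(s) = \sum_{t=0}^{\infty} \gamma^t \,\mathbb{E}\!\left[\sum_a \nabla_\theta \pi_\theta(a|s_t)\, Q^{\pi_\theta}_\tau(s_t,a) \,\Big|\, s_0 = s, \pi_\theta \right].
\]
Taking expectation over $s_0 \sim \rho_\tau$ and recognizing the discounted state visitation distribution $\nu^{\pi_\theta}_\tau(s) = (1-\gamma)\mathbb{E}_{s_0\sim\rho_\tau}\sum_{t=0}^\infty \gamma^t \mathbb{P}(s_t = s\mid \pi_\theta)$, the geometric factor produces the $\tfrac{1}{1-\gamma}$ prefactor. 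Using the log-derivative trick $\nabla_\theta \pi_\theta(a|s) = \pi_\theta(a|s)\nabla_\theta \ln \pi_\theta(a|s)$ rewrites the inner sum as an expectation over $a \sim \pi_\theta(\cdot|s)$, yielding the first claimed identity.

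For the second equality, I would use the definition $A^{\pi_\theta}_\tau(s,a) = Q^{\pi_\theta}_\tau(s,a) - V^{\pi_\theta}_\tau(s)$ and observe that the difference between the two expressions is
\[
\frac{1}{1-\gamma}\mathbb{E}_{s\sim \nu^{\pi_\theta}_\tau}\!\left[V^{\pi_\theta}_\tau(s)\,\mathbb{E}_{a\sim\pi_\theta(\cdot|s)}[\nabla_\theta \ln \pi_\theta(a|s)]\right] = 0,
\]
since $\mathbb{E}_{a\sim\pi_\theta(\cdot|s)}[\nabla_\theta \ln \pi_\theta(a|s)] = \sum_a \nabla_\theta \pi_\theta(a|s) = \nabla_\theta 1 = 0$ as a baseline cancellation. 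The main technical obstacle is the exchange of differentiation with the infinite sum when unrolling the Bellman recursion; I would justify this under the standing assumption that $\mathcal{A}$ is discrete or bounded continuous with bounded rewards, so that uniform $\gamma^t$-decay enables dominated convergence. This proof is entirely standard (see, e.g., \cite{sutton2018reinforcement,agarwal2021theory}).
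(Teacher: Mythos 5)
Your proposal is correct and is exactly the standard derivation that the paper defers to by citation (the Bellman-recursion unrolling in \cite{sutton2018reinforcement,agarwal2021theory}): the product-rule recursion, the identification of the $\tfrac{1}{1-\gamma}$ factor from the discounted visitation distribution $\nu^{\pi_\theta}_\tau$, the log-derivative trick, and the baseline cancellation $\mathbb{E}_{a\sim\pi_\theta(\cdot|s)}[\nabla_\theta\ln\pi_\theta(a|s)]=0$ for the $Q$-to-$A$ step are all as in the cited sources. The paper provides no proof of its own for this lemma, so there is nothing further to compare; your dominated-convergence remark adequately handles the only technical point (interchanging $\nabla_\theta$ with the infinite sum).
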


\begin{lemma}[Policy gradient of the softmax policy]
\label{Auxiliary_lemma2_meta_rl} 

Consider the softmax policy $\hat{\pi}_\theta$ parameterized by $\theta$. For a discrete state-action space and the tabular policy, $\hat{\pi}_\theta(a | s)=\frac{\exp ( \theta(s, a))}{\sum_{a^{\prime} \in \mathcal{A}} \exp \left(\theta\left(s, a^{\prime}\right)\right)}$, $\forall(s, a) \in \mathcal{S} \times \mathcal{A}$. 
It holds that
\begin{equation}
\label{1353153453345_meta_rl}
\begin{aligned}
\nabla_{\theta(s,\cdot)} J_{\tau}(\hat{\pi}_{\theta})=\frac{1}{1-\gamma}  \nu^{\hat{\pi}_{\theta}}_{\tau}(s) \cdot \hat{\pi}_{\theta}(\cdot|s) \odot A_{\tau}^{\hat{\pi}_{\theta}}(s,\cdot) ,
\end{aligned}
\end{equation}
where $\odot$ is the element-wise product, $\theta(s,\cdot)$ is the vector which includes $\theta(s,a)$ for all $a \in \mathcal{A}$ as the elements, $A_{\tau}^{\hat{\pi}_{\theta}}(s,\cdot)$ is the vector which includes $A_{\tau}^{\hat{\pi}_{\theta}}(s,a)$ for all $a \in \mathcal{A}$ as the elements. Equivalently, 
\begin{equation}
\label{135315345334234665_meta_rl}
\begin{aligned}
\nabla_{\theta(s,a)} J_{\tau}(\hat{\pi}_{\theta})=\frac{1}{1-\gamma}  \nu^{\hat{\pi}_{\theta}}_{\tau}(s) \hat{\pi}_{\theta}(a|s) A_{\tau}^{\hat{\pi}_{\theta}}(s,a) ,
\end{aligned}
\end{equation}

For the softmax policy with function approximation, the policy $\pi_\theta$ is defined by $\pi_\theta(a|s) = \frac{\exp( f_{\theta}(s, a))}{\int_{\mathcal{A}}\exp( f_{\theta}(s, a^{\prime}))da^{\prime}}$, $\forall(s, a) \in \mathcal{S} \times \mathcal{A}$.
It holds that
\begin{equation}
\label{1353asd5_meta_rl}
\begin{aligned}
\nabla_{\theta} J_{\tau}(\hat{\pi}_{\theta})=&\frac{1}{1-\gamma} {\mathbb{E}}_{s \sim \nu^{\hat{\pi}_{\theta}}_{\tau},
a \sim \hat{\pi}_{\theta}(\cdot|s)}\left[{  \nabla_\theta f_\theta(s,a) } A_{\tau}^{\hat{\pi}_{\theta}}(s,a) \right].
\end{aligned}
\end{equation}
\end{lemma}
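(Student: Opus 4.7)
The plan is to obtain both identities by specializing the general policy-gradient identity of Lemma \ref{Auxiliary_lemma1_meta_rl} (in its advantage-function form) to the softmax parameterization, and then simplifying the score function. The only substantive calculation is $\nabla_\theta \ln \hat{\pi}_\theta(a\mid s)$; the rest is a baseline-cancellation argument using $\mathbb{E}_{a\sim \hat{\pi}_\theta(\cdot\mid s)}[A_\tau^{\hat{\pi}_\theta}(s,a)]=0$, which follows directly from the definitions $A^\pi = Q^\pi - V^\pi$ and $V^\pi(s)=\mathbb{E}_{a\sim\pi(\cdot\mid s)}[Q^\pi(s,a)]$.

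For the tabular case, I would first write $\ln \hat{\pi}_\theta(a\mid s) = \theta(s,a) - \ln\sum_{a'}\exp(\theta(s,a'))$, so that differentiating with respect to the coordinate $\theta(s',a')$ gives $\nabla_{\theta(s',a')}\ln\hat{\pi}_\theta(a\mid s) = \mathbf{1}[s=s']\bigl(\mathbf{1}[a=a'] - \hat{\pi}_\theta(a'\mid s)\bigr)$. Substituting this into Lemma \ref{Auxiliary_lemma1_meta_rl} and taking the expectation over $a\sim\hat{\pi}_\theta(\cdot\mid s')$, the indicator $\mathbf{1}[s=s']$ collapses the outer expectation to the single state $s'$ weighted by $\nu_\tau^{\hat{\pi}_\theta}(s')$, the term containing $\mathbf{1}[a=a']$ selects $\hat{\pi}_\theta(a'\mid s')A_\tau^{\hat{\pi}_\theta}(s',a')$, and the $-\hat{\pi}_\theta(a'\mid s')$ baseline term vanishes because it factors out of the expectation in $a$ and meets $\mathbb{E}_{a\sim\hat{\pi}_\theta}[A_\tau^{\hat{\pi}_\theta}(s',a)]=0$. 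This yields \eqref{135315345334234665_meta_rl}, and stacking the coordinates over $a'\in\mathcal{A}$ produces the Hadamard-product form \eqref{1353153453345_meta_rl}.

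For the function-approximation case, the analogous computation gives $\nabla_\theta \ln\hat{\pi}_\theta(a\mid s) = \nabla_\theta f_\theta(s,a) - \mathbb{E}_{a'\sim\hat{\pi}_\theta(\cdot\mid s)}[\nabla_\theta f_\theta(s,a')]$, where the second term comes from differentiating the log-partition using the dominated-convergence/Leibniz rule (valid by the smoothness in Assumption \ref{wdfdwdf_meta_rl}, though for this particular lemma only $C^1$-smoothness of $f_\theta$ is needed). Plugging into Lemma \ref{Auxiliary_lemma1_meta_rl}, the baseline term again cancels because, conditioned on $s$, $\mathbb{E}_{a'\sim\hat{\pi}_\theta(\cdot\mid s)}[\nabla_\theta f_\theta(s,a')]$ does not depend on $a$ and may be pulled outside $\mathbb{E}_{a\sim\hat{\pi}_\theta(\cdot\mid s)}[A_\tau^{\hat{\pi}_\theta}(s,a)]=0$, leaving \eqref{1353asd5_meta_rl}.

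There is no real obstacle here; the only point requiring care is the exchange of differentiation and the $a'$-integral in the function-approximation case, which I would justify once by pointing to the Lipschitz/smoothness assumptions on $f_\theta$ and the boundedness of $\mathcal{A}$ (or the dominated convergence theorem with the Lipschitz bound on $\nabla_\theta f_\theta$). Everything else is algebraic manipulation plus the advantage-baseline identity.
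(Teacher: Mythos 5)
Your proof is correct and follows essentially the same route as the paper: both specialize the policy-gradient identity of Lemma \ref{Auxiliary_lemma1_meta_rl} to the softmax score function and cancel the log-partition (baseline) term against $\mathbb{E}_{a\sim\hat{\pi}_\theta(\cdot|s)}[A_\tau^{\hat{\pi}_\theta}(s,a)]=0$. The only difference is that you derive the tabular identity \eqref{1353153453345_meta_rl} explicitly via the indicator form of the score, whereas the paper delegates that case to Lemma C.1 of \cite{agarwal2021theory}; your explicit computation is correct and consistent with that reference.
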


\begin{proof}
For the discrete state-action space and the tabular policy, (\ref{1353153453345_meta_rl}) is shown in Lemma C.1 of \cite{agarwal2021theory}. For the softmax policy with function approximation, from Lemma \ref{Auxiliary_lemma1_meta_rl}, we have 
$$
\begin{aligned}
&\nabla_{\theta} J_{\tau}(\hat{\pi}_{\theta})=\frac{1}{1-\gamma} {\mathbb{E}}_{s \sim \nu^{\hat{\pi}_{\theta}}_{\tau},
a \sim \hat{\pi}_{\theta}(\cdot|s)}\left[{\nabla_{{\theta}} \ln{\hat{\pi}_{\theta}(a|s) }} A_{\tau}^{\hat{\pi}_{\theta}}(s,a) \right] \\
&=\frac{1}{1-\gamma} {\mathbb{E}}_{s \sim \nu^{\hat{\pi}_{\theta}}_{\tau},
a \sim \hat{\pi}_{\theta}(\cdot|s)}\left[{\nabla_{\theta}\ln \left(\frac{\exp( f_{\theta}(s, a))}{\int_{\mathcal{A}}\exp( f_{\theta}(s, a^{\prime}))da^{\prime}}\right) }  A_{\tau}^{\hat{\pi}_{\theta}}(s,a) \right] \\
&=\frac{1}{1-\gamma} {\mathbb{E}}_{s \sim \nu^{\hat{\pi}_{\theta}}_{\tau},
a \sim \hat{\pi}_{\theta}(\cdot|s)}\left[{ \nabla_\theta f_\theta(s,a)-\nabla_{\theta} \ln{\left(\int_{\mathcal{A}}\exp( f_{\theta}(s, a^{\prime}))da^{\prime}\right)} }  A_{\tau}^{\hat{\pi}_{\theta}}(s,a) \right] 
\end{aligned}
$$
Here, $\nabla_{\theta} \ln{\left(\int_{\mathcal{A}}\exp( f_{\theta}(s, a^{\prime}))da^{\prime}\right)}$ is independent with $a$, then $\nabla_{\theta} J_{\tau}(\hat{\pi}_{\theta})$
$$
\begin{aligned}
=&\frac{1}{1-\gamma} {\mathbb{E}}_{s \sim \nu^{\hat{\pi}_{\theta}}_{\tau},
a \sim \hat{\pi}_{\theta}(\cdot|s)}\left[{ \nabla_\theta f_\theta(s,a)-\nabla_{\theta} \ln{\left(\int_{\mathcal{A}}\exp( f_{\theta}(s, a^{\prime}))da^{\prime}\right)} }  A_{\tau}^{\hat{\pi}_{\theta}}(s,a) \right] \\
=& \frac{1}{1-\gamma} {\mathbb{E}}_{s \sim \nu^{\hat{\pi}_{\theta}}_{\tau},
a \sim \hat{\pi}_{\theta}(\cdot|s)}\left[{ \nabla_\theta f_\theta(s,a)}  A_{\tau}^{\hat{\pi}_{\theta}}(s,a) \right]- \\
& \frac{1}{1-\gamma} {\mathbb{E}}_{s \sim \nu^{\hat{\pi}_{\theta}}_{\tau}} \left[\nabla_{\theta} \ln{\left(\int_{\mathcal{A}}\exp( f_{\theta}(s, a^{\prime}))da^{\prime}\right)}  \mathbb{E}_{a \sim \hat{\pi}_{\theta}(\cdot|s)} A_{\tau}^{\hat{\pi}_{\theta}}(s,a) \right].
\end{aligned}
$$
Since $ \mathbb{E}_{a \sim \hat{\pi}_{\theta}(\cdot|s)} A_{\tau}^{\hat{\pi}_{\theta}}(s,a)=\mathbb{E}_{a \sim \hat{\pi}_{\theta}(\cdot|s)} [Q_{\tau}^{\hat{\pi}_{\theta}}(s,a)]-V_{\tau}^{\hat{\pi}_{\theta}}(s)=0$. Then, 
$$
\begin{aligned}
\nabla_{\theta} J_{\tau}(\hat{\pi}_{\theta})=&\frac{1}{1-\gamma} {\mathbb{E}}_{s \sim \nu^{\hat{\pi}_{\theta}}_{\tau},
a \sim \hat{\pi}_{\theta}(\cdot|s)}\left[{  \nabla_\theta f_\theta(s,a) } A_{\tau}^{\hat{\pi}_{\theta}}(s,a) \right].
\end{aligned}
$$

\end{proof}

\section{Proofs of the computation of hypergradient}

\subsection{Proofs of Propositions \ref{propsition1_meta_rl}}
\label{Proofs_of_Propositions_meta_rl}
\begin{proof}[Proofs of Propositions \ref{propsition1_meta_rl}]
Consider the within-task algorithm in discrete space:
$$
\begin{aligned}
\mathcal{A} l g(\pi_\phi, \lambda, \tau)
&=\underset{\pi}{\operatorname{argmax}} \ \mathbb{E}_{s \sim \nu^{\pi_\phi}_{\tau}} \left[ \sum_{a \in \mathcal{A}} \pi(a|s) Q^{\pi_\phi}_{\tau}(s,a) \right]-\lambda D_\tau^2(\pi_\phi,\pi)\\
&=\underset{\pi}{\operatorname{argmax}} \ \mathbb{E}_{s \sim \nu^{\pi_\phi}_{\tau}} \left[ \sum_{a \in \mathcal{A}} \pi(a|s) Q^{\pi_\phi}_{\tau}(s,a) -\lambda d^2(\pi_\phi(\cdot|s),\pi(\cdot|s))\right]. \\
\end{aligned}
$$
Here, $d$ can be selected from $d_1$ to $d_3$ defined in Section \ref{task_variance}, corresponding to the selection of $D_{\tau}$ from $D_{\tau,1}$ to $D_{\tau,3}$. 

The above optimization problem is formally defined by the following problem,
\begin{equation}
\label{bilevel_meta_reinforcement}
\begin{aligned}
\mathcal{A} l g(\pi_\phi, \lambda, \tau)
&=\underset{\pi}{\operatorname{argmax}} \ \mathbb{E}_{s \sim \nu^{\pi_\phi}_{\tau}} \left[ \sum_{a \in \mathcal{A}} \pi(a|s) Q^{\pi_\phi}_{\tau}(s,a) -\lambda d^2(\pi_\phi(\cdot|s),\pi(\cdot|s),s)\right], \\
&\text { subject to } \sum_{a \in \mathcal{A}} \pi(a | s)=1, \text { for any } s \in \mathcal{S}.
\end{aligned}
\end{equation}
With Assumption \ref{awhevdbsfvb_meta_rl}, the problem is equivalent to that, for any $s \in \mathcal{S}$, 

\begin{equation}
\label{new_equal_meta_rl}
\begin{aligned}
\mathcal{A} l g(\pi_\phi, \lambda, \tau)(\cdot|s)
&=\underset{\pi( \cdot | s)}{\operatorname{argmax}} \sum_{a \in \mathcal{A}} \pi(a|s) Q^{\pi_\phi}_{\tau}(s,a) -\lambda d^2(\pi_\phi(\cdot|s),\pi(\cdot|s)), \\
&\text { subject to } \sum_{a \in \mathcal{A}} \pi(a | s)=1.
\end{aligned}
\end{equation}

Consider a $s \in \mathcal{S}$, the Lagrangian of the above maximization problem is 
$$
-\sum_{a \in \mathcal{A}} \pi(a|s) Q^{\pi_\phi}_{\tau}(s,a) +\lambda d^2(\pi_\phi(\cdot|s),\pi(\cdot|s)) +\mu(\sum_{a \in \mathcal{A}} \pi(a | s)-1) ,
$$
where $\mu$ is the Lagrangian multiplier. The optimality condition of $\pi(\cdot | s)$ is that,  
$$ -Q^{\pi_\phi}_{\tau}(s,\cdot)+ \lambda \nabla_{\pi(\cdot|s)} d^2(\pi_\phi(\cdot|s),\pi(\cdot|s)) +\mu[1,\cdots,1]^{\top}=0.$$
Here, $Q^{\pi_\phi}_{\tau}(s,\cdot)$ denotes a vector include $Q^{\pi_\phi}_{\tau}(s,a)$ for each $a \in \mathcal{A}$, and $\pi(\cdot|s)$ denotes a vector include $\pi(a|s)$ for each $a \in \mathcal{A}$.

Then, we have 
\begin{equation}
\label{22211_meta_reinforcement}  
-Q^{\pi_\phi}_{\tau}(s,\cdot)+ \lambda \nabla_{\pi(\cdot|s)} d^2(\pi_\phi(\cdot|s),\pi(\cdot|s))|_{\pi=\mathcal{A} l g(\pi_\phi, \lambda, \tau)} +\mu[1,\cdots,1]^{\top}=0.
\end{equation}
Note that the optimization problem (\ref{bilevel_meta_reinforcement}) depends on $\phi$, and $\pi=\mathcal{A} l g(\pi_\phi, \lambda, \tau)$ is a function of $\phi$, we have 
$$ -Q^{\pi_\phi}_{\tau}(s,\cdot)+ \lambda \nabla_{\pi(\cdot|s)} d^2(\pi_\phi(\cdot|s),\pi(\cdot|s))|_{\pi=\mathcal{A} l g(\pi_\phi, \lambda, \tau)} +\mu(\phi) [1,\cdots,1]^{\top}=0,$$
i.e., $\mu$ is a function of $\phi$.

Also, we have 
\begin{equation}
\label{2223232323_meta_reinforcement}    
\mu(\phi)(\sum_{a \in \mathcal{A}} {\mathcal{A} l g(\pi_\phi, \lambda, \tau)(a|s)}-1)=0.
\end{equation}
With (\ref{22211_meta_reinforcement}) and (\ref{2223232323_meta_reinforcement}), we can compute $\nabla_\phi \mathcal{A} l g(\pi_\phi, \lambda, \tau)$, where  $\mathcal{A} l g(\pi_\phi, \lambda, \tau)$ is continuously differentiable as shown in \cite{SX-MZ:AAAI23}. We do derivative of (\ref{22211_meta_reinforcement}) and (\ref{2223232323_meta_reinforcement}) with respect to $\phi$, we have 
$[\nabla_\phi \mathcal{A} l g(\pi_\phi, \lambda, \tau),
\nabla_\phi \mu(\phi)]^{\top}=$

$$
- \left[\begin{array}{ccccccc}
\lambda \nabla_{\pi(\cdot | s)}^{2} d^2(\pi_\phi(\cdot|s),\pi(\cdot|s))  & \boldsymbol{1} \\
\boldsymbol{1}^{\top}  & 0 
\end{array}\right]^{-1}
\left[\begin{array}{ccccccc}
-\nabla^{\top}_{\phi} Q^{\pi_\phi}_{\tau}(s,\cdot)+ \lambda \nabla^{\top}_{\phi}\nabla_{\pi(\cdot|s)} d^2(\pi_\phi(\cdot|s),\pi(\cdot|s))  \\
0 
\end{array}\right] 
$$ 
where
$\pi=\mathcal{A} l g(\pi_\phi, \lambda, \tau)$.

Solve the equation, we have 
\begin{equation}
\begin{aligned}
\label{bilevel_gradient_meta_rl}
\nabla^{\top}_\phi \mathcal{A} l g(\pi_\phi, \lambda, \tau)(\cdot|s) =\left(M(s)^{-1}-\frac{M(s)^{-1} \boldsymbol{1} \ \boldsymbol{1}^{\top} M(s)^{-1}}{\boldsymbol{1}^{\top} M(s)^{-1} \boldsymbol{1}}\right) \\
\left(\nabla^{\top}_{\phi} Q^{\pi_\phi}_{\tau}(s,\cdot)- \lambda \nabla^{\top}_{\phi}\nabla_{\pi(\cdot|s)} d^2(\pi_\phi(\cdot|s),\pi(\cdot|s))\right),
\end{aligned}
\end{equation}
where $M(s)=\lambda\nabla_{\pi(\cdot | s)}^{2} d^2(\pi_\phi(\cdot|s),\pi(\cdot|s))$. It is easy to show that $\nabla_{\pi(\cdot | s)}^{2} d^2(\pi_\phi(\cdot|s),\pi(\cdot|s))$ is non-singular for any $\phi$ for any selected $d=d_1$, $d=d_2$, or $d=d_3$.

From the policy gradient theorem in Lemma \ref{Auxiliary_lemma1_meta_rl},
$$
\begin{aligned}
\nabla_{\phi} J_{\tau}(\pi_{\theta^{\prime}_{\tau}})=&\frac{1}{1-\gamma} {\mathbb{E}}_{s \sim \nu^{\pi_{\theta^{\prime}_{\tau}}}_{\tau},
a \sim \pi_{\theta^{\prime}_{\tau}}(\cdot|s)}[\nabla_\phi \ln{\pi_{\theta^{\prime}_{\tau}}(a|s)}  A_{\tau}^{\pi_{\theta^{\prime}_{\tau}}}(s,a) ]|_{\pi_{\theta^{\prime}_{\tau}}=\mathcal{A} l g(\pi_\phi, \lambda, \tau)} \\
=&\frac{1}{1-\gamma} {\mathbb{E}}_{s \sim \nu^{\pi_{\theta^{\prime}_{\tau}}}_{\tau},
a \sim \pi_{\theta^{\prime}_{\tau}}(\cdot|s)}\left[ \frac{\nabla_\phi \pi_{\theta^{\prime}_{\tau}}(a|s) }{\pi_{\theta^{\prime}_{\tau}}(a|s)}  A_{\tau}^{\pi_{\theta^{\prime}_{\tau}}}(s,a) \right]|_{\pi_{\theta^{\prime}_{\tau}}=\mathcal{A} l g(\pi_\phi, \lambda, \tau)}, \\
=&\frac{1}{1-\gamma} {\mathbb{E}}_{s \sim \nu^{\pi_{\theta^{\prime}_{\tau}}}_{\tau}} \left[ \sum_{a \in \mathcal{A} } {\nabla_\phi \pi_{\theta^{\prime}_{\tau}}(a|s) } A_{\tau}^{\pi_{\theta^{\prime}_{\tau}}}(s,a) \right]|_{\pi_{\theta^{\prime}_{\tau}}=\mathcal{A} l g(\pi_\phi, \lambda, \tau)}.
\end{aligned}
$$
where $\nabla_\phi \pi_{\theta^{\prime}_{\tau}}(\cdot|s)=\nabla_\phi \mathcal{A} l g(\pi_\phi, \lambda, \tau)(\cdot|s)$ is shown in (\ref{bilevel_gradient_meta_rl}).

\end{proof}

\subsection{Proofs of Propositions \ref{propsition2_meta_rl}}
\label{Proofs_of_Propositions2_meta_rl}
\begin{proof}[Proofs of Propositions \ref{propsition2_meta_rl}]
First, we have
$$\nabla_{\phi} J_{\tau}(\pi_{\theta^{\prime}_{\tau}})={\nabla_{\phi} \theta^{\prime}_{\tau}} \nabla_{{\theta^{\prime}_{\tau}}} J_{\tau}(\pi_{\theta^{\prime}_{\tau}}) $$

From the policy gradient theorem in Lemma \ref{Auxiliary_lemma1_meta_rl},
$$
\begin{aligned}
\nabla_{\phi} J_{\tau}(\pi_{\theta^{\prime}_{\tau}})=\frac{1}{1-\gamma} {\nabla_{\phi} \theta^{\prime}_{\tau}}^{\top} {\mathbb{E}}_{s \sim \nu^{\pi_{\theta^{\prime}_{\tau}}}_{\tau},
a \sim \pi_{\theta^{\prime}_{\tau}}(\cdot|s)}\left[{\nabla_{{\theta^{\prime}_{\tau}}} \ln{\pi_{\theta^{\prime}_{\tau}}(a|s) }} A_{\tau}^{\pi_{\theta^{\prime}_{\tau}}}(s,a) \right]|_{{\theta^{\prime}_{\tau}}=\mathcal{A} l g(\pi_\phi, \lambda, \tau)}.
\end{aligned}
$$
%By Lemma \ref{Auxiliary_lemma2_meta_rl}, from (\ref{1353asd5_meta_rl}), we have
%$$
%\begin{aligned}
%\nabla_{\phi} J_{\tau}(\pi_{\theta^{\prime}_{\tau}})=\frac{1}{1-\gamma} {\nabla_{\phi} \theta^{\prime}_{\tau}}^{\top} \cdot {\mathbb{E}}_{s \sim \nu^{\pi_{\theta^{\prime}_{\tau}}}_{\tau},
%a \sim \pi_{\theta^{\prime}_{\tau}}(\cdot|s)}\left[  \nabla f_{\theta^{\prime}_{\tau}}(a|s)  A_{\tau}^{\pi_{\theta^{\prime}_{\tau}}}(s,a) \right]|_{{\theta^{\prime}_{\tau}}=\mathcal{A} l g(\pi_\phi, \lambda, \tau)}.
%\end{aligned}
%$$
We have 
$$
\begin{aligned}
\nabla_{\phi} J_{\tau}(\pi_{\theta^{\prime}_{\tau}})=\frac{1}{1-\gamma} {\nabla_{\phi} \theta^{\prime}_{\tau}}^{\top} {\mathbb{E}}_{s \sim \nu^{\pi_{\theta^{\prime}_{\tau}}}_{\tau},
a \sim \pi_{\theta^{\prime}_{\tau}}(\cdot|s)}\left[\frac{{\nabla_{{\theta^{\prime}_{\tau}}} {\pi_{\theta^{\prime}_{\tau}}(a|s) }}}{\pi_{\theta^{\prime}_{\tau}}(a|s)}A_{\tau}^{\pi_{\theta^{\prime}_{\tau}}}(s,a) \right]|_{{\theta^{\prime}_{\tau}}=\mathcal{A} l g(\pi_\phi, \lambda, \tau)}.
\end{aligned}
$$

Next, we compute ${\nabla_{\phi} \theta^{\prime}_{\tau}}$, where
$$
{\theta^{\prime}_{\tau}}=\mathcal{A} l g(\pi_\phi, \lambda, \tau)\triangleq\underset{\theta}{\operatorname{argmax}} \ {\mathbb{E}}_{s \sim \nu^{\pi_\phi}_{\tau},
a \sim \pi_{\phi}(\cdot|s)}\left[\frac{\pi_\theta(a|s)}{\pi_\phi(a|s)} Q^{\pi_\phi}_{\tau}(s,a) \right]-\lambda D_\tau^2(\pi_\phi,\pi_\theta).
$$
The optimization problem is equivalent to 
$$
\begin{aligned}
{\theta^{\prime}_{\tau}}&\triangleq\underset{\theta}{\operatorname{argmax}} \ {\mathbb{E}}_{s \sim \nu^{\pi_\phi}_{\tau}}\left[ \int_{\mathcal{A}} \pi_\theta(a|s) Q^{\pi_\phi}_{\tau}(s,a) da -\lambda d^2(\pi_\phi(\cdot|s),\pi_\theta(\cdot|s)) \right] \\
&=\underset{\theta}{\operatorname{argmin}} \ {\mathbb{E}}_{s \sim \nu^{\pi_\phi}_{\tau}}\left[ -\int_{\mathcal{A}} \pi_\theta(a|s) Q^{\pi_\phi}_{\tau}(s,a) da +\lambda d^2(\pi_\phi(\cdot|s),\pi_\theta(\cdot|s)) \right] \\
&=\underset{\theta}{\operatorname{argmin}} \sum_{s \in \mathcal{S}} \nu^{\pi_\phi}_{\tau}(s) \left(-\int_{\mathcal{A}} \pi_\theta(a|s) Q^{\pi_\phi}_{\tau}(s,a) da +\lambda d^2(\pi_\phi(\cdot|s),\pi_\theta(\cdot|s)) \right) .
\end{aligned}
$$
Similar to the derivation from (\ref{bilevel_meta_reinforcement}) to (\ref{new_equal_meta_rl}) with Assumption \ref{awhevdbsfvb_meta_rl}, we have that, when ${{\theta}=\mathcal{A} l g(\pi_\phi, \lambda, \tau)}$,
$$\nabla_\theta \left(-\int_{\mathcal{A}} \pi_\theta(a|s) Q^{\pi_\phi}_{\tau}(s,a) da +\lambda d^2(\pi_\phi(\cdot|s),\pi_\theta(\cdot|s))\right)=0.$$
Then, we have 
$$\sum_{s \in \mathcal{S}} \nabla_\phi \nu^{\pi_\phi}_{\tau}(s) \nabla_\theta \left(-\int_{\mathcal{A}} \pi_\theta(a|s) Q^{\pi_\phi}_{\tau}(s,a) da +\lambda d^2(\pi_\phi(\cdot|s),\pi_\theta(\cdot|s))\right) =0.$$
By using implicit differentiation, if the matrix ${\mathbb{E}}_{s \sim \nu^{\pi_\phi}_{\tau}}\left[{-\int_{\mathcal{A}}\nabla^2_\theta \pi_\theta(a|s)} Q^{\pi_\phi}_{\tau}(s,a) da+\lambda \nabla^2_\theta d^2(\pi_\phi(\cdot|s),\pi_\theta(\cdot|s)) \right]$ is invertible, i.e., ${\mathbb{E}}_{s \sim \nu^{\pi_\phi}_{\tau}}\left[{-\int_{\mathcal{A}} \pi_\theta(a|s)} Q^{\pi_\phi}_{\tau}(s,a) da+\lambda  d^2(\pi_\phi(\cdot|s),\pi_\theta(\cdot|s)) \right]$ is strongly convex at $\theta=\theta^\prime_\tau$, we have
$$
\begin{aligned}
&{\nabla^{\top}_{\phi} \theta^{\prime}_{\tau}}=-\left({\mathbb{E}}_{s \sim \nu^{\pi_\phi}_{\tau}}\left[{-\int_{\mathcal{A}}\nabla^2_\theta \pi_\theta(a|s)} Q^{\pi_\phi}_{\tau}(s,a) da+\lambda \nabla^2_\theta d^2(\pi_\phi(\cdot|s),\pi_\theta(\cdot|s)) \right]\right)^{-1} \\
&\left(-{\mathbb{E}}_{s \sim \nu^{\pi_\phi}_{\tau}}\left[\int_{\mathcal{A}}{\nabla_\theta \pi_\theta(a|s)} \nabla^{\top}_\phi  Q^{\pi_\phi}_{\tau}(s,a) da \right] +{\mathbb{E}}_{s \sim \nu^{\pi_\phi}_{\tau}}\left[ \lambda  \nabla^{\top}_\phi \nabla_\theta 
d^2(\pi_\phi(\cdot|s),\pi_\theta(\cdot|s)) \right]\right.+ \\ 
& \quad \left.\sum_{s \in \mathcal{S}} \nabla_\phi \nu^{\pi_\phi}_{\tau}(s) \nabla_\theta \left(-\int_{\mathcal{A}} \pi_\theta(a|s) Q^{\pi_\phi}_{\tau}(s,a) da +\lambda d^2(\pi_\phi(\cdot|s),\pi_\theta(\cdot|s))\right)  \right) |_{{\theta}={\theta}_\tau^{\prime}},
\end{aligned}
$$

This is equivalent to
$$
\begin{aligned}
&{\nabla^{\top}_{\phi} \theta^{\prime}_{\tau}}=-\left({\mathbb{E}}_{s \sim \nu^{\pi_\phi}_{\tau},
a \sim \pi_{\phi}(\cdot|s)}\left[-\frac{\nabla^2_\theta \pi_\theta(a|s)}{\pi_\phi(a|s)} Q^{\pi_\phi}_{\tau}(s,a) +  \lambda \nabla^2_\theta d^2(\pi_\phi(\cdot|s),\pi_\theta(\cdot|s)) \right]\right)^{-1} \\
&{\mathbb{E}}_{s \sim \nu^{\pi_\phi}_{\tau},
a \sim \pi_{\phi}(\cdot|s)}\left[-\frac{\nabla_\theta \pi_\theta(a|s)} {\pi_\phi(a|s)} \nabla^{\top}_\phi  Q^{\pi_\phi}_{\tau}(s,a) + \lambda  \nabla^{\top}_\phi \nabla_\theta 
d^2(\pi_\phi(\cdot|s),\pi_\theta(\cdot|s)) \right]|_{{\theta}={\theta}_\tau^{\prime}}.
\end{aligned}
$$

\end{proof}

\subsection{Proofs of hypergradient of the algorithm in Section \ref{awgfbwdkjfbsdfm_meta_rl}}

\label{aerfdsdfsg_meta_rl} 

\begin{proof}[Deviation of (\ref{seasgsdgsd_meta_rl})]
As ${\theta^{\prime}_{\tau}}=\underset{\theta}{\operatorname{argmin}} -\frac{1}{\lambda} \theta^{\top}{\mathbb{E}}_{s \sim \nu^{\pi_\phi}_{\tau},
a \sim \pi_{\phi}(\cdot|s)}[\frac{\nabla_\phi \pi_\phi(a|s)}{\pi_\phi(a|s)} A^{\pi_\phi}_{\tau}(s,a) ]  + D_{\tau}^2(\pi_\phi,\pi_\theta)$, by the implicit differentiation theorem in bilevel optimization analysis, 
$$
\begin{aligned}
\nabla_{\phi}^{\top} \theta^{\prime}_{\tau}=-\nabla_\theta^2\left[
-\frac{1}{\lambda} \theta^{\top}{\mathbb{E}}_{s \sim \nu^{\pi_\phi}_{\tau},
a \sim \pi_{\phi}(\cdot|s)}[\frac{\nabla_\phi \pi_\phi(a|s)}{\pi_\phi(a|s)} Q^{\pi_\phi}_{\tau}(s,a) ]  + D_{\tau}^2(\pi_\phi,\pi_\theta)\right]^{-1} \\
\nabla_\phi\nabla_\theta\left[
-\frac{1}{\lambda} \theta^{\top}{\mathbb{E}}_{s \sim \nu^{\pi_\phi}_{\tau},
a \sim \pi_{\phi}(\cdot|s)}[\frac{\nabla_\phi \pi_\phi(a|s)}{\pi_\phi(a|s)} Q^{\pi_\phi}_{\tau}(s,a) ]  + D_{\tau}^2(\pi_\phi,\pi_\theta)\right]|_{\theta=\theta^{\prime}_{\tau}}
\end{aligned}
$$

Also, we have  
$$ \nabla_{\theta}^2 (\frac{1}{\lambda} \theta^{\top}{\mathbb{E}}_{s \sim \nu^{\pi_\phi}_{\tau},
a \sim \pi_{\phi}(\cdot|s)}[\frac{\nabla_\phi \pi_\phi(a|s)}{\pi_\phi(a|s)} Q^{\pi_\phi}_{\tau}(s,a) ])=0,$$
and
$$
\begin{aligned}
&\nabla_{\phi}\nabla_{\theta} (\frac{1}{\lambda} \theta^{\top}{\mathbb{E}}_{s \sim \nu^{\pi_\phi}_{\tau},
a \sim \pi_{\phi}(\cdot|s)}[\frac{\nabla_\phi \pi_\phi(a|s)}{\pi_\phi(a|s)} Q^{\pi_\phi}_{\tau}(s,a) ])\\=&\underset{\substack{s \sim \nu^{\pi_\phi}_{\tau} 
a \sim \pi_{\phi}(\cdot|s)}}{\mathbb{E}}[\frac{1}{\lambda}\frac{\nabla_\phi \pi_\phi(a|s)} {\pi_\phi(a|s)} \nabla^{\top}_\phi  Q^{\pi_\phi}_{\tau}(s,a)+ \frac{1}{\lambda}\frac{\nabla_\phi^2 \pi_\phi(a|s)} {\pi_\phi(a|s)}   Q^{\pi_\phi}_{\tau}(s,a)].
\end{aligned}$$
Then, we can get $\nabla_{\phi}^{\top} \theta^{\prime}_{\tau}$.
\end{proof}

\section{Proofs of convergence when $D_{\tau}=D_{\tau,1}$}

\subsection{Gradients of $\nabla_{\phi} J_{\tau}(\pi_{\theta^{\prime}_{\tau}})$ when $D_{\tau}=D_{\tau,1}$}
\label{gradient_d1}

From Proposition \ref{propsition1_meta_rl},
\begin{equation}
\begin{aligned}
\label{18181818_metarl}
\nabla_{\phi} J_{\tau}(\pi_{\theta^{\prime}_{\tau}})&=\frac{1}{1-\gamma} {\mathbb{E}}_{s \sim \nu^{\pi_{\theta^{\prime}_{\tau}}}_{\tau}} \left[ \sum_{a \in \mathcal{A} } {\nabla_\phi \pi_{\theta^{\prime}_{\tau}}(a|s) } A_{\tau}^{\pi_{\theta^{\prime}_{\tau}}}(s,a) \right]\\
&= \frac{1}{1-\gamma} {\mathbb{E}}_{s \sim \nu^{\pi_{\theta^{\prime}_{\tau}}}_{\tau}} \left[  {\nabla_\phi \pi_{\theta^{\prime}_{\tau}}(\cdot|s) } \cdot  A_{\tau}^{\pi_{\theta^{\prime}_{\tau}}}(s,\cdot) \right],
\end{aligned}
\end{equation}
where
$$
\nabla^{\top}_\phi \pi_{\theta^{\prime}_{\tau}}(\cdot|s)=
\left(M(s)^{-1}-\frac{M(s)^{-1} \boldsymbol{1} \ \boldsymbol{1}^{\top} M(s)^{-1}}{\boldsymbol{1}^{\top} M(s)^{-1} \boldsymbol{1}}\right) 
\left(\nabla^{\top}_{\phi} Q^{\pi_\phi}_{\tau}(s,\cdot)- \lambda \nabla^{\top}_{\phi}\nabla_{\pi(\cdot|s)} d_1^2(\pi_\phi,\pi,s)\right)|_{\pi=\pi_{\theta^{\prime}_{\tau}}},
$$
where 
$$M(s)=\lambda\nabla_{\pi(\cdot | s)}^{2} d_1^2(\pi_\phi,\pi,s)=\lambda 
  \begin{bmatrix}
    \frac{\pi_\phi(a_1|s)}{\pi_{\theta^{\prime}_{\tau}} (a_1|s)^2} & & \\
    & \ddots & \\
    & & \frac{\pi_\phi(a_{n}|s)}{\pi_{\theta^{\prime}_{\tau}} (a_n|s)^2}
  \end{bmatrix}.
$$
Then,
\begin{equation}
\label{wrgiwuscngfb_meta_rl}
M(s)^{-1}=\frac{1}{\lambda }
\begin{bmatrix}
\frac{\pi_{\theta^{\prime}_{\tau}} (a_1|s)^2}{\pi_\phi(a_1|s)} & & \\
& \ddots & \\
& & \frac{\pi_{\theta^{\prime}_{\tau}} (a_n|s)^2}{\pi_\phi(a_{n}|s)}
\end{bmatrix},
\end{equation}
and 
$$\frac{M(s)^{-1} \boldsymbol{1} \ \boldsymbol{1}^{\top} M(s)^{-1}}{\boldsymbol{1}^{\top} M(s)^{-1} \boldsymbol{1}}=
\frac{1}{\lambda \sum_{a \in \mathcal{A}} \frac{\pi_{\theta^{\prime}_{\tau}} (a|s)^2}{\pi_\phi(a|s)}}
\begin{bmatrix}
\frac{\pi_{\theta^{\prime}_{\tau}} (a_1|s)^2}{\pi_\phi(a_1|s)} \\
\vdots \\
\frac{\pi_{\theta^{\prime}_{\tau}} (a_n|s)^2}{\pi_\phi(a_{n}|s)}
\end{bmatrix}
\begin{bmatrix}
\frac{\pi_{\theta^{\prime}_{\tau}} (a_1|s)^2}{\pi_\phi(a_1|s)} &
\cdots &
\frac{\pi_{\theta^{\prime}_{\tau}} (a_n|s)^2}{\pi_\phi(a_{n}|s)}
\end{bmatrix}.
$$
Also,
\begin{equation}
\label{whbhvxhgdvf_meta_rl}
\nabla^{\top}_{\phi}\nabla_{\pi(\cdot|s)} d_1^2(\pi_\phi,\pi,s)|_{\pi=\pi_{\theta^{\prime}_{\tau}}}=
\nabla^{\top}_{\phi}
\begin{bmatrix}
-\frac{\pi_\phi(a_1|s)} {\pi_{\theta^{\prime}_{\tau}} (a_1|s)}\\
\vdots \\
-\frac{\pi_\phi(a_{n}|s)}{\pi_{\theta^{\prime}_{\tau}} (a_n|s)}
\end{bmatrix}
=
\begin{bmatrix}
-\frac{\nabla^{\top}_{\phi}\pi_\phi(a_1|s)} {\pi_{\theta^{\prime}_{\tau}} (a_1|s)}\\
\vdots \\
-\frac{\nabla^{\top}_{\phi}\pi_\phi(a_{n}|s)}{\pi_{\theta^{\prime}_{\tau}} (a_n|s)}
\end{bmatrix}.
\end{equation}
Then, plugging these equations into (\ref{18181818_metarl}), we have 
$$
\begin{aligned}
&\nabla^{\top}_{\phi} J_{\tau}(\pi_{\theta^{\prime}_{\tau}})= \frac{1}{1-\gamma} {\mathbb{E}}_{s \sim \nu^{\pi_{\theta^{\prime}_{\tau}}}_{\tau}} \left[  A_{\tau}^{\pi_{\theta^{\prime}_{\tau}}}(s,\cdot)^{\top} {\nabla^{\top}_\phi \pi_{\theta^{\prime}_{\tau}}(\cdot|s) }  \right], \\
&=\frac{1}{1-\gamma} {\mathbb{E}}_{s \sim \nu^{\pi_{\theta^{\prime}_{\tau}}}_{\tau}} \left[  A_{\tau}^{\pi_{\theta^{\prime}_{\tau}}}(s,\cdot)^{\top} \left(M(s)^{-1}-\frac{M(s)^{-1} \boldsymbol{1} \ \boldsymbol{1}^{\top} M(s)^{-1}}{\boldsymbol{1}^{\top} M(s)^{-1} \boldsymbol{1}}\right)
\begin{bmatrix}
\frac{1}{\lambda}\nabla^{\top}_{\phi} Q^{\pi_\phi}_{\tau}(s,a_1)+\frac{\nabla^{\top}_{\phi}\pi_\phi(a_1|s)} {\pi_{\theta^{\prime}_{\tau}} (a_1|s)}\\
\vdots \\
\frac{1}{\lambda}\nabla^{\top}_{\phi} Q^{\pi_\phi}_{\tau}(s,a_n)+\frac{\nabla^{\top}_{\phi}\pi_\phi(a_{n}|s)}{\pi_{\theta^{\prime}_{\tau}} (a_n|s)}
\end{bmatrix}  \right] \\
&=\frac{1}{1-\gamma} {\mathbb{E}}_{s \sim \nu^{\pi_{\theta^{\prime}_{\tau}}}_{\tau}} \left[  \left(
\begin{bmatrix}
 (A_{\tau}^{\pi_{\theta^{\prime}_{\tau}}}(s,a_1)-c_\tau(s)) \frac{\pi_{\theta^{\prime}_{\tau}} (a_1|s)^2}{\pi_\phi(a_1|s)} &
\cdots &
 (A_{\tau}^{\pi_{\theta^{\prime}_{\tau}}}(s,a_n)-c_\tau(s))\frac{\pi_{\theta^{\prime}_{\tau}} (a_n|s)^2}{\pi_\phi(a_{n}|s)}
\end{bmatrix}
\right)\right. \\
 &\quad\quad\quad\quad\quad\quad\quad\quad\quad\quad\quad\quad\quad\quad\quad\quad
 \left.
\begin{bmatrix}
\frac{1}{\lambda}\nabla^{\top}_{\phi} Q^{\pi_\phi}_{\tau}(s,a_1)+\frac{\nabla^{\top}_{\phi}\pi_\phi(a_1|s)} {\pi_{\theta^{\prime}_{\tau}} (a_1|s)}\\
\vdots \\
\frac{1}{\lambda}\nabla^{\top}_{\phi} Q^{\pi_\phi}_{\tau}(s,a_n)+\frac{\nabla^{\top}_{\phi}\pi_\phi(a_{n}|s)}{\pi_{\theta^{\prime}_{\tau}} (a_n|s)}
\end{bmatrix}  \right],
\end{aligned}
$$
where 
\begin{equation}
\label{c_definition}
c_\tau(s)=\frac{\sum_{a \in \mathcal{A}} A_{\tau}^{\pi_{\theta^{\prime}_{\tau}}}(s,a) \frac{\pi_{\theta^{\prime}_{\tau}} (a|s)^2}{\pi_\phi(a|s)} }{\sum_{a \in \mathcal{A}} \frac{\pi_{\theta^{\prime}_{\tau}} (a|s)^2}{\pi_\phi(a|s)} }.
\end{equation}
Then, we simplify the computation of $\nabla^{\top}_{\phi} J_{\tau}(\pi_{\theta^{\prime}_{\tau}})$, we have $\nabla^{\top}_{\phi} J_{\tau}(\pi_{\theta^{\prime}_{\tau}})=$
$$
\begin{aligned}
&\frac{1}{1-\gamma} {\mathbb{E}}_{s \sim \nu^{\pi_{\theta^{\prime}_{\tau}}}_{\tau}} \left[ \sum_{a \in \mathcal{A}} (A_{\tau}^{\pi_{\theta^{\prime}_{\tau}}}(s,a)-c_\tau(s)) \frac{\pi_{\theta^{\prime}_{\tau}} (a|s)^2}{\pi_\phi(a|s)} (\frac{1}{\lambda}\nabla^{\top}_{\phi} Q^{\pi_\phi}_{\tau}(s,a)+\frac{\nabla^{\top}_{\phi}\pi_\phi(a|s)}{\pi_{\theta^{\prime}_{\tau}} (a|s)}) \right] \\
&=\frac{1}{1-\gamma} {\mathbb{E}}_{s \sim \nu^{\pi_{\theta^{\prime}_{\tau}}}_{\tau}} \left[ \sum_{a \in \mathcal{A}} \pi_{\theta^{\prime}_{\tau}} (a|s)(A_{\tau}^{\pi_{\theta^{\prime}_{\tau}}}(s,a)-c_\tau(s)) ( \frac{\pi_{\theta^{\prime}_{\tau}} (a|s)}{\lambda \pi_\phi(a|s)}\nabla^{\top}_{\phi} Q^{\pi_\phi}_{\tau}(s,a)+\frac{\nabla^{\top}_{\phi}\pi_\phi(a|s)}{\pi_{\phi} (a|s)}) \right].
\end{aligned}
$$
When the tabular policy is the softmax policy, we have $\hat{\pi}_\phi(a|s)=\frac{\exp ( \phi(s, a))}{\sum_{a^{\prime} \in \mathcal{A}} \exp \left( \phi\left(s, a^{\prime}\right)\right)}$,
then
\begin{equation}
\label{easkuerwhgjwj_meta_rl}
\begin{aligned}
&\frac{\nabla^{\top}_{\phi}\hat{\pi}_\phi(a|s)}{\hat{\pi}_{\phi} (a|s)}=\nabla^{\top}_\phi\ln{\hat{\pi}_{\phi} (a|s)}=\nabla^{\top}_\phi  \phi(s, a)- \nabla^{\top}_\phi \ln{\sum_{a^{\prime} \in \mathcal{A}} \exp \left( \phi\left(s, a^{\prime}\right)\right)}\\
&= \boldsymbol{1}(s,a)-  \hat{\pi}_\phi(\cdot|s).
\end{aligned}
\end{equation}
Here, $\boldsymbol{1}(s^{\prime},a^{\prime})$ denote the column vector where the element is $1$ if $s=s^{\prime}$ and $a=a^{\prime}$, otherwise is $0$, for each pair $(s,a) \in \mathcal{S} \times \mathcal{A}$; $\hat{\pi}_\phi(\cdot|s^{\prime})$ is the column vector, where the element is $\hat{\pi}_\phi(a|s^{\prime})$ if $s=s^{\prime}$, $0$ if $s \neq s^{\prime}$, for each pair $(s.a) \in \mathcal{S} \times \mathcal{A}$.

So, we have 
\begin{equation}
\label{adaswxfsdf_meta_rl}
\begin{aligned}
\nabla^{\top}_{\phi} J_{\tau}(\hat{\pi}_{\theta^{\prime}_{\tau}})=
&\frac{1}{1-\gamma} {\mathbb{E}}_{s \sim \nu^{\hat{\pi}_{\theta^{\prime}_{\tau}}}_{\tau}} \left[ \sum_{a \in \mathcal{A}} \hat{\pi}_{\theta^{\prime}_{\tau}} (a|s)(A_{\tau}^{\hat{\pi}_{\theta^{\prime}_{\tau}}}(s,a)-c_\tau(s)) \right. \\
& \left. \quad\quad\quad\quad( \frac{\hat{\pi}_{\theta^{\prime}_{\tau}} (a|s)}{\lambda \hat{\pi}_\phi(a|s)}\nabla^{\top}_{\phi} Q^{\hat{\pi}_\phi}_{\tau}(s,a)+\frac{\nabla^{\top}_{\phi}\hat{\pi}_\phi(a|s)}{\hat{\pi}_{\phi} (a|s)}) \right]\\
=&\frac{1}{1-\gamma} {\mathbb{E}}_{s \sim \nu^{\hat{\pi}_{\theta^{\prime}_{\tau}}}_{\tau}} \left[ \sum_{a \in \mathcal{A}} \hat{\pi}_{\theta^{\prime}_{\tau}} (a|s)(A_{\tau}^{\hat{\pi}_{\theta^{\prime}_{\tau}}}(s,a)-c_\tau(s)) \right. \\
& \left. \quad\quad\quad\quad ( \frac{\hat{\pi}_{\theta^{\prime}_{\tau}} (a|s)}{\lambda \hat{\pi}_\phi(a|s)}\nabla^{\top}_{\phi} Q^{\hat{\pi}_\phi}_{\tau}(s,a)+  \boldsymbol{1}^{\top}(s,a) - \hat{\pi}_\phi(\cdot|s)^{\top}) \right]\\
=&\frac{1}{1-\gamma} {\mathbb{E}}_{s \sim \nu^{\hat{\pi}_{\theta^{\prime}_{\tau}}}_{\tau}, a \sim \hat{\pi}_{\theta^{\prime}_{\tau}}} \left[ (A_{\tau}^{\hat{\pi}_{\theta^{\prime}_{\tau}}}(s,a)-c_\tau(s))\right. \\
& \left. \quad\quad\quad\quad ( \frac{\hat{\pi}_{\theta^{\prime}_{\tau}} (a|s)}{\lambda \hat{\pi}_\phi(a|s)}\nabla^{\top}_{\phi} Q^{\hat{\pi}_\phi}_{\tau}(s,a)+  \boldsymbol{1}^{\top}(s,a)-  \hat{\pi}_\phi(\cdot|s)^{\top}) \right].\\
\end{aligned}
\end{equation}

\subsection{Convergence guarantee when $D_{\tau}=D_{\tau,1}$}
\label{qhshfjwehfj_meta_rl}
\subsubsection{Auxiliary lemmas}

\begin{lemma}
\label{wudskjbvisu_meta_rl}
Suppose that Assumption \ref{awhevdbsfvb_meta_rl} holds. 
Let $\pi_{\theta^{\prime}_{\tau}}=\mathcal{A} l g(\pi_\phi, \lambda, \tau)$ where $D_{\tau}=D_{\tau,1}$, for any $s \in \mathcal{S}$ and $a \in \mathcal{A}$, we have 
$$\frac{\lambda}{\lambda + {\max}_{s,a}|A^{\pi_\phi}_{\tau}(s,a)|} \leq \frac{\pi_{\theta^{\prime}_{\tau}}(a|s)}{\pi_{\phi}(a|s)} \leq \frac{\lambda}{\lambda - {\max}_{s,a}|A^{\pi_\phi}_{\tau}(s,a)|}.$$ 
\end{lemma}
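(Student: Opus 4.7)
The plan is to exploit the KKT optimality condition of the constrained within-task optimization (already derived in the proof of Proposition~\ref{propsition1_meta_rl}) to obtain a closed-form expression for the density ratio $\pi_{\theta^{\prime}_\tau}(a|s)/\pi_\phi(a|s)$, and then bound the associated Lagrange multiplier via the probability-simplex constraint.

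First I would specialize the stationarity condition (\ref{22211_meta_reinforcement}) to $d=d_1$. Differentiating $d_1^2(\pi_\phi(\cdot|s),\pi(\cdot|s))=D_{\mathrm{KL}}(\pi_\phi(\cdot|s)\,\|\,\pi(\cdot|s))$ in its second argument gives $\nabla_{\pi(a|s)}d_1^2 = -\pi_\phi(a|s)/\pi(a|s)$, which, when evaluated at $\pi=\pi_{\theta^{\prime}_\tau}$, rearranges to
\[
\frac{\pi_{\theta^{\prime}_\tau}(a|s)}{\pi_\phi(a|s)} \;=\; \frac{\lambda}{\mu(s)-Q^{\pi_\phi}_\tau(s,a)},
\]
where $\mu(s)$ is the Lagrange multiplier enforcing $\sum_a \pi(a|s)=1$ at state $s$. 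Shifting by the state value and introducing $\tilde\mu(s)\triangleq \mu(s)-V^{\pi_\phi}_\tau(s)$, the ratio reduces to $\lambda/(\tilde\mu(s)-A^{\pi_\phi}_\tau(s,a))$, which is the natural object to bound since its denominator depends only on the advantage.

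Next I would bound $\tilde\mu(s)$ using the normalization $\sum_a \pi_{\theta^{\prime}_\tau}(a|s)=1$, which becomes $\sum_{a\in\mathcal{A}} \pi_\phi(a|s)\cdot \lambda/(\tilde\mu(s)-A^{\pi_\phi}_\tau(s,a))=1$. Since $\mathbb{E}_{a\sim\pi_\phi(\cdot|s)}[A^{\pi_\phi}_\tau(s,a)]=0$ and the map $x\mapsto 1/(\tilde\mu(s)-x)$ is convex on $\{x<\tilde\mu(s)\}$, Jensen's inequality applied to this identity yields a lower bound $\tilde\mu(s)\geq \lambda$; complementing it with the pointwise envelope $1/(\tilde\mu(s)-A)\leq 1/(\tilde\mu(s)-A_{\max})$ pins $\tilde\mu(s)$ from above, where $A_{\max}\triangleq \max_{s,a}|A^{\pi_\phi}_\tau(s,a)|$. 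Finally, I would plug these two-sided bounds on $\tilde\mu(s)$ back into the closed-form ratio, together with $|A^{\pi_\phi}_\tau(s,a)|\leq A_{\max}$, to produce the stated inequalities. Assumption~\ref{awhevdbsfvb_meta_rl} enters by guaranteeing $\nu^{\pi_\phi}_\tau(s)>0$ for every $s$, which is exactly what makes the per-state Lagrangian decomposition used in Proposition~\ref{propsition1_meta_rl} legitimate here.

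The main obstacle I anticipate is the bookkeeping around the multiplier $\tilde\mu(s)$: one must first verify that $\tilde\mu(s)>\max_a A^{\pi_\phi}_\tau(s,a)$ so that the ratio is well-defined and strictly positive, and then combine the Jensen-based lower bound with the pointwise-envelope upper bound in the correct order to recover constants matching the form $\lambda \pm A_{\max}$ in the denominator. A naive substitution of the two bounds on $\tilde\mu(s)$ separately would only yield the looser constants of order $2A_{\max}$; extracting the sharper window requires using $\tilde\mu(s)\geq \lambda$ on the upper side of the ratio and the upper bound on $\tilde\mu(s)$ on the lower side, which is the crux of the argument.
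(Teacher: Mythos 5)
Your route is essentially the paper's: both start from the per-state KKT stationarity condition of \eqref{new_equal_meta_rl}, obtain the closed form $\pi_{\theta^{\prime}_{\tau}}(a|s)/\pi_\phi(a|s)=\lambda/(\tilde\mu(s)-A^{\pi_\phi}_{\tau}(s,a))$ after absorbing $V^{\pi_\phi}_{\tau}(s)$ into the multiplier, and then control the multiplier through the simplex constraint. The only real difference is how $\tilde\mu(s)$ is pinned down: the paper multiplies the stationarity condition by $\pi_{\theta^{\prime}_{\tau}}(a|s)$ and sums, which gives the exact identity $\tilde\mu(s)=\lambda+\sum_{a}\pi_{\theta^{\prime}_{\tau}}(a|s)A^{\pi_\phi}_{\tau}(s,a)$, whereas you use the $\pi_\phi$-weighted normalization with Jensen's inequality and a pointwise envelope to get $\lambda\le\tilde\mu(s)\le\lambda+A_{max}$. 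Your Jensen step is valid (it amounts to $\sum_a\pi_{\theta^{\prime}_{\tau}}(a|s)A^{\pi_\phi}_{\tau}(s,a)\ge 0$, consistent with the paper's identity), and your reading of the role of Assumption \ref{awhevdbsfvb_meta_rl} is right.

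The gap is in the final assembly, and it concerns only the lower bound on the ratio. For the upper bound your combination works: $\tilde\mu(s)-A^{\pi_\phi}_{\tau}(s,a)\ge\lambda-A_{max}$ follows from $\tilde\mu(s)\ge\lambda$, giving $\lambda/(\lambda-A_{max})$. But for the lower bound you must \emph{upper}-bound $\tilde\mu(s)-A^{\pi_\phi}_{\tau}(s,a)$, and both terms can contribute $+A_{max}$ simultaneously: $\tilde\mu(s)\le\lambda+A_{max}$ together with $-A^{\pi_\phi}_{\tau}(s,a)\le A_{max}$ only yields $\lambda/(\lambda+2A_{max})$, and no reordering of your two bounds avoids this. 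Indeed the stated constant is not attainable by any argument: with two actions, $\pi_\phi(\cdot|s)=(1/2,1/2)$, $A^{\pi_\phi}_{\tau}(s,a_1)=A_{max}$, $A^{\pi_\phi}_{\tau}(s,a_2)=-A_{max}$ and $\lambda=2A_{max}$, the normalization gives $\tilde\mu(s)=(1+\sqrt{2})A_{max}$ and the ratio at $a_2$ equals $2-\sqrt{2}\approx 0.586<2/3=\lambda/(\lambda+A_{max})$, although it does satisfy $\lambda/(\lambda+2A_{max})=1/2$. The paper's own proof makes the same slip at the same spot, bounding $|\sum_{a^{\prime}}\pi_{\theta^{\prime}_{\tau}}(a^{\prime}|s)A^{\pi_\phi}_{\tau}(s,a^{\prime})-A^{\pi_\phi}_{\tau}(s,a)|$ by $A_{max}$ where only $2A_{max}$ is justified; correcting the lower bound to $\lambda/(\lambda+2A_{max})$ would only perturb the downstream constants (e.g.\ in Lemma \ref{wrxzhbncbvhxbzxc_meta_rl}), not the rates. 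So you should either prove the weaker, correct lower bound or flag the discrepancy rather than claim the sharper window.
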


\begin{proof}
From (\ref{new_equal_meta_rl}), when $D_{\tau}=D_{\tau,1}$, we have $\pi_{\theta^{\prime}_{\tau}}=\mathcal{A} l g(\pi_\phi, \lambda, \tau)$ and
$$
\begin{aligned}
\pi_{\theta^{\prime}_{\tau}}(\cdot|s)&=\underset{(\cdot|s)}{\operatorname{argmax}} \sum_{a \in \mathcal{A}} \pi(a|s) Q^{\pi_\phi}_{\tau}(s,a) -\lambda d_1^2(\pi_\phi,\pi,s), \\
&\text { subject to } \sum_{a \in \mathcal{A}} \pi(a | s)=1.
\end{aligned}
$$
For any $s \in \mathcal{S}$, the Lagrangian of the above maximization problem is 
$$
-\sum_{a \in \mathcal{A}} \pi(a|s) Q^{\pi_\phi}_{\tau}(s,a) +\lambda d^2(\pi_\phi(\cdot|s),\pi(\cdot|s)) +\mu(s)(\sum_{a \in \mathcal{A}} \pi(a | s)-1) ,
$$
where $\mu$ is the Lagrangian multiplier. The optimality condition of $\pi(\cdot | s)$ is that,  
$$ -Q^{\pi_\phi}_{\tau}(s,\cdot)+ \lambda \nabla_{\pi(\cdot|s)} d_1^2(\pi_\phi,\pi,s) +\mu(s)[1,\cdots,1]^{\top}=0.$$
Solve the equation, 
$$
-Q^{\pi_\phi}_{\tau}(s,a)-\lambda\frac{\pi_{\phi}(a|s)}{\pi_{\theta^{\prime}_{\tau}}(a|s)}+\mu(s)=0.$$
Let $\mu_1(s)=-V^{\pi_\phi}_{\tau}(s)+\mu(s)$, we have 
\begin{equation}
\label{easfgsvsdedxczxczxc_meta_rl}
-A^{\pi_\phi}_{\tau}(s,a)-\lambda\frac{\pi_{\phi}(a|s)}{\pi_{\theta^{\prime}_{\tau}}(a|s)}+\mu_1(s)=0.\end{equation}
Then,
$$
-\pi_{\theta^{\prime}_{\tau}}(a|s) A^{\pi_\phi}_{\tau}(s,a)-\lambda\pi_{\phi}(a|s)+\pi_{\theta^{\prime}_{\tau}}(a|s)\mu_1(s)=0.$$

We derive the summation of all $a \in \mathcal{A}$, 
$$ \sum_{a \in \mathcal{A}} -\pi_{\theta^{\prime}_{\tau}}(a|s) A^{\pi_\phi}_{\tau}(s,a)-\lambda+\mu_1(s)=0.$$
We have
$$ \mu_1(s)=\sum_{a \in \mathcal{A}} \pi_{\theta^{\prime}_{\tau}}(a|s) A^{\pi_\phi}_{\tau}(s,a)+\lambda.$$
From (\ref{easfgsvsdedxczxczxc_meta_rl}), we have 
$$
\begin{aligned}
\frac{\pi_{\phi}(a|s)}{\pi_{\theta^{\prime}_{\tau}}(a|s)}=&\frac{\mu_1(s)-A^{\pi_\phi}_{\tau}(s,a)}{\lambda} \\
=& \frac{\lambda + \sum_{a^{\prime} \in \mathcal{A}} \pi_{\theta^{\prime}_{\tau}}(a^{\prime}|s) A^{\pi_\phi}_{\tau}(s,a^{\prime})-A^{\pi_\phi}_{\tau}(s,a)}{\lambda}
\end{aligned}
$$ 
So, we have
$$\frac{\pi_{\theta^{\prime}_{\tau}}(a|s)}{\pi_{\phi}(a|s)}=\frac{\lambda}{\lambda + \sum_{a^{\prime} \in \mathcal{A}} \pi_{\theta^{\prime}_{\tau}}(a^{\prime}|s) A^{\pi_\phi}_{\tau}(s,a^{\prime})-A^{\pi_\phi}_{\tau}(s,a)},$$
then
$$\frac{\lambda}{\lambda + {\max}_{s,a}|A^{\pi_\phi}_{\tau}(s,a)|} \leq \frac{\pi_{\theta^{\prime}_{\tau}}(a|s)}{\pi_{\phi}(a|s)} \leq \frac{\lambda}{\lambda - {\max}_{s,a}|A^{\pi_\phi}_{\tau}(s,a)|}.$$ 
\end{proof}

\begin{lemma}
\label{werbbzzvvzzvfffs_meta_rl}
Suppose that Assumption \ref{awhevdbsfvb_meta_rl} holds. 
Let $\pi_{\theta^{\prime}_{\tau}}=\mathcal{A} l g(\pi_\phi, \lambda, \tau)$ where $D_{\tau}=D_{\tau,1}$, we have 
$$\|\nabla_{\phi} J_{\tau}(\pi_{\theta^{\prime}_{\tau}})\|\leq\frac{{\max}_{s,a} |A_{\tau}^{\pi_{\theta^{\prime}_{\tau}}}(s,a)|}{1-\gamma} (\frac{{\max}_{s,a} |A_{\tau}^{\pi_{\theta^{\prime}_{\tau}}}(s,a)|}{\lambda - {\max}_{s,a}|A^{\pi_\phi}_{\tau}(s,a)|}\frac{\gamma}{1-\gamma}+2).$$
\end{lemma}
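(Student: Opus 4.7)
The plan is to start from the closed-form expression for the hypergradient derived in the preceding subsection and bound each factor separately. Specifically, I would begin with equation (\ref{adaswxfsdf_meta_rl}) (or the equivalent pre-softmax form just above it), which expresses $\nabla_\phi J_\tau(\pi_{\theta^{\prime}_{\tau}})$ as the expectation, over $s\sim \nu^{\pi_{\theta^{\prime}_{\tau}}}_{\tau}$ and $a\sim \pi_{\theta^{\prime}_{\tau}}(\cdot|s)$, of the product $(A_\tau^{\pi_{\theta^{\prime}_{\tau}}}(s,a)-c_\tau(s))\bigl(\frac{\pi_{\theta^{\prime}_{\tau}}(a|s)}{\lambda\pi_\phi(a|s)}\nabla_\phi^{\top} Q^{\pi_\phi}_\tau(s,a)+\boldsymbol{1}^{\top}(s,a)-\pi_\phi(\cdot|s)^{\top}\bigr)$, all scaled by $1/(1-\gamma)$. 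Pushing the norm inside the expectation via Jensen/triangle inequality, the task reduces to uniformly bounding $|A_\tau^{\pi_{\theta^{\prime}_{\tau}}}(s,a)-c_\tau(s)|$ together with the norm of each of the two terms in the second factor.

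First I would bound $|A_\tau^{\pi_{\theta^{\prime}_{\tau}}}(s,a)-c_\tau(s)|$ using the definition (\ref{c_definition}) of $c_\tau(s)$: since $c_\tau(s)$ is a nonnegatively weighted average of $A_\tau^{\pi_{\theta^{\prime}_{\tau}}}(s,\cdot)$, it lies in the convex hull of those values, so the difference is bounded by $\max_{s,a}|A_\tau^{\pi_{\theta^{\prime}_{\tau}}}(s,a)|$. Next I would bound the ratio $\pi_{\theta^{\prime}_{\tau}}(a|s)/\pi_\phi(a|s)$ by $\lambda/(\lambda-\max_{s,a}|A^{\pi_\phi}_\tau(s,a)|)$ using Lemma \ref{wudskjbvisu_meta_rl}, so that the factor $\pi_{\theta^{\prime}_{\tau}}(a|s)/(\lambda\pi_\phi(a|s))$ contributes $1/(\lambda-\max_{s,a}|A^{\pi_\phi}_\tau(s,a)|)$. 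For $\|\nabla_\phi Q^{\pi_\phi}_\tau(s,a)\|$, I would plug in the explicit form (\ref{graadient_of_q_simple_meta_rl}): its squared norm is $\bigl(\tfrac{\gamma}{1-\gamma}\bigr)^2\sum_{s',a'}\bigl(\sigma^{(s,a)}_{\tau,\pi_\phi}(s')\pi_\phi(a'|s')\bigr)^2 A^2$; since $\sigma^{(s,a)}(s')\pi_\phi(a'|s')$ defines a probability measure on $\mathcal{S}\times\mathcal{A}$, the sum-of-squares is at most $1$, yielding $\|\nabla_\phi Q^{\pi_\phi}_\tau(s,a)\|\leq \frac{\gamma}{1-\gamma}\max_{s,a}|A_\tau^{\pi_{\theta^{\prime}_{\tau}}}(s,a)|$ (using that Assumption \ref{assdsssdsss_meta_rl}-style bounds on $A$ apply at the adapted policy as well). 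Combining these pieces gives the first summand $\frac{\max|A|}{1-\gamma}\cdot\frac{\max|A|}{\lambda-\max|A|}\cdot\frac{\gamma}{1-\gamma}$.

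For the second summand, I would bound $\|\boldsymbol{1}(s,a)-\pi_\phi(\cdot|s)\|\leq \|\boldsymbol{1}(s,a)\|+\|\pi_\phi(\cdot|s)\|\leq 1+1=2$, since both vectors have components that are probabilities summing to at most one. Multiplying by $|A-c|\leq\max|A|$ and $1/(1-\gamma)$ yields the second summand $\frac{2\max|A|}{1-\gamma}$. Summing the two gives precisely the bound stated in the lemma.

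The main obstacle is getting a clean bound on $\|\nabla_\phi Q^{\pi_\phi}_\tau(s,a)\|$ at the stated order $\mathcal{O}(\gamma\max|A|/(1-\gamma))$: the naive bound via (\ref{graadient_of_q_simple_meta_rl}) involves a sum over $(s',a')\in\mathcal{S}\times\mathcal{A}$ that, a priori, could grow with the cardinality of the state-action space. The key observation that makes this step work is that $\sigma^{(s,a)}_{\tau,\pi_\phi}(\cdot)\pi_\phi(\cdot|\cdot)$ is itself a probability distribution on $\mathcal{S}\times\mathcal{A}$, so applying $\sum x_i^2\leq \max_i x_i\cdot\sum_i x_i\leq 1$ removes the dependence on $|\mathcal{S}|$ and $|\mathcal{A}|$. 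Once this single estimate is in place, the remainder of the proof is a mechanical application of the triangle inequality and the bounds established in the preceding lemmas.
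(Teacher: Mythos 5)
Your proof follows the paper's own argument essentially step for step: both start from the explicit hypergradient formula (\ref{adaswxfsdf_meta_rl}), bound $|A_{\tau}^{\pi_{\theta^{\prime}_{\tau}}}(s,a)-c_\tau(s)|$ by $\max_{s,a}|A_{\tau}^{\pi_{\theta^{\prime}_{\tau}}}(s,a)|$ via the weighted-average structure of (\ref{c_definition}), invoke Lemma \ref{wudskjbvisu_meta_rl} for the ratio $\pi_{\theta^{\prime}_{\tau}}(a|s)/(\lambda\pi_\phi(a|s))$, exploit that $\sigma^{(s,a)}_{\tau,\pi_\phi}(\cdot)\,\pi_\phi(\cdot|\cdot)$ is a probability measure to get $\|\nabla_\phi Q^{\pi_\phi}_\tau(s,a)\|\leq \tfrac{\gamma}{1-\gamma}\max|A|$ as in (\ref{wejwbejgkbsdkjfbxxx_meta_rl}), and bound $\|\boldsymbol{1}(s,a)-\pi_\phi(\cdot|s)\|\leq 2$. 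The only cosmetic difference is that you push the norm inside the expectation by the triangle inequality whereas the paper takes a maximum over $(s,a)$ using that the weights sum to one; these are equivalent, so the proposal is correct and matches the paper's route.
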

\begin{proof}
As shown in (\ref{adaswxfsdf_meta_rl}),
$$
\begin{aligned}
&\nabla^{\top}_{\phi} J_{\tau}(\hat{\pi}_{\theta^{\prime}_{\tau}})=\frac{1}{1-\gamma} {\mathbb{E}}_{s \sim \nu^{\hat{\pi}_{\theta^{\prime}_{\tau}}}_{\tau}} \left[ \sum_{a \in \mathcal{A}} \hat{\pi}_{\theta^{\prime}_{\tau}} (a|s)(A_{\tau}^{\hat{\pi}_{\theta^{\prime}_{\tau}}}(s,a)-c_\tau(s))\right. \nonumber \\
& \quad\quad\quad\quad\quad\quad \left.(\frac{\hat{\pi}_{\theta^{\prime}_{\tau}} (a|s)}{\lambda \hat{\pi}_\phi(a|s)}\nabla^{\top}_{\phi} Q^{\hat{\pi}_\phi}_{\tau}(s,a)+  \boldsymbol{1}^{\top}(s,a) -  \hat{\pi}_\phi(\cdot|s)^{\top}) \right] \nonumber \\
=& \frac{1}{1-\gamma}   \sum_{s \in \mathcal{S}} \sum_{a \in \mathcal{A}} \nu^{\hat{\pi}_{\theta^{\prime}_{\tau}}}_{\tau}(s) \hat{\pi}_{\theta^{\prime}_{\tau}} (a|s)(A_{\tau}^{\hat{\pi}_{\theta^{\prime}_{\tau}}}(s,a)-c_\tau(s)) \nonumber \\
& \quad\quad\quad\quad\quad\quad ( \frac{\hat{\pi}_{\theta^{\prime}_{\tau}} (a|s)}{\lambda \hat{\pi}_\phi(a|s)}\nabla^{\top}_{\phi} Q^{\hat{\pi}_\phi}_{\tau}(s,a)+  \boldsymbol{1}^{\top}(s,a) -  \hat{\pi}_\phi(\cdot|s)^{\top}).
\end{aligned}
$$
Since $\sum_{s \in \mathcal{S}} \nu^{\hat{\pi}_{\theta^{\prime}_{\tau}}}_{\tau}(s) =1$ and $\sum_{a \in \mathcal{A}} \hat{\pi}_{\theta^{\prime}_{\tau}} (a|s) =1$ for all $s \in \mathcal{S}$, we have $\|\nabla_{\phi} J_{\tau}(\hat{\pi}_{\theta^{\prime}_{\tau}})\|\leq$
$$ \frac{1}{1-\gamma} \max_{a,s} \| (A_{\tau}^{\hat{\pi}_{\theta^{\prime}_{\tau}}}(s,a)-c_\tau(s))( \frac{\hat{\pi}_{\theta^{\prime}_{\tau}} (a|s)}{\lambda \hat{\pi}_\phi(a|s)}\nabla^{\top}_{\phi} Q^{\hat{\pi}_\phi}_{\tau}(s,a)+  \boldsymbol{1}^{\top}(s,a) -  \hat{\pi}_\phi(\cdot|s)^{\top}) \|.$$
From (\ref{c_definition}), for any $s \in \mathcal{S}$ and $a \in \mathcal{A}$, we have $$|A_{\tau}^{\hat{\pi}_{\theta^{\prime}_{\tau}}}(s,a)-c_\tau(s)| \leq {\max}_{s,a} |A_{\tau}^{\hat{\pi}_{\theta^{\prime}_{\tau}}}(s,a)|.$$
Also, for any $s \in \mathcal{S}$ and $a \in \mathcal{A}$, $$\| \boldsymbol{1}(s,a) -  \hat{\pi}_\phi(\cdot|s))\|\leq  1+\|\hat{\pi}(\cdot |s)\| \leq 2.$$
From Lemma \ref{wudskjbvisu_meta_rl}, 
$$ \frac{\hat{\pi}_{\theta^{\prime}_{\tau}}(a|s)}{\lambda\hat{\pi}_{\phi}(a|s)} \leq \frac{1}{\lambda - {\max}_{s,a}|A^{\hat{\pi}_\phi}_{\tau}(s,a)|}.$$ 
From the computation of $\nabla_{\phi} Q^{\hat{\pi}_\phi}_{\tau}(s,a)$ shown in (\ref{graadient_of_q_simple_meta_rl}) of Appendix \ref{computationofgradientq_meta_rl}, 
$$
\begin{aligned}
|\nabla_{\phi(s^{\prime},a^{\prime})} Q^{\hat{\pi}_\phi}_{\tau}(s,a)| &=  |\frac{\gamma}{1-\gamma} \cdot \sigma_{\tau, \hat{\pi}_\phi}^{(s, a)}(s^{\prime}) \hat{\pi}_{\phi}(a^\prime|s^\prime) A_\tau^{\hat{\pi}_\phi}\left(s, a \right)| \\
&\leq \frac{\gamma}{1-\gamma} \sigma_{\tau, \hat{\pi}_\phi}^{(s, a)}(s^{\prime}) \hat{\pi}_{\phi}(a^\prime|s^\prime)| \max_{a,s}A_\tau^{\hat{\pi}_\phi}\left(s, a \right)|.
\end{aligned} 
$$
Also, since $\sum_{a \in \mathcal{A},s \in \mathcal{S}} \sigma_{\tau, \hat{\pi}_\phi}^{(s, a)}(s^{\prime}) \hat{\pi}_{\phi}(a^\prime|s^\prime) = 1$, we have  
\begin{equation}
\label{wejwbejgkbsdkjfbxxx_meta_rl}
\|\nabla_{\phi} Q^{\hat{\pi}_\phi}_{\tau}(s,a)\| \leq \frac{\gamma}{1-\gamma}\max_{a,s}|A_\tau^{\hat{\pi}_\phi}\left(s, a \right)|. \end{equation}
Therefore, we have 
$$\|\nabla_{\phi} J_{\tau}(\hat{\pi}_{\theta^{\prime}_{\tau}})\|\leq\frac{{\max}_{s,a} |A_{\tau}^{\hat{\pi}_{\theta^{\prime}_{\tau}}}(s,a)|}{1-\gamma} (\frac{{\max}_{s,a} |A_{\tau}^{\hat{\pi}_{\theta^{\prime}_{\tau}}}(s,a)|}{\lambda - {\max}_{s,a}|A^{\hat{\pi}_\phi}_{\tau}(s,a)|}\frac{\gamma}{1-\gamma}+{2}).$$
\end{proof} 

\begin{lemma}
\label{wrxzhbncbvhxbzxc_meta_rl}
Suppose that Assumption \ref{awhevdbsfvb_meta_rl} holds. 
Let $\hat{\pi}_{\theta^{\prime}_{\tau}}=\mathcal{A} l g(\hat{\pi}_\phi, \lambda, \tau)$ where $D_{\tau}=D_{\tau,1}$, for any $s\in \mathcal{S}$ we have 
$$\sum_{a\in \mathcal{A}}\|\nabla_\phi \hat{\pi}_{\theta^{\prime}_{\tau}}(a|s)\| \leq \frac{1}{\lambda - {\max}_{s,a}|A^{\hat{\pi}_\phi}_{\tau}(s,a)|}(\frac{\gamma \max_{a,s}|A_\tau^{\hat{\pi}_\phi}\left(s, a \right)|}{1-\gamma}+{2(\lambda+{\max_{a,s}|A_\tau^{\hat{\pi}_\phi}\left(s, a \right)|}) })$$
and 
$$
\begin{aligned}
\sum_{a\in \mathcal{A}}\|\nabla^{2}_\phi \hat{\pi}_{\theta^{\prime}_{\tau}}(a|s)\|\leq & \frac{1}{\lambda - {\max}_{s,a}|A^{\hat{\pi}_\phi}_{\tau}(s,a)|}(\frac{8r_{max} }{(1-\gamma)^3} \\ 
& +{{\frac{\lambda+  {\max}_{s,a}|A^{\hat{\pi}_\phi}_{\tau}(s,a)|}{\lambda - {\max}_{s,a}|A^{\hat{\pi}_\phi}_{\tau}(s,a)|}(\frac{(2-\gamma) \max_{a,s}|A_\tau^{\hat{\pi}_\phi}\left(s, a \right)|}{1-\gamma}+2\lambda+2)}}).
\end{aligned}
$$

\end{lemma}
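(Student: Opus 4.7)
Both bounds will follow by applying the explicit closed-form expression for $\nabla^{\top}_\phi \hat\pi_{\theta'_\tau}(\cdot|s)$ established in Proposition \ref{propsition1_meta_rl} and further simplified in Appendix \ref{gradient_d1}, substituting the diagonal formula (\ref{wrgiwuscngfb_meta_rl}) for $M(s)^{-1}$ and the explicit form (\ref{whbhvxhgdvf_meta_rl}) for $\nabla^{\top}_\phi \nabla_{\pi(\cdot|s)} d_1^2$, and then controlling each factor by three ingredients already available: the ratio bound from Lemma \ref{wudskjbvisu_meta_rl}, the Q-gradient bound (\ref{wejwbejgkbsdkjfbxxx_meta_rl}), and the softmax identity (\ref{easkuerwhgjwj_meta_rl}).

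\textbf{First inequality.} Using the diagonal form of $M(s)^{-1}$, the $a$-th row of the Jacobian can be written as
\begin{equation*}
\nabla^{\top}_\phi \hat\pi_{\theta'_\tau}(a|s) \;=\; \tfrac{\hat\pi_{\theta'_\tau}(a|s)^2}{\lambda \hat\pi_\phi(a|s)}\bigl(u_\tau(s,a) - \bar u_\tau(s)\bigr),
\end{equation*}
where $u_\tau(s,a) \triangleq \nabla^{\top}_\phi Q^{\hat\pi_\phi}_\tau(s,a) + \lambda \tfrac{\nabla^{\top}_\phi \hat\pi_\phi(a|s)}{\hat\pi_{\theta'_\tau}(a|s)}$ and $\bar u_\tau(s)$ is the weighted average of $u_\tau(s,\cdot)$ produced by the rank-one correction. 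Since a weighted average is bounded in norm by the maximum, $\|u_\tau(s,a)-\bar u_\tau(s)\|\le 2\max_{a'}\|u_\tau(s,a')\|$. The norm $\|u_\tau(s,a)\|$ is then bounded by combining (\ref{wejwbejgkbsdkjfbxxx_meta_rl}) with the softmax identity (\ref{easkuerwhgjwj_meta_rl}), which gives $\|\nabla_\phi \hat\pi_\phi(a|s)\|\le 2\hat\pi_\phi(a|s)$, yielding the pointwise estimate $\|u_\tau(s,a)\|\le \gamma\max_{s,a}|A^{\hat\pi_\phi}_\tau|/(1-\gamma) + 2\lambda\hat\pi_\phi(a|s)/\hat\pi_{\theta'_\tau}(a|s)$. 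Summing the prefactor $\hat\pi_{\theta'_\tau}(a|s)^2/(\lambda\hat\pi_\phi(a|s))$ against this using $\sum_a \hat\pi_{\theta'_\tau}(a|s)=1$, and invoking Lemma \ref{wudskjbvisu_meta_rl} to remove the residual ratio $\hat\pi_{\theta'_\tau}/\hat\pi_\phi$, delivers the first inequality.

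\textbf{Second inequality.} For the Hessian I would differentiate the above closed form once more in $\phi$, arriving at terms of three types. The first involves $\nabla^2_\phi Q^{\hat\pi_\phi}_\tau(s,a)$ and is controlled by iterating the argument that produced (\ref{wejwbejgkbsdkjfbxxx_meta_rl}); differentiating the Bellman-style representation of $\nabla_\phi Q$ twice introduces one extra factor $1/(1-\gamma)$ together with an extra $r_{\max}/(1-\gamma)$, accounting for the coefficient $8r_{\max}/(1-\gamma)^3$ in the statement. The second type involves $\nabla^2_\phi \hat\pi_\phi(a|s)$ and tensor squares of $\nabla_\phi \hat\pi_\phi/\hat\pi_{\theta'_\tau}$; direct softmax computation shows $\|\nabla^2_\phi \hat\pi_\phi(a|s)\|\lesssim \hat\pi_\phi(a|s)$, and Lemma \ref{wudskjbvisu_meta_rl} handles the ratios. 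The third type arises from differentiating the rational prefactor $\hat\pi_{\theta'_\tau}^2/(\lambda\hat\pi_\phi)$ and the rank-one projection in $M(s)^{-1}$, producing the extra factor $(\lambda+\max|A|)/(\lambda-\max|A|)$ visible in the bound. Collecting the three groups and simplifying yields the second inequality.

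\textbf{Main obstacle.} The principal difficulty is obtaining a clean bound on $\|\nabla^2_\phi Q^{\hat\pi_\phi}_\tau(s,a)\|$: equation (\ref{wejwbejgkbsdkjfbxxx_meta_rl}) handles only first derivatives, and lifting it to the Hessian requires unfolding the Bellman recursion twice while tracking the interplay of softmax derivatives and discount factors to land on the precise constant $8r_{\max}/(1-\gamma)^3$. The cross terms arising from differentiating the rank-one projection $M(s)^{-1}-M(s)^{-1}\boldsymbol{1}\boldsymbol{1}^\top M(s)^{-1}/(\boldsymbol{1}^\top M(s)^{-1}\boldsymbol{1})$ are a further source of careful, but essentially mechanical, bookkeeping.
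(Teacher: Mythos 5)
Your proposal follows essentially the same route as the paper's proof: the same closed-form Jacobian from Proposition \ref{propsition1_meta_rl} with the diagonal $M(s)^{-1}$ from (\ref{wrgiwuscngfb_meta_rl}), the same three ingredients (the ratio bound of Lemma \ref{wudskjbvisu_meta_rl}, the $Q$-gradient bound (\ref{wejwbejgkbsdkjfbxxx_meta_rl}), and the softmax identity (\ref{easkuerwhgjwj_meta_rl})), and the same final summation over $a$ using $\sum_a \hat{\pi}_{\theta^{\prime}_{\tau}}(a|s)=1$. The one step you single out as the main obstacle --- bounding $\|\nabla^{2}_{\phi} Q^{\hat{\pi}_\phi}_{\tau}(s,a)\|$ by $8r_{max}/(1-\gamma)^3$ --- is handled in the paper by simply citing Lemma D.4 of \cite{agarwal2021theory} rather than by unfolding the Bellman recursion, so the second inequality then reduces to the same mechanical bookkeeping you describe.
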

\begin{proof}
From \eqref{bilevel_gradient_meta_rl}, for any $s \in \mathcal{S}$,
$$
\nabla^{\top}_\phi \hat{\pi}_{\theta^{\prime}_{\tau}}(\cdot|s)=
\left(M(s)^{-1}-\frac{M(s)^{-1} \boldsymbol{1} \ \boldsymbol{1}^{\top} M(s)^{-1}}{\boldsymbol{1}^{\top} M(s)^{-1} \boldsymbol{1}}\right) 
\left(\nabla^{\top}_{\phi} Q^{\hat{\pi}_\phi}_{\tau}(s,\cdot)- \lambda \nabla^{\top}_{\phi}\nabla_{\hat{\pi}(\cdot|s)} d_1^2(\hat{\pi}_\phi,\hat{\pi},s)\right)|_{\hat{\pi}=\hat{\pi}_{\theta^{\prime}_{\tau}}}.
$$
From the computations of $M(s)^{-1}$, $\nabla^{\top}_{\phi}\nabla_{\hat{\pi}(\cdot|s)} d_1^2(\hat{\pi}_\phi,\hat{\pi},s)|_{\hat{\pi}=\hat{\pi}_{\theta^{\prime}_{\tau}}}$, and $\nabla_{\phi} Q^{\hat{\pi}_\phi}_{\tau}(s,\cdot)$ in (\ref{wrgiwuscngfb_meta_rl}) \eqref{wejwbejgkbsdkjfbxxx_meta_rl}, we have 
$$\left\|\left(M(s)^{-1}-\frac{M(s)^{-1} \boldsymbol{1} \ \boldsymbol{1}^{\top} M(s)^{-1}}{\boldsymbol{1}^{\top} M(s)^{-1} \boldsymbol{1}}\right)_j\right\| \leq \frac{\hat{\pi}_{\theta^{\prime}_{\tau}}(a|s)}{\lambda}\max_{a,s}\frac{\hat{\pi}_{\theta^{\prime}_{\tau}}(a|s)}{\hat{\pi}_{\phi}(a|s)} \leq \frac{\hat{\pi}_{\theta^{\prime}_{\tau}}(a|s)}{\lambda - {\max}_{s,a}|A^{\hat{\pi}_\phi}_{\tau}(s,a)|},$$
and
$$\|\nabla_{\phi} Q^{\hat{\pi}_\phi}_{\tau}(s,a)\|\leq  \max_{a} \|\nabla_{\phi} Q^{\hat{\pi}_\phi}_{\tau}(s,a)\| \leq \frac{\gamma }{1-\gamma}\max_{a,s}|A_\tau^{\hat{\pi}_\phi}\left(s, a \right)|.$$
From (\ref{whbhvxhgdvf_meta_rl})(\ref{easkuerwhgjwj_meta_rl})
$$\|\lambda \nabla^{\top}_{\phi}\nabla_{\hat{\pi}(a|s)} d_1^2(\hat{\pi}_\phi,\hat{\pi},s)\| =  \|\lambda (\boldsymbol{1}(s,a)-  \hat{\pi}_\phi(\cdot|s)) \frac{\hat{\pi}_\phi(a|s)}{\hat{\pi}_{\theta^{\prime}_{\tau}}(a|s)}\| \leq {2(\lambda+{\max_{a,s}|A_\tau^{\hat{\pi}_\phi}\left(s, a \right)|})  }.$$  
The last inequality comes from Lemma \ref{wudskjbvisu_meta_rl}.
So, we have 
$$\|\nabla_\phi \hat{\pi}_{\theta^{\prime}_{\tau}}(a|s)\| \leq \frac{\hat{\pi}_{\theta^{\prime}_{\tau}}(a|s)}{\lambda - {\max}_{s,a}|A^{\hat{\pi}_\phi}_{\tau}(s,a)|}(\frac{\gamma \max_{a,s}|A_\tau^{\hat{\pi}_\phi}\left(s, a \right)|}{1-\gamma}+2(\lambda+{\max_{a,s}|A_\tau^{\hat{\pi}_\phi}\left(s, a \right)|}) ).$$
Therefore, $$\sum_{a\in \mathcal{A}}\|\nabla_\phi \hat{\pi}_{\theta^{\prime}_{\tau}}(a|s)\| \leq \frac{1}{\lambda - {\max}_{s,a}|A^{\hat{\pi}_\phi}_{\tau}(s,a)|}(\frac{\gamma \max_{a,s}|A_\tau^{\hat{\pi}_\phi}\left(s, a \right)|}{1-\gamma}+2(\lambda+{\max_{a,s}|A_\tau^{\hat{\pi}_\phi}\left(s, a \right)|}) ).$$

Also, we have 
$$\|\nabla^{2}_\phi \hat{\pi}_{\theta^{\prime}_{\tau}}(a|s)\|\leq \frac{\hat{\pi}_{\theta^{\prime}_{\tau}}(a|s)}{\lambda - {\max}_{s,a}|A^{\hat{\pi}_\phi}_{\tau}(s,a)|}(\|\nabla^{2}_{\phi} Q^{\hat{\pi}_\phi}_{\tau}(s,a)\|+ \lambda \| \nabla^{2}_{\phi}\nabla_{\hat{\pi}(a|s)} d_1^2(\hat{\pi}_\phi,\hat{\pi},s)\|).$$
From Lemma D.4 in \cite{agarwal2021theory}, we have
$$\|\nabla^{2}_{\phi} Q^{\hat{\pi}_\phi}_{\tau}(s,a)\| \leq \frac{8r_{max} }{(1-\gamma)^3}.$$
Moreover, we have 
$$
\begin{aligned}
&\lambda \| \nabla^{2}_{\phi}\nabla_{\hat{\pi}(a|s)} d_1^2(\hat{\pi}_\phi,\hat{\pi},s)\|  \\
& =\lambda \| \nabla_\phi ((\boldsymbol{1}(s,a)-  \hat{\pi}_\phi(\cdot|s)) \frac{\hat{\pi}_\phi(a|s)}{\hat{\pi}_{\theta^{\prime}_{\tau}}(a|s)})\| \\
& \leq {\frac{\lambda+  {\max}_{s,a}|A^{\hat{\pi}_\phi}_{\tau}(s,a)|}{\lambda - {\max}_{s,a}|A^{\hat{\pi}_\phi}_{\tau}(s,a)|}(\frac{\gamma \max_{a,s}|A_\tau^{\hat{\pi}_\phi}\left(s, a \right)|}{1-\gamma}+2(\lambda+{\max_{a,s}|A_\tau^{\hat{\pi}_\phi}\left(s, a \right)|}) +2)} \\
& = {\frac{\lambda+  {\max}_{s,a}|A^{\hat{\pi}_\phi}_{\tau}(s,a)|}{\lambda - {\max}_{s,a}|A^{\hat{\pi}_\phi}_{\tau}(s,a)|}(\frac{(2-\gamma) \max_{a,s}|A_\tau^{\hat{\pi}_\phi}\left(s, a \right)|}{1-\gamma}+2\lambda+2)}
\end{aligned}$$
So, 
$$
\begin{aligned}
\sum_{a\in \mathcal{A}}\|\nabla^{2}_\phi \hat{\pi}_{\theta^{\prime}_{\tau}}(a|s)\|\leq & \frac{1}{\lambda - {\max}_{s,a}|A^{\hat{\pi}_\phi}_{\tau}(s,a)|}(\frac{8r_{max} }{(1-\gamma)^3} \\ 
&+{{\frac{\lambda+  {\max}_{s,a}|A^{\hat{\pi}_\phi}_{\tau}(s,a)|}{\lambda - {\max}_{s,a}|A^{\hat{\pi}_\phi}_{\tau}(s,a)|}(\frac{(2-\gamma) \max_{a,s}|A_\tau^{\hat{\pi}_\phi}\left(s, a \right)|}{1-\gamma}+2\lambda+2)}}).
\end{aligned}
$$
\end{proof}

\begin{lemma}
\label{wqbxcvzxcveee_meta_rl}
Suppose that Assumptions \ref{assdsssdsss_meta_rl} and \ref{awhevdbsfvb_meta_rl} hold. 
Let $\hat{\pi}_{\theta^{\prime}_{\tau}}=\mathcal{A} l g(\hat{\pi}_\phi, \lambda, \tau)$ where $D_{\tau}=D_{\tau,1}$, we have 
\begin{equation}
\|\nabla_{\phi}^2 J_{\tau}(\hat{\pi}_{\theta^{\prime}_{\tau}})\| \leq 
\frac{r_{max}B}{(1-\gamma)^2}+\frac{2 \gamma r_{max} C^2}{(1-\gamma)^3},
\end{equation}
where $C=\frac{1}{\lambda - A_{max}}(\frac{\gamma A_{max}}{1-\gamma}+2\lambda+2A_{max} )$ and $B = \frac{1}{\lambda - A_{max}}(\frac{8 r_{max}}{(1-\gamma)^3}+{{{{\frac{\lambda+  A_{max}}{\lambda - A_{max}}(\frac{(2-\gamma) A_{max}}{1-\gamma}+2\lambda+2)}}}})$.
\end{lemma}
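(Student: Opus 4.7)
The plan is to differentiate the hypergradient expression from Proposition \ref{propsition1_meta_rl} one more time in $\phi$ and then bound each resulting piece using Lemma \ref{wrxzhbncbvhxbzxc_meta_rl}, Assumption \ref{assdsssdsss_meta_rl}, together with two auxiliary smoothness estimates: one for the composite discounted visitation $\nu^{\hat{\pi}_{\theta'_\tau}}_\tau$ and one for the composite action value $Q^{\hat{\pi}_{\theta'_\tau}}_\tau$, each viewed as a function of $\phi$. Starting from
$$\nabla_{\phi} J_{\tau}(\hat{\pi}_{\theta^{\prime}_{\tau}})=\frac{1}{1-\gamma}\sum_{s}\nu^{\hat{\pi}_{\theta'_\tau}}_\tau(s)\sum_{a}\nabla_\phi\hat{\pi}_{\theta'_\tau}(a|s)\,Q^{\hat{\pi}_{\theta'_\tau}}_{\tau}(s,a),$$
the product rule expresses $\nabla^{2}_{\phi} J_{\tau}(\hat{\pi}_{\theta'_\tau})$ as a sum of three groups of terms: (a) those with $\nabla_\phi\nu^{\hat{\pi}_{\theta'_\tau}}_\tau(s)\,\nabla_\phi\hat{\pi}_{\theta'_\tau}(a|s)^\top Q$, (b) those with $\nu\,\nabla^{2}_\phi\hat{\pi}_{\theta'_\tau}(a|s)\,Q$, and (c) those with $\nu\,\nabla_\phi\hat{\pi}_{\theta'_\tau}(a|s)\,\nabla^\top_\phi Q^{\hat{\pi}_{\theta'_\tau}}_{\tau}(s,a)$.

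Term (b) is the most direct to handle. I would apply $|Q^{\hat{\pi}_{\theta'_\tau}}_\tau(s,a)|\le r_{max}/(1-\gamma)$, $\sum_s\nu\le 1$, and the second-order bound $\sum_a\|\nabla^{2}_\phi\hat{\pi}_{\theta'_\tau}(a|s)\|\le B$ inherited from Lemma \ref{wrxzhbncbvhxbzxc_meta_rl} (after upper-bounding $\max_{s,a}|A^{\hat{\pi}_\phi}_\tau(s,a)|$ by $A_{max}$ via Assumption \ref{assdsssdsss_meta_rl}), yielding $\|(\text{b})\|\le r_{max}B/(1-\gamma)^{2}$, exactly the first summand of the claimed bound. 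For terms (a) and (c), I would first establish $\sum_s\|\nabla_\phi\nu^{\hat{\pi}_{\theta'_\tau}}_\tau(s)\|\le\gamma C/(1-\gamma)$ by differentiating the fixed-point identity $\nu^\pi_\tau(s)=(1-\gamma)\rho_\tau(s)+\gamma\sum_{s'}\nu^\pi_\tau(s')\sum_{a'}\pi(a'|s')P_\tau(s|s',a')$ and invoking $\sum_a\|\nabla_\phi\hat{\pi}_{\theta'_\tau}(a|s)\|\le C$ from Lemma \ref{wrxzhbncbvhxbzxc_meta_rl}, and analogously $\|\nabla_\phi Q^{\hat{\pi}_{\theta'_\tau}}_\tau(s,a)\|\le \gamma C\, r_{max}/(1-\gamma)^{2}$ by using the representation of $\nabla_\phi Q$ from Appendix \ref{computationofgradientq_meta_rl} at $\pi=\hat{\pi}_{\theta'_\tau}$ together with $|Q|\le r_{max}/(1-\gamma)$. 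Plugging these into (a) and (c), and combining with $|Q|\le r_{max}/(1-\gamma)$ and the first-order policy bound $\sum_a\|\nabla_\phi\hat{\pi}_{\theta'_\tau}\|\le C$, produces $\|(\text{a})\|+\|(\text{c})\|\le 2\gamma r_{max}C^{2}/(1-\gamma)^{3}$, the second summand.

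The main difficulty is that $\hat{\pi}_{\theta'_\tau}$ depends on $\phi$ implicitly through the lower-level optimization, so the usual score-function-type smoothness constants for $\nu^\pi$ and $Q^\pi$ (normally written in terms of $\nabla_\theta\log\pi_\theta$ at a fixed $\theta$, which blow up when some $\hat{\pi}(a|s)$ is tiny) have to be re-expressed directly in terms of $\nabla_\phi\hat{\pi}_{\theta'_\tau}$. This is exactly what Lemma \ref{wrxzhbncbvhxbzxc_meta_rl} delivers: all action-sums of policy derivatives are controlled by $C$ and $B$ without any $1/\min_a\hat{\pi}(a|s)$ factor, because Lemma \ref{wudskjbvisu_meta_rl} keeps the ratio $\hat{\pi}_{\theta'_\tau}(a|s)/\hat{\pi}_\phi(a|s)$ bounded. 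With those composite estimates in hand, summing the three pieces gives the claimed bound $\|\nabla^{2}_\phi J_\tau(\hat{\pi}_{\theta'_\tau})\|\le r_{max}B/(1-\gamma)^{2}+2\gamma r_{max}C^{2}/(1-\gamma)^{3}$.
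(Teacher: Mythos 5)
Your proposal is correct and follows essentially the same route as the paper: the paper's proof simply invokes the first- and second-order policy-derivative bounds $\sum_a\|\nabla_\phi\hat{\pi}_{\theta'_\tau}(a|s)\|\le C$ and $\sum_a\|\nabla^2_\phi\hat{\pi}_{\theta'_\tau}(a|s)\|\le B$ from Lemma \ref{wrxzhbncbvhxbzxc_meta_rl} and then cites Lemma D.2 of \cite{agarwal2021theory} for the generic smoothness bound $\frac{\beta_2 r_{max}}{(1-\gamma)^2}+\frac{2\gamma\beta_1^2 r_{max}}{(1-\gamma)^3}$. Your three-term product-rule decomposition, with the auxiliary estimates $\sum_s\|\nabla_\phi\nu^{\hat{\pi}_{\theta'_\tau}}_\tau(s)\|\le\frac{\gamma C}{1-\gamma}$ and $\|\nabla_\phi Q^{\hat{\pi}_{\theta'_\tau}}_\tau(s,a)\|\le\frac{\gamma C r_{max}}{(1-\gamma)^2}$, is exactly a correct inline re-derivation of that cited lemma and reproduces the same constants.
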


\begin{proof}
From Lemma \ref{wrxzhbncbvhxbzxc_meta_rl}, we have bounded $\sum_{a\in \mathcal{A}}\|\nabla_\phi \hat{\pi}_{\theta^{\prime}_{\tau}}(a|s)\|$ and $\sum_{a\in \mathcal{A}}\|\nabla^2_\phi \hat{\pi}_{\theta^{\prime}_{\tau}}(a|s)\|$. Borrow the result from Lemma D.2 in \cite{agarwal2021theory}.
\end{proof}

\subsubsection{Convergence guarantee}
\begin{theorem}
\label{lemma_convergence_1_meta_rl}
Consider the tabular softmax policy for the discrete state-action space shown in Section \ref{rl_task_meta_rl}, and the within-task algorithm $\mathcal{A} l g$ in (\ref{dis_withintask}).
Suppose that Assumptions \ref{assdsssdsss_meta_rl} and \ref{awhevdbsfvb_meta_rl} hold. 
Let $\{\phi_t\}_{t=1}^{T}$ be the sequence generated
by Algorithm \ref{alg:framework0_meta_rl} with $D_{\tau}=D_{\tau,1}$, $\lambda > A_{max}$, and the step size selected as $$\alpha= \min \left\{{\left(\frac{r_{max}B}{(1-\gamma)^2}+\frac{2 \gamma r_{max} C^2}{(1-\gamma)^3}\right)}^{-1}, \frac{1}{G\sqrt{T}}\right\}.$$
Then,
$$
\begin{aligned}
&\frac{1}{T} \sum_{t=1}^{T} \mathbb{E}_{t} \left[ \|\nabla_{\phi} \mathbb{E}_{\tau \sim \mathbb{P}(\Gamma)}[ J_{\tau}(\mathcal{A} l g(\hat{\pi}_{\phi_t}, \lambda, \tau)) ]\|^2 \right] \\ 
\leq & {\left(\frac{2r_{max}^2 B}{(1-\gamma)^3}+\frac{4 \gamma r_{max}^2 C^2}{(1-\gamma)^4}\right) }\frac{1}{T} + \left( \frac{2r_{max}}{1-\gamma}  +  \frac{r_{max}B}{(1-\gamma)^2}+\frac{2 \gamma r_{max} C^2}{(1-\gamma)^3} \right) \frac{G}{\sqrt{T}},
\end{aligned}
$$
where $$G=\frac{2A_{max}}{1-\gamma} (\frac{A_{max}}{\lambda - A_{max}}\frac{\gamma}{1-\gamma}+2),$$ 
$$C=\frac{1}{\lambda - A_{max}}(\frac{\gamma A_{max}}{1-\gamma}+2\lambda+2A_{max} ),$$ and $$B = \frac{1}{\lambda - A_{max}}(\frac{8 r_{max}}{(1-\gamma)^3}+{{{\frac{\lambda+  A_{max}}{\lambda - A_{max}}(\frac{(2-\gamma) A_{max}}{1-\gamma}+2\lambda+2)}}}).$$
\end{theorem}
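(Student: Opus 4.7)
The plan is to cast Theorem \ref{lemma_convergence_1_meta_rl} as a standard convergence result for stochastic gradient ascent on a smooth non-convex objective, using the bounds on the hypergradient and its Jacobian that were already established in the preceding lemmas. Let $F(\phi) \triangleq \mathbb{E}_{\tau \sim \mathbb{P}(\Gamma)}[J_\tau(\mathcal{A}lg(\hat{\pi}_\phi, \lambda, \tau))]$ denote the meta-objective. I would first argue that the meta-update $\phi_{t+1}=\phi_t+\alpha g_t$, with $g_t = \nabla_\phi J_{\tau_t}(\mathcal{A}lg(\hat{\pi}_{\phi_t}, \lambda, \tau_t))$ and $\tau_t \sim \mathbb{P}(\Gamma)$ drawn i.i.d.\ in line 2 of Algorithm \ref{alg:framework0_meta_rl}, produces a stochastic gradient that is unbiased for $\nabla F(\phi_t)$ (by exchanging $\nabla_\phi$ with $\mathbb{E}_\tau$, which is legitimate since $\tau$ is independent of $\phi$).

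Second, I would assemble the two ingredients needed for a non-convex SGD bound. From Lemma \ref{werbbzzvvzzvfffs_meta_rl}, combined with $|A^{\hat{\pi}_\phi}_\tau|, |A^{\hat{\pi}_{\theta'_\tau}}_\tau| \leq A_{max}$ (Assumption \ref{assdsssdsss_meta_rl}), the per-task hypergradient satisfies $\|\nabla_\phi J_\tau(\hat{\pi}_{\theta'_\tau})\| \leq G/2$ uniformly in $\phi$ and $\tau$, hence $\|g_t\| \leq G/2$ and $\|\nabla F(\phi_t)\| \leq G/2$. From Lemma \ref{wqbxcvzxcveee_meta_rl}, $\|\nabla_\phi^2 J_\tau(\hat{\pi}_{\theta'_\tau})\| \leq L \triangleq \frac{r_{max}B}{(1-\gamma)^2}+\frac{2\gamma r_{max}C^2}{(1-\gamma)^3}$; taking expectation over $\tau$ yields that $\nabla F$ is $L$-Lipschitz.

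Third, I would apply the standard descent lemma for $L$-smooth functions to $F$:
\begin{equation*}
F(\phi_{t+1}) \geq F(\phi_t) + \alpha \langle \nabla F(\phi_t), g_t\rangle - \tfrac{\alpha^2 L}{2}\|g_t\|^2.
\end{equation*}
Taking conditional expectation with respect to the history up to iteration $t$ turns the inner product into $\alpha\|\nabla F(\phi_t)\|^2$, while $\mathbb{E}\|g_t\|^2 \leq G^2/4$ is uniformly bounded. Rearranging and summing from $t=1$ to $T$, then using the global upper bound $F(\phi) \leq r_{max}/(1-\gamma)$ together with the lower bound $F(\phi_0) \geq 0$ (or equivalently telescoping $F(\phi_{T+1}) - F(\phi_1)$), I obtain
\begin{equation*}
\tfrac{1}{T}\sum_{t=1}^T \mathbb{E}\|\nabla F(\phi_t)\|^2 \;\leq\; \tfrac{2r_{max}}{(1-\gamma)\alpha T} + \tfrac{\alpha L G^2}{4}.
\end{equation*}
Finally, substituting the two regimes of the stated step size $\alpha=\min\{1/L,\,1/(G\sqrt{T})\}$, the first term contributes at most $\tfrac{2 r_{max}}{(1-\gamma)T}L + \tfrac{2 r_{max} G}{(1-\gamma)\sqrt{T}}$, and the second is bounded by $\tfrac{L G^2}{4 T} + \tfrac{L G}{4\sqrt{T}}$; collecting terms yields the $\mathcal{O}(1/T)+\mathcal{O}(1/\sqrt{T})$ rate with the exact constants stated in the theorem.

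The main obstacle is not conceptual — the SGD machinery is routine — but purely bookkeeping: the constants $G$, $B$, $C$, $L$ in the theorem statement have very specific forms inherited from Lemmas \ref{werbbzzvvzzvfffs_meta_rl} and \ref{wqbxcvzxcveee_meta_rl}, and one must verify that the coefficient of $1/T$ matches $\tfrac{2r_{max}^2 B}{(1-\gamma)^3}+\tfrac{4\gamma r_{max}^2 C^2}{(1-\gamma)^4}$ and that the coefficient of $1/\sqrt{T}$ reduces to $\bigl(\tfrac{2 r_{max}}{1-\gamma} + \tfrac{r_{max}B}{(1-\gamma)^2}+\tfrac{2\gamma r_{max}C^2}{(1-\gamma)^3}\bigr)G$ after applying $\alpha \leq 1/L$ and $\alpha \leq 1/(G\sqrt{T})$ in the respective terms. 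A subtle point worth double-checking is that $\lambda > A_{max}$ is used both to keep the policy-ratio bound of Lemma \ref{wudskjbvisu_meta_rl} finite (ensuring $\mathcal{A}lg$ is well-defined with bounded derivatives) and to keep $B$ and $C$ finite so that $L$ is finite.
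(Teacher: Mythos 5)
Your proposal is correct and follows essentially the same route as the paper: the paper likewise combines the hypergradient bound of Lemma \ref{werbbzzvvzzvfffs_meta_rl} (which is $G/2$ after bounding the advantage functions by $A_{max}$) with the smoothness bound of Lemma \ref{wqbxcvzxcveee_meta_rl}, and then invokes the standard non-convex SGD convergence result of Ghadimi--Lan with the stated step size. The only cosmetic difference is that you write out the descent lemma and telescoping explicitly where the paper cites \cite{ghadimi2013stochastic}.
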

\begin{proof}
As the smoothness constant of $J_{\tau}(\hat{\pi}_{\theta^{\prime}_{\tau}})$, i.e., $J_{\tau}(\mathcal{A} l g(\hat{\pi}_\phi, \lambda, \tau)$ is obtained in (\ref{wqbxcvzxcveee_meta_rl}),  the smoothness constant of $\mathbb{E}_{\tau \sim \mathbb{P}(\Gamma)}[ J_{\tau}(\mathcal{A} l g(\hat{\pi}_\phi, \lambda, \tau)) ]$ is the same, i.e., 
$$\|\nabla_{\phi}^2 \mathbb{E}_{\tau \sim \mathbb{P}(\Gamma)}[ J_{\tau}(\mathcal{A} l g(\hat{\pi}_\phi, \lambda, \tau)) ]\| \leq 
\frac{B r_{max}}{(1-\gamma)^2}+\frac{2 \gamma r_{max} C^2}{(1-\gamma)^3}.$$
Moreover, from Lemma \ref{werbbzzvvzzvfffs_meta_rl}, we have
$$\|\nabla_{\phi}  J_{\tau}(\mathcal{A} l g(\hat{\pi}_\phi, \lambda, \tau))\| \leq \frac{A_{max}}{1-\gamma} (\frac{A_{max}}{\lambda - A_{max}}\frac{\gamma}{1-\gamma}+2).$$
From the convergence theorem of SDG with smoothness and bounded gradient shown in \cite{ghadimi2013stochastic}, let the step size $$\alpha= \min \left\{{\left(\frac{r_{max}B}{(1-\gamma)^2}+\frac{2 \gamma r_{max} C^2}{(1-\gamma)^3}\right)}^{-1}, \frac{1}{G\sqrt{T}}\right\},$$
we have
$$
\begin{aligned}
&\frac{1}{T} \sum_{t=1}^{T} \mathbb{E}_{t} \left[ \|\nabla_{\phi} \mathbb{E}_{\tau \sim \mathbb{P}(\Gamma)}[ J_{\tau}(\mathcal{A} l g(\hat{\pi}_{\phi_t}, \lambda, \tau)) ]\|^2 \right] \\ 
\leq & {\left(\frac{2r_{max}B}{(1-\gamma)^2}+\frac{4 \gamma r_{max} C^2}{(1-\gamma)^3}\right) \mathbb{E}_{\tau \sim \mathbb{P}(\Gamma)}[ J_{\tau}(\mathcal{A} l g(\hat{\pi}_{\phi_T}, \lambda, \tau)) -J_{\tau}(\mathcal{A} l g(\hat{\pi}_{\phi_0}, \lambda, \tau)) ] }\frac{1}{T} \\
& + \left( {2\mathbb{E}_{\tau \sim \mathbb{P}(\Gamma)}[ J_{\tau}(\mathcal{A} l g(\hat{\pi}_{\phi_T}, \lambda, \tau)) -J_{\tau}(\mathcal{A} l g(\hat{\pi}_{\phi_0}, \lambda, \tau))]}  +  \frac{r_{max}B}{(1-\gamma)^2}+\frac{2 \gamma r_{max} C^2}{(1-\gamma)^3} \right)
\end{aligned}
$$
Since $\mathbb{E}_{\tau \sim \mathbb{P}(\Gamma)}[ J_{\tau}(\mathcal{A} l g(\hat{\pi}_{\phi_T}, \lambda, \tau)) -J_{\tau}(\mathcal{A} l g(\hat{\pi}_{\phi_0}, \lambda, \tau))] \leq \frac{r_{max}}{1-\gamma}$, 
we have
$$
\begin{aligned}
&\frac{1}{T} \sum_{t=1}^{T} \mathbb{E}_{t} \left[ \|\nabla_{\phi} \mathbb{E}_{\tau \sim \mathbb{P}(\Gamma)}[ J_{\tau}(\mathcal{A} l g(\hat{\pi}_{\phi_t}, \lambda, \tau)) ]\|^2 \right] \\ 
\leq & {\left(\frac{2r_{max}^2 B}{(1-\gamma)^3}+\frac{4 \gamma r_{max}^2 C^2}{(1-\gamma)^4}\right) }\frac{1}{T} + \left( \frac{2r_{max}}{1-\gamma}  +  \frac{r_{max}B}{(1-\gamma)^2}+\frac{2 \gamma r_{max} C^2}{(1-\gamma)^3} \right) \frac{G}{\sqrt{T}},
\end{aligned}
$$
where $$G=\frac{2A_{max}}{1-\gamma} (\frac{A_{max}}{\lambda - A_{max}}\frac{\gamma}{1-\gamma}+2),$$ 
$$C=\frac{1}{\lambda - A_{max}}(\frac{\gamma A_{max}}{1-\gamma}+2\lambda+2A_{max} ),$$ and $$B = \frac{1}{\lambda - A_{max}}(\frac{8 r_{max}}{(1-\gamma)^3}+{{{{\frac{\lambda+  A_{max}}{\lambda - A_{max}}(\frac{(2-\gamma) A_{max}}{1-\gamma}+2\lambda+2)}}}}).$$

\end{proof}

\begin{corollary}
\label{corollary1_meta_rl}
Suppose all assumptions and conditions in Theorem \ref{lemma_convergence_1_meta_rl} hold, and we set $\lambda \geq 2A_{max}$, then 
$$
\begin{aligned}
\frac{1}{T} \sum_{t=1}^{T} \mathbb{E}_{t} \left[ \|\nabla_{\phi} \mathbb{E}_{\tau \sim \mathbb{P}(\Gamma)}[ J_{\tau}(\mathcal{A} l g(\hat{\pi}_{\phi_t}, \lambda, \tau)) ]\|^2 \right] 
\leq \frac{(B+2C^2)r_{max}}{(1-\gamma)^4} (\frac{ 2r_{max}}{T} + \frac{G}{\sqrt{T}}) ,
\end{aligned}
$$
where $B \triangleq \frac{16 r_{max}}{\lambda(1-\gamma)^3}+\frac{24 }{1-\gamma}+\frac{12}{\lambda}$, $C\triangleq\frac{6}{1-\gamma}$, and $G \triangleq\frac{4A_{max}}{(1-\gamma)^2}$.
\end{corollary}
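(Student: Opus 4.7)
The plan is to invoke Theorem \ref{lemma_convergence_1_meta_rl} as a black box and then simplify the Theorem's constants $G'$, $C'$, $B'$ (which I use to denote the constants appearing in the Theorem, to distinguish them from the cleaner $G$, $C$, $B$ in the Corollary) under the extra assumption $\lambda \geq 2A_{max}$. All the simplifications rest on the two elementary inequalities $\lambda - A_{max} \geq \lambda/2$ (so $\frac{1}{\lambda - A_{max}} \leq \frac{2}{\lambda}$) and $\frac{A_{max}}{\lambda - A_{max}} \leq 1$, together with $\frac{\lambda + A_{max}}{\lambda - A_{max}} \leq 3$.

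First I would derive the clean upper bounds $G' \leq G$, $C' \leq C$, $B' \leq B$. For $G'$, use $\frac{A_{max}}{\lambda - A_{max}}\frac{\gamma}{1-\gamma} + 2 \leq \frac{\gamma}{1-\gamma}+2 = \frac{2-\gamma}{1-\gamma} \leq \frac{2}{1-\gamma}$. For $C'$, apply $\frac{1}{\lambda - A_{max}} \leq \frac{2}{\lambda}$ and $2A_{max} \leq \lambda$ to obtain $C' \leq \frac{\gamma}{1-\gamma}+6 \leq \frac{6}{1-\gamma} = C$. For $B'$, apply $\frac{1}{\lambda - A_{max}} \leq \frac{2}{\lambda}$ and $\frac{\lambda + A_{max}}{\lambda - A_{max}} \leq 3$ to bound $B' \leq \frac{16 r_{max}}{\lambda(1-\gamma)^3} + \frac{18-15\gamma}{1-\gamma} + \frac{12}{\lambda}$.

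Next I would plug these three bounds into the right-hand side of Theorem \ref{lemma_convergence_1_meta_rl} and repackage into the Corollary's form $\frac{(B+2C^2)r_{max}}{(1-\gamma)^4}\bigl(\frac{2r_{max}}{T}+\frac{G}{\sqrt{T}}\bigr)$. For the $1/T$ coefficient this reduces, after factoring out $\frac{2r_{max}^2}{(1-\gamma)^4}$, to the elementary inequality $B'(1-\gamma) + 2\gamma C'^2 \leq B + 2C^2$, which follows from $B' \leq B$, $C' \leq C$ and $(1-\gamma), \gamma \in [0,1]$. For the $1/\sqrt{T}$ coefficient, after factoring out $\frac{r_{max} G}{(1-\gamma)^4}$, the task is to show $2(1-\gamma)^3 + B'(1-\gamma)^2 + 2\gamma(1-\gamma)C'^2 \leq B + 2C^2$; bounding each factor $(1-\gamma)^k$ and $\gamma(1-\gamma)$ by $1$ reduces this to $2 + B' + 2C'^2 \leq B + 2C^2$.

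The only slightly nontrivial verification is the $2 + B' \leq B$ step, because the additive constant $2$ that comes from the $\frac{2r_{max}}{1-\gamma}$ term of the Theorem must be absorbed into the $\frac{24}{1-\gamma}$ summand of $B$. With $B' \leq \frac{16r_{max}}{\lambda(1-\gamma)^3} + \frac{18-15\gamma}{1-\gamma} + \frac{12}{\lambda}$, the inequality reduces to $2 + \frac{18-15\gamma}{1-\gamma} \leq \frac{24}{1-\gamma}$, i.e.\ $20 - 17\gamma \leq 24$, which is trivially true for $\gamma \in [0,1)$. This is the only place where the specific coefficient $24$ in the definition of $B$ is used, and it is the main (though still routine) obstacle; everything else is purely monotone bookkeeping.
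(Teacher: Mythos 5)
Your proposal is correct and follows essentially the same route as the paper: invoke Theorem \ref{lemma_convergence_1_meta_rl} and simplify its constants using $\frac{1}{\lambda - A_{max}} \le \frac{2}{\lambda}$ and $\frac{1}{\lambda - A_{max}} \le \frac{1}{A_{max}}$ (the paper's proof is literally this one line, so you supply considerably more detail). All of your intermediate bounds $G' \le G$, $C' \le C$, $B' \le B$, and the absorption of the additive $\frac{2r_{max}}{1-\gamma}$ term into the $\frac{24}{1-\gamma}$ summand of $B$ check out.
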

\begin{proof}
Since $\lambda \geq 2A_{max}$, we have $\frac{1}{\lambda - A_{max}}\leq \frac{1}{A_{max}}$ and $\frac{1}{\lambda - A_{max}}\leq \frac{2}{\lambda}$. Then, simplify the inequality in Theorem \ref{lemma_convergence_1_meta_rl}.
\end{proof}

\section{Proofs of convergence when $D_{\tau}=D_{\tau,2}$}

\subsection{Gradients of $\nabla_{\phi} J_{\tau}(\hat{\pi}_{\theta^{\prime}_{\tau}})$ when $D_{\tau}=D_{\tau,2}$}
\label{gradient_d2} 

From Proposition \ref{propsition1_meta_rl}, we have
\begin{equation}
\nabla_{\phi} J_{\tau}(\hat{\pi}_{\theta^{\prime}_{\tau}})= \frac{1}{1-\gamma} {\mathbb{E}}_{s \sim \nu^{\hat{\pi}_{\theta^{\prime}_{\tau}}}_{\tau}} \left[  {\nabla_\phi \hat{\pi}_{\theta^{\prime}_{\tau}}(\cdot|s) } \cdot  A_{\tau}^{\hat{\pi}_{\theta^{\prime}_{\tau}}}(s,\cdot) \right],
\end{equation}
where
\begin{equation}
\label{eszxzxczxbcvbgg_meta_rl}
\begin{aligned}
\nabla^{\top}_\phi \hat{\pi}_{\theta^{\prime}_{\tau}}(\cdot|s)=
\left(M(s)^{-1}-\frac{M(s)^{-1} \boldsymbol{1} \ \boldsymbol{1}^{\top} M(s)^{-1}}{\boldsymbol{1}^{\top} M(s)^{-1} \boldsymbol{1}}\right) \\
\left(\nabla^{\top}_{\phi} Q^{\hat{\pi}_\phi}_{\tau}(s,\cdot)- \lambda \nabla^{\top}_{\phi}\nabla_{\hat{\pi}(\cdot|s)} d_2^2(\hat{\pi}_\phi,\hat{\pi},s)\right)|_{\hat{\pi}=\hat{\pi}_{\theta^{\prime}_{\tau}}},
\end{aligned}
\end{equation}
where 
$$
M(s)=\lambda\nabla_{\hat{\pi}(\cdot | s)}^{2} d_2^2(\hat{\pi}_\phi,\hat{\pi},s)=\lambda 
  \begin{bmatrix}
    \frac{1}{\hat{\pi}_{\theta^{\prime}_{\tau}} (a_1|s)} & & \\
    & \ddots & \\
    & & \frac{1}{\hat{\pi}_{\theta^{\prime}_{\tau}} (a_n|s)}
  \end{bmatrix}.
$$
Then,
\begin{equation}
\label{awqwefdxasdasdasdzfg_meta_rl}
M(s)^{-1}=\frac{1}{\lambda }
\begin{bmatrix}
{\hat{\pi}_{\theta^{\prime}_{\tau}} (a_1|s)} & & \\
& \ddots & \\
& & {\hat{\pi}_{\theta^{\prime}_{\tau}} (a_n|s)}
\end{bmatrix}.
\end{equation}
Also,
\begin{equation}
\label{awqaxhcvzxhchaj_meta_rl} 
\nabla^{\top}_{\phi}\nabla_{\hat{\pi}(\cdot|s)} d_2^2(\hat{\pi}_\phi,\hat{\pi},s)|_{\hat{\pi}=\hat{\pi}_{\theta^{\prime}_{\tau}}}
=
\begin{bmatrix}
-\frac{\nabla^{\top}_{\phi}\hat{\pi}_\phi(a_1|s)} {\hat{\pi}_{\phi} (a_1|s)}\\
\vdots \\
-\frac{\nabla^{\top}_{\phi}\hat{\pi}_\phi(a_{n}|s)}{\hat{\pi}_{\phi} (a_n|s)}
\end{bmatrix}.
\end{equation}
Specially, 
$A_{\tau}^{\hat{\pi}_{\theta^{\prime}_{\tau}}}(s,\cdot)^{\top} \frac{M(s)^{-1} \boldsymbol{1} \ \boldsymbol{1}^{\top} M(s)^{-1}}{\boldsymbol{1}^{\top} M^{-1} \boldsymbol{1}}=0$, because we have
$$A_{\tau}^{\hat{\pi}_{\theta^{\prime}_{\tau}}}(s,\cdot)^{\top} M(s)^{-1} \boldsymbol{1}= \sum_{a \in \mathcal{A}} \hat{\pi}_{\theta^{\prime}_{\tau}} (a|s) A_{\tau}^{\hat{\pi}_{\theta^{\prime}_{\tau}}}(s,a)=0.$$
Then, 
$$\nabla^{\top}_{\phi} J_{\tau}(\hat{\pi}_{\theta^{\prime}_{\tau}})=\frac{1}{1-\gamma} {\mathbb{E}}_{s \sim \nu^{\hat{\pi}_{\theta^{\prime}_{\tau}}}_{\tau}, a \sim \hat{\pi}_{\theta^{\prime}_{\tau}}} \left[ A_{\tau}^{\hat{\pi}_{\theta^{\prime}_{\tau}}}(s,a) ( \frac{1}{\lambda }\nabla^{\top}_{\phi} Q^{\hat{\pi}_\phi}_{\tau}(s,a)+  \boldsymbol{1}^{\top}(s,a)-  \hat{\pi}_\phi(\cdot|s)^{\top}) \right].$$
Since $\sum_{a \in \mathcal{A}} \hat{\pi}_{\theta^{\prime}_{\tau}} (a|s) A_{\tau}^{\hat{\pi}_{\theta^{\prime}_{\tau}}}(s,a)=0$, then $\sum_{a \in \mathcal{A}} \hat{\pi}_{\theta^{\prime}_{\tau}} (a|s) A_{\tau}^{\hat{\pi}_{\theta^{\prime}_{\tau}}}(s,a) \hat{\pi}_\phi(\cdot|s)^{\top}=0$. We have
\begin{equation}
\label{adaswxfsasdawgedfddf_meta_rl}
\nabla^{\top}_{\phi} J_{\tau}(\hat{\pi}_{\theta^{\prime}_{\tau}})=\frac{1}{1-\gamma} {\mathbb{E}}_{s \sim \nu^{\hat{\pi}_{\theta^{\prime}_{\tau}}}_{\tau}, a \sim \hat{\pi}_{\theta^{\prime}_{\tau}}} \left[ A_{\tau}^{\hat{\pi}_{\theta^{\prime}_{\tau}}}(s,a) ( \frac{1}{\lambda }\nabla^{\top}_{\phi} Q^{\hat{\pi}_\phi}_{\tau}(s,a)+  \boldsymbol{1}^{\top}(s,a)) \right].
\end{equation}
Here, $\boldsymbol{1}(s^{\prime},a^{\prime})$ denote the column vector where the element is $1$ if $s=s^{\prime}$ and $a=a^{\prime}$, otherwise is $0$, for each pair $(s,a) \in \mathcal{S} \times \mathcal{A}$.

\subsection{Convergence guarantee when $D_{\tau}=D_{\tau,2}$}
\label{wfghxsghfcxzvhxzchvdh_meta_rl}
\subsubsection{Auxiliary lemmas}
\begin{lemma}
\label{werbbzzvvzfs_meta_rl}
Suppose that Assumption \ref{awhevdbsfvb_meta_rl} holds. 
Let $\hat{\pi}_{\theta^{\prime}_{\tau}}=\mathcal{A} l g(\hat{\pi}_\phi, \lambda, \tau)$ where $D_{\tau}=D_{\tau,2}$, we have 
$$\|\nabla_{\phi} J_{\tau}(\hat{\pi}_{\theta^{\prime}_{\tau}})\|\leq\frac{{\max}_{s,a} |A_{\tau}^{\hat{\pi}_{\theta^{\prime}_{\tau}}}(s,a)|}{1-\gamma} (\frac{{\max}_{s,a} |A_{\tau}^{\hat{\pi}_{\theta^{\prime}_{\tau}}}(s,a)|}{\lambda}\frac{\gamma}{1-\gamma}+1).$$
\end{lemma}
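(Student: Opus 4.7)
The plan is to start from the closed-form expression for the hypergradient under $D_\tau = D_{\tau,2}$ that was already derived in Appendix \ref{gradient_d2}, namely equation \eqref{adaswxfsasdawgedfddf_meta_rl}:
\[
\nabla^{\top}_{\phi} J_{\tau}(\hat{\pi}_{\theta^{\prime}_{\tau}})=\frac{1}{1-\gamma} {\mathbb{E}}_{s \sim \nu^{\hat{\pi}_{\theta^{\prime}_{\tau}}}_{\tau},\, a \sim \hat{\pi}_{\theta^{\prime}_{\tau}}} \!\left[ A_{\tau}^{\hat{\pi}_{\theta^{\prime}_{\tau}}}(s,a) \Big( \tfrac{1}{\lambda }\nabla^{\top}_{\phi} Q^{\hat{\pi}_\phi}_{\tau}(s,a)+  \boldsymbol{1}^{\top}(s,a)\Big) \right].
\]
The simplification relative to the $D_{\tau,1}$ case is that (i) the rank-one correction $M(s)^{-1}\boldsymbol 1\boldsymbol 1^\top M(s)^{-1}/(\boldsymbol 1^\top M(s)^{-1}\boldsymbol 1)$ annihilates once multiplied against the advantage vector (since $\sum_a \hat{\pi}_{\theta'_\tau}(a|s)A^{\hat\pi_{\theta'_\tau}}_\tau(s,a)=0$), and (ii) the diagonal of $M(s)^{-1}$ is $\hat{\pi}_{\theta'_\tau}(a|s)/\lambda$ rather than $\hat{\pi}_{\theta'_\tau}(a|s)^2/(\lambda \hat{\pi}_\phi(a|s))$. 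This is why no $\lambda-\max|A|$ factor and no extra $\|\hat\pi_\phi(\cdot|s)\|$ term appears in the bound.

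From this identity I will take norms, bring them inside the expectation via the triangle inequality, and factor out $\max_{s,a}|A_{\tau}^{\hat{\pi}_{\theta^{\prime}_{\tau}}}(s,a)|$. Since $\nu^{\hat\pi_{\theta'_\tau}}_\tau$ is a probability measure and $\hat\pi_{\theta'_\tau}(\cdot|s)$ sums to one, the expectation bound reduces to the supremum over $(s,a)$ of $\bigl\|\tfrac1\lambda \nabla_\phi Q^{\hat\pi_\phi}_\tau(s,a)+\boldsymbol 1(s,a)\bigr\|$, which is at most $\tfrac1\lambda \|\nabla_\phi Q^{\hat\pi_\phi}_\tau(s,a)\| + \|\boldsymbol 1(s,a)\|$.

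The two remaining ingredients are standard: $\|\boldsymbol 1(s,a)\|=1$, and the bound $\|\nabla_{\phi} Q^{\hat{\pi}_\phi}_{\tau}(s,a)\| \leq \tfrac{\gamma}{1-\gamma}\max_{s,a}|A^{\hat{\pi}_\phi}_\tau(s,a)|$ established in \eqref{wejwbejgkbsdkjfbxxx_meta_rl} (using the closed form of $\nabla_\phi Q$ in Appendix \ref{computationofgradientq_meta_rl} together with $\sum_{s',a'}\sigma^{(s,a)}_{\tau,\hat\pi_\phi}(s')\hat\pi_\phi(a'|s')=1$). Substituting gives
\[
\|\nabla_{\phi} J_{\tau}(\hat{\pi}_{\theta^{\prime}_{\tau}})\|\le \frac{\max_{s,a} |A_{\tau}^{\hat{\pi}_{\theta^{\prime}_{\tau}}}(s,a)|}{1-\gamma}\!\left(\frac{\max_{s,a}|A_\tau^{\hat\pi_\phi}(s,a)|}{\lambda}\cdot\frac{\gamma}{1-\gamma}+1\right),
\]
which is exactly the claimed bound (once one reads the two advantages uniformly, as the proof of Lemma \ref{werbbzzvvzzvfffs_meta_rl} does).

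I do not expect a serious obstacle here: the hard work — deriving the closed-form hypergradient and the vanishing of the rank-one correction — has already been carried out in Appendix \ref{gradient_d2}, and Assumption \ref{awhevdbsfvb_meta_rl} is used only implicitly to ensure the lower-level optimizer is interior-point so that the implicit differentiation formula applies. The only care point is to notice the cancellation $\sum_a \hat\pi_{\theta'_\tau}(a|s)A^{\hat\pi_{\theta'_\tau}}_\tau(s,a)=0$, which both removes the rank-one correction and the $\hat\pi_\phi(\cdot|s)^\top$ subtracted vector that was responsible for the extra ``$+2$'' in the $D_{\tau,1}$ bound.
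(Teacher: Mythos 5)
Your proposal is correct and follows essentially the same route as the paper's proof: both start from the closed-form hypergradient identity in \eqref{adaswxfsasdawgedfddf_meta_rl}, apply the triangle inequality together with $\|\boldsymbol{1}(s,a)\|=1$ and the bound $\|\nabla_{\phi} Q^{\hat{\pi}_\phi}_{\tau}(s,a)\| \leq \frac{\gamma}{1-\gamma}\max_{s,a}|A_\tau^{\hat{\pi}_\phi}(s,a)|$ from \eqref{wejwbejgkbsdkjfbxxx_meta_rl}, and factor out the maximal advantage. Your remark that the two advantage functions ($A^{\hat{\pi}_{\theta'_\tau}}$ versus $A^{\hat{\pi}_\phi}$) are read uniformly in the final bound matches exactly how the paper's own proof treats them.
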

\begin{proof}
As shown in (\ref{adaswxfsasdawgedfddf_meta_rl})
$$\nabla^{\top}_{\phi} J_{\tau}(\hat{\pi}_{\theta^{\prime}_{\tau}})=\frac{1}{1-\gamma} {\mathbb{E}}_{s \sim \nu^{\hat{\pi}_{\theta^{\prime}_{\tau}}}_{\tau}, a \sim \hat{\pi}_{\theta^{\prime}_{\tau}}} \left[ A_{\tau}^{\hat{\pi}_{\theta^{\prime}_{\tau}}}(s,a) ( \frac{1}{\lambda }\nabla^{\top}_{\phi} Q^{\hat{\pi}_\phi}_{\tau}(s,a)+  \boldsymbol{1}^{\top}(s,a)) \right].$$
As shown in proof of Lemma \ref{werbbzzvvzzvfffs_meta_rl} in (\ref{wejwbejgkbsdkjfbxxx_meta_rl}),
$$
\|\nabla_{\phi} Q^{\hat{\pi}_\phi}_{\tau}(s,a)\| \leq \frac{\gamma}{1-\gamma}\max_{a,s}|A_\tau^{\hat{\pi}_\phi}\left(s, a \right)|, $$
we have that
$$\|\nabla_{\phi} J_{\tau}(\hat{\pi}_{\theta^{\prime}_{\tau}})\|\leq\frac{{\max}_{s,a} |A_{\tau}^{\hat{\pi}_{\theta^{\prime}_{\tau}}}(s,a)|}{1-\gamma} (\frac{{\max}_{s,a} |A_{\tau}^{\hat{\pi}_{\theta^{\prime}_{\tau}}}(s,a)|}{\lambda}\frac{\gamma}{1-\gamma}+1).$$
\end{proof}

\begin{lemma}
\label{rtcxvasdf_meta_rl}
Suppose that Assumption \ref{awhevdbsfvb_meta_rl} holds. 
Let $\hat{\pi}_{\theta^{\prime}_{\tau}}=\mathcal{A} l g(\hat{\pi}_\phi, \lambda, \tau)$ where $D_{\tau}=D_{\tau,2}$, for any $s \in \mathcal{S},$ we have 
$$\sum_{a\in \mathcal{A}}\|\nabla_\phi \hat{\pi}_{\theta^{\prime}_{\tau}}(a|s)\| \leq \frac{2\gamma}{\lambda(1-\gamma)}\max_{a,s}|A_\tau^{\hat{\pi}_\phi}\left(s, a \right)| +4$$
and 
$$
\begin{aligned}
\sum_{a \in \mathcal{A}}\|\nabla^2_\phi \hat{\pi}_{\theta^{\prime}_{\tau}}(a|s)\| \leq  (\frac{2\gamma}{\lambda(1-\gamma)}\max_{a,s}|A_\tau^{\hat{\pi}_\phi}\left(s, a \right)| +4)^2  
+\frac{16r_{max} }{\lambda(1-\gamma)^3}+2.
\end{aligned}
$$

\end{lemma}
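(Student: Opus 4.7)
The plan is to mirror the structure of Lemma \ref{wrxzhbncbvhxbzxc_meta_rl}, but exploit the simpler algebra produced by $d_2$ (whose Hessian $\nabla^2_{\hat{\pi}(\cdot|s)} d_2^2 = \operatorname{diag}(1/\hat{\pi}_{\theta^\prime_\tau}(\cdot|s))$ gives a clean $M(s)^{-1}$). Specifically, combining \eqref{eszxzxczxbcvbgg_meta_rl}, \eqref{awqwefdxasdasdasdzfg_meta_rl}, and \eqref{awqaxhcvzxhchaj_meta_rl} with the softmax identity \eqref{easkuerwhgjwj_meta_rl}, I can show that the $a$-th row of $\nabla_\phi \hat{\pi}_{\theta^\prime_\tau}(\cdot|s)$ collapses to
\[
\nabla_\phi^{\top}\hat{\pi}_{\theta^\prime_\tau}(a|s) \;=\; \frac{\hat{\pi}_{\theta^\prime_\tau}(a|s)}{\lambda}\bigl(v_a^{\top} - \mathbb{E}_{b \sim \hat{\pi}_{\theta^\prime_\tau}(\cdot|s)}[v_b^{\top}]\bigr),
\]
where $v_a \triangleq \nabla_\phi Q^{\hat{\pi}_\phi}_{\tau}(s,a) + \lambda(\boldsymbol{1}(s,a) - \hat{\pi}_\phi(\cdot|s))$. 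The projection piece $M(s)^{-1}\boldsymbol{1}\boldsymbol{1}^{\top} M(s)^{-1}/(\boldsymbol{1}^{\top}M(s)^{-1}\boldsymbol{1})$ has $(a,b)$-entry $\hat{\pi}_{\theta^\prime_\tau}(a|s)\hat{\pi}_{\theta^\prime_\tau}(b|s)$, which is what produces the $\hat{\pi}_{\theta^\prime_\tau}$-centered expectation inside the parentheses.

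For the first inequality I will then bound $\|v_a\| \leq \gamma A_{\max}/(1-\gamma) + 2\lambda$ by invoking Assumption \ref{assdsssdsss_meta_rl}, the standard bound $\|\nabla_\phi Q^{\hat{\pi}_\phi}_\tau(s,a)\| \leq \gamma A_{\max}/(1-\gamma)$ used in \eqref{wejwbejgkbsdkjfbxxx_meta_rl}, and $\|\boldsymbol{1}(s,a) - \hat{\pi}_\phi(\cdot|s)\| \leq 2$. Triangle inequality on the centered form and $\sum_a \hat{\pi}_{\theta^\prime_\tau}(a|s)=1$ give $\sum_a\|\nabla_\phi \hat{\pi}_{\theta^\prime_\tau}(a|s)\| \leq (2/\lambda)\max_a\|v_a\|$, matching the claimed $\tfrac{2\gamma}{\lambda(1-\gamma)}\max_{a,s}|A^{\hat{\pi}_\phi}_\tau(s,a)| + 4$. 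For the Hessian bound, I differentiate the displayed identity once more via the product rule, producing three groups of terms: (i) a $\nabla_\phi \hat{\pi}_{\theta^\prime_\tau}(a|s)\otimes(v_a - \mathbb{E}[v_b])$ piece whose summed norm is upper bounded by $(\sum_a\|\nabla_\phi \hat{\pi}_{\theta^\prime_\tau}(a|s)\|)\cdot(2\max_a\|v_a\|/\lambda)$, which telescopes into the square $\bigl(\tfrac{2\gamma}{\lambda(1-\gamma)}\max_{a,s}|A^{\hat{\pi}_\phi}_\tau(s,a)|+4\bigr)^2$; (ii) a piece containing $\nabla^2_\phi Q^{\hat{\pi}_\phi}_\tau(s,a)$, for which I invoke Lemma~D.4 of \cite{agarwal2021theory} giving $\|\nabla^2_\phi Q^{\hat{\pi}_\phi}_\tau(s,a)\| \leq 8 r_{\max}/(1-\gamma)^3$; after multiplication by the prefactor $\hat{\pi}_{\theta^\prime_\tau}(a|s)/\lambda$ and a factor of $2$ from the centered form, this sums to $16 r_{\max}/(\lambda(1-\gamma)^3)$; and (iii) a piece involving the softmax Hessian $\nabla_\phi\hat{\pi}_\phi(\cdot|s) = \operatorname{diag}(\hat{\pi}_\phi(\cdot|s)) - \hat{\pi}_\phi(\cdot|s)\hat{\pi}_\phi(\cdot|s)^{\top}$, of operator norm $\leq 1$, contributing the trailing $+2$ after the $\lambda$ cancels with the $\lambda$ multiplying $\hat{\pi}_\phi$ in $v_a$.

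The main obstacle is the bookkeeping in the second differentiation: the quantity $\mathbb{E}_{b\sim\hat{\pi}_{\theta^\prime_\tau}}[v_b]$ depends on $\phi$ both through the sampling distribution $\hat{\pi}_{\theta^\prime_\tau}$ and through $v_b$, so the product rule produces multiple cross-correlation terms that must be reassembled to show that the ``squared'' piece lines up exactly with the square of the first bound rather than picking up extra multiplicative constants. The other technical subtlety is ensuring that the softmax-Hessian contribution (iii) does not require invoking Assumption \ref{awhevdbsfvb_meta_rl} through a $1/\epsilon$-type dependence: because $\sum_a \hat{\pi}_{\theta^\prime_\tau}(a|s) = 1$ and the $\lambda$ inside $v_a$ cancels the $1/\lambda$ prefactor, one obtains the clean constant $2$ with no hidden dependence on $A_{\max}$ or $\epsilon$. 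Once these cancellations are recognized, the rest reduces to routine triangle-inequality applications of already-established gradient and smoothness bounds on $Q^{\hat{\pi}_\phi}_\tau$ and the softmax map.
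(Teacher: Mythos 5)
Your proposal follows essentially the same route as the paper's proof: the same centered expression for $\nabla_\phi \hat{\pi}_{\theta^{\prime}_{\tau}}(a|s)$ obtained from the explicit diagonal $M(s)^{-1}$ for $d_2$, the same bounds $\|\nabla_\phi Q^{\hat{\pi}_\phi}_{\tau}(s,a)\| \leq \tfrac{\gamma}{1-\gamma}\max_{a,s}|A_\tau^{\hat{\pi}_\phi}(s,a)|$ and $\|\boldsymbol{1}(s,a)-\hat{\pi}_\phi(\cdot|s)\|\leq 2$, the same appeal to Lemma D.4 of \cite{agarwal2021theory} for $\|\nabla^2_\phi Q^{\hat{\pi}_\phi}_{\tau}\|$, and the same product-rule split of the Hessian into the squared, $\nabla^2_\phi Q$, and softmax-Hessian pieces. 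The cross-correlation term you flag as the main obstacle (the $\phi$-dependence of the $\hat{\pi}_{\theta^{\prime}_{\tau}}$-weighted average) is handled in the paper by exactly the factor-of-two triangle-inequality bound you describe, so your plan matches the published argument.
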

\begin{proof}
As shown in \ref{eszxzxczxbcvbgg_meta_rl}, we have $\nabla^{\top}_\phi \hat{\pi}_{\theta^{\prime}_{\tau}}(\cdot|s)=$
$$
\left(M(s)^{-1}-\frac{M(s)^{-1} \boldsymbol{1} \ \boldsymbol{1}^{\top} M(s)^{-1}}{\boldsymbol{1}^{\top} M(s)^{-1} \boldsymbol{1}}\right) 
\left(\nabla^{\top}_{\phi} Q^{\hat{\pi}_\phi}_{\tau}(s,\cdot)- \lambda \nabla^{\top}_{\phi}\nabla_{\hat{\pi}(\cdot|s)} d_2^2(\hat{\pi}_\phi,\hat{\pi},s)\right)|_{\hat{\pi}=\hat{\pi}_{\theta^{\prime}_{\tau}}},$$
where the computations of $M(s)^{-1}$ and $\nabla^{\top}_{\phi}\nabla_{\hat{\pi}(\cdot|s)} d_2^2(\hat{\pi}_\phi,\hat{\pi},s)$ are shown in (\ref{awqwefdxasdasdasdzfg_meta_rl})  (\ref{awqaxhcvzxhchaj_meta_rl}) and (\ref{easkuerwhgjwj_meta_rl}),
then
$$
\begin{aligned}
&\nabla_\phi \hat{\pi}_{\theta^{\prime}_{\tau}}(a|s)={\hat{\pi}_{\theta^{\prime}_{\tau}} (a|s)} (\frac{1}{\lambda }\nabla_{\phi} Q^{\hat{\pi}_\phi}_{\tau}(s,a)+  \boldsymbol{1}(s,a) - \hat{\pi}_\phi(\cdot|s)) \\
& - \hat{\pi}_{\theta^{\prime}_{\tau}} (a|s) \sum_{a^{\prime} \in \mathcal{A} } {\hat{\pi}_{\theta^{\prime}_{\tau}} (a^{\prime}|s)} (\frac{1}{\lambda }\nabla_{\phi} Q^{\hat{\pi}_\phi}_{\tau}(s,a^{\prime})+  \boldsymbol{1}(s,a^{\prime})-  \hat{\pi}_\phi(\cdot|s)).
\end{aligned}
$$
Therefore, 
$$
\begin{aligned}
\|\nabla_\phi \hat{\pi}_{\theta^{\prime}_{\tau}}(a|s)\| \leq 
\left\|{\hat{\pi}_{\theta^{\prime}_{\tau}} (a|s)} (\frac{1}{\lambda }\nabla_{\phi} Q^{\hat{\pi}_\phi}_{\tau}(s,a)+  \boldsymbol{1}(s,a)-  \hat{\pi}_\phi(\cdot|s))\right\| \\
+  \left\| \hat{\pi}_{\theta^{\prime}_{\tau}} (a|s) \sum_{a^{\prime} \in \mathcal{A} } {\hat{\pi}_{\theta^{\prime}_{\tau}} (a^{\prime}|s)} (\frac{1}{\lambda }\nabla_{\phi} Q^{\hat{\pi}_\phi}_{\tau}(s,a^{\prime})+  \boldsymbol{1}(s,a^{\prime})-  \hat{\pi}_\phi(\cdot|s))\right\|.
\end{aligned}
$$ 
Then,
$$
\begin{aligned}
\sum_{a\in \mathcal{A}}\|\nabla_\phi \hat{\pi}_{\theta^{\prime}_{\tau}}(a|s)\| \leq 
\sum_{a \in \mathcal{A}} \left\|{\hat{\pi}_{\theta^{\prime}_{\tau}} (a|s)} (\frac{1}{\lambda }\nabla_{\phi} Q^{\hat{\pi}_\phi}_{\tau}(s,a)+  \boldsymbol{1}(s,a)-  \hat{\pi}_\phi(\cdot|s))\right\| \\
+ \sum_{a \in \mathcal{A}} \left\| \hat{\pi}_{\theta^{\prime}_{\tau}} (a|s) \sum_{a^{\prime} \in \mathcal{A} } {\hat{\pi}_{\theta^{\prime}_{\tau}} (a^{\prime}|s)} (\frac{1}{\lambda }\nabla_{\phi} Q^{\hat{\pi}_\phi}_{\tau}(s,a^{\prime})+  \boldsymbol{1}(s,a^{\prime})-  \hat{\pi}_\phi(\cdot|s))\right\|.
\end{aligned}
$$ 
From (\ref{wejwbejgkbsdkjfbxxx_meta_rl}), we have $$\|\nabla_{\phi} Q^{\hat{\pi}_\phi}_{\tau}(s,a)\| \leq \frac{\gamma}{1-\gamma}\max_{a,s}|A_\tau^{\hat{\pi}_\phi}\left(s, a \right)|. $$
Then,
$$
\begin{aligned}
\sum_{a\in \mathcal{A}}\|\nabla_\phi \hat{\pi}_{\theta^{\prime}_{\tau}}(a|s)\| \leq & \sum_{a \in \mathcal{A}}{\hat{\pi}_{\theta^{\prime}_{\tau}} (a|s)}  \left\|\frac{1}{\lambda }\nabla_{\phi} Q^{\hat{\pi}_\phi}_{\tau}(s,a)+  \boldsymbol{1}(s,a)-  \hat{\pi}_\phi(\cdot|s)\right\| \\
& + \sum_{a \in \mathcal{A}}  \hat{\pi}_{\theta^{\prime}_{\tau}} (a|s) \left\| \sum_{a^{\prime} \in \mathcal{A} } {\hat{\pi}_{\theta^{\prime}_{\tau}} (a^{\prime}|s)} (\frac{1}{\lambda }\nabla_{\phi} Q^{\hat{\pi}_\phi}_{\tau}(s,a^{\prime})+  \boldsymbol{1}(s,a^{\prime})-  \hat{\pi}_\phi(\cdot|s))\right\| \\
\leq & \sum_{a \in \mathcal{A}}{\hat{\pi}_{\theta^{\prime}_{\tau}} (a|s)} ( \frac{\gamma}{\lambda(1-\gamma)}\max_{a,s}|A_\tau^{\hat{\pi}_\phi}\left(s, a \right)| +2) \\
& + \sum_{a \in \mathcal{A}}  \hat{\pi}_{\theta^{\prime}_{\tau}} (a|s) \sum_{a^{\prime} \in \mathcal{A} } {\hat{\pi}_{\theta^{\prime}_{\tau}} (a^{\prime}|s)} (  \frac{\gamma}{\lambda(1-\gamma)}\max_{a,s}|A_\tau^{\hat{\pi}_\phi}\left(s, a \right)| +2) \\
\leq & \frac{2\gamma}{\lambda(1-\gamma)}\max_{a,s}|A_\tau^{\hat{\pi}_\phi}\left(s, a \right)| +4,
\end{aligned}
$$ 
And 
\begin{equation}
\label{qazazazaz_meta_rl} 
\|\nabla_\phi \hat{\pi}_{\theta^{\prime}_{\tau}}(a|s)\| \leq \hat{\pi}_{\theta^{\prime}_{\tau}}(a|s) (\frac{2\gamma}{\lambda(1-\gamma)}\max_{a,s}|A_\tau^{\hat{\pi}_\phi}\left(s, a \right)| +4)
\end{equation}
Moreover, since
$$
\begin{aligned}
&\nabla_\phi \hat{\pi}_{\theta^{\prime}_{\tau}}(a|s)={\hat{\pi}_{\theta^{\prime}_{\tau}} (a|s)} (\frac{1}{\lambda }\nabla_{\phi} Q^{\hat{\pi}_\phi}_{\tau}(s,a)+  \boldsymbol{1}(s,a) - \hat{\pi}_\phi(\cdot|s)) \\
& - \hat{\pi}_{\theta^{\prime}_{\tau}} (a|s) \sum_{a^{\prime} \in \mathcal{A} } {\hat{\pi}_{\theta^{\prime}_{\tau}} (a^{\prime}|s)} (\frac{1}{\lambda }\nabla_{\phi} Q^{\hat{\pi}_\phi}_{\tau}(s,a^{\prime})+  \boldsymbol{1}(s,a^{\prime})-  \hat{\pi}_\phi(\cdot|s)).
\end{aligned}
$$
we have
$$
\begin{aligned}
\nabla^2_\phi \hat{\pi}_{\theta^{\prime}_{\tau}}(a|s)=&{\nabla_\phi\hat{\pi}_{\theta^{\prime}_{\tau}} (a|s)} (\frac{1}{\lambda }\nabla_{\phi} Q^{\hat{\pi}_\phi}_{\tau}(s,a)+  \boldsymbol{1}(s,a) - \hat{\pi}_\phi(\cdot|s))  \\
&+{\hat{\pi}_{\theta^{\prime}_{\tau}} (a|s)} (\frac{1}{\lambda }\nabla^2_{\phi} Q^{\hat{\pi}_\phi}_{\tau}(s,a)+   - \nabla_\phi \hat{\pi}_\phi(\cdot|s)) \\
& - \nabla_\phi \left(\hat{\pi}_{\theta^{\prime}_{\tau}} (a|s) \sum_{a^{\prime} \in \mathcal{A} } {\hat{\pi}_{\theta^{\prime}_{\tau}} (a^{\prime}|s)} (\frac{1}{\lambda }\nabla_{\phi} Q^{\hat{\pi}_\phi}_{\tau}(s,a^{\prime})+  \boldsymbol{1}(s,a^{\prime})-  \hat{\pi}_\phi(\cdot|s))\right).
\end{aligned}
$$
Then,
$$
\begin{aligned}
\|\nabla^2_\phi \hat{\pi}_{\theta^{\prime}_{\tau}}(a|s)\| \leq & 2 \|{\nabla_\phi\hat{\pi}_{\theta^{\prime}_{\tau}} (a|s)}\|\|\frac{1}{\lambda }\nabla_{\phi} Q^{\hat{\pi}_\phi}_{\tau}(s,a)+  \boldsymbol{1}(s,a) - \hat{\pi}_\phi(\cdot|s)\|  \\
&+2 {\hat{\pi}_{\theta^{\prime}_{\tau}} (a|s)} \|\frac{1}{\lambda }\nabla^2_{\phi} Q^{\hat{\pi}_\phi}_{\tau}(s,a) - \nabla_\phi \hat{\pi}_\phi(\cdot|s)\|.
\end{aligned}
$$
From (\ref{qazazazaz_meta_rl}),
$$\|\nabla_\phi \hat{\pi}_{\theta^{\prime}_{\tau}}(a|s)\| \leq \hat{\pi}_{\theta^{\prime}_{\tau}}(a|s) (\frac{2\gamma}{\lambda(1-\gamma)}\max_{a,s}|A_\tau^{\hat{\pi}_\phi}\left(s, a \right)| +4).$$
From (\ref{wejwbejgkbsdkjfbxxx_meta_rl})
$$\|\frac{1}{\lambda }\nabla_{\phi} Q^{\hat{\pi}_\phi}_{\tau}(s,a)+  \boldsymbol{1}(s,a) - \hat{\pi}_\phi(\cdot|s)\| \leq \frac{\gamma}{\lambda(1-\gamma)}\max_{a,s}|A_\tau^{\hat{\pi}_\phi}\left(s, a \right)| +2$$
From Lemma D.4 in \cite{agarwal2021theory}, we have
$$\|\nabla^{2}_{\phi} Q^{\hat{\pi}_\phi}_{\tau}(s,a)\| \leq \frac{8r_{max} }{(1-\gamma)^3},$$ then
$$\|\frac{1}{\lambda }\nabla^2_{\phi} Q^{\hat{\pi}_\phi}_{\tau}(s,a) - \nabla_\phi \hat{\pi}_\phi(\cdot|s)\| \leq \frac{8r_{max} }{\lambda(1-\gamma)^3}+1.$$
Therefore, 
$$
\begin{aligned}
\|\nabla^2_\phi \hat{\pi}_{\theta^{\prime}_{\tau}}(a|s)\| \leq \hat{\pi}_{\theta^{\prime}_{\tau}}(a|s) (\frac{2\gamma}{\lambda(1-\gamma)}\max_{a,s}|A_\tau^{\hat{\pi}_\phi}\left(s, a \right)| +4)^2  
+2 {\hat{\pi}_{\theta^{\prime}_{\tau}} (a|s)} (\frac{8r_{max} }{\lambda(1-\gamma)^3}+1).
\end{aligned}
$$
So, 
$$
\begin{aligned}
\sum_{a \in \mathcal{A}}\|\nabla^2_\phi \hat{\pi}_{\theta^{\prime}_{\tau}}(a|s)\| \leq  (\frac{2\gamma}{\lambda(1-\gamma)}\max_{a,s}|A_\tau^{\hat{\pi}_\phi}\left(s, a \right)| +4)^2  
+\frac{16r_{max} }{\lambda(1-\gamma)^3}+2.
\end{aligned}
$$
\end{proof}

\begin{lemma}
\label{wezx_meta_rl}
Suppose that Assumptions \ref{assdsssdsss_meta_rl} and \ref{awhevdbsfvb_meta_rl} hold. Let $\hat{\pi}_{\theta^{\prime}_{\tau}}=\mathcal{A} l g(\hat{\pi}_\phi, \lambda, \tau)$ where $D_{\tau}=D_{\tau,2}$, we have 
\begin{equation}
\label{wqbxcesdfsevzxcveee_meta_rl}
\|\nabla_{\phi}^2 J_{\tau}(\hat{\pi}_{\theta^{\prime}_{\tau}})\| \leq 
\frac{r_{max}B}{(1-\gamma)^2}+\frac{2 \gamma r_{max} C^2}{(1-\gamma)^3},
\end{equation}
where $C=\frac{2\gamma}{\lambda(1-\gamma)}A_{max} +4$ and $B = (\frac{2\gamma}{\lambda(1-\gamma)}A_{max} +4)^2 + \frac{16r_{max} }{\lambda(1-\gamma)^3}+2$.
\end{lemma}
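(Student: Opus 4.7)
The plan is to mirror the proof of Lemma \ref{wqbxcvzxcveee_meta_rl} (the analogous smoothness bound for $D_{\tau,1}$), only replacing the first- and second-order policy derivative bounds with the $D_{\tau,2}$ versions already established in Lemma \ref{rtcxvasdf_meta_rl}. In other words, the key observation is that the smoothness of $J_\tau(\hat{\pi}_{\theta^{\prime}_\tau})$ as a function of $\phi$ is a standard functional of $\sum_a \|\nabla_\phi \hat{\pi}_{\theta^{\prime}_\tau}(a|s)\|$ and $\sum_a \|\nabla^2_\phi \hat{\pi}_{\theta^{\prime}_\tau}(a|s)\|$, together with the reward magnitude and the discount factor, and the only thing that changes between the $D_{\tau,1}$ and $D_{\tau,2}$ cases is the explicit form of those two derivative bounds.

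First, I would invoke Lemma \ref{rtcxvasdf_meta_rl} to obtain, for each $s\in\mathcal{S}$,
\[
\sum_{a\in\mathcal{A}} \|\nabla_\phi \hat{\pi}_{\theta^{\prime}_\tau}(a|s)\| \leq \tfrac{2\gamma}{\lambda(1-\gamma)} \max_{a,s}|A_\tau^{\hat{\pi}_\phi}(s,a)| + 4,
\]
and a matching quadratic-plus-constant bound on $\sum_a\|\nabla^2_\phi \hat{\pi}_{\theta^{\prime}_\tau}(a|s)\|$. Then I would apply Assumption \ref{assdsssdsss_meta_rl} to replace $\max_{a,s}|A_\tau^{\hat{\pi}_\phi}(s,a)|$ by $A_{\max}$, producing the constants
\[
C = \tfrac{2\gamma}{\lambda(1-\gamma)}A_{\max} + 4, \qquad B = \bigl(\tfrac{2\gamma}{\lambda(1-\gamma)}A_{\max} + 4\bigr)^2 + \tfrac{16 r_{\max}}{\lambda(1-\gamma)^3} + 2,
\]
which are precisely the constants in the lemma statement.

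Second, I would borrow the generic Hessian-of-$J_\tau$ decomposition used in Lemma D.2 of \cite{agarwal2021theory} (the same tool the $D_{\tau,1}$ proof cites). That lemma expresses $\nabla_\phi^2 J_\tau$ as a sum of two types of terms: (i) a term proportional to the second derivatives of the policy weighted by rewards, whose norm is bounded by $\tfrac{r_{\max}}{(1-\gamma)^2}\sum_a\|\nabla^2_\phi \hat{\pi}_{\theta^{\prime}_\tau}(a|s)\|$ averaged over states, and (ii) a term quadratic in the first derivatives of the policy weighted by $Q$-values, whose norm is bounded by $\tfrac{2\gamma r_{\max}}{(1-\gamma)^3}\bigl(\sum_a\|\nabla_\phi \hat{\pi}_{\theta^{\prime}_\tau}(a|s)\|\bigr)^2$. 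Substituting the bounds from the first step gives the advertised
\[
\|\nabla_\phi^2 J_\tau(\hat{\pi}_{\theta^{\prime}_\tau})\| \leq \tfrac{r_{\max}B}{(1-\gamma)^2} + \tfrac{2\gamma r_{\max} C^2}{(1-\gamma)^3}.
\]

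The only nontrivial obstacle is matching conventions so the Lemma D.2 machinery actually applies here: our $\hat{\pi}_{\theta^{\prime}_\tau}$ is the \emph{adapted} policy as a function of the meta-parameter $\phi$ through the implicit map $\mathcal{A}lg$, not a softmax directly parameterized by $\phi$. However, once Lemma \ref{rtcxvasdf_meta_rl} hands us Lipschitz-type bounds on $\nabla_\phi \hat{\pi}_{\theta^{\prime}_\tau}$ and $\nabla_\phi^2 \hat{\pi}_{\theta^{\prime}_\tau}$, the derivation of the smoothness of $J_\tau$ is a purely mechanical computation using the policy-gradient expression from Lemma \ref{Auxiliary_lemma1_meta_rl} and the fact that $\|\nabla_\phi Q^{\hat{\pi}_\phi}_\tau(s,a)\| \leq \tfrac{\gamma}{1-\gamma}\max_{a,s}|A_\tau^{\hat{\pi}_\phi}(s,a)|$ (shown in \eqref{wejwbejgkbsdkjfbxxx_meta_rl}); no structural property of the softmax/$D_{\tau,1}$ specialization was used in that step, so the argument transports verbatim.
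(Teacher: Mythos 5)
Your proposal matches the paper's proof: the paper establishes this lemma by citing Lemma \ref{rtcxvasdf_meta_rl} for the bounds on $\sum_{a}\|\nabla_\phi \hat{\pi}_{\theta^{\prime}_{\tau}}(a|s)\|$ and $\sum_{a}\|\nabla^2_\phi \hat{\pi}_{\theta^{\prime}_{\tau}}(a|s)\|$ and then borrowing Lemma D.2 of \cite{agarwal2021theory} to convert them into the stated smoothness constant, exactly as you describe (mirroring the $D_{\tau,1}$ case in Lemma \ref{wqbxcvzxcveee_meta_rl}). Your identification of the constants $B$ and $C$ and the two-term Hessian decomposition is correct, so no changes are needed.
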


\begin{proof}
Similar to the proof of Lemma \ref{wqbxcvzxcveee_meta_rl} by using Lemma \ref{rtcxvasdf_meta_rl}. 
\end{proof}

\subsubsection{Convergence guarantee}
\begin{theorem}
\label{lemma_convergence_2_meta_rl}
Consider the tabular softmax policy for the discrete state-action space shown in Section \ref{rl_task_meta_rl}, and the within-task algorithm $\mathcal{A} l g$ in (\ref{dis_withintask}).
Suppose that Assumptions \ref{assdsssdsss_meta_rl} and \ref{awhevdbsfvb_meta_rl} hold. 
Let $\{\phi_t\}_{t=1}^{T}$ be the sequence generated
by Algorithm \ref{alg:framework0_meta_rl} with $D_{\tau}=D_{\tau,2}$ and the step size selected as $$\alpha= \min \left\{{\left(\frac{r_{max}B}{(1-\gamma)^2}+\frac{2 \gamma r_{max} C^2}{(1-\gamma)^3}\right)}^{-1}, \frac{1}{G\sqrt{T}}\right\}.$$
Then,
$$
\begin{aligned}
&\frac{1}{T} \sum_{t=1}^{T} \mathbb{E}_{t} \left[ \|\nabla_{\phi} \mathbb{E}_{\tau \sim \mathbb{P}(\Gamma)}[ J_{\tau}(\mathcal{A} l g(\hat{\pi}_{\phi_t}, \lambda, \tau)) ]\|^2 \right] \\ 
\leq & {\left(\frac{2r_{max}^2 B}{(1-\gamma)^3}+\frac{4 \gamma r_{max}^2 C^2}{(1-\gamma)^4}\right) }\frac{1}{T} + \left( \frac{2r_{max}}{1-\gamma}  +  \frac{r_{max}B}{(1-\gamma)^2}+\frac{2 \gamma r_{max} C^2}{(1-\gamma)^3} \right) \frac{G}{\sqrt{T}},
\end{aligned}
$$
where $$G=\frac{2A_{max}}{1-\gamma} (\frac{A_{max}}{\lambda}\frac{\gamma}{1-\gamma}+1),$$ 
$$C=\frac{2\gamma}{\lambda(1-\gamma)} A_{max} +4,$$ and $$B =  (\frac{2\gamma A_{max}}{\lambda(1-\gamma)}  +4)^2  
+\frac{16r_{max} }{\lambda(1-\gamma)^3}+2.$$
\end{theorem}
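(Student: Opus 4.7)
The plan is to mirror the proof of Theorem \ref{lemma_convergence_1_meta_rl} given in Appendix \ref{qhshfjwehfj_meta_rl}, only substituting the $D_{\tau,2}$-specific auxiliary bounds for their $D_{\tau,1}$ counterparts. The overall scheme has three steps: (i) uniformly upper bound the norm of the hypergradient $\|\nabla_\phi J_\tau(\hat{\pi}_{\theta'_\tau})\|$ for $\hat{\pi}_{\theta'_\tau} = \mathcal{A}lg(\hat{\pi}_\phi,\lambda,\tau)$; (ii) upper bound the Hessian $\|\nabla_\phi^2 J_\tau(\hat{\pi}_{\theta'_\tau})\|$, which transfers through Jensen's inequality to give a smoothness constant for the meta-objective $F(\phi) \triangleq \mathbb{E}_{\tau \sim \mathbb{P}(\Gamma)}[J_\tau(\mathcal{A}lg(\hat{\pi}_\phi,\lambda,\tau))]$; (iii) plug these bounds into the standard convergence result for stochastic gradient ascent on smooth non-convex objectives with bounded stochastic gradients from \cite{ghadimi2013stochastic}.

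First, I would invoke Lemma \ref{werbbzzvvzfs_meta_rl}, which together with Assumption \ref{assdsssdsss_meta_rl} yields $\|\nabla_\phi J_\tau(\hat{\pi}_{\theta'_\tau})\| \leq \frac{A_{max}}{1-\gamma}\big(\frac{\gamma A_{max}}{\lambda(1-\gamma)} + 1\big)$. Doubling this quantity gives the constant $G$ specified in the theorem statement; this choice conservatively upper bounds the second moment of the single-task hypergradient used as the stochastic estimator in Algorithm \ref{alg:framework0_meta_rl}, since line \ref{a1s2_meta_rl} samples $\tau \sim \mathbb{P}(\Gamma)$ i.i.d., so the per-iteration estimate is an unbiased estimator of $\nabla F(\phi_t)$ with uniformly bounded norm. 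Next, Lemma \ref{wezx_meta_rl} produces the smoothness constant $L_F \triangleq \frac{r_{max} B}{(1-\gamma)^2} + \frac{2\gamma r_{max} C^2}{(1-\gamma)^3}$ with $B$ and $C$ as in the theorem statement; by Jensen's inequality the same bound applies to $\nabla_\phi^2 F(\phi)$.

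With these two ingredients in hand, the prescribed step size $\alpha = \min\{1/L_F, 1/(G\sqrt{T})\}$ matches the Ghadimi-Lan step-size rule. Applying their non-convex SGD theorem yields
\begin{equation*}
\frac{1}{T}\sum_{t=1}^{T} \mathbb{E}\big[\|\nabla_\phi F(\phi_t)\|^2\big] \leq \frac{2 L_F\, \big(F(\phi_T) - F(\phi_0)\big)}{T} + \big(2(F(\phi_T) - F(\phi_0)) + L_F\big)\frac{G}{\sqrt{T}}.
\end{equation*}
The bound $|F(\phi_T) - F(\phi_0)| \leq \frac{r_{max}}{1-\gamma}$, which follows directly from $r_\tau \in [0, r_{max}]$ and the discount factor geometric series, is then substituted into the displayed inequality to recover the claimed rate.

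The main ``obstacle'' is essentially bookkeeping: Lemmas \ref{werbbzzvvzfs_meta_rl} and \ref{wezx_meta_rl} already absorb all the algebra specific to the reverse-KL regularizer $D_{\tau,2}$, so no analytic innovation is needed beyond the $D_{\tau,1}$ argument. The only subtle point is ensuring that the unbiasedness and second-moment bound of the single-task hypergradient estimator produced in lines 2--6 of Algorithm \ref{alg:framework0_meta_rl} are compatible with the Ghadimi-Lan framework; this holds because the hypergradient $\nabla_\phi J_\tau(\hat{\pi}_{\theta'_\tau})$ is a deterministic function of the sampled task $\tau$ (the $Q$-function evaluations in lines 3 and 5 are assumed to be exact in this theoretical analysis, matching the setting of Theorem \ref{lemma_convergence_1_meta_rl}), and the per-task bound from Lemma \ref{werbbzzvvzfs_meta_rl} transfers to its second moment.
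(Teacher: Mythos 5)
Your proposal follows exactly the paper's own route: the paper proves this theorem by citing the gradient bound of Lemma \ref{werbbzzvvzfs_meta_rl} and the smoothness bound of Lemma \ref{wezx_meta_rl}, then repeating the argument of Theorem \ref{lemma_convergence_1_meta_rl}, i.e., the Ghadimi--Lan non-convex SGD bound with $|F(\phi_T)-F(\phi_0)|\leq \frac{r_{max}}{1-\gamma}$. Your expansion of that one-line reference, including the role of the doubled constant $G$ and the i.i.d.\ task sampling making the per-iteration hypergradient an unbiased bounded estimator, is consistent with the paper's proof.
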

\begin{proof}
    Similar to the proof of Theorem \ref{lemma_convergence_1_meta_rl}, by using the gradient bound in Lemma \ref{werbbzzvvzfs_meta_rl} and the smoothness in Lemma \ref{wezx_meta_rl}.
\end{proof}

\begin{corollary}
\label{corollary2_meta_rl}
Suppose all assumptions and conditions in Theorem \ref{lemma_convergence_2_meta_rl} hold, and we set $\lambda \geq 2A_{max}$, then 
$$
\begin{aligned}
\frac{1}{T} \sum_{t=1}^{T} \mathbb{E}_{t} \left[ \|\nabla_{\phi} \mathbb{E}_{\tau \sim \mathbb{P}(\Gamma)}[ J_{\tau}(\mathcal{A} l g(\hat{\pi}_{\phi_t}, \lambda, \tau)) ]\|^2 \right] 
\leq \frac{(B+2C^2)r_{max}}{(1-\gamma)^4} (\frac{ 2r_{max}}{T} + \frac{G}{\sqrt{T}}) ,
\end{aligned}
$$
where $B \triangleq  \frac{16r_{max} }{\lambda(1-\gamma)^3}+\frac{18}{(1-\gamma)^2} $, $C\triangleq\frac{4}{1-\gamma}$, and $G\triangleq\frac{2A_{max}}{(1-\gamma)^2})$.
\end{corollary}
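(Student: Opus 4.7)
The plan is to derive the corollary by simplifying the explicit inequality of Theorem~\ref{lemma_convergence_2_meta_rl} under the extra constraint $\lambda \geq 2A_{max}$, so that the involved constants $B$, $C$, and $G$ collapse to the cleaner forms stated in the corollary. The main idea is that $\lambda \geq 2A_{max}$ forces $A_{max}/\lambda \leq 1/2$, which bounds every $\lambda$-dependent ratio by a constant independent of $A_{max}$; then I push all remaining fractions onto the common denominator $(1-\gamma)^4$.

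First, I would upper bound each of the constants from Theorem~\ref{lemma_convergence_2_meta_rl}. Using $\frac{2A_{max}}{\lambda} \leq 1$, the theorem's $G = \frac{2A_{max}}{1-\gamma}\bigl(\frac{A_{max}}{\lambda}\frac{\gamma}{1-\gamma}+1\bigr)$ satisfies $G \leq \frac{2A_{max}}{1-\gamma}\cdot\frac{2-\gamma}{2(1-\gamma)} \leq \frac{2A_{max}}{(1-\gamma)^2}$, which matches the corollary's $G$. Similarly, $C = \frac{2\gamma A_{max}}{\lambda(1-\gamma)}+4 \leq \frac{\gamma}{1-\gamma}+4 = \frac{4-3\gamma}{1-\gamma} \leq \frac{4}{1-\gamma}$, matching the corollary's $C$. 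For $B$, I would square the bound on $\frac{2\gamma A_{max}}{\lambda(1-\gamma)}+4$ to get $(\frac{4-3\gamma}{1-\gamma})^2 \leq \frac{16}{(1-\gamma)^2}$ and absorb the extra constant $2$ via $2 \leq \frac{2}{(1-\gamma)^2}$, producing $B \leq \frac{16 r_{max}}{\lambda(1-\gamma)^3}+\frac{18}{(1-\gamma)^2}$, which matches.

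Next, I would plug these into Theorem~\ref{lemma_convergence_2_meta_rl}'s bound. For the $1/T$ coefficient $\frac{2r_{max}^2 B}{(1-\gamma)^3}+\frac{4\gamma r_{max}^2 C^2}{(1-\gamma)^4}$, raising the first denominator to $(1-\gamma)^4$ (which only enlarges it) and dropping the $\gamma$ on the second yields $\frac{2 r_{max}^2 (B+2C^2)}{(1-\gamma)^4}$, giving exactly the $\frac{2r_{max}}{T}$ piece in the corollary. For the $G/\sqrt{T}$ coefficient, I would bound each of the three summands $\frac{2r_{max}}{1-\gamma}$, $\frac{r_{max}B}{(1-\gamma)^2}$, $\frac{2\gamma r_{max}C^2}{(1-\gamma)^3}$ by its value with denominator raised to $(1-\gamma)^4$, obtaining $\frac{r_{max}(2+B+2C^2)}{(1-\gamma)^4}$. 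Since the corollary's $B \geq 18 \geq 2$, the stray additive $2$ can be absorbed into $B+2C^2$ (up to inessential constants), delivering the bound $\frac{(B+2C^2)r_{max}}{(1-\gamma)^4}\cdot G/\sqrt{T}$ as stated.

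There is no genuine obstacle here—the argument is purely mechanical algebra. The only mildly subtle point I expect to be careful about is the bookkeeping needed to simultaneously (i) substitute the theorem's $B_{\text{thm}}, C_{\text{thm}}$ by their looser corollary versions and (ii) unify denominators, so that the extra $\frac{2r_{max}}{1-\gamma}$ term from the theorem is properly absorbed into the final $(B+2C^2)$ factor using $B \geq 2$. Once these substitutions are carried out term by term, the two resulting coefficients combine exactly into the product form $\frac{(B+2C^2)r_{max}}{(1-\gamma)^4}\bigl(\frac{2 r_{max}}{T}+\frac{G}{\sqrt{T}}\bigr)$ claimed in the corollary.
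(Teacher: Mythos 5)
Your proposal follows the same route as the paper's (one-line) proof: use $A_{max}/\lambda\le 1/2$ to collapse the constants of Theorem~\ref{lemma_convergence_2_meta_rl} into the corollary's $B$, $C$, $G$, and then push all denominators up to $(1-\gamma)^4$. Your bounds for $G$, $C$, $B$ and for the $1/T$ coefficient all check out. The one step that does not work as written is the final absorption: after your substitutions the $G/\sqrt{T}$ coefficient is $\frac{r_{max}(2+B+2C^2)}{(1-\gamma)^4}$, and the observation that $B\ge 2$ cannot turn $2+B+2C^2$ into $B+2C^2$ --- a positive additive term never disappears ``up to inessential constants'' when the target bound is exact. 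The inequality is nevertheless true, because you gave away the needed slack when you dropped the factor $\gamma$ from $\frac{2\gamma r_{max}C^2}{(1-\gamma)^3}$. Keeping it, with $C_{\mathrm{thm}}\le 4/(1-\gamma)$ and $\gamma(1-\gamma)\le 1/4$ one gets $\frac{2}{1-\gamma}+\frac{2\gamma C_{\mathrm{thm}}^2}{(1-\gamma)^3}\le \frac{2(1-\gamma)^5+32\gamma(1-\gamma)}{(1-\gamma)^6}\le\frac{10}{(1-\gamma)^6}\le\frac{32}{(1-\gamma)^6}=\frac{2C^2}{(1-\gamma)^4}$, which absorbs the stray $\frac{2}{1-\gamma}$ into the $2C^2$ slot and yields exactly the stated coefficient. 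With that one step repaired, the argument is complete and matches the paper's (which offers no more detail than ``simplify the inequality'').
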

\begin{proof}
Since $\lambda \geq 2A_{max}$, we have $\frac{1}{\lambda}\leq \frac{1}{2A_{max}}$. Then, simplify the inequality in Theorem \ref{lemma_convergence_1_meta_rl}.
\end{proof}

\section{Proofs of convergence when $D_{\tau}=D_{\tau,3}$}
\label{qzxhgchzfgtcxccxxcrfirafsd_meta_rl}
\begin{lemma}
\label{wevvzzvfffs_meta_rl}
Suppose that Assumptions \ref{assdsssdsss_meta_rl}, \ref{awhevdbsfvb_meta_rl}, and \ref{wdfdwdf_meta_rl} hold. Let $\hat{\pi}_{\theta^{\prime}_{\tau}}=\mathcal{A} l g(\hat{\pi}_\phi, \lambda, \tau)$ where $D_{\tau}=D_{\tau,3}$. If $\lambda > (6 L_1^2 + 2L_2)A_{max} $, then $\nabla_{\phi}  J_{\tau}(\mathcal{A} l g^{(3)} (\hat{\pi}_{\phi}, \lambda, \tau))$ exists for any $\phi$, and 
$$\|\nabla_{\phi} J_{\tau}(\hat{\pi}_{\theta^{\prime}_{\tau}})\|\leq \frac{L_1 A_{max} (\lambda + \frac{2\gamma}{1-\gamma} L_1^2 A_{max})}{(1-\gamma)(\lambda - (6 L_1^2 + 2L_2) A_{max})} .$$
\end{lemma}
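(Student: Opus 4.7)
The plan is to apply Proposition \ref{propsition2_meta_rl} to factor
$$\nabla_\phi J_\tau(\hat{\pi}_{\theta'_\tau}) = \tfrac{1}{1-\gamma}\,\nabla_\phi \theta'_\tau \cdot \mathbb{E}_{s \sim \nu^{\hat{\pi}_{\theta'_\tau}}_\tau,\, a \sim \hat{\pi}_{\theta'_\tau}(\cdot|s)}\!\left[\nabla \log \hat{\pi}_{\theta'_\tau}(a|s)\, Q^{\hat{\pi}_{\theta'_\tau}}_\tau(s,a)\right]$$
and bound each factor separately. For the expectation, since $\nabla_\theta \log \hat{\pi}_\theta(a|s) = \nabla f_\theta(s,a) - \mathbb{E}_{a' \sim \hat{\pi}_\theta}[\nabla f_\theta(s,a')]$ is mean-zero under $\hat{\pi}_\theta(\cdot|s)$, the state-only $V^{\hat{\pi}_\theta}_\tau(s)$ integrates out and $Q$ can be replaced by $A$; combining Assumption \ref{wdfdwdf_meta_rl}(ii) ($\|\nabla f_\theta\| \le L_1$) with Assumption \ref{assdsssdsss_meta_rl} ($|A| \le A_{max}$) then gives the estimate $\|\mathbb{E}[\cdots]\| \le L_1 A_{max}$.

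The harder step is bounding $\|\nabla_\phi \theta'_\tau\|$. Specializing Proposition \ref{propsition2_meta_rl} to $d_3^2(\hat{\pi}_\phi,\hat{\pi}_\theta,s) = \|\theta-\phi\|^2$ makes $\nabla^2_\theta d_3^2 = 2I$ and $\nabla_\phi \nabla_\theta d_3^2 = -2I$, so $\nabla_\phi^\top \theta'_\tau = -H^{-1} G$ with
$$H = 2I - \tfrac{1}{\lambda}\mathbb{E}\!\left[\tfrac{\nabla^2_\theta \hat{\pi}_\theta}{\hat{\pi}_\phi}Q^{\hat{\pi}_\phi}_\tau\right], \qquad G = -2I - \tfrac{1}{\lambda}\mathbb{E}\!\left[\tfrac{\nabla_\theta \hat{\pi}_\theta}{\hat{\pi}_\phi}\nabla_\phi^\top Q^{\hat{\pi}_\phi}_\tau\right],$$
both evaluated at $\theta = \theta'_\tau$. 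To control $H$ I would change measure from $\hat{\pi}_\phi$ to $\hat{\pi}_{\theta'_\tau}$, so that $\nabla^2_\theta \hat{\pi}_\theta/\hat{\pi}_\phi$ becomes $\nabla^2_\theta \hat{\pi}_{\theta'_\tau}/\hat{\pi}_{\theta'_\tau} = \nabla \log \hat{\pi}_{\theta'_\tau}(\nabla \log \hat{\pi}_{\theta'_\tau})^\top + \nabla^2 \log \hat{\pi}_{\theta'_\tau}$. The Fisher-type identity $\mathbb{E}_{a \sim \hat{\pi}_\theta}[\nabla^2 \log \hat{\pi}_\theta + \nabla \log \hat{\pi}_\theta(\nabla \log \hat{\pi}_\theta)^\top] = 0$, combined with the state-only dependence of $V^{\hat{\pi}_\phi}_\tau$, replaces $Q^{\hat{\pi}_\phi}_\tau$ by $A^{\hat{\pi}_\phi}_\tau$ inside the expectation. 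Using $\|\nabla f\|\le L_1$ and $\|\nabla^2 f\|\le L_2$ on each softmax derivative term then gives $\|\mathbb{E}[\cdots]\|\le (6L_1^2+2L_2)A_{max}$. Under the hypothesis $\lambda > (6L_1^2+2L_2)A_{max}$, $H$ is strictly positive definite, which simultaneously certifies existence of the hypergradient (the premise of Proposition \ref{propsition2_meta_rl}) and yields $\|H^{-1}\| \le \lambda/(\lambda - (6L_1^2+2L_2)A_{max})$.

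For $G$, the same change of measure reduces the $Q$-gradient term to $\mathbb{E}_{a \sim \hat{\pi}_{\theta'_\tau}}[\nabla \log \hat{\pi}_{\theta'_\tau}\, \nabla_\phi^\top Q^{\hat{\pi}_\phi}_\tau]$, which is at most $2L_1 \cdot \|\nabla_\phi Q^{\hat{\pi}_\phi}_\tau\|$; the estimate $\|\nabla_\phi Q^{\hat{\pi}_\phi}_\tau\| \le \gamma L_1 A_{max}/(1-\gamma)$ follows from the $Q$-gradient identity in Appendix \ref{computationofgradientq_meta_rl} together with the same mean-zero softmax trick. Multiplying $\|H^{-1}\|\|G\|$ by the $L_1 A_{max}/(1-\gamma)$ factor from the first step and simplifying under the hypothesis on $\lambda$ yields the stated inequality. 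The main obstacle will be the Fisher-type cancellation that reduces the $Q$-weighted Hessian to an $A$-weighted one: a naive triangle inequality using $\|Q\|\le r_{max}/(1-\gamma)$ would require $\lambda$ to scale with $r_{max}/(1-\gamma)$, much larger than the claimed $(6L_1^2+2L_2)A_{max}$ threshold. The centering by the value function to expose the advantage, together with the softmax mean-zero identity, is what makes the stated condition on $\lambda$ tight.
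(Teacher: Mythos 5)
Your proposal follows essentially the same route as the paper's proof in Appendix \ref{qzxhgchzfgtcxccxxcrfirafsd_meta_rl} (Lemma \ref{wevvzzvfffs_meta_rl}): factor the hypergradient via Proposition \ref{propsition2_meta_rl}, bound the policy-gradient factor by $L_1 A_{max}$ using the mean-zero score identity, replace the $Q$-weighted Hessian by an $A$-weighted one via $\int_{\mathcal{A}}\nabla^2_\theta\hat{\pi}_\theta(a|s)\,da=0$ to obtain the $(6L_1^2+2L_2)A_{max}$ eigenvalue bound certifying both existence and the inverse estimate, and control the numerator through $\|\nabla_\phi Q^{\hat{\pi}_\phi}_\tau\|\le \tfrac{\gamma}{1-\gamma}L_1 A_{max}$. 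The only divergence is a factor-of-$2$ in the $\nabla^2_\theta\|\theta-\phi\|^2$ terms, which the paper itself drops; your more careful version yields a slightly tighter bound that still implies the stated inequality, so the argument is sound.
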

\begin{proof}

From Proposition \ref{propsition2_meta_rl}, we have
$$
\begin{aligned}
\nabla_{\phi} J_{\tau}(\hat{\pi}_{\theta^{\prime}_{\tau}})=\frac{1}{1-\gamma} {\nabla_{\phi} \theta^{\prime}_{\tau}} \cdot \underset{\substack{s \sim \nu^{\hat{\pi}_{\theta^{\prime}_{\tau}}}_{\tau} \\ a \sim \hat{\pi}_{\theta^{\prime}_{\tau}}(\cdot|s)}}{\mathbb{E}}\left[ \frac{ \nabla_{{\theta^{\prime}_{\tau}}} \hat{\pi}_{\theta^{\prime}_{\tau}}(a|s)}{\hat{\pi}_{\theta^{\prime}_{\tau}}(a|s)}  A_{\tau}^{\hat{\pi}_{\theta^{\prime}_{\tau}}}(s,a) \right],
\end{aligned}
$$
where
$$
\begin{aligned}
&{\nabla_{\phi}^{\top} \theta^{\prime}_{\tau}}=-\underset{\substack{s \sim \nu^{\hat{\pi}_\phi}_{\tau} \\
a \sim \hat{\pi}_{\phi}(\cdot|s)}}{\mathbb{E}}\left[-\frac{\nabla^2_\theta \hat{\pi}_\theta(a|s)}{\hat{\pi}_\phi(a|s)} Q^{\hat{\pi}_\phi}_{\tau}(s,a) + \lambda \nabla^2_\theta d^2(\hat{\pi}_\phi(\cdot|s),\hat{\pi}_\theta(\cdot|s)) \right]^{-1} \\
&\underset{\substack{s \sim \nu^{\hat{\pi}_\phi}_{\tau} \\
a \sim \hat{\pi}_{\phi}(\cdot|s)}}{\mathbb{E}}\left[-\frac{\nabla_\theta \hat{\pi}_\theta(a|s)} {\hat{\pi}_\phi(a|s)} \nabla^{\top}_\phi  Q^{\hat{\pi}_\phi}_{\tau}(s,a) + \lambda  \nabla^{\top}_\phi \nabla_\theta 
d^2(\hat{\pi}_\phi(\cdot|s),\hat{\pi}_\theta(\cdot|s)) \right]|_{\theta=\theta_\tau^{\prime}}.  
\end{aligned}
$$
When $D_{\tau}=D_{\tau,3}$, and the policy with function approximation is defined by $\hat{\pi}_{\theta}(a|s) \triangleq \frac{\exp( f_{\theta}(s, a))}{\int_{\mathcal{A}}\exp( f_{\theta}(s, a^{\prime}))da^{\prime}}$, $\forall(s, a) \in \mathcal{S} \times \mathcal{A}$, from Lemma \ref{Auxiliary_lemma2_meta_rl},
$$
\begin{aligned}
\nabla_{\phi} J_{\tau}(\hat{\pi}_{\theta^{\prime}_{\tau}})=\frac{1}{1-\gamma} {\nabla_{\phi} \theta^{\prime}_{\tau}} \cdot \underset{\substack{s \sim \nu^{\hat{\pi}_{\theta^{\prime}_{\tau}}}_{\tau} \\ a \sim \hat{\pi}_{\theta^{\prime}_{\tau}}(\cdot|s)}}{\mathbb{E}}\left[ \nabla_{{\theta^{\prime}_{\tau}}} f_{\hat{\pi}_{\theta^{\prime}_{\tau}}}(s,a)  A_{\tau}^{\hat{\pi}_{\theta^{\prime}_{\tau}}}(s,a) \right],
\end{aligned}
$$
where ${\nabla_{\phi}^{\top} \theta^{\prime}_{\tau}}=$
$$
\begin{aligned}
\underset{\substack{s \sim \nu^{\hat{\pi}_\phi}_{\tau} \\
a \sim \hat{\pi}_{\phi}(\cdot|s)}}{\mathbb{E}}\left[-\frac{\nabla^2_{\theta_\tau^{\prime}} \hat{\pi}_{\theta_\tau^{\prime}}(a|s)}{\hat{\pi}_\phi(a|s)} Q^{\hat{\pi}_\phi}_{\tau}(s,a) + \lambda I \right]^{-1} 
\underset{\substack{s \sim \nu^{\hat{\pi}_\phi}_{\tau} \\
a \sim \hat{\pi}_{\phi}(\cdot|s)}}{\mathbb{E}}\left[\frac{\nabla_{\theta_\tau^{\prime}} \hat{\pi}_{\theta_\tau^{\prime}}(a|s)} {\hat{\pi}_\phi(a|s)} \nabla^{\top}_\phi  Q^{\hat{\pi}_\phi}_{\tau}(s,a) + \lambda  I \right] \\
= {\mathbb{E}}_{s \sim \nu^{\hat{\pi}_\phi}_{\tau}} \left[ -\int_{\mathcal{A}} \nabla^2_{\theta_\tau^{\prime}} \hat{\pi}_{\theta_\tau^{\prime}}(a|s) Q^{\hat{\pi}_\phi}_{\tau}(s,a) da + \lambda I  \right]^{-1} {\mathbb{E}}_{s \sim \nu^{\hat{\pi}_\phi}_{\tau}} \left[ \int_{\mathcal{A}} \nabla_{\theta_\tau^{\prime}} \hat{\pi}_{\theta_\tau^{\prime}}(a|s) \nabla^{\top}_\phi Q^{\hat{\pi}_\phi}_{\tau}(s,a) da + \lambda I  \right] \\
= {\mathbb{E}}_{s \sim \nu^{\hat{\pi}_\phi}_{\tau}} \left[ -\int_{\mathcal{A}} \nabla^2_{\theta_\tau^{\prime}} \hat{\pi}_{\theta_\tau^{\prime}}(a|s) A^{\hat{\pi}_\phi}_{\tau}(s,a) da + \lambda I  \right]^{-1} {\mathbb{E}}_{s \sim \nu^{\hat{\pi}_\phi}_{\tau}} \left[ \int_{\mathcal{A}} \nabla_{\theta_\tau^{\prime}} \hat{\pi}_{\theta_\tau^{\prime}}(a|s) \nabla^{\top}_\phi Q^{\hat{\pi}_\phi}_{\tau}(s,a) da + \lambda I  \right] 
\end{aligned}
$$

First, we have 
$$
\begin{aligned}
\|\nabla_{\phi} J_{\tau}(\hat{\pi}_{\theta^{\prime}_{\tau}})\|=\frac{1}{1-\gamma} \| {\nabla_{\phi} \theta^{\prime}_{\tau}}\| \| \underset{\substack{s \sim \nu^{\hat{\pi}_{\theta^{\prime}_{\tau}}}_{\tau} \\ a \sim \hat{\pi}_{\theta^{\prime}_{\tau}}(\cdot|s)}}{\mathbb{E}}\left[ \nabla_\theta f_\theta(s,a)  A_{\tau}^{\hat{\pi}_{\theta^{\prime}_{\tau}}}(s,a) \right] \|,
\end{aligned}
$$
and $$\| \underset{\substack{s \sim \nu^{\hat{\pi}_{\theta^{\prime}_{\tau}}}_{\tau} \\ a \sim \hat{\pi}_{\theta^{\prime}_{\tau}}(\cdot|s)}}{\mathbb{E}}\left[ \nabla_\theta f_\theta(s,a)  A_{\tau}^{\hat{\pi}_{\theta^{\prime}_{\tau}}}(s,a) \right] \| \leq \|\max_{a,s} \nabla_\theta f_\theta(s,a) \| \max_{a,s} |A_{\tau}^{\hat{\pi}_{\theta^{\prime}_{\tau}}}(s,a)| \leq L_1 A_{max}.$$ 
For the term ${\nabla_{\phi} \theta^{\prime}_{\tau}}$, consider $\nabla_{\theta_\tau^{\prime}} \hat{\pi}_{\theta_\tau^{\prime}}(a|s)$ and $\nabla^2_{\theta_\tau^{\prime}} \hat{\pi}_{\theta_\tau^{\prime}}(a|s)$, we have 
\begin{equation}
\label{aqqsdvasgdvchgue_meta_rl}
\nabla_\theta \hat{\pi}_\theta (a | s) = \hat{\pi}_\theta (a | s) \nabla_\theta f_\theta(s,a) - \hat{\pi}_\theta (a | s) \frac{\int_{\mathcal{A}}\nabla_\theta f_\theta (s,a^{\prime})  \exp{(f_\theta(s,a^{\prime}))} d a^{\prime}}{\int_{\mathcal{A}} \exp{(f_\theta(s,a^{\prime}))} d a^{\prime}}.
\end{equation}
Then, 
\begin{equation}
\label{awxasxcacgdvchgue_meta_rl}
\begin{aligned}
\|\nabla_\theta \hat{\pi}_\theta (a | s) \| &\leq \hat{\pi}_\theta (a | s) \|\nabla_\theta f_\theta(s,a)\| + \hat{\pi}_\theta (a | s) \left\|\frac{\int_{\mathcal{A}}\nabla_\theta f_\theta (s,a^{\prime})  \exp{(f_\theta(s,a^{\prime}))} d a^{\prime}}{\int_{\mathcal{A}} \exp{(f_\theta(s,a^{\prime}))} d a^{\prime}} \right\| \\
&\leq 2 \hat{\pi}_\theta (a | s) L_1
\end{aligned}
\end{equation}
We also have $\nabla^2_\theta \hat{\pi}_\theta (a | s) =$
$$
\begin{aligned}
& \nabla_\theta \hat{\pi}_\theta (a | s) \nabla_\theta^{\top} f_\theta(s,a) + \hat{\pi}_\theta (a | s) \nabla_\theta^2 f_\theta(s,a) - \nabla \hat{\pi}_\theta (a | s) \frac{\int_{\mathcal{A}}\nabla_\theta^{\top} f_\theta (s,a^{\prime})  \exp{(f_\theta(s,a^{\prime}))} d a^{\prime}}{\int_{\mathcal{A}} \exp{(f_\theta(s,a^{\prime}))} d a^{\prime}}\\
& - \hat{\pi}_\theta (a | s) \frac{\int_{\mathcal{A}}\nabla^2_\theta f_\theta (s,a^{\prime})  \exp{(f_\theta(s,a^{\prime}))} d a^{\prime}+\nabla_\theta f_\theta (s,a^{\prime}) \nabla^{\top}_\theta f_\theta (s,a^{\prime}) \exp{(f_\theta(s,a^{\prime}))} d a^{\prime}}{\int_{\mathcal{A}} \exp{(f_\theta(s,a^{\prime}))} d a^{\prime}}\\
& + \hat{\pi}_\theta (a | s) \frac{\int_{\mathcal{A}}\nabla_\theta f_\theta (s,a^{\prime})  \exp{(f_\theta(s,a^{\prime}))} d a^{\prime} \int_{\mathcal{A}}\nabla_\theta^{\top} f_\theta (s,a^{\prime})  \exp{(f_\theta(s,a^{\prime}))} d a^{\prime} }{(\int_{\mathcal{A}} \exp{(f_\theta(s,a^{\prime}))} d a^{\prime})^2}. 
\end{aligned}$$
Then, 
\begin{equation}
\label{qxaghfvshgvzxczzvzzz_meta_rl}
\begin{aligned}
\|\nabla^2_\theta \hat{\pi}_\theta (a | s)\| & \leq 2 \hat{\pi}_\theta (a | s) L_1^2 + \hat{\pi}_\theta (a | s) L_2 + 2 \hat{\pi}_\theta (a | s) L_1^2 + \hat{\pi}_\theta (a | s) L_2 + 2\hat{\pi}_\theta (a | s) L_1^2 \\
&= 6\hat{\pi}_\theta (a | s) L_1^2 + 2 \hat{\pi}_\theta (a | s) L_2.
\end{aligned}
\end{equation}

So,
$$\left\|{\mathbb{E}}_{s \sim \nu^{\hat{\pi}_\phi}_{\tau}} \left[ \int_{\mathcal{A}} \nabla^2_{\theta_\tau^{\prime}} \hat{\pi}_{\theta_\tau^{\prime}}(a|s) A^{\hat{\pi}_\phi}_{\tau}(s,a) da \right] \right\| \leq (6 L_1^2 + 2L_2) A_{max}.$$
Since ${\mathbb{E}}_{s \sim \nu^{\hat{\pi}_\phi}_{\tau}} \left[ \int_{\mathcal{A}} \nabla^2_{\theta_\tau^{\prime}} \hat{\pi}_{\theta_\tau^{\prime}}(a|s) A^{\hat{\pi}_\phi}_{\tau}(s,a) da\right]$ is a diagonal matrix, the above shown its largest absolute eigenvalue is smaller than $(6 L_1^2 + 2L_2) A_{max}$.
Then, the smallest eigenvalue of ${\mathbb{E}}_{s \sim \nu^{\hat{\pi}_\phi}_{\tau}} \left[ -\int_{\mathcal{A}} \nabla^2_{\theta_\tau^{\prime}} \hat{\pi}_{\theta_\tau^{\prime}}(a|s) A^{\hat{\pi}_\phi}_{\tau}(s,a) da+ \lambda I\right] $ is larger than $\lambda - (6 L_1^2 + 2L_2) A_{max}$. 
Therefore, if $\lambda > (6 L_1^2 + 2L_2)A_{max} $,
\begin{equation}
\label{qzxbcvhahdxvxcvb_meta_rl}
\left\|{\mathbb{E}}_{s \sim \nu^{\hat{\pi}_\phi}_{\tau}} \left[ -\int_{\mathcal{A}} \nabla^2_{\theta_\tau^{\prime}} \hat{\pi}_{\theta_\tau^{\prime}}(a|s) A^{\hat{\pi}_\phi}_{\tau}(s,a) da + \lambda I  \right]^{-1} \right\| \leq  \frac{1}{\lambda - (6 L_1^2 + 2L_2) A_{max}}.
\end{equation}
Moreover, if $\lambda > (6 L_1^2 + 2L_2)A_{max} $, the objective function in the optimization problem $\mathcal{A} l g(\hat{\pi}_\phi, \lambda, \tau)$ is strongly concave. Then, from \cite{SX-MZ:AAAI23}, the solution is unique and $\nabla_{\phi}  J_{\tau}(\mathcal{A} l g^{(3)} (\hat{\pi}_{\phi}, \lambda, \tau))$ exists.

From (\ref{graadient1_of_q_meta_rl}), 
$$
\begin{aligned}
\nabla_{\phi} Q^{\hat{\pi}_\phi}_{\tau}(s,a) 
=\frac{\gamma}{1-\gamma} \cdot \mathbb{E}_{\left(s^{\prime}, a^{\prime}\right) \sim \sigma_{\tau, \hat{\pi}_\phi}^{(s, a)}}\left[ { \nabla_\phi f_\phi} \left(s^{\prime},a^{\prime} \right) A_\tau^{\hat{\pi}_\phi}\left(s^{\prime}, a^{\prime}\right)\right].
\end{aligned} 
$$
Then,
$$
\begin{aligned}
\|\nabla_{\phi} Q^{\hat{\pi}_\phi}_{\tau}(s,a) \|
\leq \frac{\gamma}{1-\gamma} L_1 A_{max}.
\end{aligned} 
$$
Combine (\ref{awxasxcacgdvchgue_meta_rl}), we have 
$$\left\|{\mathbb{E}}_{s \sim \nu^{\hat{\pi}_\phi}_{\tau}} \left[ \int_{\mathcal{A}} \nabla_{\theta_\tau^{\prime}} \hat{\pi}_{\theta_\tau^{\prime}}(a|s) \nabla^{\top}_\phi Q^{\hat{\pi}_\phi}_{\tau}(s,a) da + \lambda I  \right]\right\| \leq \lambda + \frac{2\gamma}{1-\gamma} L_1^2 A_{max} .$$ 
So we have
\begin{equation}
\label{eqasddxcssdbxcv_meta_rl}
\|{\nabla_{\phi} \theta^{\prime}_{\tau}} \| \leq \frac{ \lambda + \frac{2\gamma}{1-\gamma} L_1^2 A_{max}}{(1-\gamma)(\lambda - (6 L_1^2 + 2L_2) A_{max})}.
\end{equation}
Therefore, we have 
$$\|\nabla_{\phi} J_{\tau}(\hat{\pi}_{\theta^{\prime}_{\tau}})\|\leq \frac{L_1 A_{max} (\lambda + \frac{2\gamma}{1-\gamma} L_1^2 A_{max})}{(1-\gamma)(\lambda - (6 L_1^2 + 2L_2) A_{max})} .$$
\end{proof}

\begin{lemma} For a softmax policy parameterized by $\phi$,
$$
\begin{aligned}
\|\nabla^2_{\phi} J_{\tau}(\hat{\pi}_\phi) \|
\leq 
\frac{(6 L_1^2+2 L_2) r_{max} }{(1-\gamma)^2}+\frac{8 \gamma L_1^2 r_{max}}{(1-\gamma)^3}
\end{aligned} 
$$
\begin{equation}
\label{awhvxbxb_meta_rl}
\begin{aligned}
\|\nabla^2_{\phi} Q^{\hat{\pi}_\phi}_{\tau}(s,a) \|
\leq \frac{8 \gamma^2 L_1^2 r_{max}}{(1-\gamma)^3}+\frac{\gamma (6 L_1^2+2 L_2) r_{max}}{(1-\gamma)^2}.
\end{aligned} 
\end{equation}

\end{lemma}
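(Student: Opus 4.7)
The plan is to bound both Hessians by a careful second differentiation of the trajectory expressions for $Q^{\hat{\pi}_\phi}_{\tau}(s,a)$ and $J_{\tau}(\hat{\pi}_\phi)$, building on the log-likelihood trick and the Lipschitz bounds in Assumption \ref{wdfdwdf_meta_rl}. I would treat the $Q$-bound first, since the $J$-bound then follows almost for free: writing $J_{\tau}(\hat{\pi}_\phi) = \mathbb{E}_{s \sim \rho_\tau, a \sim \hat{\pi}_\phi(\cdot|s)}[Q^{\hat{\pi}_\phi}_{\tau}(s,a)]$ and differentiating twice decomposes $\nabla^2_\phi J_\tau$ into (a) the Hessian of $Q$ inside an expectation (producing the $(6L_1^2+2L_2) r_{max}/(1-\gamma)^2$ contribution after absorbing an outer $1/(1-\gamma)$), (b) cross terms of the form $\nabla_\phi Q \cdot \nabla^\top_\phi \log \hat{\pi}_\phi$ bounded via the gradient bound on $Q$ from the proof of Lemma \ref{wevvzzvfffs_meta_rl}, and (c) a term $\nabla^2_\phi \log \hat{\pi}_\phi \cdot Q$; together (b) and (c) yield the remaining $8\gamma L_1^2 r_{max}/(1-\gamma)^3$ term.

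The preparatory step is to bound the first and second derivatives of $\log \hat{\pi}_\phi(a|s)$. From the identity $\nabla_\phi \log \hat{\pi}_\phi(a|s) = \nabla_\phi f_\phi(s,a) - \mathbb{E}_{a' \sim \hat{\pi}_\phi(\cdot|s)}[\nabla_\phi f_\phi(s,a')]$ and Assumption \ref{wdfdwdf_meta_rl}(ii), one gets $\|\nabla_\phi \log \hat{\pi}_\phi(a|s)\| \leq 2L_1$; differentiating once more and using Assumption \ref{wdfdwdf_meta_rl}(iii) together with the usual $\nabla \log \hat\pi \cdot \nabla^\top \log \hat\pi$ expansion of the inner expectation produces $\|\nabla^2_\phi \log \hat{\pi}_\phi(a|s)\| \leq 6 L_1^2 + 2L_2$. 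These are essentially repackagings of the bounds (\ref{awxasxcacgdvchgue_meta_rl}) and (\ref{qxaghfvshgvzxczzvzzz_meta_rl}) after normalizing by $\hat{\pi}_\phi(a|s)$.

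Next I would expand $Q^{\hat{\pi}_\phi}_{\tau}(s,a) = \mathbb{E}_{\hat{\pi}_\phi}[\sum_{t=0}^\infty \gamma^t r_\tau(s_t,a_t,s_{t+1}) \mid s_0=s, a_0=a]$ and apply the log-likelihood trick twice. The second derivative splits into two groups. The diagonal group $\mathbb{E}[\sum_t \gamma^t r_\tau \cdot \sum_{k \geq 1} \nabla^2_\phi \log \hat{\pi}_\phi(a_k|s_k)]$ is bounded by $\gamma(6L_1^2+2L_2) r_{max}/(1-\gamma)^2$ after summing the geometric series; this supplies the second term. The off-diagonal group involves products $\nabla_\phi \log \hat{\pi}_\phi(a_k|s_k)\nabla^\top_\phi \log \hat{\pi}_\phi(a_\ell|s_\ell)$ for $k,\ell \geq 1$, and using $\|\nabla_\phi \log \hat{\pi}_\phi\| \leq 2L_1$ and $|r_\tau| \leq r_{max}$ contributes at most $8\gamma^2 L_1^2 r_{max}/(1-\gamma)^3$ (the factor $\gamma^2/(1-\gamma)^3$ arises from the triangular summation $\sum_{k,\ell \geq 1}\gamma^{\max(k,\ell)}$). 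Summing the two groups recovers the stated bound on $\|\nabla^2_\phi Q^{\hat{\pi}_\phi}_{\tau}(s,a)\|$.

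The main obstacle is the bookkeeping in the off-diagonal group: one must carefully eliminate $k=0$ and $\ell=0$ cross terms (they vanish because $a_0=a$ is deterministic so $\nabla_\phi \log \hat{\pi}_\phi(a_0|s_0)$ does not appear) and organize the surviving products by their latest time index to extract the correct power of $\gamma/(1-\gamma)$. The derivation mirrors Agarwal et al.'s Lemma D.4 referenced earlier, but with $\|\nabla_\phi \log \hat{\pi}_\phi\|$ upgraded from the tabular constant $1$ to $2L_1$, and $\|\nabla^2_\phi \log \hat{\pi}_\phi\|$ picking up the additional $L_2$ contribution from Assumption \ref{wdfdwdf_meta_rl}(iii). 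Plugging the $Q$-Hessian bound together with the gradient bound $\|\nabla_\phi Q^{\hat{\pi}_\phi}_\tau(s,a)\| \leq \gamma L_1 A_{max}/(1-\gamma)$ and the log-policy bounds into the decomposition in the first paragraph then delivers the companion bound on $\|\nabla^2_\phi J_{\tau}(\hat{\pi}_\phi)\|$.
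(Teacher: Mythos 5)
Your route is genuinely different from the paper's. The paper's proof is essentially a citation: it records $\int_{\mathcal{A}}\|\nabla_\phi \hat{\pi}_{\phi}(a|s)\|\,da \leq 2L_1$ and $\int_{\mathcal{A}}\|\nabla^2_\phi \hat{\pi}_{\phi}(a|s)\|\,da \leq 6L_1^2+2L_2$ from (\ref{awxasxcacgdvchgue_meta_rl}) and (\ref{qxaghfvshgvzxczzvzzz_meta_rl}), and then invokes Lemma D.2 of \cite{agarwal2021theory}, which converts exactly these two integrated policy-derivative bounds into smoothness constants of the form $\frac{Br_{max}}{(1-\gamma)^2}+\frac{2\gamma C^2 r_{max}}{(1-\gamma)^3}$ with $C=2L_1$ and $B=6L_1^2+2L_2$. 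You instead re-derive the Hessian bounds from the trajectory expansion and the log-likelihood trick, which is self-contained (it is in effect how the cited lemma is itself proved) at the cost of the bookkeeping you acknowledge. Your consistency check for the $J$-bound is right: the $(6L_1^2+2L_2)$ contributions combine as $\frac{1}{1-\gamma}+\frac{\gamma}{(1-\gamma)^2}=\frac{1}{(1-\gamma)^2}$ and the $L_1^2$ contributions as $\frac{1}{(1-\gamma)^2}+\frac{\gamma}{(1-\gamma)^3}=\frac{1}{(1-\gamma)^3}$, reproducing the stated constants.

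The one concrete problem is your off-diagonal accounting for $\|\nabla^2_\phi Q\|$. The triangular summation you invoke gives $\sum_{k,\ell\geq 1}\gamma^{\max(k,\ell)}=\sum_{m\geq 1}(2m-1)\gamma^{m}=\frac{\gamma(1+\gamma)}{(1-\gamma)^2}\leq \frac{2\gamma}{(1-\gamma)^2}$, not $\frac{2\gamma^2}{(1-\gamma)^2}$ (the inequality $\gamma(1+\gamma)\leq 2\gamma^2$ would require $\gamma\geq 1$). So your direct computation delivers $\frac{8\gamma L_1^2 r_{max}}{(1-\gamma)^3}$ for the off-diagonal group, which is strictly weaker than the stated $\frac{8\gamma^2 L_1^2 r_{max}}{(1-\gamma)^3}$, and the lemma as written would not follow. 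The fix is to reverse your order: first bound $\|\nabla^2_\phi V^{\hat{\pi}_\phi}(s')\|$ uniformly in $s'$ by the $J$-type constant (your machinery does this directly), and then use that the reward at time $0$ and the transition to $s_1$ do not depend on $\phi$, so $\nabla^2_\phi Q^{\hat{\pi}_\phi}_{\tau}(s,a)=\gamma\,\mathbb{E}_{s'\sim P_\tau(\cdot|s,a)}[\nabla^2_\phi V^{\hat{\pi}_\phi}_{\tau}(s')]$; this is precisely why the stated $Q$-bound equals $\gamma$ times the stated $J$-bound.
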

\begin{proof}
From \ref{awxasxcacgdvchgue_meta_rl}, 
$$\int_{\mathcal{A}}\|\nabla_\phi \hat{\pi}_{\phi}(a|s)\|da \leq 2 L_1.$$

From \ref{qxaghfvshgvzxczzvzzz_meta_rl},
$$\int_{\mathcal{A}}\|\nabla_\phi \hat{\pi}_{\phi}(a|s)\|da \leq 6 L_1^2+2 L_2.$$

Borrow the result from Lemma D.2 in \cite{agarwal2021theory}, 
$$
\begin{aligned}
\|\nabla^2_{\phi} J_{\tau}(\hat{\pi}_\phi) \|
\leq 
\frac{(6 L_1^2+2 L_2) r_{max} }{(1-\gamma)^2}+\frac{8 \gamma L_1^2 r_{max}}{(1-\gamma)^3}
\end{aligned} 
$$
$$
\begin{aligned}
\|\nabla^2_{\phi} Q^{\hat{\pi}_\phi}_{\tau}(s,a) \|
\leq \frac{8 \gamma^2 L_1^2 r_{max}}{(1-\gamma)^3}+\frac{\gamma (6 L_1^2+2 L_2) r_{max}}{(1-\gamma)^2}.
\end{aligned} 
$$
\end{proof}

\begin{lemma}
\label{wqzxcgxwefsdcvczxc_meta_rl} 
Suppose that Assumption \ref{awhevdbsfvb_meta_rl} holds. 
Let $\hat{\pi}_{\theta^{\prime}_{\tau}}=\mathcal{A} l g(\hat{\pi}_\phi, \lambda, \tau)$ where $D_{\tau}=D_{\tau,3}$, for any $s \in \mathcal{S},$ we have 
$$\int_{\mathcal{A}}\|\nabla_\phi \hat{\pi}_{\theta^{\prime}_{\tau}}(a|s)\|da \leq \frac{ 2L_1 (\lambda + \frac{2\gamma}{1-\gamma} L_1^2 A_{max})}{(1-\gamma)(\lambda - (6 L_1^2 + 2L_2) A_{max})} $$
and 
$$
\begin{aligned}
\int_{\mathcal{A}} \|\nabla^2_\phi \hat{\pi}_{\theta^{\prime}_{\tau}}(a|s)\| da  \leq \frac{(160 L_1^3 + 56 L_1 L_2 +4 L_3) (\lambda + \frac{2\gamma}{1-\gamma} L_1^2 A_{max})^2 }{(1-\gamma)^3(\lambda - (6 L_1^2 + 2L_2) A_{max})^2}   .
\end{aligned}
$$
\end{lemma}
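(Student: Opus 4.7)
The plan is to treat $\hat{\pi}_{\theta^{\prime}_{\tau}}(a|s)$ as a composition of two maps, $\phi \mapsto \theta^{\prime}_{\tau}(\phi)$ and $\theta \mapsto \hat{\pi}_{\theta}(a|s)$, and to combine pointwise bounds on the ``inner'' softmax derivatives with operator-norm bounds on the ``outer'' implicit map $\theta^{\prime}_{\tau}(\phi)$. By the chain rule,
\begin{equation*}
\nabla_\phi \hat{\pi}_{\theta^{\prime}_{\tau}}(a|s) \;=\; (\nabla_\phi \theta^{\prime}_{\tau})^{\top}\,\nabla_{\theta^{\prime}_{\tau}} \hat{\pi}_{\theta^{\prime}_{\tau}}(a|s),
\end{equation*}
so for the first bound I would take norms, integrate over $\mathcal{A}$, and combine the operator-norm estimate $\|\nabla_\phi \theta^{\prime}_{\tau}\|\leq \frac{\lambda + \frac{2\gamma}{1-\gamma}L_1^2 A_{max}}{(1-\gamma)(\lambda-(6L_1^2+2L_2)A_{max})}$ obtained in \eqref{eqasddxcssdbxcv_meta_rl} with the integrated bound $\int_{\mathcal{A}}\|\nabla_\theta \hat{\pi}_\theta(a|s)\|\,da \leq 2L_1$ that follows from \eqref{awxasxcacgdvchgue_meta_rl} (note $\int_{\mathcal{A}}\hat{\pi}_\theta(a|s)\,da=1$). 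Multiplying gives the claimed $\frac{2L_1(\lambda + \frac{2\gamma}{1-\gamma}L_1^2 A_{max})}{(1-\gamma)(\lambda-(6L_1^2+2L_2)A_{max})}$.

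For the second bound, I differentiate once more to get
\begin{equation*}
\nabla^2_\phi \hat{\pi}_{\theta^{\prime}_{\tau}}(a|s) \;=\; (\nabla_\phi \theta^{\prime}_{\tau})^{\top}\,\nabla^2_{\theta} \hat{\pi}_{\theta}(a|s)\big|_{\theta=\theta^{\prime}_{\tau}}\,(\nabla_\phi \theta^{\prime}_{\tau}) \;+\; \sum_{i} \bigl[\nabla_{\theta} \hat{\pi}_{\theta}(a|s)\big|_{\theta=\theta^{\prime}_{\tau}}\bigr]_{i}\,\nabla^2_\phi \theta^{\prime}_{\tau,i}.
\end{equation*}
After taking norms and integrating over $\mathcal{A}$, the first summand is controlled using \eqref{qxaghfvshgvzxczzvzzz_meta_rl} ($\int_{\mathcal{A}}\|\nabla^2_\theta \hat{\pi}_\theta(a|s)\|da \leq 6L_1^2+2L_2$) together with $\|\nabla_\phi\theta^{\prime}_{\tau}\|^2$; it yields a term of order $(6L_1^2+2L_2)\cdot\|\nabla_\phi\theta^{\prime}_{\tau}\|^2$. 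The second summand needs an operator-norm bound on $\nabla^2_\phi \theta^{\prime}_{\tau}$, paired with $\int_{\mathcal{A}}\|\nabla_\theta\hat{\pi}_\theta(a|s)\|da\leq 2L_1$.

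The main obstacle is bounding $\|\nabla^2_\phi \theta^{\prime}_{\tau}\|$. I would obtain it by differentiating the implicit-function identity used to derive \eqref{eqasddxcssdbxcv_meta_rl} one more time with respect to $\phi$. Writing that identity schematically as $H(\phi)\,\nabla_\phi\theta^{\prime}_{\tau}=G(\phi)$, where $H(\phi)=\mathbb{E}_{\nu^{\hat{\pi}_\phi}_\tau}[-\int_{\mathcal{A}}\nabla^2_\theta \hat{\pi}_\theta(a|s)\,A^{\hat{\pi}_\phi}_\tau(s,a)\,da]+\lambda I$ and $G(\phi)$ is the cross-partial term from Proposition \ref{propsition2_meta_rl}, I get
\begin{equation*}
\nabla^2_\phi \theta^{\prime}_{\tau}\;=\;H(\phi)^{-1}\bigl[\nabla_\phi G(\phi)-(\nabla_\phi H(\phi))\,\nabla_\phi\theta^{\prime}_{\tau}\bigr].
\end{equation*}
Bounding $\nabla_\phi H$ and $\nabla_\phi G$ requires third-order derivatives of $\hat{\pi}_\theta$ (hence of $f_\theta$, which is where the constant $L_3$ enters via Assumption~\ref{wdfdwdf_meta_rl}(iv)) and the second-order bound \eqref{awhvxbxb_meta_rl} on $\nabla^2_\phi Q^{\hat{\pi}_\phi}_\tau$. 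Each occurrence of $H^{-1}$ contributes a factor $1/(\lambda-(6L_1^2+2L_2)A_{max})$, which accounts for the $(\lambda-(6L_1^2+2L_2)A_{max})^2$ in the denominator, and each cross-partial contributes a factor $\lambda+\tfrac{2\gamma}{1-\gamma}L_1^2A_{max}$, which accounts for the square in the numerator. Bookkeeping the coefficients coming from $6L_1^2+2L_2$ (inner Hessian), $2L_1$ (inner gradient), the $L_3$-term from the third derivative of $f_\theta$, and the $(1-\gamma)^{-1}$ factors from $Q$-gradient estimates then collapses into the prefactor $160L_1^3+56L_1L_2+4L_3$ with the stated power of $(1-\gamma)$. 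Careful tracking of these constants, rather than any new idea, is the delicate part of the argument.
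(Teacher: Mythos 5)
Your plan follows essentially the same route as the paper's proof: both use the chain rule through the implicit map $\phi\mapsto\theta^{\prime}_{\tau}$, combine the operator-norm bound \eqref{eqasddxcssdbxcv_meta_rl} on $\nabla_\phi\theta^{\prime}_{\tau}$ with the integrated softmax-derivative bounds $\int_{\mathcal{A}}\|\nabla_\theta\hat{\pi}_\theta\|\,da\le 2L_1$ and $\int_{\mathcal{A}}\|\nabla^2_\theta\hat{\pi}_\theta\|\,da\le 6L_1^2+2L_2$, and obtain $\|\nabla^2_\phi\theta^{\prime}_{\tau}\|$ by differentiating the implicit-function identity once more, which is exactly where the paper brings in $L_3$ and the $\nabla^2_\phi Q$ bound \eqref{awhvxbxb_meta_rl}. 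The constant bookkeeping you describe matches the paper's (your first summand indeed collapses to the $(6L_1^2+2L_2)\|\nabla_\phi\theta^{\prime}_{\tau}\|^2$ coefficient the paper obtains), so there is nothing substantively different to flag.
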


\begin{proof} First consider $\nabla_\phi \hat{\pi}_{\theta_\tau^{\prime}} (a | s)$, we have
\begin{equation}
\label{qazxgzfxgzzzz_meta_rl}
\nabla_\phi \hat{\pi}_{\theta_\tau^{\prime}} (a | s) = \hat{\pi}_{\theta_\tau^{\prime}} (a | s) \nabla_\phi \theta_\tau^{\prime} \nabla_{\theta_\tau^{\prime}} f_{\theta_\tau^{\prime}}(s,a) - \hat{\pi}_{\theta_\tau^{\prime}} (a | s) \nabla_\phi \theta_\tau^{\prime} \frac{\int_{\mathcal{A}}\nabla_{\theta_\tau^{\prime}} f_{\theta_\tau^{\prime}} (s,a^{\prime})  \exp{(f_{\theta_\tau^{\prime}}(s,a^{\prime}))} d a^{\prime}}{\int_{\mathcal{A}} \exp{(f_{\theta_\tau^{\prime}}(s,a^{\prime}))} d a^{\prime}},
\end{equation}
Then,
\begin{equation}
\label{qgcvzhxcqwyd_meta_rl}
\begin{aligned}
\|\nabla_\phi \hat{\pi}_{\theta_\tau^{\prime}} (a | s)\| \leq & \hat{\pi}_{\theta_\tau^{\prime}} (a | s) \|\nabla_\phi \theta_\tau^{\prime} \| \| \nabla_{\theta_\tau^{\prime}} f_{\theta_\tau^{\prime}}(s,a) \| + \\
&\hat{\pi}_{\theta_\tau^{\prime}} (a | s) \|\nabla_\phi \theta_\tau^{\prime}\|
\left\| \frac{\int_{\mathcal{A}}\nabla_{\theta_\tau^{\prime}} f_{\theta_\tau^{\prime}} (s,a^{\prime})  \exp{(f_{\theta_\tau^{\prime}}(s,a^{\prime}))} d a^{\prime}}{\int_{\mathcal{A}} \exp{(f_{\theta_\tau^{\prime}}(s,a^{\prime}))} d a^{\prime}} \right\| \\
\leq & 2 \hat{\pi}_{\theta_\tau^{\prime}} (a | s) \frac{ (\lambda + \frac{2\gamma}{1-\gamma} L_1^2 A_{max})L_1}{(1-\gamma)(\lambda - (6 L_1^2 + 2L_2) A_{max})}.
\end{aligned}
\end{equation}
Then, 
$$\int_{\mathcal{A}}\|\nabla_\phi \hat{\pi}_{\theta^{\prime}_{\tau}}(a|s)\|da \leq   \frac{ 2L_1 (\lambda + \frac{2\gamma}{1-\gamma} L_1^2 A_{max})}{(1-\gamma)(\lambda - (6 L_1^2 + 2L_2) A_{max})}.$$
Next, we consider $\nabla^2_\phi \hat{\pi}_{\theta_\tau^{\prime}} (a | s)$.
From (\ref{qazxgzfxgzzzz_meta_rl}), we have
$$
\begin{aligned}
&\nabla_\phi^2 \hat{\pi}_{\theta_\tau^{\prime}} (a | s) =  \nabla_\phi \theta_\tau^{\prime} \nabla_{\theta_\tau^{\prime}} f_{\theta_\tau^{\prime}}(s,a) \nabla^{\top}_\phi \hat{\pi}_{\theta_\tau^{\prime}} (a | s) + \hat{\pi}_{\theta_\tau^{\prime}} (a | s) \nabla^2_\phi \theta_\tau^{\prime}  \nabla_{\theta_\tau^{\prime}} f_{\theta_\tau^{\prime}}(s,a)   \\
&+ \hat{\pi}_{\theta_\tau^{\prime}} (a | s) \nabla_\phi \theta_\tau^{\prime} \nabla^2_{\theta_\tau^{\prime}} f_{\theta_\tau^{\prime}}(s,a) \nabla_\phi^{\top}\theta_\tau^{\prime}
-  \nabla_\phi \theta_\tau^{\prime} \frac{\int_{\mathcal{A}}\nabla_{\theta_\tau^{\prime}} f_{\theta_\tau^{\prime}} (s,a^{\prime})  \exp{(f_{\theta_\tau^{\prime}}(s,a^{\prime}))} d a^{\prime}}{\int_{\mathcal{A}} \exp{(f_{\theta_\tau^{\prime}}(s,a^{\prime}))} d a^{\prime}} \nabla^{\top}_\phi \hat{\pi}_{\theta_\tau^{\prime}} (a | s)\\
& - \hat{\pi}_{\theta_\tau^{\prime}} (a | s) \nabla^2_\phi \theta_\tau^{\prime} \frac{\int_{\mathcal{A}}\nabla_{\theta_\tau^{\prime}} f_{\theta_\tau^{\prime}} (s,a^{\prime})  \exp{(f_{\theta_\tau^{\prime}}(s,a^{\prime}))} d a^{\prime}}{\int_{\mathcal{A}} \exp{(f_{\theta_\tau^{\prime}}(s,a^{\prime}))} d a^{\prime}}  - \hat{\pi}_{\theta_\tau^{\prime}} (a | s) \nabla_\phi \theta_\tau^{\prime} \\
& \frac{\int_{\mathcal{A}} (\nabla^2_{\theta_\tau^{\prime}} f_{\theta_\tau^{\prime}} (s,a^{\prime})  \exp{(f_{\theta_\tau^{\prime}}(s,a^{\prime}))} + \nabla_{\theta_\tau^{\prime}} f_{\theta_\tau^{\prime}} (s,a^{\prime}) \nabla^{\top}_{\theta_\tau^{\prime}} f_{\theta_\tau^{\prime}} (s,a^{\prime}) \exp{(f_{\theta_\tau^{\prime}}(s,a^{\prime}))})  d a^{\prime}}{\int_{\mathcal{A}} \exp{(f_{\theta_\tau^{\prime}}(s,a^{\prime}))} d a^{\prime}} \nabla_\phi^{\top}\theta_\tau^{\prime}
\\
& + \hat{\pi}_{\theta_\tau^{\prime}} (a | s) \nabla_\phi \theta_\tau^{\prime} \left( \frac{\int_{\mathcal{A}}\nabla_{\theta_\tau^{\prime}} f_{\theta_\tau^{\prime}} (s,a^{\prime})  \exp{(f_{\theta_\tau^{\prime}}(s,a^{\prime}))} d a^{\prime}}{\int_{\mathcal{A}} \exp{(f_{\theta_\tau^{\prime}}(s,a^{\prime}))} d a^{\prime}}\right)^2 \nabla_\phi^{\top}\theta_\tau^{\prime}.
\end{aligned} 
$$
Therefore, 
$$
\begin{aligned}
&\|\nabla_\phi^2 \hat{\pi}_{\theta_\tau^{\prime}} (a | s)\| \leq \|\nabla_\phi  \theta_\tau^{\prime} \| \|\nabla_{\theta_\tau^{\prime}} f_{\theta_\tau^{\prime}}(s,a)\| \|\nabla_\phi \hat{\pi}_{\theta_\tau^{\prime}} (a | s)\| + \hat{\pi}_{\theta_\tau^{\prime}} (a | s) \|\nabla^2_\phi \theta_\tau^{\prime}  \| \|\nabla_{\theta_\tau^{\prime}} f_{\theta_\tau^{\prime}}(s,a)\|   \\
&+ \hat{\pi}_{\theta_\tau^{\prime}} (a | s) \|\nabla_\phi \theta_\tau^{\prime}\|^2  \|\nabla^2_{\theta_\tau^{\prime}} f_{\theta_\tau^{\prime}}(s,a)\| 
+  \|\nabla_\phi  \theta_\tau^{\prime} \| \|\nabla_{\theta_\tau^{\prime}} f_{\theta_\tau^{\prime}}(s,a)\| \|\nabla_\phi \hat{\pi}_{\theta_\tau^{\prime}} (a | s)\|
\\
& + \hat{\pi}_{\theta_\tau^{\prime}} (a | s) \|\nabla^2_\phi \theta_\tau^{\prime}  \| \|\nabla_{\theta_\tau^{\prime}} f_{\theta_\tau^{\prime}}(s,a)\|  +\hat{\pi}_{\theta_\tau^{\prime}} (a | s) \|\nabla_\phi \theta_\tau^{\prime}\|^2 ( \|\nabla^2_{\theta_\tau^{\prime}} f_{\theta_\tau^{\prime}}(s,a)\| + \|\nabla_{\theta_\tau^{\prime}} f_{\theta_\tau^{\prime}}(s,a)\|^2 )
\\
& +\hat{\pi}_{\theta_\tau^{\prime}} (a | s) \|\nabla_\phi \theta_\tau^{\prime}\|^2 \|\nabla_{\theta_\tau^{\prime}} f_{\theta_\tau^{\prime}}(s,a)\|^2 \\
& \leq 2 L_1 \|\nabla_\phi  \theta_\tau^{\prime} \| \|\nabla_\phi \hat{\pi}_{\theta_\tau^{\prime}} (a | s)\| + 2 \hat{\pi}_{\theta_\tau^{\prime}} (a | s)  L_1 \|\nabla^2_\phi \theta_\tau^{\prime}  \| + 2 \hat{\pi}_{\theta_\tau^{\prime}} (a | s) \|\nabla_\phi \theta_\tau^{\prime}\|^2 (L_2 +L_1^2).
\end{aligned} 
$$
From (\ref{eqasddxcssdbxcv_meta_rl}) and (\ref{qgcvzhxcqwyd_meta_rl})
$$ \|\nabla_\phi^2 \hat{\pi}_{\theta_\tau^{\prime}} (a | s)\| \leq \hat{\pi}_{\theta_\tau^{\prime}} (a | s) \left( \frac{ 2L_2+6L_1^2 (\lambda + \frac{2\gamma}{1-\gamma} L_1^2 A_{max})^2 }{(1-\gamma)^2(\lambda - (6 L_1^2 + 2L_2) A_{max})^2} +2 L_1  \|\nabla^2_\phi \theta_\tau^{\prime}  \| \right).$$
Then, 
$$
\begin{aligned}
\int_{\mathcal{A}} \|\nabla^2_\phi \hat{\pi}_{\theta^{\prime}_{\tau}}(a|s)\| da \leq  \frac{ 2L_2+6L_1^2 (\lambda + \frac{2\gamma}{1-\gamma} L_1^2 A_{max})^2 }{(1-\gamma)^2(\lambda - (6 L_1^2 + 2L_2) A_{max})^2} +2 L_1  \|\nabla^2_\phi \theta_\tau^{\prime}  \|.
\end{aligned}
$$
Next, we consider $\nabla^2_\phi \theta_\tau^{\prime}$. We have 
$$
\begin{aligned}
& \nabla^2_\phi \theta_\tau^{\prime} =  {\mathbb{E}}_{s \sim \nu^{\hat{\pi}_\phi}_{\tau}} \left[ -\int_{\mathcal{A}} \nabla^2_{\theta_\tau^{\prime}} \hat{\pi}_{\theta_\tau^{\prime}}(a|s) Q^{\hat{\pi}_\phi}_{\tau}(s,a) da + \lambda I  \right]^{-1} \\
& {\mathbb{E}}_{s \sim \nu^{\hat{\pi}_\phi}_{\tau}} \left[ \int_{\mathcal{A}} \left( \nabla^2_{\theta_\tau^{\prime}} \hat{\pi}_{\theta_\tau^{\prime}}(a|s) {\nabla^{\top}_{\phi} \theta^{\prime}_{\tau}} \nabla^{\top}_\phi Q^{\hat{\pi}_\phi}_{\tau}(s,a) + \nabla_{\theta_\tau^{\prime}} \hat{\pi}_{\theta_\tau^{\prime}}(a|s)  \nabla^{2}_\phi Q^{\hat{\pi}_\phi}_{\tau}(s,a) \right) da  \right]  - \\
 & M {\mathbb{E}}_{s \sim \nu^{\hat{\pi}_\phi}_{\tau}} \left[ -\int_{\mathcal{A}} \left(\nabla^3_{\theta_\tau^{\prime}} \hat{\pi}_{\theta_\tau^{\prime}}(a|s) \nabla_\phi {\theta_\tau^{\prime}} A^{\hat{\pi}_\phi}_{\tau}(s,a) + \nabla^2_{\theta_\tau^{\prime}} \hat{\pi}_{\theta_\tau^{\prime}}(a|s) \nabla^{\top}_\phi Q^{\hat{\pi}_\phi}_{\tau}(s,a) \right) da  \right] M^{-1} N
\end{aligned}
$$
where $M={\mathbb{E}}_{s \sim \nu^{\hat{\pi}_\phi}_{\tau}} \left[ -\int_{\mathcal{A}} \nabla^2_{\theta_\tau^{\prime}} \hat{\pi}_{\theta_\tau^{\prime}}(a|s) A^{\hat{\pi}_\phi}_{\tau}(s,a) da + \lambda I  \right]$ and $N={\mathbb{E}}_{s \sim \nu^{\hat{\pi}_\phi}_{\tau}} \left[ \int_{\mathcal{A}} \nabla_{\theta_\tau^{\prime}} \hat{\pi}_{\theta_\tau^{\prime}}(a|s) \nabla^{\top}_\phi Q^{\hat{\pi}_\phi}_{\tau}(s,a) da + \lambda I  \right] $. Also, we have $M^{-1} N=\nabla_\phi \theta_\tau^{\prime}$.

Similar to  (\ref{awxasxcacgdvchgue_meta_rl})(\ref{qxaghfvshgvzxczzvzzz_meta_rl}), we can derive the upper bound of 
$\|\nabla_\phi^3 \hat{\pi}_\phi\|$, then
$$ \|\nabla^3_{\theta_\tau^{\prime}} \hat{\pi}_{\theta_\tau^{\prime}}(a|s)\| \leq \hat{\pi}_{\theta_\tau^{\prime}}(a | s) (40 L_1^3 + 16 L_1 L_2 +2 L_3). $$
So, from (\ref{qxaghfvshgvzxczzvzzz_meta_rl})(\ref{qzxbcvhahdxvxcvb_meta_rl})(\ref{eqasddxcssdbxcv_meta_rl})(\ref{awhvxbxb_meta_rl}), we have 
$$
\begin{aligned}
\|\nabla^2_\phi \theta_\tau^{\prime}\| \leq  & \frac{2\gamma L_1^2 A_{max} (6 L^2_1 +2 L_2)(\lambda + \frac{2\gamma}{1-\gamma} L_1^2 A_{max}) }{(1-\gamma)^2(\lambda - (6 L_1^2 + 2L_2) A_{max})^2} \\
& + \left(\frac{8 \gamma^2 L_1^2 r_{max}}{(1-\gamma)^3}+\frac{\gamma (6 L_1^2+2 L_2) r_{max}}{(1-\gamma)^2}\right) \frac{1}{\lambda - (6 L_1^2 + 2L_2) A_{max}} \\
& + \frac{ \lambda + \frac{2\gamma}{1-\gamma} L_1^2 A_{max}}{(1-\gamma)(\lambda - (6 L_1^2 + 2L_2) A_{max})^2} (\frac{ (40 L_1^3 + 16 L_1 L_2 +2 L_3)(\lambda + \frac{2\gamma}{1-\gamma} L_1^2 A_{max}) A_{max}  }{(1-\gamma)(\lambda - (6 L_1^2 + 2L_2) A_{max})} \\
&  +  \frac{2\gamma}{1-\gamma} L_1 (6 L^2_1+2 L_2) A_{max}  ).
\end{aligned}
$$
Simplify the inequality by $\gamma<1$ and $1-\gamma<0$, 
$$\|\nabla^2_\phi \theta_\tau^{\prime}\| \leq \frac{(80 L_1^3 + 28 L_1 L_2 +2 L_3) (\lambda + \frac{2\gamma}{1-\gamma} L_1^2 A_{max})^2 }{(1-\gamma)^3(\lambda - (6 L_1^2 + 2L_2) A_{max})^2}$$
Then, 
$$
\begin{aligned}
\int_{\mathcal{A}} \|\nabla^2_\phi \hat{\pi}_{\theta^{\prime}_{\tau}}(a|s)\| da  \leq \frac{(160 L_1^3 + 56 L_1 L_2 +4 L_3) (\lambda + \frac{2\gamma}{1-\gamma} L_1^2 A_{max})^2 }{(1-\gamma)^3(\lambda - (6 L_1^2 + 2L_2) A_{max})^2}   .
\end{aligned}
$$

\end{proof}

\begin{lemma}
\label{wevvwezasdzzvfffs_meta_rl}
Suppose that Assumptions \ref{assdsssdsss_meta_rl}, \ref{awhevdbsfvb_meta_rl}, and \ref{wdfdwdf_meta_rl} hold. Let $\hat{\pi}_{\theta^{\prime}_{\tau}}=\mathcal{A} l g(\hat{\pi}_\phi, \lambda, \tau)$ where $D_{\tau}=D_{\tau,3}$, we have 
\begin{equation}
\label{wqbxcesdfaaaszxcsevzxcveee_meta_rl}
\|\nabla_{\phi}^2 J_{\tau}(\hat{\pi}_{\theta^{\prime}_{\tau}})\| \leq 
\frac{r_{max}B}{(1-\gamma)^2}+\frac{2 \gamma r_{max} C^2}{(1-\gamma)^3},
\end{equation}
where $C=\frac{ 2L_1 (\lambda + \frac{2\gamma}{1-\gamma} L_1^2 A_{max})}{(1-\gamma)(\lambda - (6 L_1^2 + 2L_2) A_{max})}$ and $B =  \frac{(160 L_1^3 + 56 L_1 L_2 +4 L_3) (\lambda + \frac{2\gamma}{1-\gamma} L_1^2 A_{max})^2 }{(1-\gamma)^3(\lambda - (6 L_1^2 + 2L_2) A_{max})^2} $.
\begin{proof}
Similar to the proofs of Lemma \ref{wqbxcvzxcveee_meta_rl} and  Lemma \ref{wezx_meta_rl} by using Lemma \ref{wqzxcgxwefsdcvczxc_meta_rl}. 
\end{proof}
\end{lemma}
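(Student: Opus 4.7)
The plan is to reduce this statement to a direct application of Lemma D.2 from \cite{agarwal2021theory} (which is the same tool used inside Lemmas \ref{wqbxcvzxcveee_meta_rl} and \ref{wezx_meta_rl}), combined with the derivative bounds on the adapted policy already derived in Lemma \ref{wqzxcgxwefsdcvczxc_meta_rl}. In particular, Lemma D.2 of \cite{agarwal2021theory} gives a generic smoothness bound on $J_\tau(\hat{\pi})$ viewed as a function of a parameter, expressed purely in terms of the quantities $\sup_s \int_{\mathcal{A}} \|\nabla_\phi \hat{\pi}(a|s)\|\,da$ and $\sup_s \int_{\mathcal{A}} \|\nabla_\phi^2 \hat{\pi}(a|s)\|\,da$; specifically it takes the schematic form $\|\nabla_\phi^2 J_\tau(\hat{\pi})\| \leq \frac{r_{\max}\beta}{(1-\gamma)^2} + \frac{2\gamma r_{\max}\gamma_1^2}{(1-\gamma)^3}$ where $\gamma_1$ upper bounds the first quantity and $\beta$ upper bounds the second.

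First, I would apply this generic bound with $\hat{\pi} = \hat{\pi}_{\theta^{\prime}_\tau} = \mathcal{A}lg^{(3)}(\hat{\pi}_\phi,\lambda,\tau)$ and treat $\phi$ as the underlying parameter. The role of $\gamma_1$ is played by the bound on $\int_{\mathcal{A}}\|\nabla_\phi \hat{\pi}_{\theta^{\prime}_\tau}(a|s)\|\,da$, which by Lemma \ref{wqzxcgxwefsdcvczxc_meta_rl} equals precisely $C = \frac{2L_1(\lambda + \frac{2\gamma}{1-\gamma}L_1^2 A_{\max})}{(1-\gamma)(\lambda-(6L_1^2+2L_2)A_{\max})}$, uniformly in $s$. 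Similarly, the role of $\beta$ is played by the bound on $\int_{\mathcal{A}}\|\nabla_\phi^2 \hat{\pi}_{\theta^{\prime}_\tau}(a|s)\|\,da$, which Lemma \ref{wqzxcgxwefsdcvczxc_meta_rl} shows is bounded by $B = \frac{(160L_1^3+56L_1L_2+4L_3)(\lambda+\frac{2\gamma}{1-\gamma}L_1^2A_{\max})^2}{(1-\gamma)^3(\lambda-(6L_1^2+2L_2)A_{\max})^2}$. Substituting $\gamma_1 \leftarrow C$ and $\beta \leftarrow B$ into the generic smoothness bound yields exactly the claimed inequality.

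The only subtlety is verifying that Lemma D.2 of \cite{agarwal2021theory}, which is stated in that reference for the softmax parameterization directly, carries over to a composed map where the dependence on $\phi$ passes through the implicitly defined $\theta^{\prime}_\tau(\phi)$. The derivation in Lemma D.2 only uses the chain rule together with the policy gradient theorem and bounds on $\nabla_\phi \hat{\pi}$, $\nabla_\phi^2 \hat{\pi}$, and $\nabla_\phi Q_\tau^{\hat{\pi}}$; none of its steps rely on $\phi$ being the natural softmax parameter, so the argument is entirely structural and applies verbatim once $\hat{\pi}_{\theta^{\prime}_\tau}$ is viewed as a $\phi$-parameterized family of policies whose first and second derivatives are controlled by Lemma \ref{wqzxcgxwefsdcvczxc_meta_rl}. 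This is exactly the strategy used in the proofs of Lemmas \ref{wqbxcvzxcveee_meta_rl} and \ref{wezx_meta_rl}, where the tabular analogues of $C$ and $B$ were plugged in.

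The main obstacle, and the real work, has therefore already been absorbed into Lemma \ref{wqzxcgxwefsdcvczxc_meta_rl}: obtaining the second-order implicit-differentiation bound on $\nabla_\phi^2 \hat{\pi}_{\theta^{\prime}_\tau}(a|s)$ required controlling $\nabla_\phi^2 \theta^{\prime}_\tau$, which in turn depends on third-order derivatives of the softmax $f_\theta$-map and on $\nabla_\phi^2 Q_\tau^{\hat{\pi}_\phi}$. Once those ingredients are in place, the present lemma follows immediately by the substitution described above, and no further calculation beyond invoking the black-box smoothness template is needed.
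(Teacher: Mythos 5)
Your proposal is correct and follows exactly the route the paper takes: invoke the generic smoothness template of Lemma D.2 in \cite{agarwal2021theory} with the per-state integrals of $\|\nabla_\phi \hat{\pi}_{\theta^{\prime}_{\tau}}(a|s)\|$ and $\|\nabla^2_\phi \hat{\pi}_{\theta^{\prime}_{\tau}}(a|s)\|$ bounded by $C$ and $B$ from Lemma \ref{wqzxcgxwefsdcvczxc_meta_rl}, mirroring how Lemmas \ref{wqbxcvzxcveee_meta_rl} and \ref{wezx_meta_rl} plug in their tabular analogues. Your added remark that the template applies to the composed map $\phi \mapsto \theta^{\prime}_{\tau}(\phi)$ because its derivation is purely structural is a useful clarification the paper leaves implicit, but it does not change the argument.
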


\begin{theorem}
\label{2_cowetsdfdfnvergence_meta_rl}
In both discrete and continuous action space, consider the softmax policy with function approximation shown in Section \ref{rl_task_meta_rl}, and the within-task algorithm $\mathcal{A} l g$ is defined in (\ref{withintask}) with $D_{\tau}=D_{\tau,3}$.
Suppose that Assumptions \ref{assdsssdsss_meta_rl}, \ref{awhevdbsfvb_meta_rl}, and \ref{wdfdwdf_meta_rl} hold. 
If $\lambda > (6 L_1^2 + 2L_2)A_{max} $, then $\nabla_{\phi}  J_{\tau}(\mathcal{A} l g^{(3)} (\hat{\pi}_{\phi}, \lambda, \tau))$ exists for any $\phi$. 

Let $\{\phi_t\}_{t=1}^{T}$ be the sequence generated
by Algorithm \ref{alg:framework0_meta_rl} with $\lambda > (6 L_1^2 + 2L_2)A_{max} $ and the step size $$\alpha= \min \left\{{\left(\frac{r_{max}B}{(1-\gamma)^2}+\frac{2 \gamma r_{max} C^2}{(1-\gamma)^3}\right)}^{-1}, \frac{1}{G\sqrt{T}}\right\}.$$
Then, the following bound holds:
$$
\begin{aligned}
&\frac{1}{T} \sum_{t=1}^{T} \mathbb{E}_{t} \left[ \|\nabla_{\phi} \mathbb{E}_{\tau \sim \mathbb{P}(\Gamma)}[ J_{\tau}(\mathcal{A} l g(\hat{\pi}_{\phi_t}, \lambda, \tau)) ]\|^2 \right] \\ 
\leq & {\left(\frac{2r_{max}^2 B}{(1-\gamma)^3}+\frac{4 \gamma r_{max}^2 C^2}{(1-\gamma)^4}\right) }\frac{1}{T} + \left( \frac{2r_{max}}{1-\gamma}  +  \frac{r_{max}B}{(1-\gamma)^2}+\frac{2 \gamma r_{max} C^2}{(1-\gamma)^3} \right) \frac{G}{\sqrt{T}},
\end{aligned}
$$
where $$G= \frac{L_1 A_{max} (\lambda + \frac{2\gamma}{1-\gamma} L_1^2 A_{max})}{(1-\gamma)(\lambda - (6 L_1^2 + 2L_2) A_{max})},$$ 
$$C=\frac{ 2L_1 (\lambda + \frac{2\gamma}{1-\gamma} L_1^2 A_{max})}{(1-\gamma)(\lambda - (6 L_1^2 + 2L_2) A_{max})},$$ and $$B =  \frac{(160 L_1^3 + 56 L_1 L_2 +4 L_3) (\lambda + \frac{2\gamma}{1-\gamma} L_1^2 A_{max})^2 }{(1-\gamma)^3(\lambda - (6 L_1^2 + 2L_2) A_{max})^2}.$$
\end{theorem}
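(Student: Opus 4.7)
The plan is to first verify the existence of the hypergradient, then establish both a gradient bound and a smoothness bound for the per-task meta-objective $J_\tau(\mathcal{A}lg^{(3)}(\hat{\pi}_\phi,\lambda,\tau))$, and finally invoke a standard stochastic gradient descent convergence result for smooth, non-convex objectives with bounded stochastic gradients (e.g.\ Ghadimi--Lan). The randomness in the iterates $\phi_t$ comes solely from the i.i.d.\ task sampling in line 2 of Algorithm \ref{alg:framework0_meta_rl}, so the stochastic gradient $\nabla_\phi J_\tau$ is an unbiased estimator of $\nabla_\phi F^{(3)}(\phi)$, and the usual SDG machinery applies once we have Lipschitz continuity of $\nabla_\phi F^{(3)}$ together with a uniform bound on $\|\nabla_\phi J_\tau\|$.

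The first step is to note that under $\lambda > (6L_1^2+2L_2)A_{max}$, the inner objective of \eqref{withintask} is strongly concave in $\theta$ at $\theta=\theta'_\tau$ (this uses Assumption \ref{wdfdwdf_meta_rl} to bound $\|\nabla_\theta^2 \hat{\pi}_\theta(a|s)\|$ by $6L_1^2+2L_2$ times $\hat{\pi}_\theta(a|s)$, and then combines with $|A_\tau^{\hat{\pi}_\phi}|\leq A_{max}$). Strong concavity implies the hypothesis of Proposition \ref{propsition2_meta_rl}, so $\nabla_\phi J_\tau(\hat{\pi}_{\theta'_\tau})$ exists and is given in closed form. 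Second, I would bound $\|\nabla_\phi \theta'_\tau\|$: the inverse-Hessian factor has operator norm at most $(\lambda-(6L_1^2+2L_2)A_{max})^{-1}$, while the mixed-partial factor is bounded using $\|\nabla_\theta \hat{\pi}_\theta(a|s)\|\leq 2L_1\hat{\pi}_\theta(a|s)$ together with the $Q$-gradient bound $\|\nabla_\phi Q^{\hat{\pi}_\phi}_\tau(s,a)\|\leq \frac{\gamma}{1-\gamma}L_1 A_{max}$. This yields $\|\nabla_\phi \theta'_\tau\|\leq \frac{\lambda+\frac{2\gamma}{1-\gamma}L_1^2 A_{max}}{(1-\gamma)(\lambda-(6L_1^2+2L_2)A_{max})}$. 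Composing with the (outer) policy-gradient factor, bounded by $L_1 A_{max}$, produces the gradient bound $\|\nabla_\phi J_\tau\|\leq G_3$ as defined in the theorem statement.

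For the smoothness bound, I would apply a Lemma D.2-type argument from \cite{agarwal2021theory}: the Hessian of $J_\tau(\hat{\pi}_{\theta'_\tau})$ is controlled by $\int_\mathcal{A}\|\nabla_\phi \hat{\pi}_{\theta'_\tau}(a|s)\|\,da$ and $\int_\mathcal{A}\|\nabla^2_\phi \hat{\pi}_{\theta'_\tau}(a|s)\|\,da$. The first integral is $O(L_1\|\nabla_\phi\theta'_\tau\|)$ by direct differentiation of the softmax through the chain rule. The second integral is the technical crux and is the main obstacle: it requires bounding $\|\nabla_\phi^2\theta'_\tau\|$, which in turn involves differentiating the implicit-function formula once more. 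This introduces third-order derivatives of $\hat{\pi}_\theta$ (hence of $f_\theta$, using Assumption \ref{wdfdwdf_meta_rl}(iv) to bound them by $40L_1^3+16L_1L_2+2L_3$ times $\hat{\pi}_\theta$), together with $\|\nabla_\phi^2 Q^{\hat{\pi}_\phi}_\tau\|$ (bounded by Lemma D.4 of \cite{agarwal2021theory} adapted to the function-approximation setting, giving $O(r_{max}/(1-\gamma)^3)$). Carefully tracking constants and repeatedly invoking $\lambda-(6L_1^2+2L_2)A_{max}$ in the denominator yields $\|\nabla_\phi^2 J_\tau\|\leq \frac{r_{max}B_3}{(1-\gamma)^2}+\frac{2\gamma r_{max}C_3^2}{(1-\gamma)^3}$, i.e., the smoothness constant matching the step-size choice in the statement.

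Finally, with the Lipschitz-gradient constant $\frac{r_{max}B_3}{(1-\gamma)^2}+\frac{2\gamma r_{max}C_3^2}{(1-\gamma)^3}$ and gradient-norm bound $G_3$ in hand, I would plug into the standard SGD convergence result: with step size $\alpha=\min\{(\text{smoothness})^{-1},1/(G_3\sqrt{T})\}$, one has $\frac{1}{T}\sum_{t=1}^T \mathbb{E}\|\nabla_\phi F^{(3)}(\phi_t)\|^2 \leq \frac{2(F^{(3)}(\phi_T)-F^{(3)}(\phi_0))}{\alpha T}+\alpha\cdot G_3^2\cdot(\text{smoothness})$, then use $F^{(3)}(\phi_T)-F^{(3)}(\phi_0)\leq r_{max}/(1-\gamma)$ to absorb constants and obtain the stated $K_3/T+M_3/\sqrt{T}$ rate. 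The main obstacle throughout is the careful bookkeeping in the Hessian bound, especially bounding $\|\nabla^2_\phi\theta'_\tau\|$ via twice-iterated implicit differentiation; everything else is routine composition of Lipschitz and boundedness estimates.
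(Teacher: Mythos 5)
Your proposal follows essentially the same route as the paper's proof: existence and the gradient bound $G_3$ via strong concavity of the inner objective and the implicit-function formula of Proposition \ref{propsition2_meta_rl} (Lemma \ref{wevvzzvfffs_meta_rl}), the smoothness constant via bounds on $\int_{\mathcal{A}}\|\nabla_\phi \hat{\pi}_{\theta'_\tau}(a|s)\|\,da$ and $\int_{\mathcal{A}}\|\nabla^2_\phi \hat{\pi}_{\theta'_\tau}(a|s)\|\,da$ obtained by twice-iterated implicit differentiation with the third-derivative bound $40L_1^3+16L_1L_2+2L_3$ and Lemma D.2/D.4 of \cite{agarwal2021theory} (Lemmas \ref{wqzxcgxwefsdcvczxc_meta_rl} and \ref{wevvwezasdzzvfffs_meta_rl}), and finally the Ghadimi--Lan SGD bound with $F^{(3)}(\phi_T)-F^{(3)}(\phi_0)\leq r_{max}/(1-\gamma)$. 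The intermediate constants you identify match the paper's, so the plan is correct and coincides with the paper's argument.
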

\begin{proof}
Similar to the proof of Theorem \ref{lemma_convergence_1_meta_rl}, by using the gradient bound in Lemma \ref{wevvzzvfffs_meta_rl} and the smoothness in Lemma \ref{wevvwezasdzzvfffs_meta_rl}.
\end{proof}

\section{Optimality of one-time policy adaptation} 
\subsection{Important Lemmas}
\label{wvfhwvzzgzbzxbcb_meta_rl}
\begin{lemma}
\label{important_lemma_1_meta_rl}
Suppose that Assumptions \ref{assdsssdsss_meta_rl}, \ref{awhevdbsfvb_meta_rl} hold. For any task $\tau$, and any policies $\pi$ and $\pi^{\prime}$, the following bound holds:
$$\frac{1}{1-\gamma} \underset{\substack{s \sim \nu^{\pi}_{\tau} \\
a \sim \pi^\prime(\cdot|s)}}{\mathbb{E}}\left[ A^{\pi}_{\tau}(s,a)\right] - C ^{\pi}_{\tau}({\pi^\prime}) \leq J_{\tau}(\pi^\prime)-J_{\tau}(\pi) \leq \frac{1}{1-\gamma} \underset{\substack{s \sim \nu^{\pi}_{\tau} \\
a \sim \pi^\prime(\cdot|s)}}{\mathbb{E}}\left[ A^{\pi}_{\tau}(s,a)\right] + C ^{\pi}_{\tau}({\pi^\prime}) $$
where
$$
C ^{\pi}_{\tau}({\pi^\prime}) = \frac{4\gamma A_{max}}{(1-\gamma)^2}  D_{TV}^{max}(\pi || \pi^\prime) {\mathbb{E}}_{s \sim \nu^{\pi}_{\tau}} \left[D_{TV}(\pi(\cdot| s) || \pi^\prime(\cdot| s))\right]. 
$$
Here, we define $D_{TV}(\pi(\cdot| s) || \pi^\prime(\cdot| s)) \triangleq \frac{1}{2}\sum_{a \in \mathcal{A}} |\pi(a| s)-\pi^\prime(a| s)| $ in a discrete action space or $D_{TV}(\pi(\cdot| s) || \pi^\prime(\cdot| s)) \triangleq \frac{1}{2}\int_{a \in \mathcal{A}} |\pi(a| s)-\pi^\prime(a| s)| d a $ in a continuous action space, and $D_{TV}^{max}(\pi || \pi^\prime) \triangleq \max_{s \in \mathcal{S}} D_{TV}(\pi(\cdot| s) || \pi^\prime(\cdot| s))$. 

\end{lemma}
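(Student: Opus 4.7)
The plan is to follow a TRPO-style policy improvement argument \cite{schulman2015trust,achiam2017constrained}, anchoring on the Performance Difference Lemma and carefully converting the ``wrong'' state distribution into $\nu^\pi_\tau$ via a coupling bound on visitation distributions. Concretely, I would start from the identity
\begin{equation*}
J_{\tau}(\pi^\prime)-J_{\tau}(\pi) \;=\; \frac{1}{1-\gamma}\,\mathbb{E}_{s \sim \nu^{\pi^\prime}_{\tau},\,a \sim \pi^\prime(\cdot|s)}\bigl[A^{\pi}_{\tau}(s,a)\bigr],
\end{equation*}
which is the standard Performance Difference Lemma, then write
\begin{equation*}
J_{\tau}(\pi^\prime)-J_{\tau}(\pi) - \frac{1}{1-\gamma}\,\mathbb{E}_{s \sim \nu^{\pi}_{\tau},\,a \sim \pi^\prime(\cdot|s)}\bigl[A^{\pi}_{\tau}(s,a)\bigr] \;=\; \frac{1}{1-\gamma}\,\sum_{s}\bigl(\nu^{\pi^\prime}_\tau(s)-\nu^{\pi}_\tau(s)\bigr)\,g(s),
\end{equation*}
where $g(s) \triangleq \mathbb{E}_{a \sim \pi^\prime(\cdot|s)}[A^{\pi}_{\tau}(s,a)]$. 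Since $\mathbb{E}_{a \sim \pi(\cdot|s)}[A^{\pi}_{\tau}(s,a)] = 0$, I can rewrite $g(s) = \sum_{a}\bigl(\pi^\prime(a|s)-\pi(a|s)\bigr)A^{\pi}_{\tau}(s,a)$, and by Assumption \ref{assdsssdsss_meta_rl} bound $|g(s)| \leq 2 A_{max}\, D_{TV}(\pi(\cdot|s)\|\pi^\prime(\cdot|s)) \leq 2 A_{max}\, D_{TV}^{max}(\pi\|\pi^\prime)$.

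Next, the key auxiliary step is a coupling-style bound on the discounted state visitation distributions of the form
\begin{equation*}
\tfrac{1}{2}\|\nu^{\pi^\prime}_\tau - \nu^{\pi}_\tau\|_1 \;\leq\; \frac{\gamma}{1-\gamma}\,\mathbb{E}_{s \sim \nu^{\pi}_\tau}\!\bigl[D_{TV}(\pi(\cdot|s)\,\|\,\pi^\prime(\cdot|s))\bigr].
\end{equation*}
I would derive this by unrolling $\nu^{\pi^\prime}_\tau - \nu^{\pi}_\tau$ as a telescoping series along time steps, at each step isolating the action-level disagreement $\pi^\prime(\cdot|s)-\pi(\cdot|s)$ driven by trajectories drawn from $\pi$, and summing the geometric series in $\gamma$; this mirrors Lemma 3 of \cite{achiam2017constrained} but keeps the state factor as an expectation under $\nu^{\pi}_\tau$ rather than bounding it by its maximum. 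Combining this with the pointwise bound on $|g(s)|$ via Hölder's inequality yields the mixed product $D_{TV}^{max}(\pi\|\pi^\prime)\cdot \mathbb{E}_{s \sim \nu^\pi_\tau}[D_{TV}(\pi(\cdot|s)\|\pi^\prime(\cdot|s))]$, multiplied by the prefactor $\tfrac{1}{1-\gamma} \cdot 2 A_{max} \cdot \tfrac{2\gamma}{1-\gamma} = \tfrac{4\gamma A_{max}}{(1-\gamma)^2}$, exactly matching $C^{\pi}_{\tau}(\pi^\prime)$.

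The upper and lower inequalities then follow by taking the absolute value of the error term, and replacing $\nu^{\pi^\prime}_\tau$ with $\nu^{\pi}_\tau$ inside the leading advantage term on both sides. The main obstacle is obtaining the asymmetric product $D_{TV}^{max}\cdot \mathbb{E}[D_{TV}]$ rather than the simpler $(D_{TV}^{max})^2$ familiar from the TRPO paper: this forces the visitation-distribution bound to be stated with an expectation over $\nu^{\pi}_\tau$ on one side, which requires tracking where the ``max over $s$'' is applied and where we can retain the state-average. Once that coupling bound is established carefully, the remaining algebra is routine and the constants collapse into the claimed $C^{\pi}_\tau(\pi^\prime)$.
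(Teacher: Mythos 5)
Your proposal is correct and is essentially the paper's own argument in a different notation: the error term you bound, $\frac{1}{1-\gamma}\sum_s\bigl(\nu^{\pi'}_\tau(s)-\nu^{\pi}_\tau(s)\bigr)g(s)$, is exactly the second-order term $\frac{\gamma^2}{1-\gamma}\,v^{\pi\top}_\tau\Delta\tilde{G}\Delta\nu^\pi_\tau$ that the paper isolates via the resolvent expansion $\tilde{G}=G+\gamma G\Delta G+\gamma^2 G\Delta\tilde{G}\Delta G$, and both proofs then apply the same H\"older split ($\|\gamma v^\top\Delta\|_\infty\le 2A_{max}D_{TV}^{max}$ times the visitation-shift bound from Lemma 3 of Achiam et al.) to obtain the identical constant $\frac{4\gamma A_{max}}{(1-\gamma)^2}$. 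The only cosmetic difference is that you start from the Performance Difference Lemma and bound the distribution shift, while the paper expands the resolvent first; the key coupling inequality and all constants coincide.
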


\begin{proof} 
Let $P^\pi_\tau$ is a matrix where $P^\pi_\tau(i,j)= \mathbb{E}_{a \sim \pi(\cdot | s_i)} P_\tau ( s_j | s_i, a ) $ and $P^{\pi^\prime}_\tau$ is a matrix where $P^{\pi^\prime}_\tau(i,j)= \mathbb{E}_{a \sim {\pi^\prime}(\cdot | s_i)} P_\tau ( s_j | s_i, a ) $.
Let $G=(1+\gamma P^\pi_\tau+(\gamma P^\pi_\tau)^2+\ldots)=(1-\gamma P^\pi_\tau)^{-1}$, and similarly $\tilde{G}=(1+\gamma P^{{\pi}^{\prime}_{\tau}}+(\gamma P^{{\pi}^{\prime}_{\tau}})^2+\ldots)=(1-\gamma P^{{\pi}^{\prime}_{\tau}})^{-1}$. 
Let $\rho$ be a density vector on state space and $r_{\tau}$ is a reward function vector on state space, thus $r_{\tau}^{\top} \rho$ is a scalar meaning the expected reward under density $\rho$. Note that $J_{\tau}(\pi)=r_{\tau}^{\top} G \rho_\tau$, and $J_{\tau}({\pi}^{\prime})=r_{\tau}^{\top} \tilde{G} \rho_\tau$. Here, $\rho_\tau$ is the initial state distribution for task $\tau$. Let $\Delta=P^{{\pi}^{\prime}}_{\tau}-P^\pi_\tau$.

Follow the proof in Appendix B in \cite{schulman2015trust}, we have 
$$
\begin{aligned}
G^{-1}-\tilde{G}^{-1} =\left(1-\gamma P_\pi\right)-\left(1-\gamma P_{\tilde{\pi}}\right)  =\gamma \Delta .
\end{aligned}
$$
Left multiply by $\tilde{G}$ and right multiply by $G$,
\begin{equation}
\label{sldkjfnbadcacjs_meta_rl}
\begin{aligned}
\tilde{G} =\gamma  \tilde{G} \Delta G  + G.
\end{aligned}
\end{equation}
Left multiply by $G$ and right multiply by $\tilde{G}$,
\begin{equation}
\label{sldkjfnbjs_meta_rl}
\begin{aligned}
\tilde{G} =\gamma  G \Delta \tilde{G}  + G.
\end{aligned}
\end{equation}
Substituting the right-hand side in (\ref{sldkjfnbadcacjs_meta_rl}) into $\tilde{G}$ in (\ref{sldkjfnbjs_meta_rl}), then
$$
\tilde{G}=G+\gamma G \Delta G+\gamma^2 G \Delta \tilde{G} \Delta G. 
$$
So we have
\begin{equation}
\label{awhavbdfhwev_meta_rl}
J_{\tau}({\pi}^{\prime})-J_{\tau}(\pi)=r_{\tau}^{\top}(\tilde{G}-G) \rho_{\tau}=\gamma r_{\tau}^{\top} G \Delta G \rho_{\tau}+\gamma^2 r_{\tau}^{\top} G \Delta \tilde{G} \Delta G \rho_{\tau}.
\end{equation}
Note that $r_{\tau}^{\top} G={v_{\tau}^{\pi}}^{\top}$, where $v$ is the value function on the state space. We also have $G \rho_{\tau}=\frac{1}{1-\gamma} \nu^\pi_{\tau} $, where $\nu^\pi_{\tau}$ is the state visitation distribution vector. So, 
$$
J_{\tau}(\tilde{\pi})-J_{\tau}(\pi)=r_{\tau}^{\top}(\tilde{G}-G) \rho_{\tau}= \frac{\gamma}{1-\gamma} {v_{\tau}^{\pi}}^{\top} \Delta \nu^\pi_{\tau}+\frac{\gamma^2}{1-\gamma} {v_{\tau}^{\pi}}^{\top} \Delta \tilde{G} \Delta \nu^\pi_{\tau}.
$$
Consider the first term $\frac{\gamma}{1-\gamma} {v_{\tau}^{\pi}}^{\top} \Delta \nu^\pi_{\tau}$, similar to Equation (50) in \cite{schulman2015trust}, we have
\begin{equation}
\label{weygshvhxcvhvh_meta_rl}
\begin{aligned}
& \gamma {v_{\tau}^{\pi}}^{\top} \Delta \nu^\pi_{\tau} = {v_{\tau}^{\pi}}^{\top} (P^{{\pi}^{\prime}}_{\tau}-P^\pi_\tau) \nu^\pi_{\tau} \\
= & \sum_s \nu^\pi_{\tau}(s) \sum_{s^{\prime}} \sum_a(\pi^{\prime}(a | s)-{\pi}(a | s)) P_\tau\left(s^{\prime} | s, a\right) \gamma {v_{\tau}^{\pi}}\left(s^{\prime}\right) \\
= & \sum_s \nu^\pi_{\tau}(s)  \sum_a(\pi^{\prime}(a | s)-{\pi}(a | s)) \left[r(s) + \sum_{s^{\prime}} P_\tau\left(s^{\prime} | s, a\right) \gamma {v_{\tau}^{\pi}}\left(s^{\prime}\right)-v(s)\right] \\
= & \sum_s \nu^\pi_{\tau}(s) \sum_a(\pi^{\prime}(a | s)-{\pi}(a | s)) A^\pi_{\tau}(s,a)
\end{aligned}
\end{equation}
Since we have $\sum_a {\pi}(a | s) A^\pi_{\tau}(s,a) = 0$, we have
$$\gamma {v_{\tau}^{\pi}}^{\top} \Delta \nu^\pi_{\tau} = \sum_s \nu^\pi_{\tau}(s) \sum_a \pi^{\prime}(a | s) A^\pi_{\tau}(s,a)= \underset{\substack{s \sim \nu^{\pi}_{\tau} \\
a \sim \pi^\prime(\cdot|s)}}{\mathbb{E}}\left[ A^{\pi}_{\tau}(s,a)\right]. $$

Combine (\ref{awhavbdfhwev_meta_rl}) and the above equation, we have the following for the second term:
$$\frac{\gamma^2}{1-\gamma} {v_{\tau}^{\pi}}^{\top} \Delta \tilde{G} \Delta \nu^\pi_{\tau}= J_{\tau}(\pi^\prime)-J_{\tau}(\pi) - \frac{1}{1-\gamma} \underset{\substack{s \sim \nu^{\pi}_{\tau} \\
a \sim \pi^\prime(\cdot|s)}}{\mathbb{E}}\left[ A^{\pi}_{\tau}(s,a)\right].$$
Then we need to show $$\left| \frac{\gamma^2}{1-\gamma} {v_{\tau}^{\pi}}^{\top} \Delta \tilde{G} \Delta \nu^\pi_{\tau} \right| \leq C ^{\pi}_{\tau}({\pi^\prime})  .$$

First, by Hölder's inequality, 
$$\left| \frac{\gamma^2}{1-\gamma} {v_{\tau}^{\pi}}^{\top} \Delta \tilde{G} \Delta \nu^\pi_{\tau} \right| \leq \frac{\gamma}{1-\gamma} \|  \gamma {v_{\tau}^{\pi}}^{\top} \Delta \|_{\infty} \| \tilde{G} \Delta \nu^\pi_{\tau} \|_1.$$
Similar to (\ref{weygshvhxcvhvh_meta_rl}), each element in the vector $\gamma {v_{\tau}^{\pi}}^{\top} \Delta$ is $\sum_a(\pi^{\prime}(a | s)-{\pi}(a | s)) A^\pi_{\tau}(s,a)$, then we have 
$$\|  \gamma {v_{\tau}^{\pi}}^{\top} \Delta \|_{\infty} \leq \sum_a |\pi^{\prime}(a | s)-{\pi}(a | s)| A^\pi_{\tau}(s,a) \leq 2 A_{max} D_{TV}^{max}(\pi || \pi^\prime) .$$
From the Lemma 3 of \cite{achiam2017constrained},
we have $$\| \tilde{G} \Delta \nu^\pi_{\tau} \|_1 \leq \frac{2}{1-\gamma}  {\mathbb{E}}_{s \sim \nu^{\pi}_{\tau}} \left[D_{TV}(\pi(\cdot| s) || \pi^\prime(\cdot| s))\right]. $$ 
Therefore, we have 
$$ \left| \frac{\gamma^2}{1-\gamma} {v_{\tau}^{\pi}}^{\top} \Delta \tilde{G} \Delta \nu^\pi_{\tau} \right| \leq C ^{\pi}_{\tau}({\pi^\prime}) = \frac{4\gamma A_{max}}{(1-\gamma)^2}  D_{TV}^{max}(\pi || \pi^\prime) {\mathbb{E}}_{s \sim \nu^{\pi}_{\tau}} \left[D_{TV}(\pi(\cdot| s) || \pi^\prime(\cdot| s))\right].$$
Then the bounds hold.

\end{proof}

\begin{lemma}
\label{important_lemma_2_meta_rl}
Suppose that Assumptions \ref{assdsssdsss_meta_rl}, \ref{awhevdbsfvb_meta_rl} hold.  For any task $\tau$, any bounded parameters $\theta$ and $\theta^{\prime}$, and $i=1$ or $2$, the following bound holds for both $i=1$ and $2$:
$$
\begin{aligned}
J_{\tau}(\hat{\pi}_{\theta^\prime})-J_{\tau}(\hat{\pi}_{\theta}) \leq \frac{1}{1-\gamma} \underset{\substack{s \sim \nu^{\hat{\pi}_{\theta}}_{\tau} \\
a \sim \hat{\pi}_{\theta^\prime}(\cdot|s)}}{\mathbb{E}}\left[ A^{\hat{\pi}_{\theta}}_{\tau}(s,a)\right] + \frac{2\gamma A_{max}}{(1-\gamma)^2 \epsilon} D^2_{\tau,i} (\hat{\pi}_{\theta},\hat{\pi}_{\theta^\prime})
\end{aligned}
$$
and 
$$
\begin{aligned}
J_{\tau}(\hat{\pi}_{\theta^\prime})-J_{\tau}(\hat{\pi}_{\theta}) \geq \frac{1}{1-\gamma} \underset{\substack{s \sim \nu^{\hat{\pi}_{\theta}}_{\tau} \\
a \sim \hat{\pi}_{\theta^\prime}(\cdot|s)}}{\mathbb{E}}\left[ A^{\hat{\pi}_{\theta}}_{\tau}(s,a)\right] -\frac{2\gamma A_{max}}{(1-\gamma)^2 \epsilon} D^2_{\tau,i} (\hat{\pi}_{\theta},\hat{\pi}_{\theta^\prime}).
\end{aligned}
$$

\end{lemma}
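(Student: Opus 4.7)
The plan is to start from Lemma \ref{important_lemma_1_meta_rl}, which already provides the two-sided bound on $J_{\tau}(\hat{\pi}_{\theta'}) - J_{\tau}(\hat{\pi}_{\theta})$ with an additive error term $C^{\hat{\pi}_\theta}_{\tau}(\hat{\pi}_{\theta'})$ involving total-variation distances. It then suffices to show that
$C^{\hat{\pi}_\theta}_{\tau}(\hat{\pi}_{\theta'}) \leq \tfrac{2\gamma A_{max}}{(1-\gamma)^2 \epsilon}\, D^2_{\tau,i}(\hat{\pi}_\theta,\hat{\pi}_{\theta'})$ for $i=1,2$, so the TV-based penalty converts into a KL-based (hence $D_{\tau,i}^2$-based) penalty.

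The key step will be to convert $D_{TV}^{\max}(\hat{\pi}_\theta \,\|\, \hat{\pi}_{\theta'})$ into an expectation under $\nu^{\hat{\pi}_\theta}_\tau$, where I would invoke Assumption \ref{awhevdbsfvb_meta_rl}. Since $\nu^{\hat{\pi}_\theta}_\tau(s)\geq \epsilon$ for every $s$, letting $s^\star$ attain $D_{TV}^{\max}$ gives
$\epsilon\, D_{TV}(\hat{\pi}_\theta(\cdot|s^\star)\|\hat{\pi}_{\theta'}(\cdot|s^\star)) \leq \nu^{\hat{\pi}_\theta}_\tau(s^\star)\, D_{TV}(\hat{\pi}_\theta(\cdot|s^\star)\|\hat{\pi}_{\theta'}(\cdot|s^\star)) \leq \mathbb{E}_{s\sim\nu^{\hat{\pi}_\theta}_\tau}[D_{TV}(\hat{\pi}_\theta(\cdot|s)\|\hat{\pi}_{\theta'}(\cdot|s))]$, so $D_{TV}^{\max} \leq \tfrac{1}{\epsilon}\,\mathbb{E}_{s\sim\nu^{\hat{\pi}_\theta}_\tau}[D_{TV}(\hat{\pi}_\theta(\cdot|s)\|\hat{\pi}_{\theta'}(\cdot|s))]$.

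Plugging this into $C^{\hat{\pi}_\theta}_\tau(\hat{\pi}_{\theta'})$ replaces the product $D_{TV}^{\max}\cdot\mathbb{E}[D_{TV}]$ by $\tfrac{1}{\epsilon}(\mathbb{E}[D_{TV}])^2$. I would then apply Jensen's inequality to get $(\mathbb{E}[D_{TV}])^2\leq \mathbb{E}[D_{TV}^2]$, followed by Pinsker's inequality $D_{TV}^2(P\|Q)\leq \tfrac{1}{2}D_{KL}(P\|Q)$ (cf.\ \cite{csiszar2011information}) applied state-wise. Because $D_{TV}$ is symmetric in its two arguments, Pinsker's inequality can be invoked with either ordering inside the KL, so the same chain yields the result for $i=1$ (taking $P=\hat{\pi}_\theta$, $Q=\hat{\pi}_{\theta'}$) and for $i=2$ (taking $P=\hat{\pi}_{\theta'}$, $Q=\hat{\pi}_\theta$). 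Assembling the constants gives $C^{\hat{\pi}_\theta}_\tau(\hat{\pi}_{\theta'})\leq \tfrac{4\gamma A_{max}}{(1-\gamma)^2}\cdot\tfrac{1}{2\epsilon}D^2_{\tau,i} = \tfrac{2\gamma A_{max}}{(1-\gamma)^2\epsilon}D^2_{\tau,i}$, which matches the target.

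I do not expect any substantial obstacle: the argument is a short chain of standard inequalities (sufficient-visitation $\Rightarrow$ Jensen $\Rightarrow$ Pinsker). The only mildly delicate point is bookkeeping the directionality of the KL to cover both $D_{\tau,1}$ and $D_{\tau,2}$ uniformly, which is handled automatically by the symmetry of the TV distance in Pinsker's inequality.
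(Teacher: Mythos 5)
Your proposal is correct and follows essentially the same route as the paper's proof: both start from Lemma~\ref{important_lemma_1_meta_rl}, use Assumption~\ref{awhevdbsfvb_meta_rl} to bound $D_{TV}^{\max}$ by $\tfrac{1}{\epsilon}\mathbb{E}_{s\sim\nu^{\hat{\pi}_\theta}_\tau}[D_{TV}]$, apply Jensen's inequality, and then invoke Pinsker's inequality in both orderings to cover $D_{\tau,1}$ and $D_{\tau,2}$. The constant bookkeeping also matches the paper exactly.
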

\begin{proof}
The proof follows similar lines of Theorem 1 in \cite{schulman2015trust} and Corollary 1 and 2 in \cite{achiam2017constrained}. For the sake of self-containedness, we provide the complete proof.

We show the first inequality. The second inequality follows a similar way.
From Lemma \ref{important_lemma_1_meta_rl}, 
$$J_{\tau}(\hat{\pi}_{\theta^\prime})-J_{\tau}(\hat{\pi}_{\theta}) - \frac{1}{1-\gamma} \underset{\substack{s \sim \nu^{\hat{\pi}_{\theta}}_{\tau} \\
a \sim \hat{\pi}_{\theta^\prime}(\cdot|s)}}{\mathbb{E}}\left[ A^{\hat{\pi}_{\theta}}_{\tau}(s,a)\right] \leq \frac{4\gamma A_{max}}{(1-\gamma)^2}  D_{TV}^{max}(\hat{\pi}_{\theta} || \hat{\pi}_{\theta^\prime}) {\mathbb{E}}_{s \sim \nu^{\hat{\pi}_{\theta}}_{\tau}} \left[D_{TV}(\hat{\pi}_{\theta}(\cdot| s) || \hat{\pi}_{\theta^\prime}(\cdot| s))\right].$$
From Assumption \ref{awhevdbsfvb_meta_rl}, $\nu^{\hat{\pi}_{\theta}}(s) \geq \epsilon $ for any $s \in \mathcal{A}$. Also, $D_{TV}(\hat{\pi}_{\theta}(\cdot| s) || \hat{\pi}_{\theta^\prime}(\cdot| s))\geq 0$ for any $s \in \mathcal{A}$. Then, we have
$$\epsilon D_{TV}^{max}(\hat{\pi}_{\theta} || \hat{\pi}_{\theta^\prime}) \leq {\mathbb{E}}_{s \sim \nu^{\hat{\pi}_{\theta}}_{\tau}} \left[D_{TV}(\hat{\pi}_{\theta}(\cdot| s) || \hat{\pi}_{\theta^\prime}(\cdot| s))\right].$$
From Jensen's inequality, we have
$${\mathbb{E}}_{s \sim \nu^{\hat{\pi}_{\theta}}_{\tau}} \left[D_{TV}(\hat{\pi}_{\theta}(\cdot| s) || \hat{\pi}_{\theta^\prime}(\cdot| s))\right]^2 \leq {\mathbb{E}}_{s \sim \nu^{\hat{\pi}_{\theta}}_{\tau}} \left[D^2_{TV}(\hat{\pi}_{\theta}(\cdot| s) || \hat{\pi}_{\theta^\prime}(\cdot| s))\right].$$
From the above three inequalities, we have 
\begin{equation}
\label{qasdhgzhxcvhvasd_meta_rl}
J_{\tau}(\hat{\pi}_{\theta^\prime})-J_{\tau}(\hat{\pi}_{\theta}) - \frac{1}{1-\gamma} \underset{\substack{s \sim \nu^{\hat{\pi}_{\theta}}_{\tau} \\
a \sim \hat{\pi}_{\theta^\prime}(\cdot|s)}}{\mathbb{E}}\left[ A^{\hat{\pi}_{\theta}}_{\tau}(s,a)\right] \leq \frac{4\gamma A_{max}}{(1-\gamma)^2 \epsilon} {\mathbb{E}}_{s \sim \nu^{\hat{\pi}_{\theta}}_{\tau}} \left[D^2_{TV}(\hat{\pi}_{\theta}(\cdot| s) || \hat{\pi}_{\theta^\prime}(\cdot| s))\right]. 
\end{equation}
From \cite{csiszar2011information}, we have 
$$D^2_{TV}(\hat{\pi}_{\theta}(\cdot| s) || \hat{\pi}_{\theta^\prime}(\cdot| s)) \leq \frac{1}{2} D_{KL}(\hat{\pi}_{\theta}(\cdot| s) || \hat{\pi}_{\theta^\prime}(\cdot| s)),$$
and 
$$D^2_{TV}(\hat{\pi}_{\theta}(\cdot| s) || \hat{\pi}_{\theta^\prime}(\cdot| s)) \leq \frac{1}{2} D_{KL}(\hat{\pi}_{\theta^\prime}(\cdot| s) || \hat{\pi}_{\theta}(\cdot| s)).$$
Therefore, 
$$
\begin{aligned}
J_{\tau}(\hat{\pi}_{\theta^\prime})-J_{\tau}(\hat{\pi}_{\theta}) \leq \frac{1}{1-\gamma} \underset{\substack{s \sim \nu^{\hat{\pi}_{\theta}}_{\tau} \\
a \sim \hat{\pi}_{\theta^\prime}(\cdot|s)}}{\mathbb{E}}\left[ A^{\hat{\pi}_{\theta}}_{\tau}(s,a)\right] + \frac{2\gamma A_{max}}{(1-\gamma)^2 \epsilon} D^2_{\tau,1} (\hat{\pi}_{\theta},\hat{\pi}_{\theta^\prime}),
\end{aligned}
$$
and
$$
\begin{aligned}
J_{\tau}(\hat{\pi}_{\theta^\prime})-J_{\tau}(\hat{\pi}_{\theta}) \leq \frac{1}{1-\gamma} \underset{\substack{s \sim \nu^{\hat{\pi}_{\theta}}_{\tau} \\
a \sim \hat{\pi}_{\theta^\prime}(\cdot|s)}}{\mathbb{E}}\left[ A^{\hat{\pi}_{\theta}}_{\tau}(s,a)\right] + \frac{2\gamma A_{max}}{(1-\gamma)^2 \epsilon} D^2_{\tau,2} (\hat{\pi}_{\theta},\hat{\pi}_{\theta^\prime}).
\end{aligned}
$$

\end{proof}

\begin{lemma}
\label{important_lemma_4_meta_rl}
Consider the softmax policy with function approximation shown in Section \ref{rl_task_meta_rl}. Suppose that Assumptions \ref{assdsssdsss_meta_rl}, \ref{awhevdbsfvb_meta_rl}, and \ref{wdfdwdf_meta_rl} hold. For any task $\tau$, and any softmax policies parameterized by bounded $\theta$ and $\theta^\prime$, the following bound holds:
$$
\begin{aligned}
J_{\tau}(\hat{\pi}_{\theta^\prime})-J_{\tau}(\hat{\pi}_{\theta}) \leq \frac{1}{1-\gamma} \underset{\substack{s \sim \nu^{\hat{\pi}_{\theta}}_{\tau} \\
a \sim \hat{\pi}_{\theta^\prime}(\cdot|s)}}{\mathbb{E}}\left[ A^{\hat{\pi}_{\theta}}_{\tau}(s,a)\right] + \frac{4\gamma A_{max} L_1^2}{(1-\gamma)^2 \epsilon} \|\theta-\theta^\prime\|^2
\end{aligned}
$$
and 
$$
\begin{aligned}
J_{\tau}(\hat{\pi}_{\theta^\prime})-J_{\tau}(\hat{\pi}_{\theta}) \geq \frac{1}{1-\gamma} \underset{\substack{s \sim \nu^{\hat{\pi}_{\theta}}_{\tau} \\
a \sim \hat{\pi}_{\theta^\prime}(\cdot|s)}}{\mathbb{E}}\left[ A^{\hat{\pi}_{\theta}}_{\tau}(s,a)\right] -\frac{4\gamma A_{max} L_1^2}{(1-\gamma)^2 \epsilon} \|\theta-\theta^\prime\|^2.
\end{aligned}
$$
\end{lemma}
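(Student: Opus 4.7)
The plan is to mirror the proof of Lemma \ref{important_lemma_2_meta_rl}, but replace the Pinsker-type step (which converts $D_{TV}^2$ into $D_{KL}$) with a Lipschitz-type step that converts $D_{TV}^2$ into $\|\theta-\theta'\|^2$. Specifically, I would first invoke Lemma \ref{important_lemma_1_meta_rl} applied to $\pi=\hat{\pi}_{\theta}$ and $\pi'=\hat{\pi}_{\theta'}$, yielding the two-sided bound
\[
\left| J_{\tau}(\hat{\pi}_{\theta^\prime})-J_{\tau}(\hat{\pi}_{\theta}) - \frac{1}{1-\gamma}\underset{\substack{s \sim \nu^{\hat{\pi}_{\theta}}_{\tau}\\ a \sim \hat{\pi}_{\theta'}(\cdot|s)}}{\mathbb{E}}\!\left[A^{\hat{\pi}_\theta}_\tau(s,a)\right] \right| \leq \frac{4\gamma A_{max}}{(1-\gamma)^2}\, D_{TV}^{\max}(\hat{\pi}_\theta\|\hat{\pi}_{\theta'})\,\mathbb{E}_{s\sim\nu^{\hat{\pi}_\theta}_\tau}\!\left[D_{TV}(\hat{\pi}_\theta(\cdot|s)\|\hat{\pi}_{\theta'}(\cdot|s))\right].
\]
Then, exactly as in the derivation of (\ref{qasdhgzhxcvhvasd_meta_rl}), I would use Assumption \ref{awhevdbsfvb_meta_rl} together with Jensen's inequality to upper bound the product of total-variation terms by $\frac{1}{\epsilon}\,\mathbb{E}_{s\sim\nu^{\hat{\pi}_\theta}_\tau}[D_{TV}^2(\hat{\pi}_\theta(\cdot|s)\|\hat{\pi}_{\theta'}(\cdot|s))]$.

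The key new step, replacing the KL inequality used for $D_{\tau,1}$ and $D_{\tau,2}$, is a Lipschitz estimate on the softmax map $\theta\mapsto \hat{\pi}_\theta(\cdot|s)$ in the total-variation metric. Using Assumption \ref{wdfdwdf_meta_rl} and the identity $\nabla_\theta \hat\pi_\theta(a|s)=\hat\pi_\theta(a|s)(\nabla_\theta f_\theta(s,a)-\mathbb{E}_{a'\sim\hat\pi_\theta(\cdot|s)}\nabla_\theta f_\theta(s,a'))$, the triangle inequality immediately gives $\|\nabla_\theta \hat\pi_\theta(a|s)\|\leq 2L_1\hat\pi_\theta(a|s)$ (this is exactly inequality (\ref{awxasxcacgdvchgue_meta_rl})), so integrating over $a\in\mathcal{A}$ yields $\int_{\mathcal{A}}\|\nabla_\theta \hat\pi_\theta(a|s)\|\,da \leq 2L_1$. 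Then a segment integral from $\theta'$ to $\theta$ produces
\[
2\,D_{TV}(\hat\pi_\theta(\cdot|s)\|\hat\pi_{\theta'}(\cdot|s)) \;=\; \int_{\mathcal{A}} |\hat\pi_\theta(a|s)-\hat\pi_{\theta'}(a|s)|\,da \;\leq\; 2L_1\|\theta-\theta'\|,
\]
so $D_{TV}^2(\hat\pi_\theta(\cdot|s)\|\hat\pi_{\theta'}(\cdot|s))\leq L_1^2\|\theta-\theta'\|^2$ uniformly in $s$. Substituting this pointwise bound into the Jensen inequality above and recalling that $d_3^2(\hat\pi_\theta,\hat\pi_{\theta'},s)=\|\theta-\theta'\|^2$ (so the right-hand side does not depend on $s$), I obtain the desired $\frac{4\gamma A_{max} L_1^2}{(1-\gamma)^2\epsilon}\|\theta-\theta'\|^2$ bound, which, fed back into the two-sided form from Lemma \ref{important_lemma_1_meta_rl}, yields both inequalities in the statement.

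No step in this plan looks genuinely difficult: the Lemma \ref{important_lemma_1_meta_rl} invocation, the Assumption \ref{awhevdbsfvb_meta_rl}/Jensen sandwich, and the segment-integration of $\nabla_\theta \hat\pi_\theta$ are all routine. The only part requiring care is checking that the bound (\ref{awxasxcacgdvchgue_meta_rl}) genuinely gives the sharp constant $2L_1$ (not a larger constant that would inflate the final $L_1^2$ to $4L_1^2$); this amounts to verifying that the expectation term $\mathbb{E}_{a'\sim\hat\pi_\theta(\cdot|s)}\nabla_\theta f_\theta(s,a')$ has norm at most $L_1$ by Jensen, which it does since $\|\nabla_\theta f_\theta(s,a')\|\leq L_1$ for every $a'$. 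So the main (minor) obstacle is bookkeeping of the constants to land exactly on the stated coefficient $4\gamma A_{max} L_1^2/((1-\gamma)^2\epsilon)$.
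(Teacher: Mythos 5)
Your proposal is correct and follows essentially the same route as the paper's proof: both reduce the problem via Lemma \ref{important_lemma_1_meta_rl}, Assumption \ref{awhevdbsfvb_meta_rl}, and Jensen to the bound (\ref{qasdhgzhxcvhvasd_meta_rl}) in terms of $\mathbb{E}_{s}[D_{TV}^2]$, and then control $D_{TV}(\hat\pi_\theta(\cdot|s)\|\hat\pi_{\theta'}(\cdot|s))$ by $L_1\|\theta-\theta'\|$ using the bound $\|\nabla_\theta\hat\pi_\theta(a|s)\|\le 2L_1\hat\pi_\theta(a|s)$ from (\ref{awxasxcacgdvchgue_meta_rl}). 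The only (cosmetic) difference is that you integrate the gradient along the segment and swap the order of integration, whereas the paper applies the mean value theorem pointwise in $a$; your version is if anything slightly cleaner, since it avoids the issue of the intermediate point depending on $a$.
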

\begin{proof}

From (\ref{aqqsdvasgdvchgue_meta_rl}), for any $\theta \in \mathbb{R}^n$, 
$$
\nabla_\theta \hat{\pi}_\theta (a | s) = \hat{\pi}_\theta (a | s) \nabla_\theta f_\theta(s,a) - \hat{\pi}_\theta (a | s) \frac{\int_{\mathcal{A}}\nabla_\theta f_\theta (s,a^{\prime})  \exp{(f_\theta(s,a^{\prime}))} d a^{\prime}}{\int_{\mathcal{A}} \exp{(f_\theta(s,a^{\prime}))} d a^{\prime}}.
$$
\begin{comment}
Then, 
$$
\nabla_\theta \ln \hat{\pi}_\theta (a | s) = \nabla_\theta f_\theta(s,a) -  \frac{\int_{\mathcal{A}}\nabla_\theta f_\theta (s,a^{\prime})  \exp{(f_\theta(s,a^{\prime}))} d a^{\prime}}{\int_{\mathcal{A}} \exp{(f_\theta(s,a^{\prime}))} d a^{\prime}}.
$$
Then, for any $\theta \in \mathbb{R}^n$, 
$$
\begin{aligned}
\|\nabla_\phi \ln \hat{\pi}_\phi (a | s) \| \leq  \|\nabla_\phi f_\phi(s,a)\| +  \left\|\frac{\int_{\mathcal{A}}\nabla_\phi f_\phi (s,a^{\prime})  \exp{(f_\phi(s,a^{\prime}))} d a^{\prime}}{\int_{\mathcal{A}} \exp{(f_\phi(s,a^{\prime}))} d a^{\prime}} \right\| \leq 2  L_1,
\end{aligned}
$$
\end{comment}
Then,
$$
\begin{aligned}
\|\nabla_\theta \hat{\pi}_\theta (a | s) \| &\leq \hat{\pi}_\theta (a | s) \|\nabla_\theta f_\theta(s,a)\| + \hat{\pi}_\theta (a | s) \left\|\frac{\int_{\mathcal{A}}\nabla_\theta f_\theta (s,a^{\prime})  \exp{(f_\theta(s,a^{\prime}))} d a^{\prime}}{\int_{\mathcal{A}} \exp{(f_\theta(s,a^{\prime}))} d a^{\prime}} \right\| \\
&\leq 2 \hat{\pi}_\theta (a | s) L_1
\end{aligned}
$$
\begin{comment}
From Lemma \ref{important_lemma_2_meta_rl}, we have 
$$
\begin{aligned}
J_{\tau}(\hat{\pi}_{\theta^\prime})-J_{\tau}(\hat{\pi}_\theta) \leq \frac{1}{1-\gamma} \underset{\substack{s \sim \nu^{\hat{\pi}_\theta}_{\tau} \\
a \sim \hat{\pi}_{\theta^\prime}(\cdot|s)}}{\mathbb{E}}\left[ A^{\hat{\pi}}_{\tau}(s,a)\right] + \frac{2\gamma A_{max}}{(1-\gamma)^2 \epsilon} {\mathbb{E}}_{s \sim \nu^{\hat{\pi}_\theta}_{\tau}} \left[D_{KL}(\hat{\pi}_\theta(\cdot| s) || \hat{\pi}_{\theta^\prime}(\cdot| s))\right] 
\end{aligned}
$$
\end{comment}
From the mean value theorem, we have
$$ |\hat{\pi}_{\theta} (a | s)-\hat{\pi}_{\theta^\prime} (a | s)| \leq 2 \hat{\pi}_{\phi(a)} (a | s) L_1 \|\theta-\theta^\prime\|,$$
where $\phi(a) = \delta(a) \theta + (1- \delta(a)) \theta^{\prime} $ and $0 \leq \delta(a) \leq 1$.
So, 
$$ \frac{1}{2} \sum_{a \in \mathcal{A}}  |\hat{\pi}_{\theta} (a | s)-\hat{\pi}_{\theta^\prime} (a | s)| \leq  L_1 \|\theta-\theta^\prime\|.$$
From (\ref{qasdhgzhxcvhvasd_meta_rl}), we have
$$
J_{\tau}(\hat{\pi}_{\theta^\prime})-J_{\tau}(\hat{\pi}_{\theta}) - \frac{1}{1-\gamma} \underset{\substack{s \sim \nu^{\hat{\pi}_{\theta}}_{\tau} \\
a \sim \hat{\pi}_{\theta^\prime}(\cdot|s)}}{\mathbb{E}}\left[ A^{\hat{\pi}_{\theta}}_{\tau}(s,a)\right] \leq \frac{4\gamma A_{max} L^2_1}{(1-\gamma)^2 \epsilon} \|\theta-\theta^\prime\|^2. 
$$
We use the same way to show another inequality. 
\end{proof}

\subsection{Proof of Theorems \ref{lemkgjhbsdkjgbsdjgbce_meta_rl} and \ref{lemkgjhbsdewrgekjgbsdjgbce_meta_rl}}
\label{qjfgasdjbfjasebfjhgb_meta_rl}

\begin{proof}[Proof of Theorem \ref{lemkgjhbsdkjgbsdjgbce_meta_rl}]
When the requirement of Theorem \ref{lemma_convergence_meta_rl}, $\lambda \geq 2 A_{max}$, is satisfied,
From Assumption \ref{aqshzhv_meta_rl} and Theorem \ref{lemma_convergence_meta_rl}, for both $i=1$ and $2$,
\begin{equation}
\label{qhzzzxzzz_meta_rl}
\begin{aligned} 
&\frac{1}{T} \sum_{t=1}^{T} \mathbb{E}_{t} \left[ \max_\phi \mathbb{E}_{\tau \sim \mathbb{P}(\Gamma)}[J_{\tau}(\mathcal{A} l g^{(i)}(\hat{\pi}_{\phi}, \lambda, \tau))-\mathbb{E}_{\tau \sim \mathbb{P}(\Gamma)}[ J_{\tau}(\mathcal{A} l g^{(i)}(\hat{\pi}_{\phi_t}, \lambda, \tau))] ]\right] \\
\leq & \frac{1}{T} \sum_{t=1}^{T} \mathbb{E}_{t} \left[ h_i \left( \|\nabla_\phi \mathbb{E}_{\tau \sim \mathbb{P}(\Gamma)}[ J_{\tau}(\mathcal{A} l g^{(i)}(\hat{\pi}_{\phi_t}, \lambda, \tau))] \|^2 \right) \right] \\
\leq & h_i \left( \frac{1}{T} \sum_{t=1}^{T} \mathbb{E}_{t} \left[ \|\nabla_\phi \mathbb{E}_{\tau \sim \mathbb{P}(\Gamma)}[ J_{\tau}(\mathcal{A} l g^{(i)}(\hat{\pi}_{\phi_t}, \lambda, \tau))] \|^2  \right] \right) \\
\leq & h_i\left(\frac{K_i}{T} + \frac{M_i}{\sqrt{T}}\right)
\end{aligned} 
\end{equation}
where the constants $K_i$ and $M_i$ are shown in Theorem \ref{lemma_convergence_meta_rl}.
The last inequality sign comes from that $h_i$ is a concave function and Jensen's inequality.

Let $\hat{\pi}_{\theta_\tau^\prime}({\phi})=\mathcal{A} l g^{(i)}(\hat{\pi}_{\phi}, \lambda, \tau)$ for any meta-parameter $\phi$.
From the definition of the within-task algorithm, we have
$$\underset{\substack{s \sim \nu^{\hat{\pi}_{\phi}}_{\tau} \\
a \sim \hat{\pi}_{\theta_\tau^\prime}({\phi})(\cdot|s)}}{\mathbb{E}}\left[ Q^{\hat{\pi}_{\phi}}_{\tau}(s,a)\right] - \lambda D^2_{\tau,i} (\hat{\pi}_{\phi},\hat{\pi}_{\theta_\tau^\prime}({\phi})) \geq \underset{\substack{s \sim \nu^{\hat{\pi}_{\phi}}_{\tau} \\
a \sim \hat{\pi}_{\theta_\tau^*}(\cdot|s)}}{\mathbb{E}}\left[ Q^{\hat{\pi}_{\phi}}_{\tau}(s,a)\right] - \lambda D^2_{\tau,i} (\hat{\pi}_{\phi},\hat{\pi}_{\theta_\tau^*}).$$
This is equivalent to
$$\underset{\substack{s \sim \nu^{\hat{\pi}_{\phi}}_{\tau} \\
a \sim \hat{\pi}_{\theta_\tau^\prime}({\phi})(\cdot|s)}}{\mathbb{E}}\left[ A^{\hat{\pi}_{\phi}}_{\tau}(s,a)\right] - \lambda D^2_{\tau,i} (\hat{\pi}_{\phi},\hat{\pi}_{\theta_\tau^\prime}({\phi})) \geq \underset{\substack{s \sim \nu^{\hat{\pi}_{\phi}}_{\tau} \\
a \sim \hat{\pi}_{\theta_\tau^*}(\cdot|s)}}{\mathbb{E}}\left[ A^{\hat{\pi}_{\phi}}_{\tau}(s,a)\right] - \lambda D^2_{\tau,i} (\hat{\pi}_{\phi},\hat{\pi}_{\theta_\tau^*}).$$

when $\lambda \geq \frac{2 \gamma A_{max}}{(1-\gamma) \epsilon}$, from the second inequality in Lemma \ref{important_lemma_2_meta_rl} and the above inequality,
$$
\begin{aligned}
J_{\tau}(\hat{\pi}_{\theta_\tau^\prime}({\phi}))-J_{\tau}(\hat{\pi}_{\phi}) \geq & \frac{1}{1-\gamma} \underset{\substack{s \sim \nu^{\hat{\pi}_{\phi}}_{\tau} \\
a \sim \hat{\pi}_{\theta_\tau^\prime}({\phi})(\cdot|s)}}{\mathbb{E}}\left[ A^{\hat{\pi}_{\phi}}_{\tau}(s,a)\right] -\frac{2\gamma A_{max}}{(1-\gamma)^2 \epsilon} D^2_{\tau,i} (\hat{\pi}_{\phi},\hat{\pi}_{\theta_\tau^\prime}({\phi})) \\
\geq &\frac{1}{1-\gamma} \underset{\substack{s \sim \nu^{\hat{\pi}_{\phi}}_{\tau} \\
a \sim \hat{\pi}_{\theta_\tau^\prime}({\phi})(\cdot|s)}}{\mathbb{E}}\left[ A^{\hat{\pi}_{\phi}}_{\tau}(s,a)\right] -\frac{\lambda}{1-\gamma} D^2_{\tau,i} (\hat{\pi}_{\phi},\hat{\pi}_{\theta_\tau^\prime}({\phi})) \\
\geq & \frac{1}{1-\gamma} \underset{\substack{s \sim \nu^{\hat{\pi}_{\phi}}_{\tau} \\
a \sim \hat{\pi}_{\theta_\tau^*}(\cdot|s)}}{\mathbb{E}}\left[ A^{\hat{\pi}_{\phi}}_{\tau}(s,a)\right] -\frac{\lambda}{1-\gamma} D^2_{\tau,i} (\hat{\pi}_{\phi},\hat{\pi}_{\theta_\tau^*}).
\end{aligned}
$$
From the second inequality in Lemma \ref{important_lemma_2_meta_rl},
$$
\begin{aligned}
J_{\tau}(\hat{\pi}_{\theta_\tau^*})-J_{\tau}(\hat{\pi}_{\phi}) \leq \frac{1}{1-\gamma} \underset{\substack{s \sim \nu^{\hat{\pi}_{\phi}}_{\tau} \\
a \sim \hat{\pi}_{\theta_\tau^*}(\cdot|s)}}{\mathbb{E}}\left[ A^{\hat{\pi}_{\phi}}_{\tau}(s,a)\right] + \frac{2\gamma A_{max}}{(1-\gamma)^2 \epsilon} D^2_{\tau,i} (\hat{\pi}_{\phi},\hat{\pi}_{\theta_\tau^*}).
\end{aligned}
$$
From the last two inequalities, 
$$J_{\tau}(\hat{\pi}_{\theta_\tau^\prime}({\phi}))-J_{\tau}(\hat{\pi}_{\theta_\tau^*}) \geq - (\frac{2\gamma A_{max}}{(1-\gamma)^2 \epsilon} + \frac{\lambda}{1-\gamma} ) D^2_{\tau,i} (\hat{\pi}_{\phi},\hat{\pi}_{\theta_\tau^*}),$$
i.e., 
$$J_{\tau}(\hat{\pi}_{\theta_\tau^*})-J_{\tau}(\mathcal{A} l g^{(i)}(\hat{\pi}_{\phi}, \lambda, \tau)) \leq (\frac{2\gamma A_{max}}{(1-\gamma)^2 \epsilon} + \frac{\lambda}{1-\gamma} ) D^2_{\tau,i} (\hat{\pi}_{\phi},\hat{\pi}_{\theta_\tau^*}).$$
Then, 
$$\mathbb{E}_{\tau \sim \mathbb{P}(\Gamma)}[J_{\tau}(\hat{\pi}_{\theta_\tau^*})-J_{\tau}(\mathcal{A} l g^{(i)}(\hat{\pi}_{\phi}, \lambda, \tau))] \leq (\frac{2\gamma A_{max}}{(1-\gamma)^2 \epsilon} + \frac{\lambda}{1-\gamma} ) \mathbb{E}_{\tau \sim \mathbb{P}(\Gamma)}[D^2_{\tau,i} (\hat{\pi}_{\phi},\hat{\pi}_{\theta_\tau^*})].$$

Let $\phi^*=\arg\max_\phi \mathbb{E}_{\tau \sim \mathbb{P}(\Gamma)}[J_{\tau}(\mathcal{A} l g^{(i)}(\hat{\pi}_{\phi}, \lambda, \tau)) ]$, we have 
$$\mathbb{E}_{\tau \sim \mathbb{P}(\Gamma)}[J_{\tau}(\mathcal{A} l g^{(i)}(\hat{\pi}_{\phi^*}, \lambda, \tau))] \geq \max_\phi \mathbb{E}_{\tau \sim \mathbb{P}(\Gamma)}[J_{\tau}(\mathcal{A} l g^{(i)}(\hat{\pi}_{\phi}, \lambda, \tau)) ].$$
Therefore, 
$$
\begin{aligned}
\mathbb{E}_{\tau \sim \mathbb{P}(\Gamma)}[J_{\tau}(\hat{\pi}_{\theta_\tau^*})-J_{\tau}(\mathcal{A} l g^{(i)}(\hat{\pi}_{\phi^*}, \lambda, \tau))] \leq & \min_\phi
\mathbb{E}_{\tau \sim \mathbb{P}(\Gamma)}[J_{\tau}(\hat{\pi}_{\theta_\tau^*})-J_{\tau}(\mathcal{A} l g^{(i)}(\hat{\pi}_{\phi}, \lambda, \tau))] \\
\leq & \min_\phi (\frac{2\gamma A_{max}}{(1-\gamma)^2 \epsilon} + \frac{\lambda}{1-\gamma} ) \mathbb{E}_{\tau \sim \mathbb{P}(\Gamma)}[D^2_{\tau,i} (\hat{\pi}_{\phi},\hat{\pi}_{\theta_\tau^*})]
\end{aligned}
$$
Since
$$\min_\phi \mathbb{E}_{\tau \sim \mathbb{P}(\Gamma)}[D^2_{\tau,i} (\hat{\pi}_{\phi},\hat{\pi}_{\theta_\tau^*})]= \mathcal{V}ar_i ( \mathbb{P}(\Gamma)),$$
we have 
$$
\mathbb{E}_{\tau \sim \mathbb{P}(\Gamma)}[J_{\tau}(\hat{\pi}_{\theta_\tau^*})-J_{\tau}(\mathcal{A} l g^{(i)}(\hat{\pi}_{\phi^*}, \lambda, \tau))] \leq (\frac{2\gamma A_{max}}{(1-\gamma)^2 \epsilon} + \frac{\lambda}{1-\gamma} ) \mathcal{V}ar_i ( \mathbb{P}(\Gamma)).
$$
Note that in the above analysis, we need $\lambda \geq 2 A_{max}$ and also $\lambda \geq \frac{2 \gamma A_{max}}{(1-\gamma) \epsilon}$. So, we select  we select $\lambda = \frac{2 A_{max}}{(1-\gamma) \epsilon}$ to satisfy the requirement. When  $\lambda = \frac{2 A_{max}}{(1-\gamma) \epsilon}$, we have 
\begin{equation}
\label{qwiudgysugzgfdz_meta_rl}
\mathbb{E}_{\tau \sim \mathbb{P}(\Gamma)}[J_{\tau}(\hat{\pi}_{\theta_\tau^*})-J_{\tau}(\mathcal{A} l g^{(i)}(\hat{\pi}_{\phi^*}, \lambda, \tau))] \leq \frac{2(1+\gamma) A_{max}}{(1-\gamma)^2 \epsilon} \mathcal{V}ar_i ( \mathbb{P}(\Gamma)).
\end{equation}

From (\ref{qhzzzxzzz_meta_rl}) and (\ref{qwiudgysugzgfdz_meta_rl}) we have
$$
\begin{aligned}
&  \frac{1}{T} \sum_{t=1}^{T} \mathbb{E}_{t} \left[  \mathbb{E}_{\tau \sim \mathbb{P}(\Gamma)}[J_{\tau}(\hat{\pi}_{\theta^*_\tau})- J_{\tau}(\mathcal{A} l g^{(i)}(\hat{\pi}_{\phi_t}, \lambda, \tau))  ]\right] \\
& \quad\quad\quad\quad\quad \leq  h_i\left(\frac{K_i}{T} + \frac{M_i}{\sqrt{T}}\right)+  \frac{2 (1+\gamma) A_{max}}{(1-\gamma)^2 \epsilon} \mathcal{V}ar_i ( \mathbb{P}(\Gamma)).
\end{aligned}
$$
\end{proof}

\begin{proof}[Proof of Theorem \ref{lemkgjhbsdewrgekjgbsdjgbce_meta_rl}]
Similar to the above proof of Theorem \ref{lemkgjhbsdkjgbsdjgbce_meta_rl}. The difference is using two inequalities in Lemma \ref{important_lemma_4_meta_rl} instead of those in Lemma \ref{important_lemma_2_meta_rl} and using Theorem \ref{2_convergence_meta_rl} for convergence instead of Theorem \ref{lemma_convergence_meta_rl}.

The requirement of Theorem \ref{2_convergence_meta_rl} is $\lambda > (6 L_1^2 + 2L_2)A_{max} $, and the requirement of Lemma \ref{important_lemma_4_meta_rl} is $\lambda \geq \frac{4\gamma A_{max} L_1^2}{(1-\gamma) \epsilon}$. Therefore, we select $\lambda=\frac{(6 L^2_1+2L_2) A_{max}}{(1-\gamma) \epsilon}$. Then, the bound is 
$$
\begin{aligned}
&\frac{1}{T} \sum_{t=1}^{T} \mathbb{E}_{t} \left[  \mathbb{E}_{\tau \sim \mathbb{P}(\Gamma)}[J_{\tau}(\hat{\pi}_{\theta^*_\tau}) - J_{\tau}(\mathcal{A} l g^{(3)}(\hat{\pi}_{\phi_t}, \lambda, \tau))   ]\right] \\
\leq & h_3\left(\frac{K_3}{T} + \frac{M_3}{\sqrt{T}}\right)+  \left(\frac{4 \gamma L^2_1 A_{max}}{(1-\gamma)^2 \epsilon}  +\frac{\lambda}{1-\gamma} \right) \mathcal{V}ar_3 ( \mathbb{P}(\Gamma)),\\
\leq & h_3\left(\frac{K_3}{T} + \frac{M_3}{\sqrt{T}}\right)+  \frac{((6+4\gamma) L^2_1+2 L_2) A_{max}}{(1-\gamma)^2 \epsilon}  \mathcal{V}ar_3 ( \mathbb{P}(\Gamma)),
\end{aligned}
$$
\end{proof}

\subsection{Clarification of $A_{max}$}
\label{ajssdfjwesbfj-meta-rl}
In all the proofs in Sections \ref{wvfhwvzzgzbzxbcb_meta_rl} and \ref{wvfhwvzzgzbzxbcb_meta_rl}, we can replace as $A_{max}$ to $A_{max}^{\prime}$, where 
$A_{max}^{\prime}$ is defined by the maximum advantage function value of policy $\hat{\pi}_{\phi^\prime}$, where $\phi^\prime =\arg\min_\phi \mathbb{E}_{\tau \sim \mathbb{P}(\Gamma)}[D^2_{\tau,i} (\hat{\pi}_{\phi},\hat{\pi}_{\theta_\tau^*})]$.
It is easy to see $A_{max}^{\prime} \leq A_{max}$. 
For simplification of the assumption statements, theorem statements, and convenience of the proofs, we keep $A_{max}$ in the proofs and Theorems \ref{lemkgjhbsdkjgbsdjgbce_meta_rl} and \ref{lemkgjhbsdewrgekjgbsdjgbce_meta_rl}.
We actually can make the bound in Theorems \ref{lemkgjhbsdkjgbsdjgbce_meta_rl} and \ref{lemkgjhbsdewrgekjgbsdjgbce_meta_rl} tighter by replacing 
$A_{max}$ to $A_{max}^{\prime}$.
In the verification of the theoretical results of Section \ref{e1_meta_rl}, we select $\lambda$ based on $A_{max}^{\prime}$ and verify the tighter bounds by the experiments.

\section{Proofs of Remarks }
\label{wahefvwhdfbhbh_meta-rl}

\begin{proof}[Proof of part (i) of Remark \ref{asdaekfjnnxzx_meta-rl} ]
If the MDP $\mathcal{M}_{\tau}$ is ergodic, there exists a policy $\hat{\pi}$ such that $\nu^{\hat{\pi}}_{\tau}(s) \geq \epsilon_0$. As $\phi$ is bounded, the probability (or probability density) of each action of the softmax policy is larger than $0$ and lower bounded by a $\epsilon_1>0$. Therefore, the action probability of the policy $\hat{\pi}(a|s)$ can be upper bounded by $\hat{\pi}_{\phi}(a|s)/\epsilon_1 $ for any $a$. Therefore, $\nu^{\hat{\pi}_\phi}_{\tau}(s) \geq \epsilon_0/\epsilon_1$.
\end{proof}

\begin{proof}[Proof of part (ii) of Remark \ref{asdaekfjnnxzx_meta-rl} ]
If the initial state distribution $\rho_{\tau}$ has $\rho_{\tau}(s) >0$ for any $s \in \mathcal{S}$. Since $\mathcal{S}$ is bounded, $\rho_{\tau}(s) \geq \epsilon_2$ for any $s \in \mathcal{S}$. Then,  $\nu^{\hat{\pi}_{\phi}}_{\tau}(s) \geq (1-\gamma)\epsilon_2$.
\end{proof}

\begin{comment}

\begin{proof}[Proof of Remark \ref{qhjfvhdsremarkf_meta_rl} ]
If Assumption \ref{awhevdbsfvb_meta_rl} does not hold, then the solution of $\mathcal{A} l g(\hat{\pi}_{\phi}, \lambda, \tau)$ is not unique, as it can choose any action on the states that are not visited. 

Similarly, if Assumption \ref{awhevdbsfvb_meta_rl} does not hold, the gradient does not exist.

If Assumption \ref{awhevdbsfvb_meta_rl} hold, $d_\tau^2(\hat{\pi}_\phi,\hat{\pi},s)$ is strongly convex, then $M$ is non-singular for any $s \in \mathcal{S}$. 

\end{proof}

\end{comment}

\section{Limitations}
In this paper, we provide several theorems, where the hyper-parameter selection, e.g., $\lambda$, is provided by the theorems.
The theoretical analysis usually chooses hyper-parameters, which are sometimes conservative. In practice, we can tune them to improve the performance.

%%%%%%%%%%%%%%%%%%%%%%%%%%%%%%%%%%%%%%%%%%%%%%%%%%%%%%%%%%%%

\end{document}